\documentclass{article}

\usepackage{arxiv}

\usepackage[utf8]{inputenc} % allow utf-8 input
\usepackage[T1]{fontenc}    % use 8-bit T1 fonts
\usepackage{hyperref}       % hyperlinks
\usepackage{url}            % simple URL typesetting
\usepackage{booktabs}       % professional-quality tables
\usepackage{amsfonts}       % blackboard math symbols
\usepackage{nicefrac}       % compact symbols for 1/2, etc.
\usepackage{microtype}      % microtypography
\usepackage{lipsum}		% Can be removed after putting your text content
\usepackage{graphicx}
\usepackage{natbib}
\usepackage{mathtools}
\usepackage{bm}
\usepackage{doi}
\usepackage{subcaption}
\usepackage{multirow}

\usepackage{amsmath}
\usepackage{amsfonts}
\usepackage{amssymb}
\usepackage{amsthm}  % 这个宏包是管理定理环境的核心

\theoremstyle{definition} % 定理样式：标题加粗，正文使用常规字体
\newtheorem{assumption}{Assumption} % 定义一个名为 assumption 的环境，显示为 "Assumption"

\newtheorem{theorem}{Theorem}
\newtheorem{lemma}{Lemma}
\newtheorem{remark}{Remark}
\newtheorem{proposition}{Proposition}
\newtheorem{corollary}{Corollary}
\newtheorem{definition}{Definition}
\usepackage{hyperref} % 用于点击跳转
\usepackage{cleveref} % 必须放在 hyperref 后面！
\crefname{assumption}{Assumption}{Assumptions}

\title{Wasserstein Convergence of ODE-Based Samplers in Decentralized Diffusion Model via Velocity Field Decomposition}

\author{
    \makebox[0.28\textwidth][c]{\textbf{Chencheng Tang}\thanks{Equal contribution.}} \\
    Peking University \\
    \texttt{\small tcc@stu.pku.edu.cn}
    \And
    \makebox[0.28\textwidth][c]{\textbf{Xuanyu Xue}\footnotemark[1]} \\
    Shanghai Jiao Tong University \\
    \texttt{\small sjtuxxy0398@sjtu.edu.cn}
    \And
    \makebox[0.28\textwidth][c]{\textbf{Fangyikang Wang}} \\
    MBZUAI \\
    \texttt{\small fangyikang.wang@mbzuai.ac.ae}
    \AND
    \makebox[0.28\textwidth][c]{\textbf{Chao Zhang}} \\
    Zhejiang University \\
    \texttt{\small zczju@zju.edu.cn}
    \And
    \makebox[0.28\textwidth][c]{\textbf{Hubery Yin}\thanks{Corresponding author.}} \\
    Tencent \\
    \texttt{\small hubery@tencent.com}
}
\renewcommand{\shorttitle}{Preprint. Under review at ICLR 2027}

\hypersetup{
pdftitle={Wasserstein Convergence of ODE-Based Samplers in Decentralized Diffusion Model via Velocity Field Decomposition},
pdfsubject={cs.LG, cs.AI},
pdfauthor={David S.~Hippocampus, Elias D.~Striatum},
pdfkeywords={Convergence analysis, diffusion models, ODE, Wasserstein Distance},
}

\begin{document}
\maketitle

\begin{abstract}
  Diffusion models have achieved impressive empirical success in generative tasks, and their convergence theory is now relatively well understood. Motivated by privacy and scalability, recent decentralized diffusion architectures replace a single global velocity field with multiple local experts and a routing mechanism, yielding a sampling dynamics with stochastic expert switching that falls outside standard diffusion convergence analyses. In this work, We study a decentralized diffusion framework with stochastic velocity fields and ODE-based sampling. We establish a convergence guarantee in Wasserstein-2 distance, showing that the distribution of the $N$-step discretization converges to the analytical solution at rate $\mathcal{O}(N^{-1/2}+\varepsilon)$ in $W_2$, where $\varepsilon$ captures the neural approximation errors. To our knowledge, this is the first $W_2$ convergence result for decentralized diffusion models with an ODE-based sampling scheme.
  %Furthermore, we show that a straightforward design that directly predicts the mixing coefficient of local velocity does not, in general, yield a convergent sampler. Motivated by this, we instead propose to learn the \emph{rate of change} of the coefficients, and we prove that this alternative parameterization leads to a provably convergent decentralized diffusion sampler.
\end{abstract}

% keywords can be removed
\keywords{Convergence analysis\and diffusion models \and ODE \and Wasserstein Distance}

\section{Introduction}
\label{sec:introduction}

Diffusion probabilistic models \cite{ho2020denoising,song2020score} and their recent generalization, Flow Matching models \cite{lipman2023flow,albergo2023stochastic}, have achieved unprecedented success in generative tasks, covering a wide range of modalities such as text-to-image generation \cite{rombach2022high}, video synthesis \cite{ho2022video}, 3D content creation \cite{poole2023dreamfusion}, and image restoration \cite{saharia2022image}.
Among various sampling strategies, the Probability Flow ODE (PF-ODE) \cite{song2020score,karras2022elucidating} formulation has become the de facto standard for efficient deployment. 
Unlike stochastic samplers, the PF-ODE defines a deterministic transport between the noise and data distributions while preserving marginal distributions, enabling the use of specialized ODE solvers to significantly reduce sampling steps without sacrificing quality \cite{lu2022dpm,lu2025dpm}.
Given its centrality, the theoretical properties of the PF-ODE have been extensively studied, with a rich body of work establishing convergence guarantees in terms of Wasserstein-2 distance ($W_2$), KL divergence, and Total Variation (TV) distance \cite{lee2022convergence,kwon2022score,chen2023sampling,chen2023improved,gao2024convergence,huang2025convergence}.

However, existing theoretical frameworks predominantly rely on an assumption: they presume access to a global score function (or velocity field) learned from the entire dataset. 
This assumption is increasingly challenged by the shift towards decentralized and modular learning paradigms. 
Driven by data privacy concerns, computational scaling bottlenecks, and the mixture nature of real-world distributions, recent works have explored \textit{Decentralized Diffusion Model} architectures \cite{mcallister2025decentralized,chen2025gaussian}  . 
In these settings, one does not learn a single global velocity field. Instead, multiple ``expert'' fields are trained on local data partitions or clusters, and the global generation process is realized by dynamically routing or switching between these local experts.
While such heuristic routing strategies have demonstrated empirical success, they fundamentally violate the global Lipschitz continuity and smoothness assumptions required by classical ODE convergence theory. Moreover, the routing decisions introduce stochastic fluctuations that act as an extrinsic noise term, complicating the analysis of the sampling process.
Consequently, a critical theoretical gap emerges: \textit{Does the ODE sampling process driven by a mixture of local experts rigorously converge to the target distribution? If so, how do the routing mechanism and local approximation errors impact the final generation quality?}

In this paper, we address this problem by establishing the first rigorous convergence theory for decentralized sampling via the Probability Flow ODE, in contrast to previous works that focus on SDE-based samplers \cite{anonymous2026convergence}. Crucially, our ODE formulation naturally generalizes to Flow Matching models, providing a more versatile theoretical foundation that encompasses a broader class of deterministic generative trajectories.
We characterize the global velocity field as a probability-weighted superposition of component fields. 
Our main contributions are summarized as follows:

\begin{itemize}
    \item \textbf{Theoretical Framework for Mixture Dynamics:} We formulate the Probability Flow ODE for mixture distributions and derive a velocity field decomposition that justifies the use of decentralized experts. We identify the specific conditions under which component-wise learning recovers the global score.
    
    \item \textbf{Wasserstein-2 Convergence Guarantee:} 
    We provide a rigorous convergence analysis for the proposed sampling algorithm. Under standard regularity assumptions on the data support and noise schedule, we establish the \textbf{first} non-asymptotic convergence bound for the proposed sampling algorithm. Specifically, we prove that the generated distribution converges to the target distribution at the truncation time $t=\delta$ with a Wasserstein-2 distance scaling of $\tilde{O}(\epsilon + N^{-1/2})$. Crucially, our analysis explicitly characterizes the error coefficient's dependence on the data manifold radius $R$, dimension $d$, and the number of mixture components $L$, as well as the truncation time $\delta$, thereby providing a rigorous theoretical baseline for the decentralized diffusion framework.
\end{itemize}

\section{Related Work}
\label{sec:related_work}

\subsection{Convergence of SDE-based Samplers}
Score-based generative models (SGMs) \citep{song2020score} have achieved remarkable success, prompting extensive theoretical analysis of their sampling convergence\cite{lee2022convergence,kwon2022score,chen2023improved,benton2023nearly}. A significant body of work focuses on Stochastic Differential Equation (SDE) samplers. For instance, \citet{chen2023sampling} established polynomial-time convergence guarantees in Total Variation (TV) distance under minimal data assumptions. 
However, these analyses typically rely on SDE-specific tools, such as Girsanov's theorem and log-Sobolev inequalities, which crucially exploit the stochasticity of Brownian motion. The injected noise effectively acts as a regularizer, facilitating distribution convergence. 
Furthermore, to obtain meaningful TV bounds for general distributions, techniques like early stopping (targeting a distribution at $t=\delta$) are often required to avoid singularities near $t=0$.
In our decentralized setting, we utilize the Probability Flow ODE, which is deterministic. The absence of Brownian noise removes the contractive effects found in SDEs, making error accumulation more delicate and limiting the direct application of SDE-based proof techniques.

\subsection{Convergence of ODE-based Samplers}
The theoretical understanding of ODE-based samplers has Progressed rapidly. Recent works, such as \citet{kwon2022score,gao2024convergence}, have established rigorous Wasserstein-2 ($W_2$) convergence guarantees for ODE samplers in diffusion models and flow models. These analyses typically rely on Grönwall inequalities and assume the velocity field is deterministic and Lipschitz continuous.
The choice of $W_2$ distance in these works, as well as in the present work, is fundamentally motivated by the nature of the sampling process as numerical integration. Unlike TV or KL divergence, which require controlling pointwise density evolution, $W_2$ naturally quantifies the mean square error of particle trajectories.
However, existing ODE theories do not directly apply to our Mix-of-Velocity framework. Although our underlying probability flow is deterministic, the \textit{realized} velocity field in our decentralized setting involves stochastic routing among experts.  This introduces a "routing noise" (modeled as martingale noise in our analysis) that is that is neither Gaussian diffusion nor simple discretization error. A direct application of standard ODE bounds would treat this stochastic switching as a worst-case perturbation, leading to loose error estimates. Our work extends deterministic ODE theory by modeling this routing variability as a martingale difference sequence, allowing for tighter convergence rates.

\subsection{Decentralized Diffusion Models}
The paradigm of decentralized learning for diffusion models is an emerging area of research\cite{mcallister2025decentralized,chen2025gaussian}. Most relevant to our analysis is the Decentralized Diffusion Models (DDM) framework proposed by \citet{mcallister2025decentralized}, which utilizes a routing mechanism to partition the generation process among multiple experts. Although \citet{mcallister2025decentralized} demonstrate the empirical effectiveness and scalability of this approach, their study focuses primarily on system design and implementation, and lacks a rigorous theoretical foundation for convergence. To the best of our knowledge, our work provides the first rigorous Wasserstein-2 ($W_2$) convergence guarantee for such mixture-based decentralized diffusion frameworks, explicitly accounting for the stochasticity induced by the routing mechanism.

\section{Notations and Preliminaries}
\label{sec:preliminaries}

This section establish the mathematical framework for analyzing mixture-based diffusion models. Our formulation builds upon the mixture representation of data distribution and characterizes the forward and reverse dynamic through velocity fields.

\subsection{Mixture Representation of Data Distributions}

We assume that the initial data distribution $p_0(\mathbf{x})$ on $\mathbb{R}^d$ exhibits an underlying mixture structure. Specifically, we model $p_0(\mathbf{x})$ as a weighted sum of $L$ component distributions.

\begin{assumption}[Initial Mixture Distribution]
\label{assump:mixture_data}
The initial data distribution $p_0(\mathbf{x})$ is assumed to be a mixture of $L$ probability density functions $\{p_0^l(\mathbf{x})\}_{l=1}^L$:
\begin{equation}
    p_0(\mathbf{x}) = \sum_{l=1}^L \pi_l p_0^l(\mathbf{x}),
\end{equation}
where $\pi_l \in (0, 1)$ is the prior probability (mixing weight) of the $l$-th component, satisfying the normalization constraint $\sum_{l=1}^L \pi_l = 1$.
\end{assumption}

In the discrete dataset setting, \cref{assump:mixture_data} encompasses partitioning the dataset into $L$ subsets $\{D_l\}_{l=1}^L$, where each component $p_0^l(\mathbf{x})$ represents either the empirical distribution or a smoothed density estimate over subset $D_l$. Due to the linearity of the SDE in \cref{def:vp_sde}, the forward diffusion process preserves this mixture structure. For any time $t \in [0, T]$, the marginal distribution $p_t(\mathbf{x})$ remains a mixture:
\begin{equation}
\label{eq:marginal_mixture}
    p_t(\mathbf{x}) = \int p_{0t}(\mathbf{x} | \mathbf{x}_0) p_0(\mathbf{x}_0) d\mathbf{x}_0 = \sum_{l=1}^L \pi_l p_t^l(\mathbf{x}),
\end{equation}
where $p_t^l(\mathbf{x}) := \int p_{0t}(\mathbf{x} | \mathbf{y}) p_0^l(\mathbf{y}) d\mathbf{y}$ denotes the marginal density of the $l$-th component after diffusion.

\begin{definition}[Component Posterior Probability]
\label{def:posterior_coeff}
The coefficient $a^l(\mathbf{x}, t)$ is defined as the posterior probability that sample $\mathbf{x}$ at time $t$ belongs to the $l$-th component:
\begin{equation}
    a^l(\mathbf{x}, t) := P(\text{component}=l \mid \mathbf{x}_t = \mathbf{x}) = \frac{\pi_l p_t^l(\mathbf{x})}{p_t(\mathbf{x})}.
\end{equation}
\end{definition}

\begin{remark}
Using the Gaussian transition kernel from \cref{def:vp_sde}, namely $p_{0t}(\mathbf{x}|\mathbf{y}) = \mathcal{N}(\mathbf{x}; \alpha_t \mathbf{y}, \sigma_t^2 \mathbf{I})$ where $\sigma_t^2 = 1-\alpha_t^2$, the coefficient $a^l(\mathbf{x}, t)$ admits an explicit integral form:
\begin{equation}
\label{eq:CPP}
    a^l(\mathbf{x}, t) = \frac{\pi_l \int \exp\left( -\frac{\| \mathbf{x} - \alpha_t \mathbf{y} \|^2 }{2\sigma_t^2} \right) p_0^l(\mathbf{y}) d\mathbf{y}}{\sum_{k=1}^L \pi_k \int \exp\left( -\frac{\| \mathbf{x} - \alpha_t \mathbf{y}' \|^2 }{2\sigma_t^2} \right) p_0^k(\mathbf{y}') d\mathbf{y}'}.
\end{equation}
\end{remark}

\subsection{Forward and Reverse Diffusion Processes}

We formally define the variance-preserving stochastic differential equation (VP-SDE) governing the forward diffusion process.

\begin{definition}[Forward Process: VP-SDE]
\label{def:vp_sde}
The forward diffusion process is a stochastic process $\{\mathbf{x}_t\}_{t \in [0, T]}$ satisfying the It\^o SDE:
\begin{equation}
    d\mathbf{x}_t = f(t)\mathbf{x}_t \, dt + g(t) \, d\mathbf{w}_t, \quad \mathbf{x}_0 \sim p_0(\mathbf{x}),
\end{equation}
where $\mathbf{w}_t$ is a standard $d$-dimensional Wiener process and $f(t) < 0$ is a continuous scalar function. To satisfy the variance-preserving condition, the diffusion coefficient is $g(t) := \sqrt{-2f(t)}$. Given initial value $\mathbf{x}_0$,the conditional distribution of state $\mathbf{x}t$ is :
\begin{equation}
    p_{0t}(\mathbf{x}_t | \mathbf{x}_0) = \mathcal{N}(\mathbf{x}_t; \alpha_t \mathbf{x}_0, (1-\alpha_t^2)\mathbf{I}),
\end{equation}
where $\alpha_t := \exp\left(\int_0^t f(s)\,ds\right)$. Define $\beta(t) = -\frac{2}{\alpha_t}\frac{d\alpha_t}{dt}$ which corresponds to the drift coefficient $f(t) = -\frac{1}{2}\beta(t)$
\end{definition}

\begin{definition}[Score Function]
\label{def:score_function}
For any time $t \in [0, T]$, the score function of marginal density $p_t(\mathbf{x})$ is:
\begin{equation}
    \mathbf{s}(\mathbf{x}, t) := \nabla_{\mathbf{x}} \log p_t(\mathbf{x}).
\end{equation}
Similarly, for component $l$, the conditional score function is $\mathbf{s}^l(\mathbf{x}, t) := \nabla_{\mathbf{x}} \log p^l_t(\mathbf{x})$.
\end{definition}

\begin{definition}[Probability Flow ODE]
\label{def:velocity_field}
The probability flow ODE corresponding to \cref{def:vp_sde} is given by:
\begin{equation}
    \frac{d\mathbf{x}_t}{dt} = \mathbf{v}(\mathbf{x}_t, t),
\end{equation}
where the true velocity field is:
\begin{equation}
    \mathbf{v}(\mathbf{x}, t) := f(t)\mathbf{x} - \frac{1}{2}g(t)^2 \mathbf{s}(\mathbf{x}, t) = f(t)\left[ \mathbf{x} + \nabla_{\mathbf{x}} \log p_t(\mathbf{x}) \right].
\end{equation}
The component velocity field $\mathbf{v}^l(\mathbf{x}, t)$ is defined identically by replacing $\mathbf{s}(\mathbf{x}, t)$ with $\mathbf{s}^l(\mathbf{x}, t)$.
\end{definition}

Using Tweedie's formula, the component score function satisfies $\mathbf{s}^l(\mathbf{x}, t) = -(\mathbf{x} - \alpha_t \mathbf{y}^l(\mathbf{x}, t)) / \sigma_t^2$, where $\mathbf{y}^l(\mathbf{x}, t) := \mathbb{E}[\mathbf{x}_0 \mid \mathbf{x}_t = \mathbf{x}, \text{component}=l]$ is the component-conditional denoiser. Substituting this into the velocity definition yields:
\begin{equation} \label{eq:v_l_explicit}
    \mathbf{v}^l(\mathbf{x}, t) = f(t) \left( \mathbf{x} - \frac{\mathbf{x} - \alpha_t \mathbf{y}^l(\mathbf{x}, t)}{\sigma_t^2} \right).
\end{equation}

We define the \textbf{neural velocity field} induced by the regression network $\mathbf{y}^l_\theta(\mathbf{x}, t)$ with parameters $\theta$ as:
\begin{equation} \label{eq:v_l_theta_def}
    \mathbf{v}^l_\theta(\mathbf{x}, t) := f(t) \left( \mathbf{x} - \frac{\mathbf{x} - \alpha_t \mathbf{y}^l_\theta(\mathbf{x}, t)}{\sigma_t^2} \right).
\end{equation}
The mixture neural velocity field is then given by $\overline{\mathbf{v}}_\theta(\mathbf{x}, t) := \sum_{l=1}^L a^l_\phi(\mathbf{x}, t) \mathbf{v}^l_\theta(\mathbf{x}, t)$.
\begin{proposition}[Velocity Field Decomposition]
\label{prop:velocity_decomp}
Under \cref{assump:mixture_data}, the total velocity field decomposes as:
\begin{equation}
    \mathbf{v}(\mathbf{x}, t) = \sum_{l=1}^L a^l(\mathbf{x}, t) \mathbf{v}^l(\mathbf{x}, t),
\end{equation}
where $a^l(\mathbf{x}, t)$ are the posterior probabilities from \cref{def:posterior_coeff}.
\end{proposition}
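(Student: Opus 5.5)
The plan is to show that the marginal score is the posterior-weighted average of the component scores, and then use that the velocity field depends affinely on the score.

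\textbf{Step 1 (score decomposition).} By \eqref{eq:marginal_mixture}, $p_t = \sum_{l} \pi_l p_t^l$. Differentiating, assuming $p_t^l>0$ (automatic for $t>0$ because of the Gaussian kernel) and differentiation under the integral (justified by the same kernel), gives
\begin{equation*}
\nabla_{\mathbf{x}} \log p_t(\mathbf{x}) = \frac{\sum_{l=1}^L \pi_l \nabla_{\mathbf{x}} p_t^l(\mathbf{x})}{p_t(\mathbf{x})} = \sum_{l=1}^L \frac{\pi_l p_t^l(\mathbf{x})}{p_t(\mathbf{x})}\, \nabla_{\mathbf{x}} \log p_t^l(\mathbf{x}).
\end{equation*}
By \cref{def:posterior_coeff} this is $\mathbf{s}(\mathbf{x},t) = \sum_l a^l(\mathbf{x},t)\,\mathbf{s}^l(\mathbf{x},t)$.

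\textbf{Step 2 (normalization and affine structure).} Directly from \cref{def:posterior_coeff}, $\sum_l a^l(\mathbf{x},t) = \sum_l \pi_l p_t^l(\mathbf{x})/p_t(\mathbf{x}) = 1$. Hence the drift term can be split as $f(t)\mathbf{x} = \sum_l a^l(\mathbf{x},t)\, f(t)\mathbf{x}$. Combining this with Step 1 and \cref{def:velocity_field} gives
\begin{equation*}
\mathbf{v}(\mathbf{x},t) = \sum_{l=1}^L a^l(\mathbf{x},t)\bigl(f(t)\mathbf{x} - \tfrac12 g(t)^2 \mathbf{s}^l(\mathbf{x},t)\bigr) = \sum_{l=1}^L a^l(\mathbf{x},t)\,\mathbf{v}^l(\mathbf{x},t).
\end{equation*}

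\textbf{Alternative check.} The same identity follows from the Tweedie form \eqref{eq:v_l_explicit}. The global denoiser satisfies $\mathbb{E}[\mathbf{x}_0\mid\mathbf{x}_t=\mathbf{x}] = \sum_l a^l\,\mathbf{y}^l$ by the tower property over the component label, and $\mathbf{v}$ is affine in the denoiser.

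\textbf{Main obstacle.} The only delicate point is the regularity needed in Step 1: positivity of $p_t$ and interchanging $\nabla_{\mathbf{x}}$ with the integral in $p_t^l(\mathbf{x}) = \int p_{0t}(\mathbf{x}\mid\mathbf{y})\,p_0^l(\mathbf{y})\,d\mathbf{y}$. For $t>0$ the Gaussian kernel and its gradient are bounded in $\mathbf{y}$, locally uniformly in $\mathbf{x}$, so dominated convergence handles this. At $t=0$ the statement is understood only where the component densities are differentiable and positive, which is consistent with the paper's truncation at $t=\delta$.
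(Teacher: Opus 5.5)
Your proof is correct and complete. The paper itself never proves this proposition. It is stated in the preliminaries and then used as a given, for instance in \eqref{eq:ideal_drift_consistency} and at the start of the proof of \cref{lemma:v_theta_error_explicit}. The only implicit justification is the mixture representation \eqref{eq:marginal_mixture} and the Tweedie form \eqref{eq:v_l_explicit}, so there is no paper proof to compare against.

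Your Steps 1--2 are the derivation those ingredients call for. Differentiating the logarithm of the mixture gives $\mathbf{s}=\sum_l a^l\mathbf{s}^l$, and the normalization $\sum_l a^l=1$ then moves the linear drift $f(t)\mathbf{x}$ inside the sum. You were right to make that normalization explicit, since without it the affine term would not pass through the sum.

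Your alternative via $\mathbb{E}[\mathbf{x}_0\mid\mathbf{x}_t=\mathbf{x}]=\sum_l a^l\mathbf{y}^l$ is equally valid. It is also the version that matches the paper's $\mathbf{x}_0$-prediction parameterization of the experts.

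The positivity of $p_t^l$ that you invoke for $t>0$ does hold. It follows because $f<0$ in \cref{def:vp_sde} forces $\alpha_t<1$, hence $\sigma_t>0$.
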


\subsection{Sampling Dynamics and Algorithms}
To rigorously analyze the convergence, we define the reverse-time generation process and the corresponding numerical sampling algorithm. Let $\tau := T-t$ be the reverse time variable. The analysis focuses on the time interval $\tau \in [0, T_\delta]$, where $T_\delta = T - \delta$ and $\delta > 0$ is a cutoff to avoid the singularity at $t=0$.

\begin{itemize}
    \item \textbf{Reference Dynamics}: The ideal continuous-time reverse process driven by the exact velocity field $\tilde{\mathbf{v}}(\mathbf{x}, \tau) := -\mathbf{v}(\mathbf{x}, T-\tau)$. The dynamics are governed by:
    \begin{equation}
        d\mathbf{X}(\tau) = \tilde{\mathbf{v}}(\mathbf{X}(\tau), \tau) d\tau, \quad \tau \in [0, T_\delta].
    \end{equation}
    
    \item \textbf{Neural Numerical Algorithm}: The practical discrete-time sampler using the neural network velocity approximation $\tilde{\mathbf{v}}_\theta$. For time steps $0 = \tau_0 < \tau_1 < \dots < \tau_N = T_\delta$ with step size $\Delta \tau=\frac{T_\delta}{N}$, the update rule is:
    \begin{equation}
        \hat{\mathbf{x}}_{k+1} = \hat{\mathbf{x}}_k + \Delta \tau \tilde{\mathbf{v}}_\theta^{\hat{l}_k}(\hat{\mathbf{x}}_k, \tau_k),
    \end{equation}
    where $\hat{l}_k \sim \text{Categorical}(a_\phi(\hat{\mathbf{x}}_k, \tau_k))$ represents the stochastic component selection. This process incorporates discretization error, martingale sampling noise, and neural approximation error.
\end{itemize}

\subsection{Time Interval Truncation}

To analyze the convergence, we focus on a truncated time interval to handle the singularity at $t=0$. Let $\delta \in (0, T)$ be a small positive constant.

\begin{definition}[Analysis Domain]
\label{def:domain_partition}
We define the truncated spatiotemporal domain $\Omega$ as:
\begin{equation}
    \Omega  := \{ (\mathbf{x}, t) \in \mathbb{R}^d \times [0, T] : \delta \le t \le T \}. \label{eq:omega}
\end{equation}
\end{definition}

Our subsequent theoretical analysis focuses on bounding the sampling error within $\Omega$, effectively avoiding the singularities of the score function and velocity field as $t\to 0$.

\subsection{Choice of Metric}

\begin{definition}[Wasserstein-2 Distance]
\label{def:wasserstein}
For probability measures $\mu, \nu$ on $\mathbb{R}^d$, the Wasserstein-2 distance is defined as:
\begin{equation}
W_2(\mu, \nu) := \left( \inf_{\gamma \in \Pi(\mu, \nu)} \int_{\mathbb{R}^d \times \mathbb{R}^d} \|\mathbf{x} - \mathbf{y}\|^2 \, d\gamma(\mathbf{x}, \mathbf{y}) \right)^{1/2},
\end{equation}
where $\Pi(\mu, \nu)$ denotes the set of all joint distributions (couplings) with marginals $\mu$ and $\nu$.
\end{definition}

Our choice of the Wasserstein-2 ($W_2$) distance is fundamentally motivated by the analytical framework employed in this work, which treats the sampling process as the numerical integration of the Probability Flow ODE. Standard convergence analyses for numerical ODE solvers typically rely on bounding the \textit{global truncation error} by propagating \textit{local truncation errors} via discrete Grönwall inequalities. This technique directly yields a bound on the strong error of the trajectories, specifically the Mean Square Error (MSE): $\mathbb{E}[\|\mathbf{X}(T) - \hat{\mathbf{x}}_N\|^2]$. Since the $W_2$ distance is defined as the infimum of the quadratic transport cost over all couplings, the MSE derived from our synchronous coupling serves as a natural and tight upper bound for the squared $W_2$ distance, $W_2^2(\hat{\mu}_N, \mu_*)$. Given that our theoretical contribution focuses on the discretization and approximation errors of the velocity field decomposition, the $W_2$ metric offers the most direct and rigorous quantification of the generation quality consistent with our trajectory-wise error analysis.

\section{Convergence Theory}
\label{sec:convergence_theory}

\subsection{Assumptions}
We introduce the following assumptions regarding the data distribution, neural networks, and their approximation errors.

\begin{assumption}[Bounded Support of Initial Distribution]
\label{ass:p0_bounded}
The initial data distribution $p_0(\mathbf{x})$ is a probability density function on $\mathbb{R}^d$ with support satisfying:
\begin{equation}
    \text{supp}(p_0) \subseteq \{\mathbf{x} \in \mathbb{R}^d : \|\mathbf{x}\|_2 \le R\},
\end{equation}
where $R>0$ is a finite constant.
\end{assumption}

This is a standard technical assumption to simplify analysis and avoid complexities arising from unbounded data domains.

\begin{assumption}[Regularity of the Noise Schedule]
\label{ass:alpha_regularity}
Without loss of generality, we fix the time horizon $T=1$. The signal schedule $\alpha_t$ is assumed to be twice continuously differentiable ($C^2$) on the interval $[0,1]$, with boundary conditions $\alpha_0=1$ and $\alpha_1=0$. Furthermore, $\alpha_t$ satisfies the following regularity conditions:
\begin{enumerate}
    \item \textbf{Monotonicity}: $\frac{\mathrm{d}\alpha_t}{\mathrm{d}t} \le 0$ for all $t \in [0,1]$;
    \item \textbf{First-order regularity}: There exists a constant $C_{\alpha, 1} > 0$ such that $\left| \frac{\mathrm{d}\alpha_t}{\mathrm{d}t} \right| \le C_{\alpha, 1}$;
    \item \textbf{Second-order regularity}: There exists a constant $C_{\alpha, 2} > 0$ such that $\left| \frac{\mathrm{d}^2\alpha_t}{\mathrm{d}t^2} \right| \le C_{\alpha, 2}$.
\end{enumerate}
\end{assumption}

\begin{assumption}[Neural Network Approximation Error]
\label{ass:nn_approx}
We assume that the neural networks $a^i_\phi(\mathbf{x}, t)$ and $\mathbf{y}^i_\theta(\mathbf{x}, t)$ provide uniform approximations to the true posterior probabilities and component denoisers, respectively. Specifically, for all $i \in \{1, \dots, L\}$, the following $L^\infty$ bounds hold over the domain $(\mathbf{x}, t) \in \mathbb{R}^d \times [\delta, T]$:
\begin{align}
    \sup_{\mathbf{x}, t} \|a^i_\phi(\mathbf{x}, t) - a^i(\mathbf{x}, t)\| &\le \varepsilon_a \le \varepsilon \le 1, \\
    \sup_{\mathbf{x}, t} \|\mathbf{y}^i_\theta(\mathbf{x}, t) - \mathbf{y}^i(\mathbf{x}, t)\| &\le \varepsilon_y \le \varepsilon \le 1,
\end{align}
where $\mathbf{y}^i_\theta(\mathbf{x}, t)$ is the network parameterized via $\mathbf{x}_0$-prediction, representing the conditional expectation of the clean data.
\end{assumption}

This assumption imposes a uniform global constraint on the neural network approximation errors, ensuring that the cumulative deviation from the exact probability flow remains controlled throughout the sampling process.

\subsection{Main Convergence Results}

In this section, we present the global convergence results for the proposed Mix-of-Velocity sampling algorithm, encompassing both distribution-level and trajectory-level analysis. We first highlight the convergence in terms of the Wasserstein-2 distance, which serves as our primary result for distributional accuracy. This result is built upon a foundation of strong convergence in terms of the mean square error (MSE) for individual trajectories, which we provide subsequently.

\begin{theorem}[Wasserstein-2 Convergence under Neural Network Approximation Error]
\label{thm:w2_convergence_main}
Under Assumptions \ref{ass:p0_bounded}, \ref{ass:alpha_regularity}, \ref{ass:nn_approx}, and \ref{assump:mixture_data}, assume that both the sampling algorithm and the reference process are initialized with the same Gaussian prior $\mathbf{X}(0) \sim \pi = \mathcal{N}(\mathbf{0}, \mathbf{I})$. Let $\hat{\mu}_N$ denote the probability distribution of the numerical solution $\hat{\mathbf{x}}_N$ produced by the neural network-driven discrete sampler at reverse time $\tau = T_{\delta}$ (where $T_{\delta} = T - \delta$), and let $\mu_*$ denote the distribution of the exact probability flow ODE solution $\mathbf{X}(T_{\delta})$. The squared Wasserstein-2 distance satisfies:
\begin{equation}
    W_2^2(\hat{\mu}_N, \mu_*) \le \mathcal{S}(T_{\delta}) \left[ \mathcal{C}_{approx’} \varepsilon^2 + \frac{\mathcal{C}_{disc’}}{N} \right],
\end{equation}
where $\mathcal{S}(T_{\delta}) = (e^{\tilde{C}_1 T_{\delta}} - 1) / \tilde{C}_1$ is the stability factor, while $\mathcal{C}_{approx'}$ and $\mathcal{C}_{disc'}$ are constants representing approximation and discretization errors, respectively. The explicit expressions for these constants depend on the noise schedule regularity $(C_{\alpha,1}, C_{\alpha,2})$, dimension $d$, data manifold radius $R$, number of classes $L$, and truncation parameter $\sigma_{\delta}$, as detailed in \cref{rem:w2_constants} in the appendix.
\end{theorem}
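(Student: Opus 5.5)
The plan is to couple the reference ODE and the sampler synchronously, bound the mean-square error $e_k := \mathbb{E}\|\mathbf{X}(\tau_k) - \hat{\mathbf{x}}_k\|^2$ by a discrete Grönwall recursion, and then use $W_2^2(\hat{\mu}_N,\mu_*) \le \mathbb{E}\|\mathbf{X}(T_\delta)-\hat{\mathbf{x}}_N\|^2 = e_N$. The coupling is admissible because both processes start from the same draw $\mathbf{X}(0)=\hat{\mathbf{x}}_0\sim\mathcal{N}(\mathbf{0},\mathbf{I})$. The first task is to collect regularity of the exact velocity on $\Omega$. By \cref{prop:velocity_decomp} and Tweedie, $\mathbf{v}(\mathbf{x},t) = f(t)\big(\mathbf{x} - (\mathbf{x}-\alpha_t \mathbf{y}(\mathbf{x},t))/\sigma_t^2\big)$, where $\mathbf{y}=\sum_l a^l \mathbf{y}^l$ is the global denoiser. \Cref{ass:p0_bounded} gives $\|\mathbf{y}^l\|\le R$, and the Jacobian $\nabla_{\mathbf{x}}\mathbf{y} = (\alpha_t/\sigma_t^2)\,\mathrm{Cov}[\mathbf{x}_0\mid \mathbf{x}_t=\mathbf{x}]$ satisfies $\|\nabla_{\mathbf{x}}\mathbf{y}\| \le \alpha_t R^2/\sigma_t^2$. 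The same covariance bound applies to each $\mathbf{y}^l$. The posterior weights satisfy $\nabla_{\mathbf{x}} a^l = a^l(\mathbf{s}^l-\mathbf{s})$ with $\|\mathbf{s}^l-\mathbf{s}\|\le 2\alpha_t R/\sigma_t^2$. Since $\sigma_t\ge\sigma_\delta$ for $t\ge\delta$ and $|f|$ is controlled by $C_{\alpha,1}/\alpha_t$, these estimates give a uniform Lipschitz constant $\tilde{C}_1$ for $\tilde{\mathbf{v}}$ in $\mathbf{x}$. They also give a linear-growth bound $\|\tilde{\mathbf{v}}\|\le C(1+\|\mathbf{x}\|)$, and, through $C_{\alpha,2}$, a time-Lipschitz bound for $\tilde{\mathbf{v}}$ along trajectories. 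I would also need the second moment $\mathbb{E}\|\mathbf{X}(\tau)\|^2\le d+R^2$, which follows from the marginals being $p_t$, together with an analogous a priori moment bound on $\hat{\mathbf{x}}_k$ obtained by Grönwall.

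Next, decompose one step. Write $\mathbf{e}_{k+1} = \mathbf{e}_k + \int_{\tau_k}^{\tau_{k+1}} \tilde{\mathbf{v}}(\mathbf{X}(s),s)\,ds - \Delta\tau\,\tilde{\mathbf{v}}^{\hat l_k}_\theta(\hat{\mathbf{x}}_k,\tau_k)$ and split it into four pieces:
\begin{itemize}
\item[(i)] the local truncation error $\int_{\tau_k}^{\tau_{k+1}}[\tilde{\mathbf{v}}(\mathbf{X}(s),s)-\tilde{\mathbf{v}}(\mathbf{X}(\tau_k),\tau_k)]\,ds$, whose second moment is $O(\Delta\tau^4)$;
\item[(ii)] the propagation term $\Delta\tau[\tilde{\mathbf{v}}(\mathbf{X}(\tau_k),\tau_k)-\tilde{\mathbf{v}}(\hat{\mathbf{x}}_k,\tau_k)]$, bounded by $\tilde{C}_1\Delta\tau\|\mathbf{e}_k\|$;
\item[(iii)] the approximation term $\Delta\tau[\tilde{\mathbf{v}}-\tilde{\overline{\mathbf{v}}}_\theta](\hat{\mathbf{x}}_k,\tau_k)$, which by \cref{ass:nn_approx} is at most $\Delta\tau\, C(\varepsilon_y + L\varepsilon_a(R+\|\hat{\mathbf{x}}_k\|))$, using $\|\mathbf{v}^l_\theta-\mathbf{v}^l\|\le |f|\alpha_t\varepsilon_y/\sigma_t^2$;
\item[(iv)] the routing noise $\boldsymbol{\xi}_k := \Delta\tau[\tilde{\overline{\mathbf{v}}}_\theta-\tilde{\mathbf{v}}^{\hat l_k}_\theta](\hat{\mathbf{x}}_k,\tau_k)$, which has zero conditional mean given $\mathcal{F}_k=\sigma(\hat{\mathbf{x}}_0,\dots,\hat{\mathbf{x}}_k)$, since $\hat l_k\sim a_\phi$. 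Its conditional second moment is $O(\Delta\tau^2(1+\|\hat{\mathbf{x}}_k\|^2))$.
\end{itemize}

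Then expand $\mathbb{E}\|\mathbf{e}_{k+1}\|^2$. The key point is that the cross term $\mathbb{E}\langle \mathbf{e}_k + (\text{ii}),\boldsymbol{\xi}_k\rangle$ vanishes by the martingale property, because $\mathbf{e}_k$, $\mathbf{X}(\tau_k)$ and $\hat{\mathbf{x}}_k$ are all $\mathcal{F}_k$-measurable. The cross terms with (i) and (iii) are handled by Young's inequality with weight $\Delta\tau$. This yields
\begin{equation*}
e_{k+1} \le (1+\tilde{C}_1\Delta\tau)\,e_k + \Delta\tau\big(C_{a}\varepsilon^2 + C_{d}\Delta\tau\big).
\end{equation*}
Here the $O(\Delta\tau^2)$ martingale variance contributes the $\Delta\tau\cdot\Delta\tau$ term, and the $O(\Delta\tau^3)$ remainder from (i) after Young is dominated by it. Iterating from $e_0=0$ gives $e_N \le \frac{(1+\tilde{C}_1\Delta\tau)^N-1}{\tilde{C}_1}\,[C_a\varepsilon^2 + C_d T_\delta/N]$. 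Using $(1+x)^N\le e^{Nx}$ this becomes $\le \mathcal{S}(T_\delta)[\mathcal{C}_{approx'}\varepsilon^2+\mathcal{C}_{disc'}/N]$. The $N^{-1}$ rate on $W_2^2$, that is $N^{-1/2}$ on $W_2$, comes from the routing variance and not from Euler error: the martingale contributes variance without bias, so it accumulates additively rather than linearly.

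I expect the main obstacle to be the step-(i) and moment estimates, namely showing that $\tilde{\mathbf{v}}$ is Lipschitz in time with constants uniform on $\Omega$. This requires differentiating $\mathbf{y}(\mathbf{x},t)$ in $t$, where posterior covariance terms scale like $\sigma_\delta^{-4}$. It also requires controlling $\mathbb{E}\|\hat{\mathbf{x}}_k\|^2$ uniformly in $k$ despite the $\varepsilon_a\|\mathbf{x}\|$ growth in (iii) and (iv). For the latter I would first prove a crude recursion $m_{k+1}\le(1+C\Delta\tau)m_k + C\Delta\tau$ for $m_k=\mathbb{E}\|\hat{\mathbf{x}}_k\|^2$, using linear growth, and then feed the resulting bound into the constants $\mathcal{C}_{approx'}$ and $\mathcal{C}_{disc'}$.
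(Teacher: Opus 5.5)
Your argument is correct. It shares the paper's skeleton: synchronous coupling, the same four-term split, orthogonality of the routing noise to the $\mathcal{F}_k$-measurable remainder, two Young steps with weight $\Delta\tau$, discrete Gr\"onwall, and $W_2^2\le\mathbb{E}\|\mathbf{X}(T_\delta)-\hat{\mathbf{x}}_N\|^2$. Where you differ is in how you control the size of the reference trajectory.

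The paper conditions on $\mathbf{X}(0)=\mathbf{z}$ and uses the radial dissipativity of the reverse drift (\cref{lem:radial_dissipativity}, \cref{thm:global_bound_geometric}) to get the pathwise bound $\|\mathbf{X}(\tau)\|\le\|\mathbf{z}\|+R$. This has three consequences:
\begin{itemize}
\item the truncation bound $\|\mathcal{T}_k\|\le C_{\mathrm{trunc}}(\mathbf{z})\,\Delta\tau^2$ and the approximation bound are deterministic given $\mathbf{z}$;
\item the per-initial-point bound of \cref{thm:strong_convergence_main} comes out along the way;
\item $d$ enters only at the end, through $\mathbb{E}\|\mathbf{Z}\|^2=d$ and $\mathbb{E}\|\mathbf{Z}\|\le\sqrt{d}$.
\end{itemize}
You instead average over the prior from the start. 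Since $\alpha_T=0$, $\mathcal{N}(\mathbf{0},\mathbf{I})=p_T$ exactly, so marginal preservation of the PF-ODE gives $\mathbb{E}\|\mathbf{X}(\tau)\|^2\le d+R^2$ in one line and the dissipativity machinery is unnecessary. The price is threefold:
\begin{itemize}
\item you obtain only the averaged statement, not the conditional one;
\item you rely on the prior coinciding with $p_T$, whereas the pathwise bound works from any starting point;
\item with only fixed-time moments, you must control the truncation term in mean square rather than pathwise, for example $\mathbb{E}\|\mathcal{T}_k\|^2\le\Delta\tau^3\int_{\tau_k}^{\tau_{k+1}}\mathbb{E}\bigl\|\tfrac{d}{ds}\tilde{\mathbf{v}}(\mathbf{X}(s),s)\bigr\|^2\,ds$ by Cauchy--Schwarz in time.
\end{itemize}

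A few bookkeeping points.
\begin{itemize}
\item After the two Young steps, the coefficient of $e_k$ is $(1+\Delta\tau)^2(1+L_{x,\delta}\Delta\tau)^2$ plus an $O(\varepsilon^2\Delta\tau)$ contribution. The exponent in $\mathcal{S}(T_\delta)$ is therefore not the Lipschitz constant, so do not use $\tilde{C}_1$ for both.
\item In term (iii), bound $\mathbb{E}\|\hat{\mathbf{x}}_k\|^2\le 2e_k+2(d+R^2)$ rather than using your crude a priori bound on $m_k$. The latter would put the exponential factor $K_{\mathrm{mom}}$ into the $\varepsilon^2$ coefficient. The former reproduces the paper's form: $\varepsilon^2$ enters the exponent, and $\mathcal{C}_{\mathrm{approx}'}$ is of order $C_{NN}^2(1+d+R^2)$.
\item For a time-derivative bound that is uniform up to $\tau=0$, where $\alpha_t\to0$, differentiate the product $f(t)\alpha_t$ as a single factor. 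It is bounded by $C_{\alpha,1}$ and its derivative by $C_{\alpha,2}$, whereas $\dot f$ on its own is singular as $\alpha_t\to0$.
\end{itemize}
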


\begin{theorem}[Global Mean Square Error Bound for Discrete Sampling]
\label{thm:strong_convergence_main}
Under the same assumptions as in \cref{thm:w2_convergence_main}, for any number of discretization steps $N \ge 1$, let the step size $\Delta\tau = T_\delta / N \le \min\{\Delta\tau_0, 1\}$. The numerical solution $\hat{\mathbf{x}}_k$ converges to the reference process $\mathbf{X}(\tau_k)$ in the $L^2$ sense. Specifically, for any $k \in \{0, \dots, N\}$, the global mean square error $\hat{E}_k := \mathbb{E}[\|\hat{\mathbf{x}}_k - \mathbf{X}(\tau_k)\|^2]$ satisfies:
\begin{equation}
\label{eq:mse_convergence_main}
\sup_{0 \le k \le N} \hat{E}_k \le \mathcal{S}(T_\delta) \left( \mathcal{C}_{\mathrm{approx}}(\hat{\mathbf{x}}_0) \varepsilon^2 + \mathcal{C}_{\mathrm{disc}}(\hat{\mathbf{x}}_0) \Delta \tau \right),
\end{equation}
where $\hat{\mathbf{x}}_0=\mathbf{X}(0)$ is the initial state, $\mathcal{C}_{\mathrm{approx}}(\hat{\mathbf{x}}_0) = 2C_{NN}^2(1 + \Delta \tau_0)^2 (1 + 2R_0^2)$ is the approximation error coefficient, and $\mathcal{C}_{\mathrm{disc}}(\hat{\mathbf{x}}_0)$ is the discretization error coefficient that explicitly depends on the initial state $\hat{\mathbf{x}}_0$. Here, $R_0 = \|\mathbf{X}(0)\| + R$ denotes the radius of the trajectory bound. The stability factor $\mathcal{S}(T_\delta)$ captures the cumulative effect of error over the simulation duration $T_\delta$. The detailed derivation and explicit expressions for all constants are provided in \cref{thm:strong_convergence_precise_v2} and \cref{rem:w2_constants} in the appendix.
\end{theorem}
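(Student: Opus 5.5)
We argue by synchronous coupling. Run the sampler and the reference ODE from the same draw $\hat{\mathbf{x}}_0=\mathbf{X}(0)$, and write $e_k:=\hat{\mathbf{x}}_k-\mathbf{X}(\tau_k)$. Integrating the reference ODE over $[\tau_k,\tau_{k+1}]$ and subtracting it from the update rule gives
\begin{equation*}
e_{k+1}=e_k+\Delta\tau\big[\tilde{\mathbf{v}}(\hat{\mathbf{x}}_k,\tau_k)-\tilde{\mathbf{v}}(\mathbf{X}(\tau_k),\tau_k)\big]+\Delta\tau\,\xi_k+\Delta\tau\,b_k-r_k .
\end{equation*}
The terms on the right are as follows.
\begin{itemize}
\item $\xi_k:=\tilde{\mathbf{v}}_\theta^{\hat l_k}(\hat{\mathbf{x}}_k,\tau_k)-\overline{\tilde{\mathbf{v}}}_\theta(\hat{\mathbf{x}}_k,\tau_k)$ is the routing noise. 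Conditionally on $\mathcal{F}_k=\sigma(\hat{\mathbf{x}}_0,\hat l_0,\dots,\hat l_{k-1})$ it has mean zero, because $\hat l_k\sim\text{Categorical}(a_\phi)$.
\item $b_k:=\overline{\tilde{\mathbf{v}}}_\theta-\tilde{\mathbf{v}}$, evaluated at $\hat{\mathbf{x}}_k$, is the approximation bias.
\item $r_k:=\int_{\tau_k}^{\tau_{k+1}}\big[\tilde{\mathbf{v}}(\mathbf{X}(s),s)-\tilde{\mathbf{v}}(\mathbf{X}(\tau_k),\tau_k)\big]ds$ is the local truncation error.
\end{itemize}
By Proposition~\ref{prop:velocity_decomp} the exact field is $\sum_l a^l\mathbf{v}^l$. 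Hence $b_k=\sum_l(a^l_\phi-a^l)\mathbf{v}^l_\theta+\sum_l a^l f(t)\alpha_t\sigma_t^{-2}(\mathbf{y}^l_\theta-\mathbf{y}^l)$. Using Assumption~\ref{ass:nn_approx} and the fact that $\sigma_t\ge\sigma_\delta$ on $\Omega$, this gives $\|b_k\|\le C_{NN}\varepsilon(1+\|\hat{\mathbf{x}}_k\|)$. The constant $C_{NN}$ depends on $L$, $R$, $\sigma_\delta$, $\sup|f|$; the growth in $\|\hat{\mathbf{x}}_k\|$ comes from the $\mathbf{x}$-term of $\mathbf{v}^l_\theta$.

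The preliminary steps are regularity and a priori bounds. First, on $[\delta,T]$ the field $\mathbf{v}$ is Lipschitz in $\mathbf{x}$ with a constant $\tilde C_1$-type bound. Indeed, $\nabla_{\mathbf{x}}\mathbf{s}=-\sigma_t^{-2}I+\alpha_t^2\sigma_t^{-4}\mathrm{Cov}(\mathbf{x}_0\mid\mathbf{x}_t)$, and the covariance is bounded by $R^2$ by Assumption~\ref{ass:p0_bounded}. Second, the trajectory obeys $\|\mathbf{X}(\tau)\|\le\|\mathbf{X}(0)\|+R=R_0$ (up to schedule constants). This follows because $\mathbf{v}=f(t)\big[(1-\sigma_t^{-2})\mathbf{x}+\alpha_t\sigma_t^{-2}\mathbb{E}[\mathbf{x}_0\mid\mathbf{x}]\big]$ is a contraction toward a point in the ball of radius $R$ in reverse time. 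Third, for the time regularity of $\tilde{\mathbf{v}}$ along trajectories, we use Assumption~\ref{ass:alpha_regularity}: derivatives of $\alpha_t$, $\sigma_t^{-2}$ and of the posterior mean in $t$ are bounded on $\Omega$. This yields $\|r_k\|\le C\Delta\tau^2(1+R_0)$. A second-moment bound on $\hat{\mathbf{x}}_k$ of the form $\mathbb{E}\|\hat{\mathbf{x}}_k\|^2\lesssim 1+R_0^2$ follows from the same linear growth plus a discrete Grönwall argument. It is needed so that $b_k$ and $\xi_k$ have controlled second moments; note $\mathbb{E}\|\xi_k\|^2\le C(1+R_0^2)$, which is $O(1)$, not small.

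Now expand $\mathbb{E}\|e_{k+1}\|^2$. Because $\xi_k$ is a martingale difference independent of the $\mathcal{F}_k$-measurable quantities, the cross terms $\mathbb{E}\langle e_k,\xi_k\rangle$ and $\mathbb{E}\langle\text{Lipschitz term},\xi_k\rangle$ vanish. The cross terms involving $r_k$ and $\xi_k$ are handled by Cauchy--Schwarz, at a cost of order $\Delta\tau^{3}$. The remaining terms are bounded by Young's inequality with weight $\Delta\tau$, giving
\begin{equation*}
\hat E_{k+1}\le(1+\tilde C_1\Delta\tau)\hat E_k+\Delta\tau\big(C\varepsilon^2(1+R_0^2)\big)+\Delta\tau^2\,C(1+R_0^2).
\end{equation*}
The last term collects three contributions: the routing variance $\Delta\tau^2\mathbb{E}\|\xi_k\|^2$, the term $\Delta\tau^{-1}\|r_k\|^2$, which is of order $\Delta\tau^3$, and bias-squared terms. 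Unrolling with $\hat E_0=0$ and the discrete Grönwall inequality gives $\hat E_k\le\mathcal{S}(T_\delta)\big(\mathcal{C}_{\mathrm{approx}}\varepsilon^2+\mathcal{C}_{\mathrm{disc}}\Delta\tau\big)$. This is Theorem~\ref{thm:strong_convergence_main}. The routing noise is the term that caps the rate at $\Delta\tau$ rather than $\Delta\tau^2$, and that is exactly why the martingale treatment matters. Bounding it as a worst-case perturbation would give a non-vanishing $O(1)$ error.

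For Theorem~\ref{thm:w2_convergence_main}, the pair $(\hat{\mathbf{x}}_N,\mathbf{X}(T_\delta))$ is a coupling of $\hat\mu_N$ and $\mu_*$. Therefore $W_2^2(\hat\mu_N,\mu_*)\le\mathbb{E}\big[\hat E_N(\mathbf{X}(0))\big]$, and we take expectation over $\mathbf{X}(0)\sim\mathcal{N}(0,I)$. The coefficients are polynomial in $R_0=\|\mathbf{X}(0)\|+R$, and $\mathbb{E}\|\mathbf{X}(0)\|^2=d$. This yields $\mathcal{C}_{approx'}$ and $\mathcal{C}_{disc'}$ depending on $d$, with $\Delta\tau=T_\delta/N$. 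We expect the main obstacle to be establishing the uniform Lipschitz and time-derivative bounds for $\tilde{\mathbf{v}}$ on $\Omega$, with explicit $\sigma_\delta$ dependence through the posterior covariance. A secondary difficulty is keeping $\mathcal{C}_{\mathrm{disc}}$ polynomial in $R_0$, so that its Gaussian expectation is finite; we would check this by tracking the linear-growth constants explicitly.
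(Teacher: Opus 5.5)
Your proposal follows essentially the same route as the paper's proof of Theorem~\ref{thm:strong_convergence_precise_v2}: synchronous coupling, the four-way split into Lipschitz drift, routing martingale noise, network bias and Euler truncation, and orthogonality of the martingale term; then Young's inequality with weight $\Delta\tau$, the trajectory bound $\|\mathbf{X}(\tau)\|\le R_0$ together with a discrete moment bound, and finally discrete Grönwall. One slip needs fixing: $C_{NN}$ cannot depend on $\sup|f|$, which is infinite near $t=T$ (the sampler's starting point $\tau=0$) because $\alpha_T=0$ forces $\int_0^T f=-\infty$, so instead use that every occurrence of $f$ comes with a factor $\alpha_t$ and $|f(t)|\alpha_t=|\dot\alpha_t|\le C_{\alpha,1}$; separately, to recover the stated $\mathcal{C}_{\mathrm{approx}}$ you should bound $\|\hat{\mathbf{x}}_k\|\le\|e_k\|+R_0$ inside the bias term, which pushes an $\varepsilon^2$ piece into $\tilde C_1$, rather than invoking the moment bound there.
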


\section{Proof of Main Theorems}
\label{sec:main_proofs}

In this section, we provide a structured and rigorous proof sketch for the global convergence results presented in \cref{sec:convergence_theory}. The proof follows a hierarchical approach: we first establish trajectory-level strong convergence in the mean square error (MSE) sense for any fixed initial point, and then integrate this result over the Gaussian prior to obtain the final distributional convergence in Wasserstein-2 distance. For brevity, detailed algebraic derivations are deferred to \cref{sec:convergence_analysis} in the appendix.

\subsection{Trajectory-Level Analysis: Global MSE Bound}
\label{subsec:mse_analysis_main}

The goal of trajectory-level analysis is to bound the conditional mean square error (MSE): $$\hat{E}_N(\mathbf{z}) := \mathbb{E}[\|\hat{\mathbf{x}}_N - \mathbf{X}(T_{\delta})\|^2 \mid \mathbf{X}(0) = \mathbf{z}]$$ This characterizes how discretization errors, random sampling noise, and neural network approximation errors jointly affect the pathwise accuracy of the algorithm.

\textbf{Step 1: Coupling and Error Evolution.}

We construct a coupling by initializing both the numerical trajectory $\{\hat{\mathbf{x}}_k\}_{k=0}^N$ and the reference probability flow ODE path $\{\mathbf{X}(\tau)\}$ from the same initial state $\hat{\mathbf{x}}_0 = \mathbf{X}(0) = \mathbf{z} \sim \mathcal{N}(\mathbf{0}, \mathbf{I})$. Let $\hat{\mathbf{e}}_k := \hat{\mathbf{x}}_k - \mathbf{X}(\tau_k)$ be the global error at step $k$. The numerical update follows $\hat{\mathbf{x}}_{k+1} = \hat{\mathbf{x}}_k + \Delta \tau \tilde{\mathbf{v}}_\theta^{\hat{l}_k}(\hat{\mathbf{x}}_k, \tau_k)$, while the reference process advances as $\mathbf{X}(\tau_{k+1}) = \mathbf{X}(\tau_k) + \int_{\tau_k}^{\tau_{k+1}} \tilde{\mathbf{v}}(\mathbf{X}(s), s) \, ds$.

\textbf{Step 2: Core Error Decomposition.}

To isolate the distinct error sources, we introduce the average neural network velocity field $\overline{\mathbf{v}}_\theta(\mathbf{x}, \tau) := \mathbb{E}_{\hat{l}}[\tilde{\mathbf{v}}_\theta^{\hat{l}}(\mathbf{x}, \tau)] = \sum_{i=1}^L \hat{a}_\phi^i(\mathbf{x}, \tau) \tilde{\mathbf{v}}_\theta^i(\mathbf{x}, \tau)$. The error increment $\hat{\mathbf{e}}_{k+1}$ can be decomposed into four components:

\begin{equation}
\label{eq:error_decomposition_main}
\begin{split}
\hat{\mathbf{e}}_{k+1} &= \hat{\mathbf{e}}_k + \underbrace{\Delta \tau \Delta \tilde{\mathbf{v}}_k}_{\text{Deterministic spatial error propagation}} \\
&\quad + \underbrace{\left( \Delta \tau \tilde{\mathbf{v}}(\mathbf{X}(\tau_k), \tau_k) - \int_{\tau_k}^{\tau_{k+1}} \tilde{\mathbf{v}}(\mathbf{X}(s), s) \, ds \right)}_{\text{Local temporal truncation error } \mathcal{T}_k} \\
&\quad + \underbrace{\Delta \tau \left( \tilde{\mathbf{v}}_\theta^{\hat{l}_k}(\hat{\mathbf{x}}_k, \tau_k) - \overline{\mathbf{v}}_\theta(\hat{\mathbf{x}}_k, \tau_k) \right)}_{\text{Stochastic martingale difference noise } \hat{\mathcal{M}}_k} \\
&\quad + \underbrace{\Delta \tau \left( \overline{\mathbf{v}}_\theta(\hat{\mathbf{x}}_k, \tau_k) - \tilde{\mathbf{v}}(\hat{\mathbf{x}}_k, \tau_k) \right)}_{\text{Neural network approximation error } \mathcal{E}_{NN, k}},
\end{split}
\end{equation}
where $\Delta \tilde{\mathbf{v}}_k := \tilde{\mathbf{v}}(\hat{\mathbf{x}}_k, \tau_k) - \tilde{\mathbf{v}}(\mathbf{X}(\tau_k), \tau_k)$ represents the spatial error propagation due to the difference in trajectories. This decomposition separates the dynamics into:
\begin{itemize}
    \item \textbf{Deterministic Drift}: Bounds error propagation via the spatial Lipschitz continuity of $\tilde{\mathbf{v}}$ (\cref{cor:global_bound_Lv_uniform}).
    \item \textbf{Truncation Error $\mathcal{T}_k$}: Captures the $O(\Delta \tau^2)$ discretization error of the Euler scheme (\cref{lem:local_truncation_bound}).
    \item \textbf{Martingale Noise $\hat{\mathcal{M}}_k$}: Unbiased noise from sampling component velocities. Crucially, $\mathbb{E}[\hat{\mathcal{M}}_k \mid \mathcal{F}_k] = \mathbf{0}$. Its second moment is controlled by \cref{lem:martingale_bound_nn}.
    \item \textbf{NN Approximation $\mathcal{E}_{NN, k}$}: The bias due to $L^\infty$ training error $\varepsilon$ (\cref{thm:nn_velocity_error_bound}).
\end{itemize}

\textbf{Step 3: Recursive Stability Analysis.}

Squaring both sides of \cref{eq:error_decomposition_main} and taking expectations, the martingale property ($\mathbb{E}[\hat{\mathcal{M}}_k \mid \mathcal{F}_k] = \mathbf{0}$) ensures that all cross-terms involving $\hat{\mathcal{M}}_k$ vanish. This leads to the fundamental decomposition:
\begin{equation}
\label{eq:mse_decomposition_step}
\begin{aligned}
\mathbb{E}[\|\hat{\mathbf{e}}_{k+1}\|^2] 
&= \mathbb{E}[\|\hat{\mathbf{e}}_k + \Delta \tau \Delta \tilde{\mathbf{v}}_k + \mathcal{T}_k + \mathcal{E}_{NN, k}\|^2] \\
&\quad + \mathbb{E}[\|\hat{\mathcal{M}}_k\|^2].
\end{aligned}
\end{equation}
To handle the remaining deterministic terms, we apply Young's inequality twice with weight $\eta = \Delta \tau$.

\textit{First Application (Separating $\mathcal{T}_k$):}
\begin{equation}
\label{eq:young1}
\begin{split}
\mathbb{E}[\|\hat{\mathbf{e}}_{k+1}\|^2] &\le (1+\Delta \tau) \mathbb{E}[\|\hat{\mathbf{e}}_k + \Delta \tau \Delta \tilde{\mathbf{v}}_k + \mathcal{E}_{NN, k}\|^2] \\
&\quad + \left(1+\frac{1}{\Delta \tau}\right) \mathbb{E}[\|\mathcal{T}_k\|^2] + \mathbb{E}[\|\hat{\mathcal{M}}_k\|^2].
\end{split}
\end{equation}

Since $\mathbb{E}[\|\mathcal{T}_k\|^2] = O(\Delta \tau^4)$, the second term contributes $O(\Delta \tau^3)$. Additionally, \cref{lem:martingale_bound_nn} ensures $\mathbb{E}[\|\hat{\mathcal{M}}_k\|^2] \le C_{\mathcal{M}} \Delta \tau^2 (1 + \|\hat{\mathbf{x}}_0\|^2)$, which is $O(\Delta \tau^2)$.

\textit{Second Application (Separating $\mathcal{E}_{NN, k}$):}
\begin{equation}
\label{eq:young2}
\begin{aligned}
&\mathbb{E}[\|\hat{\mathbf{e}}_k + \Delta \tau \Delta \tilde{\mathbf{v}}_k + \mathcal{E}_{NN, k}\|^2] \\
&\quad \le (1+\Delta \tau) \mathbb{E}[\|\hat{\mathbf{e}}_k + \Delta \tau \Delta \tilde{\mathbf{v}}_k\|^2] \\
&\qquad + \left(1+\frac{1}{\Delta \tau}\right) \mathbb{E}[\|\mathcal{E}_{NN, k}\|^2].
\end{aligned}
\end{equation}
Using the Lipschitz condition $\|\Delta \tilde{\mathbf{v}}_k\| \le L_{x,\delta} \|\hat{\mathbf{e}}_k\|$, the core drift term satisfies:
\begin{equation}
\mathbb{E}[\|\hat{\mathbf{e}}_k + \Delta \tau \Delta \tilde{\mathbf{v}}_k\|^2] \le (1 + 2L_{x,\delta} \Delta \tau + L_{x,\delta}^2 \Delta \tau^2) \hat{E}_k.
\end{equation}
Combining these with the NN error bound $\|\mathcal{E}_{NN, k}\|^2 = O(\varepsilon^2 \Delta \tau^2 (1+\hat{E}_k))$, we arrive at the standard recursion:
\begin{equation}
\label{eq:mse_recursion_standard}
\hat{E}_{k+1} \le (1 + \tilde{C}_1 \Delta \tau) \hat{E}_k + \Gamma_1 \varepsilon^2 \Delta \tau + \Gamma_2 \Delta \tau^2 + \Gamma_3(\mathbf{z}) \Delta \tau^3,
\end{equation}
where $\tilde{C}_1 = (2L_{x,\delta} + \Delta \tau_0 L_{x,\delta}^2)(1+\Delta \tau_0)^2 + 4C_{NN}^2(1+\Delta \tau_0)^2 \varepsilon^2$ is the growth exponent.

\textbf{Step 4: Global Bound via Discrete Grönwall Lemma.}

By summing the contributions from each step $k=0, \dots, N-1$, the $O(\Delta \tau^2)$ and $O(\Delta \tau^3)$ terms aggregate to $O(\Delta \tau)$ and $O(\Delta \tau^2)$ respectively over the finite interval $[0, T_\delta]$. Applying the discrete Grönwall inequality to \cref{eq:mse_recursion_standard} over $N$ steps:
\begin{equation}
\label{eq:gronwall_final}
\sup_{k \le N} \hat{E}_k \le \underbrace{\frac{e^{\tilde{C}_1 T_{\delta}} - 1}{\tilde{C}_1}}_{\mathcal{S}(T_{\delta})} \left( \mathcal{C}_{approx}(\mathbf{z}) \varepsilon^2 + \mathcal{C}_{disc}(\mathbf{z}) \Delta \tau \right).
\end{equation}
The stability factor $\mathcal{S}(T_{\delta})$ reflects the system's sensitivity to perturbations. This establishes \cref{thm:strong_convergence_main}, showing that pathwise error consists of a non-vanishing approximation part and a vanishing discretization part.

\subsection{Stability Factor and Error Constants}
\label{subsec:constants_discussion}

The global convergence bound in \cref{thm:w2_convergence_main} is characterized by several fundamental constants that reflect the interplay between data geometry, noise scheduling, and neural network complexity.The explicit forms of these constants are provided in the appendix.

\textbf{Stability Factor $\mathcal{S}(T_{\delta})$.} The stability factor $\mathcal{S}(T_{\delta}) = (e^{\tilde{C}_1 T_{\delta}} - 1) / \tilde{C}_1$ represents the cumulative sensitivity of the ODE to perturbations. The growth exponent $\tilde{C}_1$ is dominated by the uniform spatial Lipschitz constant $L_{x,\delta}$, which for our framework is given by \cref{cor:global_bound_Lv_uniform}:
\begin{equation}
L_{x,\delta} = C_{\alpha,1} \frac{\alpha_\delta}{\sigma_\delta^2} \left( 1 + \frac{R^2}{\sigma_\delta^2} \right).
\end{equation}

\textbf{Approximation Constant $\mathcal{C}_{approx}$.} This constant characterizes the amplification of $L^\infty$ training error $\varepsilon$ into trajectory $L^2$ error. It explicitly depends on the neural network approximation constant $C_{NN}$ defined as:
\begin{equation}
C_{NN} = \frac{C_{\alpha,1}}{\sigma_{\delta}^2} \left[ L \left( 1 + R + \frac{R^2}{\sigma_{\delta}^2} \right) + 1 \right].
\end{equation}
Notably, $C_{NN}$ scales linearly with the number of classes $L$, which corresponds to the complexity of the "Mix-of-Velocity" field. This linear dependence is a direct consequence of our weighted averaging of mixture components.

\textbf{Discretization Constant $\mathcal{C}_{disc}$.} The discretization constant consists of two parts: the martingale noise constant $C_{\mathcal{M}}$ and the truncation error coefficient $C_{trunc}$.
\begin{itemize}
    \item $C_{\mathcal{M}}$ accounts for the variance introduced by the stochastic selection of component velocities. It depends on the linear growth coefficient $M'_{\delta}$ of the neural network velocity field.
    \item $C_{trunc}$ arises from the Euler discretization of the probability flow ODE. Its magnitude is determined by the temporal derivatives of the velocity field, which involve the second derivative $C_{\alpha,2}$ of the noise schedule $\alpha_t$.
\end{itemize}
By integrating these over $\mathbf{Z} \sim \mathcal{N}(\mathbf{0}, \mathbf{I})$, we obtain the dimension-dependent bound in \cref{thm:w2_convergence_main}. The $O(d)$ and $O(\sqrt{d})$ dependencies in $\mathcal{C}_{approx}$ and $\mathcal{C}_{disc}$ arise from the Gaussian second and first moments, respectively, indicating how the error scales as we move to higher-dimensional data spaces.

\subsection{Distribution-Level Analysis: $W_2$ Convergence}
\label{subsec:w2_analysis_main}

To prove \cref{thm:w2_convergence_main}, we integrate the trajectory-level MSE over the Gaussian prior $\pi = \mathcal{N}(\mathbf{0}, \mathbf{I})$.

\textbf{Step 1: Coupling Bound for Wasserstein Distance.}
From the definition of $W_2$ and the natural coupling constructed above:
\begin{equation}
\label{eq:w2_coupling_integral}
W_2^2(\hat{\mu}_N, \mu_*) \le \int_{\mathbb{R}^d} \hat{E}_N(\mathbf{z}) \, d\pi(\mathbf{z}) = \mathbb{E}_{\mathbf{Z} \sim \pi} \left[ \hat{E}_N(\mathbf{Z}) \right].
\end{equation}
The challenge lies in the dependence of $\mathcal{C}_{approx}(\mathbf{Z})$ and $\mathcal{C}_{disc}(\mathbf{Z})$ on the initial state $\mathbf{Z}$.

\textbf{Step 2: Tracking Initial-Value Dependence.}
The error constants scale with the norm of the initial state as follows:
\begin{itemize}
    \item \textbf{Approximation constant}: $\mathcal{C}_{approx}(\mathbf{z})$ depends on the effective data manifold radius $R_0(\mathbf{z}) = \|\mathbf{z}\| + R$, resulting in a quadratic dependence on $\|\mathbf{z}\|$.
    \item \textbf{Discretization constant}: $\mathcal{C}_{disc}(\mathbf{z})$ involves the martingale variance $C_{\mathcal{M}}(1+\|\mathbf{z}\|^2)$ and the squared truncation coefficient $C_{trunc}^2(\mathbf{z})$, which is also quadratic in $\|\mathbf{z}\|$.
\end{itemize}

\textbf{Step 3: Moment Integration and Final Result.}
Since $\mathbf{Z} \sim \mathcal{N}(\mathbf{0}, \mathbf{I}_d)$, we utilize the standard moments $\mathbb{E}[\|\mathbf{Z}\|^2] = d$ and $\mathbb{E}[\|\mathbf{Z}\|] \le \sqrt{d}$. Integrating \cref{eq:gronwall_final} yields:
\begin{equation}
\label{eq:final_w2_bound}
    \begin{aligned}
    &W_2^2(\hat{\mu}_N, \mu_*) \\
    &\le \mathcal{S}(T_{\delta}) \left( \mathbb{E}_{\mathbf{Z}}[\mathcal{C}_{approx}(\mathbf{Z})] \varepsilon^2 + \mathbb{E}_{\mathbf{Z}}[\mathcal{C}_{disc}(\mathbf{Z})] \frac{T_{\delta}}{N} \right) \\
    &\le \mathcal{S}(T_{\delta}) \left( \mathcal{C}_{approx’} \varepsilon^2 + \frac{\mathcal{C}_{disc’}}{N} \right).
    \end{aligned}
\end{equation}

The final constants $\mathcal{C}_{approx}$ and $\mathcal{C}_{disc}$ depend on the problem geometry $(d, R)$ and mixture complexity $L$, but are universal for the distribution. This completes the proof of \cref{thm:w2_convergence_main}, demonstrating that the Mix-of-Velocity algorithm converges at rate $O(N^{-1/2})$ in $W_2$ distance while maintaining robustness to neural network approximation errors.

\section{Discussion and Limitations}
\label{sec:discussion}

Our theoretical framework establishes the first rigorous Wasserstein-2 convergence guarantees for decentralized diffusion models using ODE-based samplers. By modeling the routing mechanism as a martingale difference sequence, we bridge the gap between deterministic ODE theory and the stochastic nature of decentralized architectures. However, several theoretical challenges and avenues for future research remain.

\textbf{Norm Discrepancy in Approximation Errors.}
Our convergence analysis relies on the assumption that the neural velocity approximation error is bounded in the $L^\infty$ norm (Assumption \ref{ass:nn_approx}). This uniform bound is crucial for controlling the deterministic drift of the particle trajectories via Grönwall-type inequalities. However, standard score matching and flow matching training objectives typically minimize an $L^2$-weighted error. Bridging this discrepancy---establishing trajectory-wise convergence rates under weaker $L^2$ error assumptions---remains a fundamental open problem in the broader literature of diffusion models, not limited to the decentralized setting.

\textbf{Singularity near Data Manifold ($t \to 0$).}
As indicated in our main theorem, the convergence rate depends on the inverse of the truncation time $\delta$. This dependence reflects the inherent singularity of the score function (and consequently the velocity field) as the diffusion time approaches zero and the distribution collapses onto a lower-dimensional data manifold. While our use of $\delta$ serves as a principled early-stopping mechanism similar to \citet{chen2023sampling}, developing techniques to handle this "explosive" behavior in our decentralized setting would be a significant theoretical advancement.

\textbf{Higher-Order Training for Categorical Guidance.}
While our theoretical analysis focuses on the velocity field $\mathbf{v}(\mathbf{x}, t)$ (the first-order temporal derivative of the flow), our experimental results reveal an intriguing phenomenon regarding the posterior probability network $a(\mathbf{x}, t)$. Specifically, training a network to predict the temporal derivative $\frac{da}{dt}$ and then integrating via ODE during inference produces superior categorical guidance compared to directly learning $a(\mathbf{x}, t)$ through classification objectives. This mirrors the advantage of score-based models over directly modeling densities: temporal derivatives provide a more stable training signal and facilitate better trajectory tracking during few-step sampling.

This observation suggests a general principle: \emph{higher-order training objectives can improve ODE-based inference even when the quantity of interest is lower-order}. Extending our convergence analysis (\cref{thm:strong_convergence_precise_v2}) to formally characterize the benefits of learning $\frac{da_\phi}{dt}$ versus $a_\phi$ directly---particularly regarding how the temporal derivative regularization (\cref{eq:reg_loss}) affects the approximation error $\varepsilon$ and consequently the overall convergence rate---is an important direction for future theoretical work. Such analysis would provide rigorous justification for the empirical gains observed in \cref{tab:fid_results,fig:error_scaling}.

\section{Conclusion}

\label{sec:conclusion}

In this paper, We studied the theoretical convergence of probability flow ODE (PF-ODE) sampling for diffusion/flow-matching models in decentralized settings where the data distribution admits a mixture structure and the global generation dynamics are induced by combining multiple  experts. To bridge the gap between practical mixture-of-experts samplers and classical ODE convergence theory (which typically assumes a single smooth global field), we introduced a velocity field decomposition framework that represents the global velocity as a posterior-weighted superposition of component-wise velocity fields.

Under standard regularity conditions and explicit uniform approximation guarantees for both the routing weights and the component predictors, we established a non-asymptotic convergence theory for the resulting discrete-time sampler. Our main result provides an explicit Wasserstein-2 bound between the generated distribution and the reference PF-ODE solution at a truncated time \(t=\delta\), with the total error decomposing naturally into (i) a neural approximation term controlled by \(\varepsilon\) and (ii) a discretization term controlled by the step number \(N\). Importantly, the constants in the bound reveal how the convergence depends on the problem geometry (dimension \(d\), support radius \(R\)), the mixture complexity \(L\), and the truncation/noise level \(\sigma_\delta\), thereby offering a quantitative baseline for decentralized diffusion architectures driven by mixture dynamics.

%\section*{Ethics statement}
%As a nearly pure theoretical work, we feel our work is less likely to suffer from ethical issues.
%The authors have read and comply with the ICML Code of Ethics. This research does not involve
%human subjects or personally identifiable information. 
%We do not foresee harmful or dual-use implications from the proposed methods. There are no conflicts of interest or undisclosed sponsorship.
\section*{Impact Statement}
As a nearly pure theoretical work, we feel our work is less likely to suffer from ethical issues.
The authors have read and comply with the ICML Research Ethics. This research does not involve
human subjects or personally identifiable information. 
We do not foresee harmful or dual-use implications from the proposed methods. There are no conflicts of interest or undisclosed sponsorship.

This paper presents work whose goal is to advance the field of Machine
Learning. There are many potential societal consequences of our work, none
which we feel must be specifically highlighted here.

\bibliographystyle{unsrtnat}
\bibliography{references}  %%% Uncomment this line and comment out the ``thebibliography'' section below to use the external .bib file (using bibtex) .

\newpage
\appendix
\onecolumn
\section{Properties of Score Functions}
\label{sec:score_properties}

This section establishes key regularity properties of score functions that are essential for our convergence analysis. We prove spatial Lipschitz continuity and temporal derivative bounds under the assumptions stated in \cref{sec:convergence_theory}.

\subsection{Spatial Lipschitz Continuity of Score Functions}

\begin{proposition}[Global Lipschitz Continuity of Score Functions with Bounded Support]
\label{lem:score_lipschitz_bounded}
Under \cref{ass:p0_bounded} and \cref{ass:alpha_regularity}, for the score function $\mathbf{s}(\mathbf{x}, t) := \nabla_{\mathbf{x}} \log p_t(\mathbf{x})$ defined in \cref{def:score_function}, the following holds:

For any fixed $t>0$, $\mathbf{s}(\cdot,t)$ is globally Lipschitz continuous on $\mathbb{R}^d$ with Lipschitz constant:
\begin{equation}
\label{eq:Ls_bound_bounded}
L_s(t|\alpha_t,R) := \sup_{\mathbf{x}}\|\nabla_{\mathbf{x}}^2\log p_t(\mathbf{x})\|
\le \frac{1}{\sigma_t^2} + \frac{\alpha_t^2 R^2}{\sigma_t^4}
= \frac{1}{1-\alpha_t^2} + \frac{\alpha_t^2 R^2}{(1-\alpha_t^2)^2}.
\end{equation}
Consequently, for any $\mathbf{x}_1, \mathbf{y}_1 \in \mathbb{R}^d$:
\begin{equation}
\label{eq:score_lipschitz}
\|\mathbf{s}(\mathbf{x}_1,t) - \mathbf{s}(\mathbf{y}_1,t)\| \le L_s(t|\alpha_t,R)\,\|\mathbf{x}_1-\mathbf{y}_1\|.
\end{equation}
\end{proposition}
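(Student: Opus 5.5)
The argument rests on the standard posterior-covariance (second Tweedie) identity for the Hessian of the log-density of a Gaussian-smoothed measure. First I will write $p_t(\mathbf{x}) = \int (2\pi\sigma_t^2)^{-d/2}\exp\!\left(-\|\mathbf{x}-\alpha_t\mathbf{y}\|^2/(2\sigma_t^2)\right) p_0(\mathbf{y})\,d\mathbf{y}$, using the Gaussian kernel of \cref{def:vp_sde}. Because $p_0$ has compact support by \cref{ass:p0_bounded}, differentiation under the integral is justified for every $t>0$, since the integrand and all its $\mathbf{x}$-derivatives are dominated locally uniformly.

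Next I introduce the posterior $q_{\mathbf{x},t}(\mathbf{y}) \propto \exp\!\left(-\|\mathbf{x}-\alpha_t\mathbf{y}\|^2/(2\sigma_t^2)\right)p_0(\mathbf{y})$. Differentiating once gives Tweedie's formula
\[
\nabla_{\mathbf{x}}\log p_t(\mathbf{x}) = -\frac{\mathbf{x}-\alpha_t\,\mathbb{E}_{q}[\mathbf{y}]}{\sigma_t^2}.
\]
Differentiating again, where the $\mathbf{x}$-derivative of the posterior mean equals $\frac{\alpha_t}{\sigma_t^2}\mathrm{Cov}_q(\mathbf{y})$ because $\log q$ depends on $\mathbf{x}$ through the linear term $\alpha_t\mathbf{x}^\top\mathbf{y}/\sigma_t^2$, gives
\[
\nabla_{\mathbf{x}}^2\log p_t(\mathbf{x}) = -\frac{1}{\sigma_t^2}\mathbf{I} + \frac{\alpha_t^2}{\sigma_t^4}\,\mathrm{Cov}_{q_{\mathbf{x},t}}(\mathbf{y}).
\]

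The bound then follows from the triangle inequality for the operator norm. The key observation is that $q_{\mathbf{x},t}$ is supported in $\{\|\mathbf{y}\|\le R\}$, since its support is contained in that of $p_0$. Therefore
\[
0 \preceq \mathrm{Cov}_q(\mathbf{y}) \preceq \mathbb{E}_q[\mathbf{y}\mathbf{y}^\top] \preceq R^2\mathbf{I},
\]
so $\|\mathrm{Cov}_q\|\le R^2$ uniformly in $\mathbf{x}$. This yields $\|\nabla^2\log p_t\|\le \sigma_t^{-2}+\alpha_t^2R^2\sigma_t^{-4}$. Substituting $\sigma_t^2=1-\alpha_t^2$ gives the stated form. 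In fact, since the Hessian is a difference of two commuting-in-sign terms, one even gets $\max(\cdot,\cdot)$, but the sum suffices.

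Finally, \eqref{eq:score_lipschitz} follows from the mean value inequality: $\mathbf{s}(\mathbf{x}_1)-\mathbf{s}(\mathbf{y}_1)=\int_0^1 \nabla^2\log p_t(\mathbf{y}_1+r(\mathbf{x}_1-\mathbf{y}_1))(\mathbf{x}_1-\mathbf{y}_1)\,dr$, with the uniform Hessian bound applied to the integrand. The only step requiring care is the second differentiation, namely correctly computing the derivative of the posterior mean as a covariance and justifying the interchange of differentiation and integration. Bounded support makes the interchange routine; \cref{ass:alpha_regularity} is only needed to guarantee $\sigma_t>0$ for $t>0$.
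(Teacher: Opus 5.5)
Your proposal is correct and follows essentially the same route as the paper: the identity $\nabla^2\log p_t = -\sigma_t^{-2}\mathbf{I} + \alpha_t^2\sigma_t^{-4}\,\mathrm{Cov}(\mathbf{x}_0\mid\mathbf{x}_t=\mathbf{x})$, the bound $\|\mathrm{Cov}(\mathbf{x}_0\mid\mathbf{x}_t=\mathbf{x})\|\le R^2$ from bounded support, the triangle inequality, and the mean value theorem. The only difference is cosmetic: you get the Hessian by differentiating the posterior mean in Tweedie's formula, whereas the paper applies the quotient rule $\nabla^2 p_t/p_t - \mathbf{s}\mathbf{s}^\top$ and recognizes a covariance; your side remark that the sum can be sharpened to a maximum is also correct.
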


\begin{proof}
Let $\sigma_t^2 = 1-\alpha_t^2$. The marginal density $p_t(\mathbf{x})$ is given by convolution of the initial distribution with a Gaussian transition kernel:
\begin{equation}
\label{eq:p_t_definition}
p_t(\mathbf{x}) = \int p_0(\mathbf{x}_0) \, \mathcal{N}(\mathbf{x}; \alpha_t \mathbf{x}_0, \sigma_t^2 \mathbf{I}) \, d\mathbf{x}_0.
\end{equation}
Denote the Gaussian kernel by $\varphi(\mathbf{x} | \mathbf{x}_0) := \mathcal{N}(\mathbf{x}; \alpha_t \mathbf{x}_0, \sigma_t^2 \mathbf{I}) = (2\pi\sigma_t^2)^{-d/2} \exp\left( -\frac{\|\mathbf{x} - \alpha_t \mathbf{x}_0\|^2}{2\sigma_t^2} \right)$.

Since the Gaussian kernel is smooth and $p_0$ has compact support, Leibniz's integral rule applies. The first and second derivatives of $\varphi$ with respect to $\mathbf{x}$ are:
\begin{align}
\nabla_{\mathbf{x}} \varphi(\mathbf{x} | \mathbf{x}_0) &= -\frac{\mathbf{x} - \alpha_t \mathbf{x}_0}{\sigma_t^2} \varphi(\mathbf{x} | \mathbf{x}_0), \label{eq:phi_grad1}\\
\nabla_{\mathbf{x}}^2 \varphi(\mathbf{x} | \mathbf{x}_0) &= \left( -\frac{1}{\sigma_t^2}\mathbf{I} + \frac{(\mathbf{x} - \alpha_t \mathbf{x}_0)(\mathbf{x} - \alpha_t \mathbf{x}_0)^\top}{\sigma_t^4} \right) \varphi(\mathbf{x} | \mathbf{x}_0). \label{eq:phi_grad2}
\end{align}

\paragraph{Step 1: Computing the Score Function}
The score function is defined as $\mathbf{s}(\mathbf{x}, t) = \nabla_{\mathbf{x}} \log p_t(\mathbf{x}) = \frac{\nabla p_t(\mathbf{x})}{p_t(\mathbf{x})}$.
Using \cref{eq:phi_grad1}:
\begin{equation}
\nabla p_t(\mathbf{x}) = \int p_0(\mathbf{x}_0) \nabla_{\mathbf{x}} \varphi(\mathbf{x} | \mathbf{x}_0) d\mathbf{x}_0 
= -\frac{1}{\sigma_t^2} \int (\mathbf{x} - \alpha_t \mathbf{x}_0) p(\mathbf{x}, \mathbf{x}_0) d\mathbf{x}_0.
\end{equation}
Define the posterior mean (denoising estimate):
\begin{equation}
\hat{\mathbf{x}}_0(\mathbf{x},t) := \mathbb{E}[\mathbf{x}_0 \mid \mathbf{x}_t = \mathbf{x}] = \frac{\int \mathbf{x}_0 p(\mathbf{x}, \mathbf{x}_0) d\mathbf{x}_0}{p_t(\mathbf{x})}.
\end{equation}
Dividing by $p_t(\mathbf{x})$ yields Tweedie's formula:
\begin{equation}
\label{eq:score_tweedie}
\mathbf{s}(\mathbf{x}, t) = -\frac{1}{\sigma_t^2} \left( \mathbf{x} - \alpha_t \hat{\mathbf{x}}_0(\mathbf{x},t) \right).
\end{equation}

\paragraph{Step 2: Computing the Hessian Matrix}
Taking the second derivative of $\log p_t(\mathbf{x})$ using the quotient rule:
\begin{equation}
\label{eq:hessian_quotient}
\nabla_{\mathbf{x}}^2 \log p_t(\mathbf{x}) = \frac{\nabla^2 p_t(\mathbf{x})}{p_t(\mathbf{x})} - \left( \frac{\nabla p_t(\mathbf{x})}{p_t(\mathbf{x})} \right) \left( \frac{\nabla p_t(\mathbf{x})}{p_t(\mathbf{x})} \right)^\top.
\end{equation}
Using \cref{eq:phi_grad2}, the first term becomes:
\begin{equation}
\frac{\nabla^2 p_t(\mathbf{x})}{p_t(\mathbf{x})} 
= \int \left( -\frac{1}{\sigma_t^2}\mathbf{I} + \frac{(\mathbf{x} - \alpha_t \mathbf{x}_0)(\mathbf{x} - \alpha_t \mathbf{x}_0)^\top}{\sigma_t^4} \right) p(\mathbf{x}_0 | \mathbf{x}) d\mathbf{x}_0,
\end{equation}
where $p(\mathbf{x}_0 | \mathbf{x}) = \frac{p(\mathbf{x}, \mathbf{x}_0)}{p_t(\mathbf{x})}$ is the posterior density.

For the second term in \cref{eq:hessian_quotient}, substituting \cref{eq:score_tweedie}:
\begin{equation}
\mathbf{s}(\mathbf{x}, t)\mathbf{s}(\mathbf{x}, t)^\top = \frac{1}{\sigma_t^4} \left( \mathbf{x} - \alpha_t \hat{\mathbf{x}}_0(\mathbf{x}) \right) \left( \mathbf{x} - \alpha_t \hat{\mathbf{x}}_0(\mathbf{x}) \right)^\top.
\end{equation}

Let $\mathbf{r} = \mathbf{x} - \alpha_t \mathbf{x}_0$. Note that:
\begin{equation}
\mathbb{E}[\mathbf{r} \mid \mathbf{x}] = \mathbf{x} - \alpha_t \mathbb{E}[\mathbf{x}_0 \mid \mathbf{x}] = \mathbf{x} - \alpha_t \hat{\mathbf{x}}_0(\mathbf{x}).
\end{equation}

By the covariance identity $\text{Cov}(\mathbf{Y}) = \mathbb{E}[\mathbf{Y}\mathbf{Y}^\top] - \mathbb{E}[\mathbf{Y}]\mathbb{E}[\mathbf{Y}]^\top$, subtracting the two terms in \cref{eq:hessian_quotient} yields:
\begin{equation}
\nabla_{\mathbf{x}}^2 \log p_t(\mathbf{x}) = -\frac{1}{\sigma_t^2}\mathbf{I} + \frac{1}{\sigma_t^4} \text{Cov}(\mathbf{x} - \alpha_t \mathbf{x}_0 \mid \mathbf{x}_t = \mathbf{x}).
\end{equation}

Since $\mathbf{x}$ is constant given $\mathbf{x}_t=\mathbf{x}$ (zero variance):
\begin{equation}
\text{Cov}(\mathbf{x} - \alpha_t \mathbf{x}_0 \mid \mathbf{x}) = \alpha_t^2 \text{Cov}(\mathbf{x}_0 \mid \mathbf{x}).
\end{equation}

Thus:
\begin{equation}
\label{eq:hessian_exact}
\nabla_{\mathbf{x}}^2 \log p_t(\mathbf{x}) = -\frac{1}{\sigma_t^2}\mathbf{I} + \frac{\alpha_t^2}{\sigma_t^4} \text{Cov}(\mathbf{x}_0 \mid \mathbf{x}_t = \mathbf{x}).
\end{equation}

\paragraph{Step 3: Bounding the Lipschitz Constant}
By \cref{ass:p0_bounded}, $\|\mathbf{x}_0\| \le R$ almost surely. For any unit vector $\mathbf{v} \in \mathbb{R}^d$:
\begin{equation}
\mathbf{v}^\top \mathbb{E}[\mathbf{x}_0 \mathbf{x}_0^\top \mid \mathbf{x}] \mathbf{v} = \mathbb{E}[(\mathbf{v}^\top \mathbf{x}_0)^2 \mid \mathbf{x}] \le \mathbb{E}[\|\mathbf{x}_0\|^2 \mid \mathbf{x}] \le R^2.
\end{equation}
Hence $\mathbb{E}[\mathbf{x}_0 \mathbf{x}_0^\top \mid \mathbf{x}] \preceq R^2 \mathbf{I}$.

Since $\text{Cov}(\mathbf{x}_0 \mid \mathbf{x}) \preceq \mathbb{E}[\mathbf{x}_0 \mathbf{x}_0^\top \mid \mathbf{x}]$, the spectral norm satisfies:
\begin{equation}
\| \text{Cov}(\mathbf{x}_0 \mid \mathbf{x}) \| \le R^2.
\end{equation}

Taking the spectral norm of \cref{eq:hessian_exact} and applying the triangle inequality:
\begin{equation}
\| \nabla_{\mathbf{x}}^2 \log p_t(\mathbf{x}) \| \le \frac{1}{\sigma_t^2} + \frac{\alpha_t^2 R^2}{\sigma_t^4}.
\end{equation}

By the Mean Value Theorem for vector-valued functions on the convex set $\mathbb{R}^d$:
\begin{equation}
\|\mathbf{s}(\mathbf{x}_1, t) - \mathbf{s}(\mathbf{y}_1, t)\| \le \sup_{\mathbf{z} \in \mathbb{R}^d} \| \nabla_{\mathbf{x}}^2 \log p_t(\mathbf{z}) \| \, \|\mathbf{x}_1 - \mathbf{y}_1\| \le L_s(t) \|\mathbf{x}_1 - \mathbf{y}_1\|,
\end{equation}
which completes the proof.
\end{proof}

\subsection{Temporal Derivative Bounds for Score Functions}

Having established spatial Lipschitz continuity in \cref{lem:score_lipschitz_bounded}, we now analyze the temporal behavior of score functions. Establishing bounds on the time derivative $\frac{\partial \mathbf{s}(\mathbf{x},t)}{\partial t}$ is crucial for analyzing existence and uniqueness of probability flow ODE solutions and for controlling discretization error.

\begin{proposition}[Linear Growth Bound for Time Derivative of Score Function]
\label{prop:score_time_deriv_bound}
Under \cref{ass:p0_bounded} and \cref{ass:alpha_regularity}, for any fixed $t \in (0, T]$, the time derivative of the score function $\mathbf{s}(\mathbf{x}, t) := \nabla_{\mathbf{x}} \log p_t(\mathbf{x})$ satisfies:
\begin{equation}
\label{eq:score_time_bound}
\left\| \frac{\partial \mathbf{s}(\mathbf{x}, t)}{\partial t} \right\| \le K_1(t) \|\mathbf{x}\| + K_0(t), \quad \forall \mathbf{x} \in \mathbb{R}^d,
\end{equation}
where the coefficients depend only on $\alpha_t$ and its derivatives:
\begin{align}
K_1(t) &= -\frac{d\alpha_t}{dt} \left[ \frac{2\alpha_t}{\sigma_t^4} + \frac{\alpha_t(1+\alpha_t^2)}{\sigma_t^6} R^2 \right], \label{eq:K1_def} \\
K_0(t) &= -\frac{d\alpha_t}{dt} \left[ \frac{1+\alpha_t^2}{\sigma_t^4} R + \frac{2\alpha_t^2}{\sigma_t^6} R^3 \right], \label{eq:K0_def}
\end{align}
with $R$ being the radius of the data support and $\sigma_t^2=1-\alpha_t^2$. Note that since $\frac{d\alpha_t}{dt} \le 0$ by assumption, both coefficients are positive.
\end{proposition}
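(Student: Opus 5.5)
We differentiate Tweedie's formula \cref{eq:score_tweedie} in $t$. Write $\mathbf{s}(\mathbf{x},t) = -\sigma_t^{-2}\mathbf{x} + \alpha_t\sigma_t^{-2}\hat{\mathbf{x}}_0(\mathbf{x},t)$. Use $\dot\alpha_t := \frac{d\alpha_t}{dt}$ and $\frac{d}{dt}\sigma_t^2 = -2\alpha_t\dot\alpha_t$. The product rule gives three pieces:
\begin{equation*}
\partial_t \mathbf{s} = -\frac{2\alpha_t\dot\alpha_t}{\sigma_t^4}\mathbf{x} + \frac{d}{dt}\Big(\frac{\alpha_t}{\sigma_t^2}\Big)\hat{\mathbf{x}}_0 + \frac{\alpha_t}{\sigma_t^2}\,\partial_t\hat{\mathbf{x}}_0,
\end{equation*}
with $\frac{d}{dt}(\alpha_t/\sigma_t^2) = \dot\alpha_t(1+\alpha_t^2)/\sigma_t^4$. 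Since $\|\hat{\mathbf{x}}_0\|\le R$ by \cref{ass:p0_bounded}, the first two pieces already give the terms $-\dot\alpha_t\,2\alpha_t\sigma_t^{-4}\|\mathbf{x}\|$ in $K_1$ and $-\dot\alpha_t(1+\alpha_t^2)\sigma_t^{-4}R$ in $K_0$.

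\textbf{Main step: the denoiser's time derivative.} Write the posterior as $p(\mathbf{x}_0\mid\mathbf{x}) \propto p_0(\mathbf{x}_0)\exp(-\Phi_t(\mathbf{x}_0))$ with $\Phi_t = \|\mathbf{x}-\alpha_t\mathbf{x}_0\|^2/(2\sigma_t^2)$; the normalizing factor $(2\pi\sigma_t^2)^{-d/2}$ cancels. Differentiating the ratio (Leibniz's rule is justified by compact support) gives the covariance identity
\begin{equation*}
\partial_t\hat{\mathbf{x}}_0 = -\mathrm{Cov}\big(\mathbf{x}_0,\ \partial_t\Phi_t(\mathbf{x}_0)\mid\mathbf{x}\big).
\end{equation*}
Compute $\partial_t\Phi_t = -\dot\alpha_t\,\mathbf{x}_0^\top(\mathbf{x}-\alpha_t\mathbf{x}_0)/\sigma_t^2 + \alpha_t\dot\alpha_t\|\mathbf{x}-\alpha_t\mathbf{x}_0\|^2/\sigma_t^4$. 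Expand this and drop the terms constant in $\mathbf{x}_0$, which do not contribute to the covariance. The remaining random part is
\begin{equation*}
\frac{\dot\alpha_t}{\sigma_t^4}\Big[-(1+\alpha_t^2)\,\mathbf{x}_0^\top\mathbf{x} + \alpha_t\|\mathbf{x}_0\|^2\Big],
\end{equation*}
using $\sigma_t^2+2\alpha_t^2 = 1+\alpha_t^2$ and $\alpha_t\sigma_t^2+\alpha_t^3 = \alpha_t$.

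Bound each covariance by Cauchy–Schwarz. For $\mathrm{Cov}(\mathbf{x}_0, \mathbf{x}_0^\top\mathbf{x})$, test against a unit vector $\mathbf{u}$: $|\mathbf{u}^\top\mathrm{Cov}(\mathbf{x}_0\mid\mathbf{x})\mathbf{x}|\le R^2\|\mathbf{x}\|$, reusing $\|\mathrm{Cov}(\mathbf{x}_0\mid\mathbf{x})\|\le R^2$ from the proof of \cref{lem:score_lipschitz_bounded}. For $\mathrm{Cov}(\mathbf{x}_0,\|\mathbf{x}_0\|^2)$, the bound is $\le R\cdot R^2$ up to a constant: one can center $\|\mathbf{x}_0\|^2$ and use its range $\le R^2$ together with $\|\mathbf{x}_0-\hat{\mathbf{x}}_0\|\le 2R$, or simply $\mathbb{E}\|\mathbf{x}_0\|\,\|\mathbf{x}_0\|^2 + \|\hat{\mathbf{x}}_0\|\,\mathbb{E}\|\mathbf{x}_0\|^2\le 2R^3$. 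Multiplying by $\alpha_t/\sigma_t^2$ yields
\begin{equation*}
-\dot\alpha_t\Big[\frac{\alpha_t(1+\alpha_t^2)R^2}{\sigma_t^6}\|\mathbf{x}\| + \frac{2\alpha_t^2R^3}{\sigma_t^6}\Big],
\end{equation*}
which are exactly the remaining terms of $K_1$ and $K_0$. Collecting everything with the triangle inequality gives \cref{eq:score_time_bound}.

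\textbf{Main obstacle.} The hard part is the constant bookkeeping in the covariance term, in particular getting the factor $2$ (rather than $1$) in front of $R^3$ and the factor $(1+\alpha_t^2)$. I would try the crude bound $2R^3$ first, since it matches $K_0$ exactly. The sign convention $\dot\alpha_t\le0$ then makes all coefficients positive.
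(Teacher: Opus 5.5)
Your proposal is correct and follows the paper's proof essentially step for step. You differentiate Tweedie's formula into the same three pieces, write $\partial_t\hat{\mathbf{x}}_0$ as a posterior covariance with the $t$-derivative of the log Gaussian kernel (the paper isolates this as \cref{lem:mu_time_deriv_bound}), and bound the linear and quadratic covariance terms by $R^2\|\mathbf{x}\|$ and the crude $2R^3$. Your signs for the $\mathbf{x}_0$-dependent part of $\partial_t\Phi_t$ are the correct ones, whereas the paper's $C_{\text{quad}}$ and $\mathbf{c}_{\text{lin}}$ drop an overall minus sign; this is harmless, since only magnitudes enter the bound.
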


\begin{proof}
We start from Tweedie's formula:
\begin{equation}
\label{eq:score_as_posterior_mean_redux}
\mathbf{s}(\mathbf{x}, t) = -\frac{1}{\sigma_t^2}\bigl(\mathbf{x} - \alpha_t \hat{\mathbf{x}}_0(\mathbf{x},t)\bigr),
\end{equation}
where $\hat{\mathbf{x}}_0(\mathbf{x},t) := \mathbb{E}[\mathbf{x}_0 \mid \mathbf{x}_t=\mathbf{x}]$ is the posterior mean.

By \cref{ass:alpha_regularity}, $\alpha_t$ is twice continuously differentiable with $\sigma_t^2 = 1 - \alpha_t^2$. We first compute the time derivatives of relevant quantities:
\begin{align}
\frac{d\sigma_t^2}{dt} &= \frac{d}{dt}(1-\alpha_t^2) = -2\alpha_t \frac{d\alpha_t}{dt}, \nonumber \\
\frac{d}{dt}\left(\frac{1}{\sigma_t^2}\right) &= -\frac{1}{\sigma_t^4} \frac{d\sigma_t^2}{dt} = \frac{2\alpha_t}{\sigma_t^4} \frac{d\alpha_t}{dt}. \label{eq:dt_inv_sigma}
\end{align}

Taking the partial derivative of \cref{eq:score_as_posterior_mean_redux} with respect to $t$ (treating $\mathbf{x}$ as constant) and applying the product rule:
\begin{align*}
\frac{\partial \mathbf{s}(\mathbf{x}, t)}{\partial t} &= -\left(\frac{d}{dt}\frac{1}{\sigma_t^2}\right) \bigl(\mathbf{x} - \alpha_t \hat{\mathbf{x}}_0\bigr) - \frac{1}{\sigma_t^2} \frac{\partial}{\partial t}\bigl(\mathbf{x} - \alpha_t \hat{\mathbf{x}}_0\bigr) \\
&= -\left( \frac{2\alpha_t}{\sigma_t^4} \frac{d\alpha_t}{dt} \right) \bigl(\mathbf{x} - \alpha_t \hat{\mathbf{x}}_0\bigr) - \frac{1}{\sigma_t^2} \left( - \frac{d\alpha_t}{dt}\hat{\mathbf{x}}_0 - \alpha_t\frac{\partial \hat{\mathbf{x}}_0}{\partial t} \right).
\end{align*}

Expanding and collecting terms with common factor $\frac{d\alpha_t}{dt}$:
\begin{align*}
\frac{\partial \mathbf{s}(\mathbf{x}, t)}{\partial t} 
&= -\frac{2\alpha_t}{\sigma_t^4}\frac{d\alpha_t}{dt} \mathbf{x} + \frac{2\alpha_t^2}{\sigma_t^4}\frac{d\alpha_t}{dt} \hat{\mathbf{x}}_0 + \frac{1}{\sigma_t^2}\frac{d\alpha_t}{dt}\hat{\mathbf{x}}_0 + \frac{\alpha_t}{\sigma_t^2}\frac{\partial \hat{\mathbf{x}}_0}{\partial t} \\
&= -\frac{2\alpha_t}{\sigma_t^4}\frac{d\alpha_t}{dt} \mathbf{x} + \left( \frac{2\alpha_t^2}{\sigma_t^4} + \frac{1}{\sigma_t^2} \right)\frac{d\alpha_t}{dt} \hat{\mathbf{x}}_0 + \frac{\alpha_t}{\sigma_t^2}\frac{\partial \hat{\mathbf{x}}_0}{\partial t}.
\end{align*}

Using $\frac{2\alpha_t^2}{\sigma_t^4} + \frac{1}{\sigma_t^2} = \frac{2\alpha_t^2 + (1-\alpha_t^2)}{\sigma_t^4} = \frac{1+\alpha_t^2}{\sigma_t^4}$:
\begin{equation}
\label{eq:score_dt_expanded}
\frac{\partial \mathbf{s}(\mathbf{x}, t)}{\partial t} = \underbrace{-\frac{2\alpha_t}{\sigma_t^4}\frac{d\alpha_t}{dt}}_{A(t)} \mathbf{x} + \underbrace{\frac{1+\alpha_t^2}{\sigma_t^4}\frac{d\alpha_t}{dt}}_{B(t)} \hat{\mathbf{x}}_0 + \frac{\alpha_t}{\sigma_t^2}\frac{\partial \hat{\mathbf{x}}_0}{\partial t}.
\end{equation}

By \cref{lem:mu_time_deriv_bound} (posterior mean time derivative bound), there exist coefficients $C_1(t)$ and $C_0(t)$ such that:
\begin{equation}
\left\|\frac{\partial \hat{\mathbf{x}}_0}{\partial t}\right\| \le C_1(t) \|\mathbf{x}\| + C_0(t),
\end{equation}
where:
\begin{align}
C_1(t) &= \frac{|\dot{\alpha}_t|(1+\alpha_t^2)}{\sigma_t^4} R^2=-\frac{d\alpha_t}{dt} \left( \frac{1+\alpha_t^2}{\sigma_t^4} \right) R^2, \\
C_0(t) &= \frac{2|\dot{\alpha}_t|\alpha_t}{\sigma_t^4} R^3=-\frac{2\alpha_t}{\sigma_t^4}\frac{d\alpha_t}{dt} R^3.
\end{align}

Taking the norm of \cref{eq:score_dt_expanded} and using $\|\hat{\mathbf{x}}_0\| \le R$:
\begin{align*}
\left\|\frac{\partial \mathbf{s}}{\partial t}\right\| 
&\le |A(t)| \|\mathbf{x}\| + |B(t)| R + \frac{\alpha_t}{\sigma_t^2} \left( C_1(t) \|\mathbf{x}\| + C_0(t) \right) \\
&= \left( -\frac{2\alpha_t}{\sigma_t^4}\frac{d\alpha_t}{dt} + \frac{\alpha_t}{\sigma_t^2}C_1(t) \right) \|\mathbf{x}\| + \left( \left| \frac{1+\alpha_t^2}{\sigma_t^4}\frac{d\alpha_t}{dt} \right| R + \frac{\alpha_t}{\sigma_t^2}C_0(t) \right).
\end{align*}

Since $\frac{d\alpha_t}{dt} \le 0$, we have $|B(t)| = -\frac{1+\alpha_t^2}{\sigma_t^4}\frac{d\alpha_t}{dt}$. Computing the coefficients:

For $K_1(t)$ (coefficient of $\|\mathbf{x}\|$):
\begin{align*}
K_1(t) &= -\frac{2\alpha_t}{\sigma_t^4}\frac{d\alpha_t}{dt} + \frac{\alpha_t}{\sigma_t^2} \left[ -\frac{d\alpha_t}{dt} \left( \frac{1+\alpha_t^2}{\sigma_t^4} \right) R^2 \right] \\
&= -\frac{d\alpha_t}{dt} \left[ \frac{2\alpha_t}{\sigma_t^4} + \frac{\alpha_t(1+\alpha_t^2)}{\sigma_t^6} R^2 \right].
\end{align*}

For $K_0(t)$ (constant term):
\begin{align*}
K_0(t) &= -\frac{1+\alpha_t^2}{\sigma_t^4}\frac{d\alpha_t}{dt} R + \frac{\alpha_t}{\sigma_t^2} \left[ -\frac{2\alpha_t}{\sigma_t^4}\frac{d\alpha_t}{dt} R^3 \right] \\
&= -\frac{d\alpha_t}{dt} \left[ \frac{1+\alpha_t^2}{\sigma_t^4} R + \frac{2\alpha_t^2}{\sigma_t^6} R^3 \right].
\end{align*}

This establishes the claimed bound.
\end{proof}

\subsection{Time Derivative Bound for Posterior Mean}

This section provides a detailed derivation of bounds on the time derivative of the posterior mean $\hat{\mathbf{x}}_0(\mathbf{x}, t) := \mathbb{E}[\mathbf{x}_0 \mid \mathbf{x}_t=\mathbf{x}]$, which is essential for the analysis of score function time derivatives.

\begin{lemma}[Linear Bound for Time Derivative of Posterior Mean]
\label{lem:mu_time_deriv_bound}
Under \cref{ass:p0_bounded} and \cref{ass:alpha_regularity}, the time derivative of the posterior mean satisfies:
\begin{equation}
\label{eq:mu_time_bound}
\left\|\frac{\partial \hat{\mathbf{x}}_0(\mathbf{x}, t)}{\partial t}\right\| \le C_1(t) \|\mathbf{x}\| + C_0(t), \quad \forall t \in (0, 1],
\end{equation}
where the coefficients depend only on $\alpha_t$ and its derivative $\dot{\alpha}_t$:
\begin{align}
C_1(t) &= \frac{|\dot{\alpha}_t|(1+\alpha_t^2)}{\sigma_t^4} R^2, \\
C_0(t) &= \frac{2|\dot{\alpha}_t|\alpha_t}{\sigma_t^4} R^3,
\end{align}
with $\sigma_t^2 = 1 - \alpha_t^2$.
\end{lemma}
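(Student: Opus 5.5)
The plan is to differentiate the posterior mean under the integral sign. The time derivative will turn out to be a posterior covariance applied to a vector that is affine in $\mathbf{x}$; bounding that covariance and that vector then gives the linear-growth estimate. I write the posterior density as
\[
p(\mathbf{x}_0\mid\mathbf{x},t)=\frac{p_0(\mathbf{x}_0)\exp(-\Phi_t(\mathbf{x}_0))}{Z_t(\mathbf{x})},\qquad \Phi_t(\mathbf{x}_0):=\frac{\|\mathbf{x}-\alpha_t\mathbf{x}_0\|^2}{2\sigma_t^2}.
\]
The Gaussian normalizing constant $(2\pi\sigma_t^2)^{-d/2}$ does not depend on $\mathbf{x}_0$, so it cancels between numerator and $Z_t$. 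Since $p_0$ has compact support (\cref{ass:p0_bounded}) and $\alpha_t\in C^2$, Leibniz's rule applies for $t\in(0,1]$. The standard exponential-family computation then gives
\[
\frac{\partial\hat{\mathbf{x}}_0}{\partial t}=-\mathrm{Cov}\bigl(\mathbf{x}_0,\partial_t\Phi_t(\mathbf{x}_0)\mid\mathbf{x}_t=\mathbf{x}\bigr).
\]
In words, the derivative of the posterior mean equals minus the posterior cross-covariance between $\mathbf{x}_0$ and the time derivative of the potential.

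Next I compute $\partial_t\Phi_t$ explicitly, using $\frac{d}{dt}\sigma_t^{-2}=2\alpha_t\dot\alpha_t/\sigma_t^4$ from \cref{eq:dt_inv_sigma}. Expanding $\|\mathbf{x}-\alpha_t\mathbf{x}_0\|^2=\|\mathbf{x}\|^2-2\alpha_t\mathbf{x}^\top\mathbf{x}_0+\alpha_t^2\|\mathbf{x}_0\|^2$, the term $\|\mathbf{x}\|^2$ does not depend on $\mathbf{x}_0$, so it drops out of the covariance. Only two pieces of $\partial_t\Phi_t$ survive:
\begin{itemize}
\item a term linear in $\mathbf{x}_0$, namely $-\frac{d}{dt}\bigl(\alpha_t/\sigma_t^2\bigr)\,\mathbf{x}^\top\mathbf{x}_0$;
\item a term quadratic in $\mathbf{x}_0$, namely $\frac{d}{dt}\bigl(\alpha_t^2/(2\sigma_t^2)\bigr)\,\|\mathbf{x}_0\|^2$.
\end{itemize}
Direct differentiation gives
\[
\frac{d}{dt}\frac{\alpha_t}{\sigma_t^2}=\dot\alpha_t\,\frac{1+\alpha_t^2}{\sigma_t^4},\qquad \frac{d}{dt}\frac{\alpha_t^2}{2\sigma_t^2}=\frac{\alpha_t\dot\alpha_t}{\sigma_t^4}.
\]
These are exactly the prefactors that appear in $C_1(t)$ and $C_0(t)$, which confirms the intended structure of the proof. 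The derivative therefore splits as
\[
\frac{\partial\hat{\mathbf{x}}_0}{\partial t}=\dot\alpha_t\frac{1+\alpha_t^2}{\sigma_t^4}\,\mathrm{Cov}(\mathbf{x}_0\mid\mathbf{x})\,\mathbf{x}-\frac{\alpha_t\dot\alpha_t}{\sigma_t^4}\,\mathrm{Cov}\bigl(\mathbf{x}_0,\|\mathbf{x}_0\|^2\mid\mathbf{x}\bigr).
\]

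The remaining step is to bound the two posterior covariances.
\begin{itemize}
\item For the first term I reuse Step 3 of \cref{lem:score_lipschitz_bounded}: $\|\mathrm{Cov}(\mathbf{x}_0\mid\mathbf{x})\|\le R^2$. This yields the coefficient $C_1(t)$ in front of $\|\mathbf{x}\|$.
\item For the second term, Cauchy–Schwarz gives $\|\mathrm{Cov}(\mathbf{x}_0,\|\mathbf{x}_0\|^2)\|\le \mathbb{E}\|\mathbf{x}_0-\hat{\mathbf{x}}_0\|\,\bigl|\|\mathbf{x}_0\|^2-\mathbb{E}\|\mathbf{x}_0\|^2\bigr|$, where the expectation is with respect to the posterior. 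The first factor is at most $2R$ and the second at most $R^2$, so the term is at most $2R^3$. This yields $C_0(t)=2|\dot\alpha_t|\alpha_t R^3/\sigma_t^4$.
\end{itemize}

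The main obstacle I anticipate is rigorously justifying the differentiation under the integral sign, together with keeping track of signs in $\partial_t\Phi_t$. Differentiability in $t$ is not in question: compact support of $p_0$ and a lower bound $\sigma_t^2\ge\sigma_{t_0}^2>0$ on compact subintervals of $(0,1]$ give uniform domination of the integrand and its $t$-derivative, so the exchange of derivative and integral is routine. The delicate point is to state the covariance identity cleanly, so that the $\mathbf{x}_0$-independent terms are seen to cancel and the factor $2R$ in the second bound comes out with the right constant.
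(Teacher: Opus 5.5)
Your proposal is correct and follows the paper's proof essentially step for step: you write $\partial_t\hat{\mathbf{x}}_0$ as the posterior cross-covariance of $\mathbf{x}_0$ with the $t$-derivative of the log Gaussian kernel, split that derivative into its $\mathbf{x}^\top\mathbf{x}_0$ and $\|\mathbf{x}_0\|^2$ parts, and bound the two covariances by $R^2$ and $2R^3$. Your signs are the correct ones; the paper's $Q$ drops the leading minus from its own quotient-rule formula, which is harmless once norms are taken. Your pointwise argument for $\|\mathrm{Cov}(\mathbf{x}_0,\|\mathbf{x}_0\|^2\mid\mathbf{x})\|\le 2R^3$ also supplies a justification the paper only asserts, although that step is the inequality $\|\mathbb{E}[\cdot]\|\le\mathbb{E}\|\cdot\|$ rather than Cauchy--Schwarz.
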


\begin{proof}
The posterior mean can be written as $\hat{\mathbf{x}}_0(\mathbf{x}, t) = \mathbf{N}(\mathbf{x},t)/D(\mathbf{x},t)$, where:
\begin{align*}
\mathbf{N}(\mathbf{x},t) &:= \int \mathbf{x}_0 \, p_0(\mathbf{x}_0) \, \varphi(\mathbf{x} \mid \mathbf{x}_0) \, d\mathbf{x}_0, \\
D(\mathbf{x},t) &:= p_t(\mathbf{x}) = \int p_0(\mathbf{x}_0) \, \varphi(\mathbf{x} \mid \mathbf{x}_0) \, d\mathbf{x}_0,
\end{align*}
with $\varphi(\mathbf{x} \mid \mathbf{x}_0) := \mathcal{N}(\mathbf{x}; \alpha_t \mathbf{x}_0, \sigma_t^2 \mathbf{I})$.

Applying the quotient rule:
\begin{equation}
\label{eq:quotient_rule_expansion}
\frac{\partial \hat{\mathbf{x}}_0}{\partial t} = \frac{(\partial_t \mathbf{N}) D - \mathbf{N} (\partial_t D)}{D^2} = \frac{\partial_t \mathbf{N}}{D} - \hat{\mathbf{x}}_0 \frac{\partial_t D}{D}.
\end{equation}

Using the identity $\frac{\partial \varphi}{\partial t} = \varphi \frac{\partial \log \varphi}{\partial t}$ and defining $Q(\mathbf{x}, \mathbf{x}_0, t) := \frac{\partial}{\partial t} \log \varphi(\mathbf{x} \mid \mathbf{x}_0)$:
\begin{equation}
\partial_t \mathbf{N} = \int \mathbf{x}_0 \, p_0(\mathbf{x}_0) \, \varphi(\mathbf{x} \mid \mathbf{x}_0) \, Q(\mathbf{x}, \mathbf{x}_0, t) \, d\mathbf{x}_0.
\end{equation}

Dividing by $D = p_t(\mathbf{x})$ and noting that $\frac{p_0(\mathbf{x}_0)\varphi(\mathbf{x}\mid \mathbf{x}_0)}{p_t(\mathbf{x})} = p(\mathbf{x}_0 \mid \mathbf{x}_t=\mathbf{x})$:
\begin{equation}
\frac{\partial_t \mathbf{N}}{D} = \mathbb{E}[\mathbf{x}_0 Q(\mathbf{x}, \mathbf{x}_0, t) \mid \mathbf{x}_t=\mathbf{x}].
\end{equation}

Similarly:
\begin{equation}
\frac{\partial_t D}{D} = \mathbb{E}[Q(\mathbf{x}, \mathbf{x}_0, t) \mid \mathbf{x}_t=\mathbf{x}].
\end{equation}

Substituting into \cref{eq:quotient_rule_expansion} and using the covariance identity:
\begin{equation}
\label{eq:mu_deriv_as_cov}
\frac{\partial \hat{\mathbf{x}}_0(\mathbf{x}, t)}{\partial t} = \mathrm{Cov}\big(\mathbf{x}_0, Q(\mathbf{x}, \mathbf{x}_0, t) \mid \mathbf{x}_t=\mathbf{x}\big).
\end{equation}

To compute $Q$, we differentiate $\log \varphi(\mathbf{x} \mid \mathbf{x}_0) = -\frac{d}{2}\log(2\pi\sigma_t^2) - \frac{\|\mathbf{x}-\alpha_t \mathbf{x}_0\|^2}{2\sigma_t^2}$ with respect to $t$. Using the identity $\dot{\sigma}_t^2 = -2\alpha_t \dot{\alpha}_t$, we analyze the two terms separately:

\begin{enumerate}
    \item \textbf{Normalization term}:
    \begin{equation*}
    \frac{\partial}{\partial t} \left( -\frac{d}{2} \log(2\pi\sigma_t^2) \right) = -\frac{d}{2\sigma_t^2} (-2\alpha_t \dot{\alpha}_t) = \frac{d \alpha_t \dot{\alpha}_t}{\sigma_t^2},
    \end{equation*}
    which is independent of $\mathbf{x}_0$ and will vanish in the covariance calculation.

    \item \textbf{Quadratic term}:
    Let $U = \|\mathbf{x}-\alpha_t \mathbf{x}_0\|^2$. Using the quotient rule $\partial_t (-\frac{U}{2\sigma_t^2}) = - \frac{(\partial_t U) \sigma_t^2 - U (\partial_t \sigma_t^2)}{2\sigma_t^4}$:
    \begin{itemize}
        \item $\partial_t U = \partial_t (\|\mathbf{x}\|^2 - 2\alpha_t \mathbf{x}^\top \mathbf{x}_0 + \alpha_t^2 \|\mathbf{x}_0\|^2) = -2\dot{\alpha}_t \mathbf{x}^\top \mathbf{x}_0 + 2\alpha_t \dot{\alpha}_t \|\mathbf{x}_0\|^2$.
        \item $\partial_t \sigma_t^2 = -2\alpha_t \dot{\alpha}_t$.
    \end{itemize}
    Substituting these into the numerator:
    \begin{align*}
    \text{Num} &= (-2\dot{\alpha}_t \mathbf{x}^\top \mathbf{x}_0 + 2\alpha_t \dot{\alpha}_t \|\mathbf{x}_0\|^2)\sigma_t^2 - (\|\mathbf{x}-\alpha_t \mathbf{x}_0\|^2)(-2\alpha_t \dot{\alpha}_t) \\
    &= -2\sigma_t^2 \dot{\alpha}_t \mathbf{x}^\top \mathbf{x}_0 + 2\sigma_t^2 \alpha_t \dot{\alpha}_t \|\mathbf{x}_0\|^2 + 2\alpha_t \dot{\alpha}_t (\|\mathbf{x}\|^2 - 2\alpha_t \mathbf{x}^\top \mathbf{x}_0 + \alpha_t^2 \|\mathbf{x}_0\|^2).
    \end{align*}
    Grouping by powers of $\mathbf{x}_0$:
    \begin{itemize}
        \item Coefficient of $\|\mathbf{x}_0\|^2$: $2\sigma_t^2 \alpha_t \dot{\alpha}_t + 2\alpha_t^3 \dot{\alpha}_t = 2\alpha_t \dot{\alpha}_t (\sigma_t^2 + \alpha_t^2) = 2\alpha_t \dot{\alpha}_t$.
        \item Coefficient of $\mathbf{x}^\top \mathbf{x}_0$: $-2\sigma_t^2 \dot{\alpha}_t - 4\alpha_t^2 \dot{\alpha}_t = -2\dot{\alpha}_t (\sigma_t^2 + 2\alpha_t^2) = -2\dot{\alpha}_t (1 + \alpha_t^2)$.
    \end{itemize}
\end{enumerate}

Combining these and dividing by $2\sigma_t^4$, we obtain the expression for $Q$:
\begin{equation}
Q(\mathbf{x}, \mathbf{x}_0, t) = \underbrace{\frac{\alpha_t \dot{\alpha}_t}{\sigma_t^4}}_{C_{\text{quad}}(t)} \|\mathbf{x}_0\|^2 + \underbrace{\left[ -\frac{\dot{\alpha}_t (1+\alpha_t^2)}{\sigma_t^4} \mathbf{x} \right]^\top}_{\mathbf{c}_{\text{lin}}(\mathbf{x}, t)} \mathbf{x}_0 + \text{const}.
\end{equation}

Substituting into \cref{eq:mu_deriv_as_cov}:
\begin{equation}
\frac{\partial \hat{\mathbf{x}}_0}{\partial t} = C_{\text{quad}}(t) \, \mathrm{Cov}\big(\mathbf{x}_0, \|\mathbf{x}_0\|^2 \mid \mathbf{x}\big) + \mathrm{Cov}\big(\mathbf{x}_0, \mathbf{x}_0 \mid \mathbf{x}\big) \, \mathbf{c}_{\text{lin}}(\mathbf{x}, t).
\end{equation}

Using $\|\mathbf{x}_0\| \le R$ from \cref{ass:p0_bounded}:

\textbf{(1) Linear term bound:} Using $\|\mathrm{Cov}(\mathbf{x}_0 \mid \mathbf{x})\|_2 \le R^2$:
\begin{equation}
\| \mathrm{Cov}(\mathbf{x}_0 \mid \mathbf{x}) \, \mathbf{c}_{\text{lin}} \| \le R^2 \frac{|\dot{\alpha}_t|(1+\alpha_t^2)}{\sigma_t^4} \|\mathbf{x}\| = C_1(t) \|\mathbf{x}\|.
\end{equation}

\textbf{(2) Quadratic term bound:} Using third moment bounds $\|\mathrm{Cov}(\mathbf{x}_0, \|\mathbf{x}_0\|^2 \mid \mathbf{x})\| \le 2R^3$:
\begin{equation}
\| C_{\text{quad}}(t) \, \mathrm{Cov}(\mathbf{x}_0, \|\mathbf{x}_0\|^2 \mid \mathbf{x}) \| \le \frac{2 \alpha_t |\dot{\alpha}_t|}{\sigma_t^4} R^3 = C_0(t).
\end{equation}

Combining via the triangle inequality yields \cref{eq:mu_time_bound}.
\end{proof}
\section{Properties of Velocity Fields in Probability Flow ODEs}
\label{sec:velocity_properties}

Under the notation and assumptions established in \cref{sec:convergence_theory}, we prove that both the mixture velocity field $\overline{\mathbf{v}}(\mathbf{x},t) = \sum_{i=1}^L a^i(\mathbf{x}, t) \mathbf{v}^i(\mathbf{x}, t)$ and its component fields $\mathbf{v}^i(\mathbf{x},t)$ satisfy the following regularity properties.

\subsection{Spatial Lipschitz Continuity of Velocity Fields}

\begin{theorem}[Spatial Lipschitz Continuity of Velocity Fields]
\label{thm:v_Lipschitz_improved}
Under \cref{ass:p0_bounded} and \cref{ass:alpha_regularity}, for any fixed $t \in (0, T]$, the velocity field $\mathbf{v}(\mathbf{x}, t)$ is globally Lipschitz continuous with respect to the spatial variable $\mathbf{x}$. 

There exists a constant $L_x(t|C_{\alpha,1},\alpha_t,R)$ such that for all $\mathbf{x}, \mathbf{y} \in \mathbb{R}^d$:
\begin{equation}
\|\mathbf{v}(\mathbf{x}, t) - \mathbf{v}(\mathbf{y}, t)\| \le L_x(t|C_{\alpha,1},\alpha_t,R) \|\mathbf{x} - \mathbf{y}\|.
\end{equation}
The Lipschitz constant admits the following upper bound:
\begin{equation}
\label{eq:Lv_bound_tight}
L_x(t|C_{\alpha,1},\alpha_t, R) = -\frac{\alpha_t}{\sigma_t^2} \dot{\alpha}_t \left( 1 + \frac{R^2}{\sigma_t^2} \right),
\end{equation}
where $\sigma_t^2 = 1 - \alpha_t^2$, $\alpha_t \in (0,1]$, $\dot{\alpha}_t := \frac{d\alpha_t}{dt}$, and $R$ is the radius of the initial data support. 

We may abbreviate $L_x(t|C_{\alpha,1},\alpha_t,R)$ as $L_x(t)$ when the dependence on system parameters is clear from context.
\end{theorem}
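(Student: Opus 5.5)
The plan is to compute the spatial Jacobian of $\mathbf{v}(\cdot,t)$ exactly, using the Hessian identity \cref{eq:hessian_exact} from the proof of \cref{lem:score_lipschitz_bounded}, and then to bound its spectral norm uniformly in $\mathbf{x}$. The mean value theorem on the convex set $\mathbb{R}^d$ then converts this uniform bound into the global Lipschitz estimate, exactly as at the end of the proof of \cref{lem:score_lipschitz_bounded}. Applying the triangle inequality with $L_s(t)$ directly would not give the stated constant; the gain comes from a cancellation inside the Jacobian.

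\textbf{Step 1: rewrite the drift in terms of $\alpha_t$.} From \cref{def:vp_sde}, $\alpha_t=\exp(\int_0^t f)$, so $f(t)=\dot\alpha_t/\alpha_t$ for $t\in(0,T]$ where $\alpha_t>0$. With $g^2=-2f$, \cref{def:velocity_field} reads
$$\mathbf{v}(\mathbf{x},t)=\frac{\dot\alpha_t}{\alpha_t}\bigl(\mathbf{x}+\mathbf{s}(\mathbf{x},t)\bigr).$$
Smoothness in $\mathbf{x}$ is inherited from $p_t$, which is a Gaussian convolution of a compactly supported density (\cref{ass:p0_bounded}).

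\textbf{Step 2: compute the Jacobian.} We have $\nabla_{\mathbf{x}}\mathbf{v}=\frac{\dot\alpha_t}{\alpha_t}\bigl(\mathbf{I}+\nabla^2_{\mathbf{x}}\log p_t\bigr)$. Substitute \cref{eq:hessian_exact}, $\nabla^2\log p_t=-\sigma_t^{-2}\mathbf{I}+\alpha_t^2\sigma_t^{-4}\Sigma(\mathbf{x})$ with $\Sigma(\mathbf{x}):=\mathrm{Cov}(\mathbf{x}_0\mid\mathbf{x}_t=\mathbf{x})$. Since $1-\sigma_t^{-2}=-\alpha_t^2/\sigma_t^2$, this gives
$$\nabla_{\mathbf{x}}\mathbf{v}(\mathbf{x},t)=\frac{\alpha_t\dot\alpha_t}{\sigma_t^2}\left(-\mathbf{I}+\frac{1}{\sigma_t^2}\Sigma(\mathbf{x})\right).$$
This cancellation is the key point: the singular $\sigma_t^{-2}\mathbf{I}$ part of the score Hessian is absorbed by the linear drift.

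\textbf{Step 3: bound the spectral norm.} From Step 3 of the proof of \cref{lem:score_lipschitz_bounded}, $0\preceq\Sigma(\mathbf{x})\preceq R^2\mathbf{I}$. The symmetric matrix $-\mathbf{I}+\sigma_t^{-2}\Sigma$ therefore has eigenvalues in $[-1,\,R^2/\sigma_t^2-1]$, so its norm is at most $\max\{1,\,R^2/\sigma_t^2-1\}\le 1+R^2/\sigma_t^2$. Using $\dot\alpha_t\le0$ from \cref{ass:alpha_regularity}, so that $|\dot\alpha_t|=-\dot\alpha_t$, we get $\sup_{\mathbf{x}}\|\nabla_{\mathbf{x}}\mathbf{v}\|\le-\frac{\alpha_t\dot\alpha_t}{\sigma_t^2}\bigl(1+\frac{R^2}{\sigma_t^2}\bigr)$, which is \cref{eq:Lv_bound_tight}. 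The mean value inequality then finishes the proof.

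The main obstacle is Step 2. One must identify $f=\dot\alpha_t/\alpha_t$ and carry out the cancellation carefully rather than bounding $|f|(1+L_s(t))$. The latter gives a bound of order $|\dot\alpha_t|/(\alpha_t\sigma_t^2)$, which blows up as $\alpha_t\to0$ and does not match the stated form. A minor technical point is justifying differentiation under the integral, which follows from compact support of $p_0$ exactly as in \cref{lem:score_lipschitz_bounded}.
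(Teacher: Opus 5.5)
Your proposal is correct and follows the paper's proof essentially step for step. It uses the same Jacobian computation via \cref{eq:hessian_exact}, the same cancellation $1-\sigma_t^{-2}=-\alpha_t^2/\sigma_t^2$ that removes the singular part of the score Hessian, and the same bound $\|\mathrm{Cov}(\mathbf{x}_0\mid\mathbf{x}_t=\mathbf{x})\|\le R^2$, followed by the mean value inequality. Your deviations are only cosmetic improvements: you use the correct identity $f(t)=\dot\alpha_t/\alpha_t$ (the paper writes $-\dot\alpha_t/\alpha_t$, a sign slip that is harmless under the absolute value), and your eigenvalue argument gives the slightly sharper $\max\{1,R^2/\sigma_t^2-1\}$ where the paper applies the triangle inequality.
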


\begin{proof}
From \cref{def:velocity_field}, the velocity field is expressed as:
\begin{equation}
\mathbf{v}(\mathbf{x}, t) = f(t) \left[ \mathbf{x} + \mathbf{s}(\mathbf{x}, t) \right],
\end{equation}
where  $\mathbf{s}(\mathbf{x}, t) = \nabla_{\mathbf{x}} \log p_t(\mathbf{x})$ is the score function.

To derive the Lipschitz constant, we examine the spectral norm of the Jacobian matrix $\nabla_{\mathbf{x}} \mathbf{v}(\mathbf{x}, t)$:
\begin{equation}
\label{eq:grad_v_def}
\nabla_{\mathbf{x}} \mathbf{v}(\mathbf{x}, t) = f(t) \left[ \mathbf{I} + \nabla_{\mathbf{x}} \mathbf{s}(\mathbf{x}, t) \right],
\end{equation}
where $\mathbf{I}$ is the $d \times d$ identity matrix and $\nabla_{\mathbf{x}} \mathbf{s}(\mathbf{x}, t) = \nabla_{\mathbf{x}}^2 \log p_t(\mathbf{x})$ is the Hessian of the log-density.

Using the exact Hessian expression from \cref{eq:hessian_exact} (derived in the proof of \cref{lem:score_lipschitz_bounded}):
\begin{equation}
\nabla_{\mathbf{x}} \mathbf{s}(\mathbf{x}, t) = -\frac{1}{\sigma_t^2}\mathbf{I} + \frac{\alpha_t^2}{\sigma_t^4} \mathrm{Cov}(\mathbf{x}_0 \mid \mathbf{x}_t = \mathbf{x}).
\end{equation}

Substituting into \cref{eq:grad_v_def}, observe the crucial algebraic cancellation:
\begin{equation}
\mathbf{I} - \frac{1}{\sigma_t^2}\mathbf{I} = \left( 1 - \frac{1}{\sigma_t^2} \right)\mathbf{I} = \frac{\sigma_t^2 - 1}{\sigma_t^2}\mathbf{I} = -\frac{\alpha_t^2}{\sigma_t^2}\mathbf{I},
\end{equation}
where we used $\sigma_t^2 = 1 - \alpha_t^2$. Thus:
\begin{equation}
\mathbf{I} + \nabla_{\mathbf{x}} \mathbf{s}(\mathbf{x}, t) = -\frac{\alpha_t^2}{\sigma_t^2}\mathbf{I} + \frac{\alpha_t^2}{\sigma_t^4} \mathrm{Cov}(\mathbf{x}_0 \mid \mathbf{x}).
\end{equation}

The gradient of the velocity field simplifies to:
\begin{equation}
\label{eq:grad_v_exact}
\nabla_{\mathbf{x}} \mathbf{v}(\mathbf{x}, t) = f(t) \left[ -\frac{\alpha_t^2}{\sigma_t^2}\mathbf{I} + \frac{\alpha_t^2}{\sigma_t^4} \mathrm{Cov}(\mathbf{x}_0 \mid \mathbf{x}) \right].
\end{equation}

Taking the spectral norm and applying the triangle inequality:
\begin{equation}
\| \nabla_{\mathbf{x}} \mathbf{v}(\mathbf{x}, t) \| \le |f(t)| \left( \frac{\alpha_t^2}{\sigma_t^2} + \frac{\alpha_t^2}{\sigma_t^4} \|\mathrm{Cov}(\mathbf{x}_0 \mid \mathbf{x})\| \right).
\end{equation}

From \cref{lem:score_lipschitz_bounded}, we have $\|\mathrm{Cov}(\mathbf{x}_0 \mid \mathbf{x})\| \le R^2$. Therefore:
\begin{align}
L_x(t) &= |f(t)| \frac{\alpha_t^2}{\sigma_t^2} \left( 1 + \frac{R^2}{\sigma_t^2} \right) \\
&= \left| -\frac{\dot{\alpha}_t}{\alpha_t} \right| \frac{\alpha_t^2}{\sigma_t^2} \left( 1 + \frac{R^2}{\sigma_t^2} \right) \\
&= -\frac{\alpha_t}{\sigma_t^2} \dot{\alpha}_t \left( 1 + \frac{R^2}{\sigma_t^2} \right),
\end{align}
where we used $f(t) = -\frac{\dot{\alpha}_t}{\alpha_t}$ \cref{ass:alpha_regularity}.

Since $\dot{\alpha}_t \le 0$ by assumption, we have $L_x(t) \ge 0$, completing the proof.
\end{proof}

\begin{corollary}[Uniform Lipschitz Constant on Truncated Time Interval]
\label{cor:global_bound_Lv_uniform}
Under the conditions of \cref{thm:v_Lipschitz_improved} and \cref{ass:alpha_regularity}, for any truncation time $\delta t \in (0, 1)$, the velocity field $\mathbf{v}(\mathbf{x}, t)$ admits a uniform Lipschitz constant $L_{x, \delta t} := \sup_{t \in [\delta t, 1]} L_x(t)$ on the time interval $[\delta t, 1]$, bounded by:
\begin{equation}
\label{eq:Lv_global_bound_alpha}
L_{x, \delta t}(C_{\alpha, 1}, \alpha_{\delta t}, R) \le C_{\alpha, 1} \frac{\alpha_{\delta t}}{\sigma_{\delta t}^2} \left( 1 + \frac{R^2}{\sigma_{\delta t}^2} \right) \le C_{\alpha, 1} \frac{1}{\sigma_{\delta t}^2} \left( 1 + \frac{R^2}{\sigma_{\delta t}^2} \right),
\end{equation}
where $\sigma_{\delta t}^2 = 1 - \alpha_{\delta t}^2$. As $\delta t \to 0^+$, this constant diverges since $\sigma_{\delta t} \to 0$.
\end{corollary}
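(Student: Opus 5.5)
The plan is to take the pointwise-in-time bound of \cref{thm:v_Lipschitz_improved},
\begin{equation*}
L_x(t) = -\dot{\alpha}_t \, \frac{\alpha_t}{\sigma_t^2}\left(1 + \frac{R^2}{\sigma_t^2}\right),
\end{equation*}
and bound each factor uniformly over $t \in [\delta t, 1]$ separately. Taking the supremum of the product is then at most the product of the suprema, since every factor is nonnegative.

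The three factors are handled as follows.
\begin{itemize}
\item \textbf{Derivative factor.} By the monotonicity and first-order regularity parts of \cref{ass:alpha_regularity}, $0 \le -\dot{\alpha}_t = |\dot{\alpha}_t| \le C_{\alpha,1}$ for every $t$.
\item \textbf{Noise-level factor.} Since $\alpha_t$ is nonincreasing and lies in $[0,1]$, for $t \ge \delta t$ we have $\alpha_t \le \alpha_{\delta t}$. Hence $\sigma_t^2 = 1-\alpha_t^2 \ge 1 - \alpha_{\delta t}^2 = \sigma_{\delta t}^2$. Consequently both $1/\sigma_t^2 \le 1/\sigma_{\delta t}^2$ and $1 + R^2/\sigma_t^2 \le 1 + R^2/\sigma_{\delta t}^2$.
\item \textbf{Numerator factor.} The factor $\alpha_t$ is bounded by $\alpha_{\delta t}$, again by monotonicity.
\end{itemize}
Multiplying these bounds gives the first inequality in \cref{eq:Lv_global_bound_alpha}. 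The second inequality follows from $\alpha_{\delta t} \le 1$.

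One subtlety needs to be checked: $\sigma_{\delta t}^2 > 0$. We need $\alpha_{\delta t} < 1$, otherwise the bound is vacuous. If $\alpha$ happens to be constant equal to $1$ near $0$, then $\dot{\alpha}_t = 0$ there, and the convention is that the corollary is meaningful when $\alpha_{\delta t}<1$, which I will state as implicit.

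The divergence remark as $\delta t \to 0^+$ follows from continuity of $\alpha$ together with $\alpha_0 = 1$, which give $\sigma_{\delta t} \to 0$, so the right-hand side blows up.

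The main (minor) obstacle is the endpoint $t=1$, where $\alpha_1 = 0$ and $f(t) = -\dot{\alpha}_t/\alpha_t$ is singular. However, $L_x(t)$ as written in \cref{eq:Lv_bound_tight} carries the factor $\alpha_t$, which cancels this singularity and yields $L_x(1) = 0 \cdot (\dots)$, a finite quantity. So the supremum over the closed interval $[\delta t, 1]$ is still controlled by the same bound, provided we use the simplified expression for $L_x(t)$ rather than the form involving $f(t)$.
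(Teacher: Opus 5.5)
Your proposal is correct and matches the paper's argument: it bounds $-\dot\alpha_t$ by $C_{\alpha,1}$, then uses the monotonicity of $\alpha_t$ and $\sigma_t^2$ to show the remaining nonnegative factors are largest at $t=\delta t$. The paper phrases this as the whole product being nonincreasing on $[\delta t,1]$, whereas you bound each factor separately; your remarks on the endpoint $t=1$ and on $\sigma_{\delta t}^2>0$ are points the paper leaves implicit, and the latter actually follows from $f(t)<0$ in the VP-SDE definition, which gives $\alpha_{\delta t}=\exp\bigl(\int_0^{\delta t} f(s)\,ds\bigr)<1$.
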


\begin{remark}[Handling Singularities at $t=0$]
The divergence of $L_{x,\delta t}$ as $\delta t \to 0$ reflects the well-known ill-posedness of diffusion models at the noise end ($t=T$ in reverse time). However, by fixing a truncation time $\delta t > 0$ arbitrarily small, we obtain a finite uniform bound on the interval $[\delta t, T]$. This truncation strategy is essential for the convergence analysis and is handled systematically through domain decomposition in \cref{def:domain_partition}.
\end{remark}

\begin{proof}
From \cref{thm:v_Lipschitz_improved}, we have:
\begin{equation}
L_x(t) = \frac{\alpha_t}{\sigma_t^2} \left( -\dot{\alpha}_t \right) \left( 1 + \frac{R^2}{\sigma_t^2} \right).
\end{equation}

By \cref{ass:alpha_regularity}, $|\dot{\alpha}_t| \le C_{\alpha, 1}$ and $\dot{\alpha}_t \le 0$, yielding:
\begin{equation}
\label{eq:Lv_bound_alpha_intermediate}
L_x(t) \le C_{\alpha, 1} \frac{\alpha_t}{\sigma_t^2} \left( 1 + \frac{R^2}{\sigma_t^2} \right).
\end{equation}

For $t \in [\delta t, 1]$:
\begin{itemize}
    \item $\alpha_t$ is monotonically decreasing;
    \item $\sigma_t^2 = 1 - \alpha_t^2$ is monotonically increasing, hence $\frac{1}{\sigma_t^2}$ is monotonically decreasing;
    \item The bracket term $\left( 1 + \frac{R^2}{\sigma_t^2} \right)$ is monotonically decreasing.
\end{itemize}

Therefore, the right-hand side of \cref{eq:Lv_bound_alpha_intermediate} is monotonically decreasing on $[\delta t, 1]$, attaining its maximum at $t = \delta t$. Substituting yields \cref{eq:Lv_global_bound_alpha}.
\end{proof}

\subsection{Linear Growth Bound for Velocity Field Norm}

\begin{proposition}[Linear Growth Bound for Velocity Field]
\label{prop:v_bound}
Under \cref{ass:p0_bounded} and \cref{ass:alpha_regularity}, for any $\delta_t \in (0, T]$, the velocity field $\mathbf{v}(\mathbf{x}, t)$ satisfies a linear growth condition on $\mathbb{R}^d \times [\delta_t, T]$. There exists a constant $M(\delta_t| C_{\alpha,1},\sigma_{\delta t},R) > 0$ such that for all $t \in [\delta_t, T]$ and $\mathbf{x} \in \mathbb{R}^d$:
\begin{equation}
\|\mathbf{v}(\mathbf{x}, t)\| \le M(t| C_{\alpha,1},\sigma_{t},R)(1 + \|\mathbf{x}\|)\le M(\delta_t| C_{\alpha,1},\sigma_{\delta t},R)(1 + \|\mathbf{x}\|),
\end{equation}
where:
\begin{align}
M(t| C_{\alpha,1},\sigma_{t},R) &= C_{\alpha,1}\frac{1}{\sigma_{t}^2} \left( 1 + R + \frac{R^2}{\sigma_{t}^2} \right), \\
M(\delta_t| C_{\alpha,1},\sigma_{\delta t},R) &= C_{\alpha,1}\frac{1}{\sigma_{\delta t}^2} \left( 1 + R + \frac{R^2}{\sigma_{\delta t}^2} \right).
\end{align}
\end{proposition}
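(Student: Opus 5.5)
The plan is to bound $\|\mathbf{v}(\mathbf{x},t)\|$ directly from the Tweedie representation rather than through the Lipschitz constant. By \cref{def:velocity_field} and \cref{eq:score_tweedie},
\begin{equation*}
\mathbf{v}(\mathbf{x},t) = f(t)\left[\mathbf{x} - \frac{\mathbf{x}-\alpha_t\hat{\mathbf{x}}_0(\mathbf{x},t)}{\sigma_t^2}\right] = f(t)\left[-\frac{\alpha_t^2}{\sigma_t^2}\mathbf{x} + \frac{\alpha_t}{\sigma_t^2}\hat{\mathbf{x}}_0(\mathbf{x},t)\right].
\end{equation*}
This uses the same cancellation $1-1/\sigma_t^2=-\alpha_t^2/\sigma_t^2$ that appears in the proof of \cref{thm:v_Lipschitz_improved}.

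Next, insert $f(t)=-\dot\alpha_t/\alpha_t$, as used in that proof. This gives
\begin{equation*}
\mathbf{v}(\mathbf{x},t)=\dot\alpha_t\left[\frac{\alpha_t}{\sigma_t^2}\mathbf{x}-\frac{1}{\sigma_t^2}\hat{\mathbf{x}}_0(\mathbf{x},t)\right].
\end{equation*}
Then apply $|\dot\alpha_t|\le C_{\alpha,1}$ from \cref{ass:alpha_regularity}, $\alpha_t\le 1$, and $\|\hat{\mathbf{x}}_0\|\le R$. The last bound holds because the posterior mean is an average of points in the ball of radius $R$ (\cref{ass:p0_bounded}). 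These give
\begin{equation*}
\|\mathbf{v}(\mathbf{x},t)\|\le \frac{C_{\alpha,1}}{\sigma_t^2}\left(\|\mathbf{x}\|+R\right)\le C_{\alpha,1}\frac{1}{\sigma_t^2}\left(1+R+\frac{R^2}{\sigma_t^2}\right)(1+\|\mathbf{x}\|),
\end{equation*}
since each of $1$, $R$ and $R^2/\sigma_t^2$ is nonnegative and the bracket dominates both $1$ and $R$. So the stated $M(t)$ is valid, and in fact loose.

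Alternatively, to mirror the stated form more closely, one could write $\mathbf{v}(\mathbf{x},t)=\mathbf{v}(\mathbf{0},t)+(\mathbf{v}(\mathbf{x},t)-\mathbf{v}(\mathbf{0},t))$. The second piece is bounded by $L_x(t)\|\mathbf{x}\|$ using \cref{thm:v_Lipschitz_improved}, which produces the $R^2/\sigma_t^2$ term. The first is bounded by $C_{\alpha,1}R/\sigma_t^2$ using the display above at $\mathbf{x}=\mathbf{0}$. Either route yields the first inequality.

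For the second inequality, uniformity over $t\in[\delta_t,T]$, I would argue as in \cref{cor:global_bound_Lv_uniform}. By monotonicity ($\dot\alpha_t\le0$), $\sigma_t^2$ is nondecreasing in $t$, so $M(t)$ is nonincreasing and is maximized at $t=\delta_t$.

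The only delicate point is the endpoint $t=T=1$, where $\alpha_1=0$ and $f(t)=-\dot\alpha_t/\alpha_t$ is formally singular. The simplified expression $\dot\alpha_t[\alpha_t\mathbf{x}-\hat{\mathbf{x}}_0]/\sigma_t^2$ is finite there. I would therefore state that the velocity is understood via this expression, or by continuity, at $t=1$. This reparametrization is the main step to handle carefully; everything else is a one-line estimate.
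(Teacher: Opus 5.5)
Your proof is correct, but your main route differs from the paper's. The paper argues exactly as in your ``alternative'': it writes $\|\mathbf{v}(\mathbf{x},t)\|\le L_x(t)\|\mathbf{x}\|+\|\mathbf{v}(\mathbf{0},t)\|$. The slope comes from the Lipschitz constant of \cref{thm:v_Lipschitz_improved}, which rests on the Hessian identity and the bound $\|\mathrm{Cov}(\mathbf{x}_0\mid\mathbf{x})\|\le R^2$. The intercept is bounded by $C_{\alpha,1}R/\sigma_t^2$. The $R^2/\sigma_t^2$ term in $M(t)$ is precisely this covariance contribution to $L_x(t)$.

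Your direct estimate from the Tweedie form needs only the zeroth-order fact $\|\hat{\mathbf{x}}_0\|\le R$, with no second-order information. It gives the sharper bound $\|\mathbf{v}(\mathbf{x},t)\|\le \frac{|\dot\alpha_t|}{\sigma_t^2}(\alpha_t\|\mathbf{x}\|+R)\le \frac{C_{\alpha,1}\max\{1,R\}}{\sigma_t^2}(1+\|\mathbf{x}\|)$. That is a growth coefficient of order $1/\sigma_\delta^2$ rather than $R^2/\sigma_\delta^4$. The gain matters downstream, because $M_\delta$ and $M'_{\delta_t}$ enter $K_{\mathrm{mom}}=2e^{(3M'_{\delta_t}+2(M'_{\delta_t})^2)T}$ exponentially. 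The covariance term is genuinely needed only for the Lipschitz estimate, where differences $\mathbf{v}(\mathbf{x},t)-\mathbf{v}(\mathbf{y},t)$ appear. What the paper's route buys is modularity: ``Lipschitz plus bounded at one point'' gives linear growth for any field. Here, though, it is lossy.

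Your handling of $t=T$ through $\frac{\dot\alpha_t}{\sigma_t^2}(\alpha_t\mathbf{x}-\hat{\mathbf{x}}_0)$ is also cleaner than the paper's. The paper's Lipschitz theorem is stated only for $\alpha_t\in(0,1]$. Its intercept bound multiplies $|f(t)|=|\dot\alpha_t|/\alpha_t$ by a quantity of size $\alpha_t R/\sigma_t^2$, which is a formal $\infty\cdot 0$ cancellation at $\alpha_T=0$.

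A minor point: with $\alpha_t=\exp(\int_0^t f)$ one actually has $f=\dot\alpha_t/\alpha_t$, not the $-\dot\alpha_t/\alpha_t$ you copied from the paper. Only $|f|$ enters your bound, so nothing changes.
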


\begin{proof}
Using the Lipschitz continuity from \cref{thm:v_Lipschitz_improved}, for any $t \in [\delta_t, T]$, we decompose the norm via the origin:
\begin{align}
\|\mathbf{v}(\mathbf{x}, t)\| &= \|\mathbf{v}(\mathbf{x}, t) - \mathbf{v}(\mathbf{0}, t) + \mathbf{v}(\mathbf{0}, t)\| \nonumber \\
&\le \|\mathbf{v}(\mathbf{x}, t) - \mathbf{v}(\mathbf{0}, t)\| + \|\mathbf{v}(\mathbf{0}, t)\| \nonumber \\
&\le L_x(t) \|\mathbf{x}\| + \|\mathbf{v}(\mathbf{0}, t)\|. \label{eq:v_linear_base}
\end{align}

We now bound $\|\mathbf{v}(\mathbf{0}, t)\|$. By definition, $\mathbf{v}(\mathbf{0}, t) = f(t)\mathbf{s}(\mathbf{0}, t)$, where $f(t) = -\frac{\dot{\alpha}_t}{\alpha_t} < 0$. Using the score function expression:
\begin{equation}
\mathbf{s}(\mathbf{x}, t) = \nabla_{\mathbf{x}} \log p_t(\mathbf{x}) = \int_{\mathbb{R}^d} p(\mathbf{x}_0 | \mathbf{x}) \frac{\alpha_t \mathbf{x}_0 - \mathbf{x}}{\sigma_t^2} d\mathbf{x}_0,
\end{equation}
at $\mathbf{x} = \mathbf{0}$:
\begin{equation}
\|\mathbf{s}(\mathbf{0}, t)\| = \left\|\int_{\mathbb{R}^d} p(\mathbf{x}_0 | \mathbf{0}) \frac{\alpha_t \mathbf{x}_0}{\sigma_t^2} d\mathbf{x}_0\right\| \le \frac{\alpha_t}{\sigma_t^2} \int_{\mathbb{R}^d} p(\mathbf{x}_0 | \mathbf{0}) \|\mathbf{x}_0\| d\mathbf{x}_0.
\end{equation}

By \cref{ass:p0_bounded}, $\text{supp}(p_0) \subseteq B(\mathbf{0}, R)$, hence $\|\mathbf{x}_0\| \le R$ for all $\mathbf{x}_0$ with $p(\mathbf{x}_0|\mathbf{0}) > 0$. Using $\alpha_t \le 1$:
\begin{equation}
\label{eq:v0_bound}
\|\mathbf{v}(\mathbf{0}, t)\| = |f(t)| \|\mathbf{s}(\mathbf{0}, t)\| \le \frac{|\dot{\alpha}_t|}{\alpha_t} \cdot \frac{\alpha_t R}{\sigma_t^2} = \frac{|\dot{\alpha}_t| R}{\sigma_t^2}\le C_{\alpha,1}\frac{R}{\sigma_t^2}.
\end{equation}

For $t \in [\delta_t, T]$, we have $\sigma_t^2 \ge \sigma_{\delta_t}^2$ by monotonicity. Substituting the uniform Lipschitz constant from \cref{cor:global_bound_Lv_uniform} and \cref{eq:v0_bound} into \cref{eq:v_linear_base}:
\begin{equation}
\|\mathbf{v}(\mathbf{x}, t)\| \le L_{x, \delta_t} \|\mathbf{x}\| + C_{\alpha,1}\frac{R}{\sigma_t^2}.
\end{equation}

Taking $M(\delta_t) = \max\left\{ L_{x, \delta_t}, C_{\alpha,1}\frac{R}{\sigma_t^2} \right\}\le L_{x, \delta_t}+ C_{\alpha,1}\frac{R}{\sigma_t^2}$:
\begin{equation}
\|\mathbf{v}(\mathbf{x}, t)\| \le M(\delta_t)(1 + \|\mathbf{x}\|).
\end{equation}

From \cref{cor:global_bound_Lv_uniform}, $L_{x, \delta_t}\le C_{\alpha, 1} \frac{1}{\sigma_{\delta t}^2} \left( 1 + \frac{R^2}{\sigma_{\delta t}^2} \right)$, yielding:
\begin{equation}
M(\delta_t)\le C_{\alpha,1}\frac{1}{\sigma_{\delta t}^2} \left( 1 + R + \frac{R^2}{\sigma_{\delta t}^2} \right).
\end{equation}
\end{proof}

\subsection{Bounds on Temporal Derivative of Velocity Field}

We now analyze the upper bound on the time derivative $\frac{\partial \mathbf{v}}{\partial t}$, which is essential for discretization error analysis.

\subsubsection{Expression for Time Derivative}

First, we derive the explicit expression. From the velocity field definition:
\begin{equation}
\mathbf{v}(\mathbf{x}, t) = f(t) \left[ \mathbf{x} + \mathbf{s}(\mathbf{x}, t) \right],
\end{equation}
applying the product rule:
\begin{equation}
\frac{\partial \mathbf{v}(\mathbf{x}, t)}{\partial t} = \dot{f}(t) \left[ \mathbf{x} + \mathbf{s}(\mathbf{x}, t) \right] + f(t) \frac{\partial \mathbf{s}(\mathbf{x}, t)}{\partial t}. \label{eq:dv_dt_f}
\end{equation}

Using $f(t) = -\frac{\dot{\alpha}_t}{\alpha_t}$ from the SDE parameterization, compute:
\begin{align}
\dot{f}(t) &= -\frac{d}{dt} \left( \frac{\dot{\alpha}_t}{\alpha_t} \right) \\
&= -\left[ \frac{\ddot{\alpha}_t}{\alpha_t} - \frac{\dot{\alpha}_t^2}{\alpha_t^2} \right] \\
&= \frac{\dot{\alpha}_t^2}{\alpha_t^2} - \frac{\ddot{\alpha}_t}{\alpha_t}.
\end{align}

Substituting into \cref{eq:dv_dt_f} and using the identity $\mathbf{x} + \mathbf{s}(\mathbf{x}, t) = \frac{\alpha_t}{\sigma_t^2}(\hat{\mathbf{x}}_0 - \alpha_t \mathbf{x})$ yields:
\begin{equation}
\label{eq:dv_dt}
\frac{\partial \mathbf{v}(\mathbf{x}, t)}{\partial t} = \left[ \frac{\alpha_t \ddot{\alpha}_t - \dot{\alpha}_t^2}{\alpha_t^2} \right] \left( \mathbf{x} + \mathbf{s}(\mathbf{x}, t) \right) + f(t) \frac{\partial \mathbf{s}(\mathbf{x}, t)}{\partial t},
\end{equation}
where we use the notation $\dot{\alpha}_t := \frac{d\alpha_t}{dt}$ and $\ddot{\alpha}_t := \frac{d^2\alpha_t}{dt^2}$.

\subsubsection{Upper Bound Analysis}

In addition to spatial and growth properties, the time evolution of the velocity field plays a crucial role in discretization error analysis. The following proposition establishes linear growth bounds on the time derivative $\partial_t \mathbf{v}(\mathbf{x}, t)$, which controls how the velocity field changes along trajectories as $t$ progresses.

\begin{proposition}[Linear Growth Bound for Time Derivative of Velocity Field]
\label{prop:time_deriv_bound}
Under \cref{ass:p0_bounded} and \cref{ass:alpha_regularity} for any $t \in (0, T]$ and $\mathbf{x} \in \mathbb{R}^d$, there exist positive constants $C_1(t), C_0(t)$ such that:
\begin{equation}
\left\| \frac{\partial \mathbf{v}(\mathbf{x}, t)}{\partial t} \right\| \le C_1(t) \|\mathbf{x}\| + C_0(t),
\end{equation}
where $C_1(t)$ and $C_0(t)$ depend only on $\alpha_t$ and its derivatives $\dot{\alpha}_t, \ddot{\alpha}_t$, and the initial support radius $R$:
\begin{align}
C_1(t) &= \left| \frac{\ddot{\alpha}_t \sigma_t^2 + 2\alpha_t \dot{\alpha}_t^2}{\sigma_t^4} \right| \alpha_t
+ \left| \frac{\dot{\alpha}_t}{\alpha_t} \right| \left( \frac{(1+\alpha_t^2) R^2}{\sigma_t^4} |\dot{\alpha}_t| + |\dot{\alpha}_t| \right), \\
C_0(t) &= \left| \frac{\ddot{\alpha}_t \sigma_t^2 + 2\alpha_t \dot{\alpha}_t^2}{\sigma_t^4} \right| R
+ \left| \frac{\dot{\alpha}_t}{\alpha_t} \right| \cdot \frac{2|\dot{\alpha}_t|\alpha_t R^3}{\sigma_t^4}.
\end{align}
Note that as $t \to T$ (i.e., $\alpha_t \to 0, \sigma_t \to 1$), these coefficients remain bounded without singularities.
\end{proposition}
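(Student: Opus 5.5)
We start from the expression \cref{eq:dv_dt}, which writes $\partial_t\mathbf{v}$ as the sum of two terms: a term $\dot f(t)\,(\mathbf{x}+\mathbf{s}(\mathbf{x},t))$ and a term $f(t)\,\partial_t\mathbf{s}(\mathbf{x},t)$. We bound each in linear-growth form and add the results with the triangle inequality. For the first term, Tweedie's formula \cref{eq:score_tweedie} gives
\[
\mathbf{x}+\mathbf{s}(\mathbf{x},t)=\mathbf{x}-\frac{\mathbf{x}-\alpha_t\hat{\mathbf{x}}_0}{\sigma_t^2}=\frac{\alpha_t}{\sigma_t^2}\left(\hat{\mathbf{x}}_0-\alpha_t\mathbf{x}\right).
\]
Together with $\|\hat{\mathbf{x}}_0\|\le R$ from \cref{ass:p0_bounded}, this reduces the first term to a scalar prefactor times $(\alpha_t\|\mathbf{x}\|+R)$.

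We do not use the raw bound $|\dot f|\,\alpha_t/\sigma_t^2$ as the scalar prefactor, because it carries a spurious $1/\alpha_t^2$ singularity as $\alpha_t\to0$. Instead we combine $\dot f(t)=\dot\alpha_t^2/\alpha_t^2-\ddot\alpha_t/\alpha_t$ with the factor $\alpha_t/\sigma_t^2$ before taking norms. The target form is the one appearing in the statement, namely $\bigl|(\ddot\alpha_t\sigma_t^2+2\alpha_t\dot\alpha_t^2)/\sigma_t^4\bigr|$ multiplying $(\alpha_t\|\mathbf{x}\|+R)$.

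To obtain that form, we rewrite the velocity field directly in terms of $\hat{\mathbf{x}}_0$ rather than differentiating $f$ and $\mathbf{x}+\mathbf{s}$ separately. Writing $\mathbf{v}=f\cdot\frac{\alpha_t}{\sigma_t^2}(\hat{\mathbf{x}}_0-\alpha_t\mathbf{x})$ and using $f=-\dot\alpha_t/\alpha_t$, the $\alpha_t$ cancels:
\[
\mathbf{v}=-\frac{\dot\alpha_t}{\sigma_t^2}\left(\hat{\mathbf{x}}_0-\alpha_t\mathbf{x}\right).
\]
Differentiating in $t$ at fixed $\mathbf{x}$ then gives three pieces:
\begin{itemize}
\item $\frac{d}{dt}\bigl(-\dot\alpha_t/\sigma_t^2\bigr)=-(\ddot\alpha_t\sigma_t^2+2\alpha_t\dot\alpha_t^2)/\sigma_t^4$, multiplying $(\hat{\mathbf{x}}_0-\alpha_t\mathbf{x})$;
\item $-\dot\alpha_t/\sigma_t^2\cdot(-\dot\alpha_t\mathbf{x})$, from differentiating $\alpha_t$ inside the bracket;
\item $-\dot\alpha_t/\sigma_t^2\cdot\partial_t\hat{\mathbf{x}}_0$.
\end{itemize}
The first piece produces exactly the stated coefficient times $(\alpha_t\|\mathbf{x}\|+R)$.

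For the remaining pieces we invoke \cref{lem:mu_time_deriv_bound}, which gives $\|\partial_t\hat{\mathbf{x}}_0\|\le C_1^{\mu}(t)\|\mathbf{x}\|+C_0^{\mu}(t)$. Here $C_1^{\mu}$ and $C_0^{\mu}$ are the lemma's constants, carrying $R^2$ and $R^3$ respectively. Both extra pieces carry a factor $|\dot\alpha_t|$, and multiplying by it reproduces the shape of the second summands in $C_1(t)$ and $C_0(t)$. The prefactor $|\dot\alpha_t/\alpha_t|$ appearing in the statement is an upper bound for $|\dot\alpha_t|/\sigma_t^2$ only up to factors of $\alpha_t$ and $\sigma_t$. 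The cleanest route is therefore to follow \cref{eq:dv_dt} literally for the $f\,\partial_t\mathbf{s}$ part, writing $f\,\partial_t\mathbf{s}$ with $f=-\dot\alpha_t/\alpha_t$. We then use \cref{prop:score_time_deriv_bound} only through its $\hat{\mathbf{x}}_0$-derivative component, with the other components absorbed into the $\dot f$ grouping as above, and collect coefficients. Finally, summing all contributions gives constants $C_1(t)$ and $C_0(t)$ of the claimed form.

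The main obstacle is this bookkeeping: ensuring that the $\alpha_t^{-1}$ and $\alpha_t^{-2}$ factors cancel rather than merely being bounded. The first thing to check is that the terms in $\mathbf{x}$ from $\dot f(\mathbf{x}+\mathbf{s})$ and from the $A(t)\mathbf{x}$ part of $f\partial_t\mathbf{s}$ combine into the single quantity $\alpha_t(\ddot\alpha_t\sigma_t^2+2\alpha_t\dot\alpha_t^2)/\sigma_t^4$. The closing remark of the statement, that the constants stay bounded as $t\to T$, is the check on that cancellation. If the constants are taken literally from the statement, the residual $|\dot\alpha_t/\alpha_t|$ factors should be read as bounded on $[\delta,T)$ and treated at $t=T$ by continuity. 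In that case the remark is justified only after the rewriting through $\hat{\mathbf{x}}_0$, where $\alpha_t$ no longer appears in a denominator.
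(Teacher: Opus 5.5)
Your core computation is exactly the paper's proof: you write $\mathbf{v}=-\frac{\dot\alpha_t}{\sigma_t^2}(\hat{\mathbf{x}}_0-\alpha_t\mathbf{x})$, split the time derivative into your three pieces, and bound them with $\|\hat{\mathbf{x}}_0\|\le R$ and \cref{lem:mu_time_deriv_bound}. What that computation actually proves is
\[
\Bigl\|\frac{\partial\mathbf{v}}{\partial t}\Bigr\|\le\Bigl|\frac{\ddot\alpha_t\sigma_t^2+2\alpha_t\dot\alpha_t^2}{\sigma_t^4}\Bigr|\bigl(\alpha_t\|\mathbf{x}\|+R\bigr)+\frac{|\dot\alpha_t|}{\sigma_t^2}\Bigl[\bigl(C_1^{\mu}(t)+|\dot\alpha_t|\bigr)\|\mathbf{x}\|+C_0^{\mu}(t)\Bigr].
\]
These are the stated constants with $|\dot\alpha_t|/\sigma_t^2$ where the statement has $|\dot\alpha_t/\alpha_t|$.

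The gap is your last step, where you claim that regrouping through \cref{eq:dv_dt} and \cref{prop:score_time_deriv_bound} yields the printed prefactor. It does not. In that route the $\partial_t\hat{\mathbf{x}}_0$ contribution is $f(t)\,\frac{\alpha_t}{\sigma_t^2}\,\partial_t\hat{\mathbf{x}}_0$, and $|f(t)|\,\alpha_t/\sigma_t^2=|\dot\alpha_t|/\sigma_t^2$ again; the two routes are the same identity with terms regrouped. Your proposed check is also false. The $\mathbf{x}$-terms of $\dot f(\mathbf{x}+\mathbf{s})$ and $f\,A(t)\mathbf{x}$ combine, up to an overall sign, into $\alpha_t(\ddot\alpha_t\sigma_t^2+2\alpha_t\dot\alpha_t^2)/\sigma_t^4+\dot\alpha_t^2/\sigma_t^2$. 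The second summand, $\dot\alpha_t^2/\sigma_t^2$, is exactly what the printed $C_1$ replaces by $\dot\alpha_t^2/\alpha_t$.

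No bookkeeping can close this, because the inequality with the printed constants is false. Take $\alpha_t=1-t$, which satisfies \cref{ass:alpha_regularity}, at $t=0.2$, so $\alpha_t=0.8$, $\sigma_t^2=0.36$, $\dot\alpha_t=-1$, $\ddot\alpha_t=0$. Let $p_0$ be uniform on the ball of radius $R=0.1$.
\begin{itemize}
\item The $\mathbf{x}$-linear part of the exact derivative has coefficient of modulus $9.88+2.78\approx12.65$.
\item The remaining terms add at most $\frac{|\dot\alpha_t|}{\sigma_t^2}C_1^{\mu}(t)\approx0.35$ to the linear growth.
\item Hence $\|\partial_t\mathbf{v}\|\ge 12.3\|\mathbf{x}\|-O(1)$, whereas the printed $C_1(0.2)\approx 11.28$.
\end{itemize}
The mechanism is that $1/\sigma_t^2>1/\alpha_t$ whenever $\alpha_t>(\sqrt{5}-1)/2$.

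The closing remark also fails for the printed constants: $C_1(t)\ge\dot\alpha_t^2/\alpha_t\to\infty$ as $t\to T$ whenever $\dot\alpha_T\neq 0$. So that factor cannot be ``treated by continuity'' as you suggest. With $|\dot\alpha_t|/\sigma_t^2$, every coefficient is continuous up to $t=T$ because $\sigma_T=1$.

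The paper's own proof has the same mismatch: its Step 3 derives the $|\dot\alpha_t|/\sigma_t^2$ form and then simply asserts the printed one. \cref{cor:uniform_bound_delta} in fact uses the $|\dot\alpha_t|/\sigma_t^2$ version. The correct conclusion is to state and prove the corrected constants, which your bullet-point computation already does in full, rather than to claim the printed ones.
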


\begin{proof}
\textbf{Step 1: Posterior mean representation}

From \cref{eq:score_as_posterior_mean_redux}:
\begin{equation}
\mathbf{s}(\mathbf{x}, t) = -\frac{1}{\sigma_t^2}\bigl(\mathbf{x} - \alpha_t \hat{\mathbf{x}}_0(\mathbf{x},t)\bigr),
\end{equation}
where $\hat{\mathbf{x}}_0(\mathbf{x},t) := \mathbb{E}[\mathbf{x}_0 | \mathbf{x}_t = \mathbf{x}]$. 

Using the identity $\mathbf{x} + \mathbf{s}(\mathbf{x}, t) = \frac{\alpha_t}{\sigma_t^2}(\hat{\mathbf{x}}_0 - \alpha_t \mathbf{x})$, the velocity field becomes:
\begin{equation}
\label{eq:v_posterior_form}
\mathbf{v}(\mathbf{x}, t) = f(t) \cdot \frac{\alpha_t}{\sigma_t^2}\left(\hat{\mathbf{x}}_0(\mathbf{x}, t) - \alpha_t \mathbf{x}\right) = -\frac{\dot{\alpha}_t}{\sigma_t^2}\left(\hat{\mathbf{x}}_0(\mathbf{x}, t) - \alpha_t \mathbf{x}\right).
\end{equation}

\textbf{Step 2: Computing the time derivative}

Taking the time derivative of \cref{eq:v_posterior_form} using the product rule:
\begin{equation}
\frac{\partial \mathbf{v}}{\partial t} = \frac{\partial}{\partial t}\left(-\frac{\dot{\alpha}_t}{\sigma_t^2}\right) \left(\hat{\mathbf{x}}_0 - \alpha_t \mathbf{x}\right) - \frac{\dot{\alpha}_t}{\sigma_t^2} \left(\frac{\partial \hat{\mathbf{x}}_0}{\partial t} - \dot{\alpha}_t \mathbf{x}\right).
\end{equation}

Using the quotient rule and $\dot{\sigma}_t^2 = -2\alpha_t \dot{\alpha}_t$:
\begin{equation}
\frac{\partial}{\partial t}\left(-\frac{\dot{\alpha}_t}{\sigma_t^2}\right) = -\frac{\ddot{\alpha}_t \sigma_t^2 + 2\alpha_t \dot{\alpha}_t^2}{\sigma_t^4}.
\end{equation}

\textbf{Step 3: Establishing norm bounds}

Taking norms and applying the triangle inequality:
\begin{equation}
\left\| \frac{\partial \mathbf{v}}{\partial t} \right\| 
\le \left| \frac{\ddot{\alpha}_t \sigma_t^2 + 2\alpha_t \dot{\alpha}_t^2}{\sigma_t^4} \right| \|\hat{\mathbf{x}}_0 - \alpha_t \mathbf{x}\| + \frac{|\dot{\alpha}_t|}{\sigma_t^2} \left\| \frac{\partial \hat{\mathbf{x}}_0}{\partial t} - \dot{\alpha}_t \mathbf{x} \right\|.
\end{equation}

For the first term, using $\|\hat{\mathbf{x}}_0\| \le R$ from \cref{ass:p0_bounded}:
\begin{equation}
\|\hat{\mathbf{x}}_0 - \alpha_t \mathbf{x}\| \le R + \alpha_t \|\mathbf{x}\|.
\end{equation}

For the second term, applying \cref{lem:mu_time_deriv_bound} with:
\begin{align}
C_1^{\mu}(t) &= \frac{|\dot{\alpha}_t|(1+\alpha_t^2)}{\sigma_t^4} R^2, \\
C_0^{\mu}(t) &= \frac{2|\dot{\alpha}_t|\alpha_t}{\sigma_t^4} R^3,
\end{align}
we obtain:
\begin{equation}
\left\| \frac{\partial \hat{\mathbf{x}}_0}{\partial t} - \dot{\alpha}_t \mathbf{x} \right\| 
\le \left( C_1^{\mu}(t) + |\dot{\alpha}_t| \right) \|\mathbf{x}\| + C_0^{\mu}(t).
\end{equation}

\textbf{Step 4: Combining terms}

Collecting all contributions yields the stated coefficients $C_1(t)$ and $C_0(t)$. As $t \to T$ with $\alpha_t \to 0$ and $\sigma_t \to 1$, all terms remain finite without $1/\alpha_t$ singularities.
\end{proof}

\begin{corollary}[Uniform Bound on Truncated Time Interval]
\label{cor:uniform_bound_delta}
Under \cref{prop:time_deriv_bound,ass:alpha_regularity}, for any truncation time $\delta \in (0, T)$, the uniform bounds $C_{1,\delta} := \sup_{t \in [\delta, T]} C_1(t)$ and $C_{0,\delta} := \sup_{t \in [\delta, T]} C_0(t)$ satisfy:
\begin{align}
C_{1,\delta}(C_{\alpha,1}, C_{\alpha,2}, \alpha_{\delta}, R) 
&= \frac{C_{\alpha,2} \alpha_{\delta}}{\sigma_{\delta}^2} + \frac{2\alpha_{\delta}^2 C_{\alpha,1}^2}{\sigma_{\delta}^4} + \frac{C_{\alpha,1}^2}{\sigma_{\delta}^2} \left( \frac{(1+\alpha_{\delta}^2) R^2}{\sigma_{\delta}^4} + 1 \right), \label{eq:C1_delta_bound} \\
C_{0,\delta}(C_{\alpha,1}, C_{\alpha,2}, \alpha_{\delta}, R) 
&= \frac{C_{\alpha,2} R}{\sigma_{\delta}^2} + \frac{2\alpha_{\delta} C_{\alpha,1}^2 R}{\sigma_{\delta}^4} + \frac{2\alpha_{\delta} C_{\alpha,1}^2 R^3}{\sigma_{\delta}^6}, \label{eq:C0_delta_bound}
\end{align}
where $\sigma_{\delta}^2 = 1 - \alpha_{\delta}^2$ and $C_{\alpha,1}, C_{\alpha,2}$ are the first- and second-order derivative bounds from \cref{ass:alpha_regularity}.
\end{corollary}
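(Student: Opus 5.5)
We take the pointwise coefficients $C_1(t)$ and $C_0(t)$ from \cref{prop:time_deriv_bound} and bound each summand separately on $t\in[\delta,T]$. We use only three facts: $|\dot\alpha_t|\le C_{\alpha,1}$ and $|\ddot\alpha_t|\le C_{\alpha,2}$ from \cref{ass:alpha_regularity}, and the monotonicity argument already used in \cref{cor:global_bound_Lv_uniform}. That argument gives $\alpha_t\le\alpha_\delta\le 1$ and $\sigma_t^2\ge\sigma_\delta^2$ on $[\delta,T]$, because $\alpha$ is nonincreasing and hence $\sigma_t^2=1-\alpha_t^2$ is nondecreasing. Every summand is a product of nonnegative factors that are each monotone in the right direction, so bounding factor by factor yields a supremum bound attained at $t=\delta$.

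\textbf{Treatment of $C_1(t)$.} First, by the triangle inequality,
\[
\Bigl|\tfrac{\ddot\alpha_t\sigma_t^2+2\alpha_t\dot\alpha_t^2}{\sigma_t^4}\Bigr|\alpha_t\le \frac{C_{\alpha,2}\alpha_t}{\sigma_t^2}+\frac{2\alpha_t^2C_{\alpha,1}^2}{\sigma_t^4},
\]
which is bounded by the first two terms of \cref{eq:C1_delta_bound}. The second part of $C_1(t)$ carries the prefactor $|\dot\alpha_t/\alpha_t|=|f(t)|$, and this is the main obstacle: as $t\to T$ we have $\alpha_t\to 0$, so the factor $1/\alpha_t$ cannot be bounded by the assumptions alone.

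To handle it, I would go back to Step 3 of \cref{prop:time_deriv_bound} rather than work from the stated $C_1$. In the form \cref{eq:v_posterior_form} the prefactor is actually $|\dot\alpha_t|/\sigma_t^2$, not $|\dot\alpha_t|/\alpha_t$. With that prefactor, the second part is
\[
\frac{|\dot\alpha_t|}{\sigma_t^2}\Bigl(\tfrac{(1+\alpha_t^2)R^2}{\sigma_t^4}|\dot\alpha_t|+|\dot\alpha_t|\Bigr)\le \frac{C_{\alpha,1}^2}{\sigma_\delta^2}\Bigl(\tfrac{(1+\alpha_\delta^2)R^2}{\sigma_\delta^4}+1\Bigr),
\]
which is exactly the third term of \cref{eq:C1_delta_bound}. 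Here I also use that $(1+\alpha_t^2)$ is nonincreasing in $t$.

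\textbf{Treatment of $C_0(t)$.} The same steps apply. The first part gives
\[
\frac{C_{\alpha,2}R}{\sigma_\delta^2}+\frac{2\alpha_\delta C_{\alpha,1}^2R}{\sigma_\delta^4}.
\]
The second part, again with prefactor $|\dot\alpha_t|/\sigma_t^2$, gives
\[
\frac{C_{\alpha,1}}{\sigma_\delta^2}\cdot\frac{2C_{\alpha,1}\alpha_\delta R^3}{\sigma_\delta^4}=\frac{2\alpha_\delta C_{\alpha,1}^2R^3}{\sigma_\delta^6},
\]
which matches \cref{eq:C0_delta_bound}.

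Finally, I would take the supremum over $t\in[\delta,T]$ of each summand separately; since each summand is bounded by its value at $t=\delta$, the stated expressions follow. The only delicate point is justifying the $1/\sigma_t^2$ prefactor, which removes the $1/\alpha_t$ singularity and is consistent with the remark in \cref{prop:time_deriv_bound} that no singularity appears as $t\to T$.
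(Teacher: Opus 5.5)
Your proposal is correct and matches the paper's proof: both bound each summand of $C_1(t)$ and $C_0(t)$ separately using $|\dot\alpha_t|\le C_{\alpha,1}$, $|\ddot\alpha_t|\le C_{\alpha,2}$, $\alpha_t\le\alpha_\delta$ and $\sigma_t^2\ge\sigma_\delta^2$ on $[\delta,T]$, and both obtain only upper bounds on the suprema, which is how the ``$=$'' in the statement has to be read. The paper's proof also silently uses the prefactor $|\dot\alpha_t|/\sigma_t^2$ in place of the stated $|\dot\alpha_t/\alpha_t|$; you make this replacement explicitly and justify it from Step 3 of the proof of \cref{prop:time_deriv_bound}, so you have correctly identified and repaired that inconsistency rather than departed from the argument.
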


\begin{proof}
By \cref{ass:alpha_regularity}, $\alpha_t$ is monotonically decreasing on $[0,T]$. For $t \in [\delta, T]$, we have $0 \le \alpha_t \le \alpha_{\delta}$ and $\sigma_{\delta}^2 \le \sigma_t^2 \le 1$.

\textbf{Step 1: Bounding $C_1(t)$}

Using $|\dot{\alpha}_t| \le C_{\alpha,1}$ and $|\ddot{\alpha}_t| \le C_{\alpha,2}$:
\begin{equation}
\left| \frac{\ddot{\alpha}_t \sigma_t^2 + 2\alpha_t \dot{\alpha}_t^2}{\sigma_t^4} \right| 
\le \frac{C_{\alpha,2}}{\sigma_t^2} + \frac{2\alpha_{\delta} C_{\alpha,1}^2}{\sigma_{\delta}^4} 
\le \frac{C_{\alpha,2}}{\sigma_{\delta}^2} + \frac{2\alpha_{\delta} C_{\alpha,1}^2}{\sigma_{\delta}^4}.
\end{equation}

For the second term in $C_1(t)$:
\begin{equation}
\frac{|\dot{\alpha}_t|}{\sigma_t^2} \left( \frac{(1+\alpha_t^2) R^2}{\sigma_t^4} |\dot{\alpha}_t| + |\dot{\alpha}_t| \right) 
\le \frac{C_{\alpha,1}^2}{\sigma_{\delta}^2} \left( \frac{(1+\alpha_{\delta}^2) R^2}{\sigma_{\delta}^4} + 1 \right).
\end{equation}

Combining yields \cref{eq:C1_delta_bound}.

\textbf{Step 2: Bounding $C_0(t)$}

Similarly:
\begin{equation}
C_0(t) \le \left( \frac{C_{\alpha,2}}{\sigma_{\delta}^2} + \frac{2\alpha_{\delta} C_{\alpha,1}^2}{\sigma_{\delta}^4} \right) R + \frac{2C_{\alpha,1}^2 \alpha_{\delta} R^3}{\sigma_{\delta}^6},
\end{equation}
establishing \cref{eq:C0_delta_bound}.
\end{proof}

\subsection{Error Analysis for Neural Network Velocity Fields}

Having established the regularity properties of the true velocity field in the preceding subsections, we now quantify the approximation error arising from replacing the exact velocity field with neural network approximations. This analysis decomposes the error into two independent components: classification error (from approximating posterior probabilities $a^i(\mathbf{x}, t)$) and regression error (from approximating denoising estimates $\mathbf{y}^i(\mathbf{x}, t)$).

\subsubsection{Error Bound for $\overline{\mathbf{v}_\theta}(\mathbf{x}, t)$}

\begin{lemma}[Linear Growth Bound for Neural Network Velocity Field Error]
\label{lemma:v_theta_error_explicit}
Consider a data distribution partitioned into $L$ categories with initial distribution $p_0$ supported on a ball of radius $R$. Under \cref{assump:mixture_data}, \cref{ass:p0_bounded},\cref{ass:alpha_regularity},\cref{ass:nn_approx},combined with \cref{thm:v_Lipschitz_improved,prop:v_bound}, the mixture velocity field $\overline{\mathbf{v}_{\theta,\phi}}(\mathbf{x}, t)$ derived from classification network $a^i_\phi$ (parameter $\phi$) and regression network $y^i_\theta$ (parameter $\theta$) satisfies:
\begin{equation}
\|\overline{\mathbf{v}_{\theta,\phi}}(\mathbf{x}, t) - \mathbf{v}(\mathbf{x}, t)\| \leq A(t) \varepsilon_a \|\mathbf{x}\| + B(t), \quad \forall \mathbf{x} \in \mathbb{R}^d, t \in (0, T],
\end{equation}
where the coefficients are:
\begin{align}
A(t) &= L \cdot M(t) = L \cdot C_{\alpha,1}\frac{1}{\sigma_{t}^2} \left( 1 + R + \frac{R^2}{\sigma_{t}^2} \right), \label{eq:def_At} \\
B(t) &= A(t) \varepsilon_a + \frac{|\dot{\alpha}_t|}{\sigma_t^2} \varepsilon_y, \label{eq:def_Bt}
\end{align}
with $L$ being the number of categories, $\varepsilon_a, \varepsilon_y$ the $L^\infty$ approximation errors for classification and regression networks, $C_{\alpha,1}$ from \cref{ass:alpha_regularity}, and $\sigma_t^2 = 1 - \alpha_t^2$.
\end{lemma}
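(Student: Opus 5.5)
The plan is to split the error by adding and subtracting the mixture of \emph{exact} component velocities weighted by the \emph{learned} coefficients. By \cref{prop:velocity_decomp}, $\mathbf{v}=\sum_i a^i\mathbf{v}^i$, so
\begin{equation*}
\overline{\mathbf{v}_{\theta,\phi}}-\mathbf{v}=\sum_{i=1}^L (a^i_\phi-a^i)\,\mathbf{v}^i+\sum_{i=1}^L a^i_\phi\,(\mathbf{v}^i_\theta-\mathbf{v}^i).
\end{equation*}
The first sum is the classification error and the second the regression error. I will bound each separately and then add the bounds with the triangle inequality.

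\textbf{Classification term.} Each component $p_0^i$ is itself supported in the ball of radius $R$, because $\mathrm{supp}(p_0^i)\subseteq\mathrm{supp}(p_0)$ when $\pi_i>0$. Hence \cref{thm:v_Lipschitz_improved} and \cref{prop:v_bound} apply verbatim to each component field $\mathbf{v}^i$, with the same constant, giving $\|\mathbf{v}^i(\mathbf{x},t)\|\le M(t)(1+\|\mathbf{x}\|)$. Using $|a^i_\phi-a^i|\le\varepsilon_a$ and summing the $L$ terms gives
\begin{equation*}
L\,M(t)\,\varepsilon_a(1+\|\mathbf{x}\|)=A(t)\varepsilon_a\|\mathbf{x}\|+A(t)\varepsilon_a,
\end{equation*}
which accounts for the $\|\mathbf{x}\|$ term and the first part of $B(t)$.

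\textbf{Regression term.} From \cref{eq:v_l_explicit} and \cref{eq:v_l_theta_def}, $\mathbf{v}^i_\theta-\mathbf{v}^i=f(t)\frac{\alpha_t}{\sigma_t^2}(\mathbf{y}^i_\theta-\mathbf{y}^i)$. Since $f(t)=-\dot\alpha_t/\alpha_t$, this equals $-\frac{\dot\alpha_t}{\sigma_t^2}(\mathbf{y}^i_\theta-\mathbf{y}^i)$, whose norm is at most $\frac{|\dot\alpha_t|}{\sigma_t^2}\varepsilon_y$ by \cref{ass:nn_approx}. The second sum is then bounded by
\begin{equation*}
\Big(\sum_i|a^i_\phi|\Big)\frac{|\dot\alpha_t|}{\sigma_t^2}\varepsilon_y.
\end{equation*}

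\textbf{Main obstacle.} I expect the delicate point to be the weight sum $\sum_i |a^i_\phi|$. If the classifier outputs a probability vector (softmax), which is the natural reading given the categorical sampling in the algorithm, then $a^i_\phi\ge0$ and $\sum_i a^i_\phi=1$, which yields exactly the coefficient $\frac{|\dot\alpha_t|}{\sigma_t^2}\varepsilon_y$ in $B(t)$. I would state this normalization explicitly as part of the parameterization. Without it, one only gets $\sum_i|a^i_\phi|\le 1+L\varepsilon_a$, and the bound would pick up an extra factor. A secondary check is that the componentwise application of \cref{prop:v_bound} is legitimate; the only inputs it needs are the support radius and the Tweedie form, and both hold for each $p^i_t$.
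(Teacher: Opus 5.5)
Your proposal is correct and follows the paper's proof essentially step for step. It uses the same add-and-subtract of $a^i_\phi\mathbf{v}^i$, the componentwise use of \cref{prop:v_bound} for the classification term, and the identity $\mathbf{v}^i_\theta-\mathbf{v}^i=f(t)\frac{\alpha_t}{\sigma_t^2}(\mathbf{y}^i_\theta-\mathbf{y}^i)$ together with $\sum_i a^i_\phi=1$ for the regression term. You also make explicit two points the paper uses without comment: that each $p_0^i$ inherits the radius-$R$ support, and that $a_\phi$ must be a probability vector (nonnegative and summing to one).
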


\begin{proof}
Recall the mixture velocity field definitions:
\begin{equation}
\overline{\mathbf{v}_{\theta,\phi}}(\mathbf{x}, t) = \sum_{i=1}^L a^i_\phi(\mathbf{x}, t) \mathbf{v}^i_\theta(\mathbf{x}, t), \quad \mathbf{v}(\mathbf{x}, t) = \sum_{i=1}^L a^i(\mathbf{x}, t) \mathbf{v}^i(\mathbf{x}, t).
\end{equation}

Adding and subtracting the intermediate term $a^i_\phi(\mathbf{x}, t) \mathbf{v}^i(\mathbf{x}, t)$ and applying the triangle inequality:
\begin{align}
\|\overline{\mathbf{v}_{\theta,\phi}} - \mathbf{v}\| &= \left\| \sum_{i=1}^L \left( a^i_\phi \mathbf{v}^i_\theta - a^i \mathbf{v}^i \right) \right\| \nonumber \\
&\leq \underbrace{\sum_{i=1}^L \| a^i_\phi - a^i \| \cdot \| \mathbf{v}^i \|}_{I_1} + \underbrace{\left\| \sum_{i=1}^L a^i_\phi (\mathbf{v}^i_\theta - \mathbf{v}^i) \right\|}_{I_2}. \label{eq:refined_decomp}
\end{align}

\textbf{Step 1: Bounding $I_1$ (classification error)}

From \cref{prop:v_bound}, $\|\mathbf{v}^{i}(\mathbf{x}, t)\| \le M(t)(1 + \|\mathbf{x}\|)$. Using $\|a^i_\phi - a^i\|_\infty < \varepsilon_a$:
\begin{equation}
\label{eq:bound_I1}
I_1 < \sum_{i=1}^L \varepsilon_a M(t)(1 + \|\mathbf{x}\|) = L \varepsilon_a M(t)(1 + \|\mathbf{x}\|).
\end{equation}

\textbf{Step 2: Bounding $I_2$ (regression error)}

From the velocity field definitions:
\begin{equation}
\mathbf{v}^i - \mathbf{v}^i_\theta = f(t) \frac{\alpha_t}{\sigma_t^2} (\mathbf{y}^i - \mathbf{y}^i_\theta).
\end{equation}

Using $\|\mathbf{y}^i_\theta - \mathbf{y}^i\|_\infty < \varepsilon_y$ and $|f(t)\alpha_t| = |\dot{\alpha}_t|$:
\begin{equation}
\|\mathbf{v}^i_\theta - \mathbf{v}^i\| < \frac{|\dot{\alpha}_t|}{\sigma_t^2} \varepsilon_y.
\end{equation}

Since $\sum_{i=1}^L a^i_\phi(\mathbf{x}, t) = 1$:
\begin{equation}
\label{eq:bound_I2}
I_2 < \frac{|\dot{\alpha}_t|}{\sigma_t^2} \varepsilon_y.
\end{equation}

\textbf{Step 3: Combining terms}

Substituting \cref{eq:bound_I1,eq:bound_I2} into \cref{eq:refined_decomp}:
\begin{equation}
\|\overline{\mathbf{v}_{\theta,\phi}} - \mathbf{v}\| < L M(t) \varepsilon_a (1 + \|\mathbf{x}\|) + \frac{|\dot{\alpha}_t|}{\sigma_t^2} \varepsilon_y = A(t) \varepsilon_a \|\mathbf{x}\| + B(t),
\end{equation}
matching the claimed definitions.
\end{proof}

\begin{remark}[Notation Convention]
In the sequel, we abbreviate $\overline{\mathbf{v}_{\theta,\phi}}(\mathbf{x}, t)$ as $\overline{\mathbf{v}}_\theta(\mathbf{x}, t)$ when the explicit dependence on the classification network parameter $\phi$ is clear from context. This shorthand encompasses both the classification error $\varepsilon_a$ and regression error $\varepsilon_y$, as implicitly accounted for in the error bounds.
\end{remark}

\begin{corollary}Uniform Error Bound on Time Interval $[\delta_t, T]$
\label{cor:global_error_bound}
Under the conditions of \cref{lemma:v_theta_error_explicit,ass:alpha_regularity}, for any truncation time $\delta_t \in (0, T]$, the neural network velocity field satisfies a uniform linear growth error bound on $\mathbb{R}^d \times [\delta_t, T]$:
\begin{equation}
\|\overline{\mathbf{v}_{\theta,\phi}}(\mathbf{x}, t) - \mathbf{v}(\mathbf{x}, t)\| \leq A_{\max}(\delta_t) \varepsilon_a\|\mathbf{x}\| + B_{\max}(\delta_t),
\end{equation}
where:
\begin{align}
A_{\max}(\delta_t) &= \frac{L C_{\alpha,1}}{\sigma_{\delta_t}^2} \left( 1 + R + \frac{R^2}{\sigma_{\delta_t}^2} \right), \label{eq:A_max_def} \\
B_{\max}(\delta_t) &= A_{\max}(\delta_t) \varepsilon_a + \frac{C_{\alpha,1}}{\sigma_{\delta_t}^2} \varepsilon_y, \label{eq:B_max_def}
\end{align}
with $\sigma_{\delta_t}^2 = 1 - \alpha_{\delta_t}^2$.
\end{corollary}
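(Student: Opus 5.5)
The plan is to take the pointwise-in-time bound of \cref{lemma:v_theta_error_explicit} and maximize its two coefficients over $t \in [\delta_t, T]$. That lemma gives, for every $t \in (0,T]$ and $\mathbf{x} \in \mathbb{R}^d$,
\begin{equation*}
\|\overline{\mathbf{v}_{\theta,\phi}}(\mathbf{x}, t) - \mathbf{v}(\mathbf{x}, t)\| \le A(t)\varepsilon_a \|\mathbf{x}\| + B(t),
\end{equation*}
with $A(t) = L C_{\alpha,1}\sigma_t^{-2}(1 + R + R^2\sigma_t^{-2})$ and $B(t) = A(t)\varepsilon_a + |\dot\alpha_t|\sigma_t^{-2}\varepsilon_y$. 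It therefore suffices to show that $A(t) \le A_{\max}(\delta_t)$ and $B(t) \le B_{\max}(\delta_t)$ for all $t \in [\delta_t, T]$. Since $\varepsilon_a \|\mathbf{x}\| \ge 0$ and $\varepsilon_a, \varepsilon_y \ge 0$, these coefficient bounds substitute directly into the inequality.

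The key step is monotonicity in $t$, argued exactly as in the proof of \cref{cor:global_bound_Lv_uniform}. By \cref{ass:alpha_regularity}, $\alpha_t$ is nonincreasing on $[0,1]$. Hence $\sigma_t^2 = 1 - \alpha_t^2$ is nondecreasing, so $\sigma_t^2 \ge \sigma_{\delta_t}^2 > 0$ for $t \ge \delta_t$. Positivity of $\sigma_{\delta_t}^2$ holds whenever $\alpha_{\delta_t} < 1$; this is implicitly needed for the claimed constants to be finite, and I would note it as a standing assumption.

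The function $u \mapsto u^{-1}(1 + R + R^2 u^{-1})$ is decreasing in $u > 0$. Therefore $A(t) \le A(\delta_t) = A_{\max}(\delta_t)$.

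For the second term of $B(t)$, use $|\dot\alpha_t| \le C_{\alpha,1}$ from \cref{ass:alpha_regularity} together with $\sigma_t^{-2} \le \sigma_{\delta_t}^{-2}$. This gives $|\dot\alpha_t|\sigma_t^{-2}\varepsilon_y \le C_{\alpha,1}\sigma_{\delta_t}^{-2}\varepsilon_y$. Adding this to $A(t)\varepsilon_a \le A_{\max}(\delta_t)\varepsilon_a$ yields $B(t) \le B_{\max}(\delta_t)$.

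There is no real obstacle: the argument reduces to a supremum over $t$ of explicit expressions. The only point needing care is the monotonicity of $\sigma_t^2$ and the nonvanishing of $\sigma_{\delta_t}$, both handled above.
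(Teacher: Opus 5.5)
Your proposal is correct and follows the paper's proof essentially step for step. Both start from the pointwise bound of \cref{lemma:v_theta_error_explicit}, use monotonicity of $\alpha_t$ to get $\sigma_t^{-2}\le\sigma_{\delta_t}^{-2}$ on $[\delta_t,T]$, note that $A(t)$ increases with $\sigma_t^{-2}$ and so is maximized at $t=\delta_t$, and bound the $\varepsilon_y$ term of $B(t)$ using $|\dot\alpha_t|\le C_{\alpha,1}$. Your remark that $\sigma_{\delta_t}^2>0$ (i.e.\ $\alpha_{\delta_t}<1$) is tacitly required is a fair point that the paper also leaves implicit.
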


\begin{proof}
From \cref{lemma:v_theta_error_explicit}, for any fixed $t \in (0, T]$:
\begin{equation}
\|\overline{\mathbf{v}_{\theta,\phi}}(\mathbf{x}, t) - \mathbf{v}(\mathbf{x}, t)\| \leq A(t) \varepsilon_a \|\mathbf{x}\| + B(t).
\end{equation}

By \cref{ass:alpha_regularity}, $\alpha_t$ is monotonically decreasing on $[0, T]$, hence $\sigma_t^2 = 1 - \alpha_t^2$ is monotonically increasing. For $t \in [\delta_t, T]$:
\begin{equation}
\label{eq:sigma_inv_bound}
\frac{1}{\sigma_t^2} \leq \frac{1}{\sigma_{\delta_t}^2}.
\end{equation}

Since $A(t)$ is a positive polynomial in $1/\sigma_t^2$, it attains its maximum at $t = \delta_t$:
\begin{equation}
A(t) \leq L C_{\alpha,1} \frac{1}{\sigma_{\delta_t}^2} \left( 1 + R + \frac{R^2}{\sigma_{\delta_t}^2} \right) := A_{\max}(\delta_t).
\end{equation}

Similarly, using $|\alpha_t'| \leq C_{\alpha,1}$ and \cref{eq:sigma_inv_bound}:
\begin{equation}
B(t) \leq A_{\max}(\delta_t) \varepsilon_a + \frac{C_{\alpha,1}}{\sigma_{\delta_t}^2} \varepsilon_y := B_{\max}(\delta_t).
\end{equation}
\end{proof}

\begin{corollary}[Linear Growth Bound for Neural Network Velocity Field]
\label{cor:v_theta_linear_growth}
Under the conditions of \cref{cor:global_error_bound}, for any truncation time $\delta_t \in (0, T]$, both the component velocity fields $\mathbf{v}^i_\theta(\mathbf{x}, t)$ and the mixture velocity field $\overline{\mathbf{v}_\theta}(\mathbf{x}, t)$ satisfy a linear growth condition:
\begin{equation}
\|\mathbf{v}^i_\theta(\mathbf{x}, t)\| \leq M'_{\delta_t} (1 + \|\mathbf{x}\|), \quad \|\overline{\mathbf{v}_\theta}(\mathbf{x}, t)\| \leq M'_{\delta_t} (1 + \|\mathbf{x}\|),
\end{equation}
where $M'_{\delta_t}(\delta_t, \alpha, R, \varepsilon_y) = C_{\text{phy}} + C_{\text{err}}$ consists of:

\textbf{Physical component} (from \cref{prop:v_bound}):
\begin{equation}
C_{\text{phy}} = M(\delta_t) = C_{\alpha,1}\frac{1}{\sigma_{\delta_t}^2} \left( 1 + R + \frac{R^2}{\sigma_{\delta_t}^2} \right).
\end{equation}

\textbf{Network error component}:
\begin{equation}
C_{\text{err}} = \varepsilon_y \cdot \frac{C_{\alpha,1}}{\sigma_{\delta_t}^2}.
\end{equation}

Specifically, $M'_{\delta_t} = C_{\text{phy}} + \frac{C_{\alpha,1}}{\sigma_{\delta_t}^2}\varepsilon_y$.
\end{corollary}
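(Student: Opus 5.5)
The plan is to reduce each neural field to the corresponding exact field plus a bounded perturbation, and then reuse \cref{prop:v_bound}. For a single component $i$, the definitions \cref{eq:v_l_explicit} and \cref{eq:v_l_theta_def} share the same affine structure in the denoiser output. Hence
\begin{equation*}
\mathbf{v}^i_\theta(\mathbf{x},t) - \mathbf{v}^i(\mathbf{x},t) = f(t)\frac{\alpha_t}{\sigma_t^2}\bigl(\mathbf{y}^i_\theta(\mathbf{x},t)-\mathbf{y}^i(\mathbf{x},t)\bigr),
\end{equation*}
as already noted in Step 2 of the proof of \cref{lemma:v_theta_error_explicit}. Using $|f(t)\alpha_t| = |\dot{\alpha}_t| \le C_{\alpha,1}$, \cref{ass:nn_approx}, and the monotonicity $\sigma_t^2 \ge \sigma_{\delta_t}^2$ on $[\delta_t,T]$, the perturbation is bounded uniformly by $\frac{C_{\alpha,1}}{\sigma_{\delta_t}^2}\varepsilon_y$. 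This is exactly $C_{\text{err}}$.

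Next I need a linear growth bound for the exact component field $\mathbf{v}^i$ itself. I will apply \cref{prop:v_bound} to the component distribution $p_0^i$ rather than to $p_0$. Its proof uses only that the data law is supported in $B(\mathbf{0},R)$, together with \cref{ass:alpha_regularity}. Since $\mathrm{supp}(p_0^i)\subseteq \mathrm{supp}(p_0)$ (because $\pi_i>0$ in \cref{assump:mixture_data}), the same Lipschitz bound from \cref{thm:v_Lipschitz_improved} holds, via the covariance bound $\|\mathrm{Cov}(\mathbf{x}_0\mid\mathbf{x}_t=\mathbf{x},\text{component}=i)\|\le R^2$. The same bound on $\|\mathbf{v}^i(\mathbf{0},t)\|$ also holds. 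This gives $\|\mathbf{v}^i(\mathbf{x},t)\|\le M(\delta_t)(1+\|\mathbf{x}\|)$. The triangle inequality then yields
\begin{equation*}
\|\mathbf{v}^i_\theta\| \le M(\delta_t)(1+\|\mathbf{x}\|) + \tfrac{C_{\alpha,1}}{\sigma_{\delta_t}^2}\varepsilon_y \le \bigl(C_{\text{phy}}+C_{\text{err}}\bigr)(1+\|\mathbf{x}\|),
\end{equation*}
where in the last step the constant $\varepsilon_y$-term is absorbed using $1\le 1+\|\mathbf{x}\|$.

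For the mixture field $\overline{\mathbf{v}_\theta}=\sum_i a^i_\phi \mathbf{v}^i_\theta$, I will use a convex-combination argument instead of \cref{cor:global_error_bound}. That corollary would introduce an unwanted factor $L\varepsilon_a$. If the network weights $a^i_\phi(\mathbf{x},t)$ are nonnegative and sum to one, as implicitly assumed when they parameterize the categorical sampling distribution $\hat{l}_k\sim\mathrm{Categorical}(a_\phi)$ and as used in Step 2 of \cref{lemma:v_theta_error_explicit}, then $\|\overline{\mathbf{v}_\theta}\|\le \max_i\|\mathbf{v}^i_\theta\|$. The same bound $M'_{\delta_t}(1+\|\mathbf{x}\|)$ is therefore inherited.

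The main obstacle is the transfer of \cref{prop:v_bound} to the component fields. One must check that Tweedie's formula and the Hessian/covariance identity remain valid for the component marginals $p_t^i$, and that the bound on $\|\mathbf{v}^i(\mathbf{0},t)\|$ goes through unchanged. Both hold because the proofs depend only on the support radius. A secondary point is to state explicitly that $a_\phi$ is a probability vector. Without this, the mixture bound would only hold with constant $(1+L\varepsilon_a)M'_{\delta_t}$, via $\sum_i|a^i_\phi|\le 1+L\varepsilon_a$.
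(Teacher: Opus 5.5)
Your proposal is correct and follows the paper's own proof: the triangle inequality with the uniform perturbation bound $\frac{C_{\alpha,1}}{\sigma_{\delta_t}^2}\varepsilon_y$ and \cref{prop:v_bound} for each component, then the convex-combination bound $\|\overline{\mathbf{v}_\theta}\| \le \max_i \|\mathbf{v}^i_\theta\|$ for the mixture. You also check two points the paper only asserts: that \cref{prop:v_bound} transfers to the component fields through the support inclusion $\mathrm{supp}(p_0^i)\subseteq\mathrm{supp}(p_0)$, and that $a_\phi$ must be a probability vector, which \cref{ass:nn_approx} alone does not guarantee.
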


\begin{proof}
For each component $i \in \{1, \dots, L\}$, by the triangle inequality:
\begin{equation}
\|\mathbf{v}^i_\theta(\mathbf{x}, t)\| \leq \|\mathbf{v}^i(\mathbf{x}, t)\| + \|\mathbf{v}^i_\theta(\mathbf{x}, t) - \mathbf{v}^i(\mathbf{x}, t)\|.
\end{equation}
From \cref{prop:v_bound}, the true component velocity field satisfies $\|\mathbf{v}^i(\mathbf{x}, t)\| \leq C_{\text{phy}}(1 + \|\mathbf{x}\|)$. From the velocity field definition and \cref{ass:nn_approx}, similar to the derivation in \cref{lemma:v_theta_error_explicit}:
\begin{equation}
\|\mathbf{v}^i_\theta(\mathbf{x}, t) - \mathbf{v}^i(\mathbf{x}, t)\| = \left\| \frac{\dot{\alpha}_t}{\sigma_t^2} (\mathbf{y}^i_\theta - \mathbf{y}^i) \right\| \leq \frac{C_{\alpha, 1}}{\sigma_{\delta_t}^2} \varepsilon_y.
\end{equation}
Combining these and using $1 + \|\mathbf{x}\| \ge 1$ yields $\|\mathbf{v}^i_\theta(\mathbf{x}, t)\| \leq (C_{\text{phy}} + \frac{C_{\alpha, 1}}{\sigma_{\delta_t}^2} \varepsilon_y)(1 + \|\mathbf{x}\|) = M'_{\delta_t} (1 + \|\mathbf{x}\|)$.

For the mixture velocity field, since $\overline{\mathbf{v}_\theta}(\mathbf{x}, t) = \sum_{i=1}^L a^i_\phi(\mathbf{x}, t) \mathbf{v}^i_\theta(\mathbf{x}, t)$ is a convex combination of the component fields (where $\sum a^i_\phi = 1$ and $a^i_\phi \ge 0$), we have:
\begin{equation}
\|\overline{\mathbf{v}_\theta}(\mathbf{x}, t)\| \leq \sum_{i=1}^L a^i_\phi(\mathbf{x}, t) \|\mathbf{v}^i_\theta(\mathbf{x}, t)\| \leq \max_i \|\mathbf{v}^i_\theta(\mathbf{x}, t)\| \leq M'_{\delta_t} (1 + \|\mathbf{x}\|).
\end{equation}
Thus, the mixture field inherits the tighter linear growth bound from its components.
\end{proof}

\begin{corollary}[Linear Growth Bound for Neural Network Velocity Field]
\label{cor:v_theta_linear_growth}
Under the conditions of \cref{cor:global_error_bound}, for any truncation time $\delta_t \in (0, T]$, the neural network approximated velocity field $\overline{\mathbf{v}_\theta}(\mathbf{x}, t)$ satisfies:
\begin{equation}
\|\overline{\mathbf{v}_\theta}(\mathbf{x}, t)\| \leq M'_{\delta_t} (1 + \|\mathbf{x}\|),
\end{equation}
where $M'_{\delta_t}(\delta_t, \alpha, R, \varepsilon_a, \varepsilon_y) = C_{\text{phy}} + C_{\text{err}}$ consists of:

\textbf{Physical component} (from \cref{prop:v_bound}):
\begin{equation}
C_{\text{phy}} = M(\delta_t) = C_{\alpha,1}\frac{1}{\sigma_{\delta_t}^2} \left( 1 + R + \frac{R^2}{\sigma_{\delta_t}^2} \right).
\end{equation}

\textbf{Network error component}:
\begin{equation}
C_{\text{err}} = \underbrace{L \varepsilon_a C_{\text{phy}}}_{\text{classification}} + \underbrace{\varepsilon_y \cdot \frac{C_{\alpha,1}}{\sigma_{\delta_t}^2}}_{\text{regression}}.
\end{equation}

Specifically, $M'_{\delta_t} = C_{\text{phy}}(1 + L \varepsilon_a) + \frac{C_{\alpha,1}}{\sigma_{\delta_t}^2}\varepsilon_y$.
\end{corollary}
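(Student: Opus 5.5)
The plan is to bound $\overline{\mathbf{v}_\theta}$ by inserting the true field $\mathbf{v}$ and using the triangle inequality:
\begin{equation*}
\|\overline{\mathbf{v}_\theta}(\mathbf{x}, t)\| \le \|\mathbf{v}(\mathbf{x}, t)\| + \|\overline{\mathbf{v}_{\theta,\phi}}(\mathbf{x}, t) - \mathbf{v}(\mathbf{x}, t)\|.
\end{equation*}
For the first term, \cref{prop:v_bound} gives $\|\mathbf{v}(\mathbf{x},t)\| \le M(\delta_t)(1+\|\mathbf{x}\|) = C_{\text{phy}}(1+\|\mathbf{x}\|)$ uniformly for $t\in[\delta_t,T]$. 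This uses monotonicity of $\sigma_t^2$, which follows from \cref{ass:alpha_regularity}.

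For the second term, I will not use the final form of \cref{cor:global_error_bound} directly. Instead I will reopen the decomposition \cref{eq:refined_decomp} from the proof of \cref{lemma:v_theta_error_explicit}, which splits the error into $I_1+I_2$.

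The classification part is bounded by \cref{eq:bound_I1}, applying \cref{prop:v_bound} to each component field $\mathbf{v}^i$:
\begin{equation*}
I_1 \le L\varepsilon_a M(t)(1+\|\mathbf{x}\|) \le L\varepsilon_a C_{\text{phy}}(1+\|\mathbf{x}\|).
\end{equation*}
The regression part is bounded by \cref{eq:bound_I2}, together with $|\dot\alpha_t|\le C_{\alpha,1}$ and $\sigma_t^2\ge\sigma_{\delta_t}^2$:
\begin{equation*}
I_2 \le \frac{C_{\alpha,1}}{\sigma_{\delta_t}^2}\varepsilon_y.
\end{equation*}
Since $1 \le 1+\|\mathbf{x}\|$, this constant can be absorbed into a multiple of $(1+\|\mathbf{x}\|)$. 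Summing the three contributions gives
\begin{equation*}
\|\overline{\mathbf{v}_\theta}(\mathbf{x},t)\|\le \Big[C_{\text{phy}}(1+L\varepsilon_a) + \tfrac{C_{\alpha,1}}{\sigma_{\delta_t}^2}\varepsilon_y\Big](1+\|\mathbf{x}\|) = M'_{\delta_t}(1+\|\mathbf{x}\|),
\end{equation*}
which is exactly the claimed constant.

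The main subtlety is that \cref{prop:v_bound} is stated for the full field $\mathbf{v}$, while here it is also needed for each component field $\mathbf{v}^i$. I will justify this by observing that the proof of \cref{prop:v_bound} only uses Tweedie's formula and the support bound. Both carry over verbatim to $p_0^l$, since $\mathrm{supp}(p_0^l)\subseteq\mathrm{supp}(p_0)\subseteq B(\mathbf{0},R)$ under \cref{assump:mixture_data} and $\pi_l>0$. Consequently \cref{thm:v_Lipschitz_improved} and \cref{prop:v_bound} hold for each component with the same constants.

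A second minor point is that $a^i_\phi$ need not be exactly a probability vector. If $\sum_i a^i_\phi = 1$ does not hold, I will bound $I_2$ by $\sum_i |a^i_\phi|\,\|\mathbf{v}^i_\theta-\mathbf{v}^i\|$. This costs a factor $(1+L\varepsilon_a)$, which is higher order in $\varepsilon$ and can be dropped under the paper's normalization convention. With that convention in place, the argument closes directly.
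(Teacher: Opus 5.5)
Your proposal is correct and follows the paper's proof: the triangle inequality against the true field $\mathbf{v}$, \cref{prop:v_bound} for $\|\mathbf{v}\|$, and the $I_1+I_2$ error bound (which the paper quotes in its packaged form \cref{cor:global_error_bound}, derived from exactly that decomposition), followed by absorbing the constant regression term via $1\le 1+\|\mathbf{x}\|$. Your check that \cref{prop:v_bound} applies to each component $\mathbf{v}^i$, because $\mathrm{supp}(p_0^l)\subseteq B(\mathbf{0},R)$, supplies a step the paper uses without comment. The normalization $\sum_i a^i_\phi=1$ is not merely a convention but is forced by the Categorical sampling of $\hat{l}_k$, so the extra $(1+L\varepsilon_a)$ factor never arises.
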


\begin{proof}
By the triangle inequality:
\begin{equation}
\|\overline{\mathbf{v}_\theta}(\mathbf{x}, t)\| \leq \|\mathbf{v}(\mathbf{x}, t)\| + \|\overline{\mathbf{v}_\theta}(\mathbf{x}, t) - \mathbf{v}(\mathbf{x}, t)\|.
\end{equation}

From \cref{prop:v_bound}, $\|\mathbf{v}(\mathbf{x}, t)\| \leq C_{\text{phy}}(1 + \|\mathbf{x}\|)$.

From \cref{cor:global_error_bound}:
\begin{equation}
\|\overline{\mathbf{v}_\theta} - \mathbf{v}\| \leq L C_{\text{phy}} \varepsilon_a (1 + \|\mathbf{x}\|) + \frac{C_{\alpha,1}}{\sigma_{\delta_t}^2} \varepsilon_y.
\end{equation}

Combining and using $1 + \|\mathbf{x}\| \ge 1$:
\begin{equation}
\|\overline{\mathbf{v}_\theta}\| \leq \left[ C_{\text{phy}}(1 + L \varepsilon_a) + \frac{C_{\alpha,1}}{\sigma_{\delta_t}^2}\varepsilon_y \right] (1 + \|\mathbf{x}\|) = M'_{\delta_t}(1 + \|\mathbf{x}\|).
\end{equation}
\end{proof}

\section{Convergence Analysis of the Sampling Algorithm}
\label{sec:convergence_analysis}

This section establishes the convergence theory for the sampling algorithm described in the main paper. We analyze the convergence of the discrete stochastic sampling algorithm to the true solution of the probability flow ODE, taking into account both discretization errors and neural network approximation errors. 

Our analysis decomposes the total error into three main components: (i) local truncation error from time discretization, (ii) stochastic martingale difference noise from the sampling process, and (iii) neural network approximation errors in both velocity field components and sampling probabilities. This unified approach allows us to provide explicit convergence rates that depend on both the number of discretization steps $N$ and the neural network approximation accuracy $\varepsilon$.

\subsection{Algorithm Setup and Notation}
\label{subsec:algorithm_setup}

Recall the reverse time variable $\tau = T - t$. To avoid singularities near the initial forward diffusion time ($t=0$), we restrict our analysis to the time interval $t \in [\delta, T]$, corresponding to the reverse time interval $\tau \in [0, T_\delta]$, where $T_\delta := T - \delta$.

Within this time interval, although $\beta(t)$ varies with time, the regularity results established in \cref{prop:v_bound} ensure that the velocity field $\mathbf{v}(\mathbf{x}, t)$ satisfies uniform linear growth conditions, with its spatial Lipschitz constant and temporal derivative growth coefficients uniformly bounded by constants depending only on $\delta$. This provides the necessary boundedness guarantees for subsequent consistency analysis.

We partition the time interval $[0, T_\delta]$ into $N$ equally spaced steps with step size $\Delta \tau = T_\delta / N$. Discrete time points are defined as $\tau_k = k \Delta \tau$ for $k = 0, 1, \dots, N$.

\paragraph{Process Definitions}

We define three stochastic processes:

\begin{enumerate}
    \item \textbf{Reference ODE Process:} The continuous trajectory $\{\mathbf{X}(\tau)\}_{\tau \in [0, T_\delta]}$ satisfying the probability flow equation:
    \begin{equation}
        \frac{d\mathbf{X}(\tau)}{d\tau} = \tilde{\mathbf{v}}(\mathbf{X}(\tau), \tau), \quad \mathbf{X}(0) \sim p_T(\mathbf{x}),
        \label{eq:reference_ode}
    \end{equation}
    where $\tilde{\mathbf{v}}(\mathbf{x}, \tau) := -\mathbf{v}(\mathbf{x}, T-\tau)$ denotes the reverse-time velocity field. Here, $\mathbf{v}(\mathbf{x}, t)$ is the forward-time total velocity field defined in previous sections. 
    
    To simplify notation, we work with $\tilde{\mathbf{v}}$ throughout this section when discussing the reverse process. By the regularity results established earlier, $\tilde{\mathbf{v}}$ inherits uniform Lipschitz continuity and linear growth properties on $\mathbb{R}^d \times [0, T_\delta]$.
    
    \item \textbf{Idealized Discrete Sampling Process:} Let $\mathbf{x}_k$ denote the numerical solution at step $k$. This process samples the velocity field component at each step according to the true component posterior probabilities. The iterative scheme is:
    \begin{align}
        \mathbf{x}_{0} &= \mathbf{X}(0), \label{eq:ideal_init} \\
        \mathbf{x}_{k+1} &= \mathbf{x}_k + \tilde{\mathbf{v}}^{l_k}(\mathbf{x}_k, \tau_k) \Delta \tau, \label{eq:ideal_update}
    \end{align}
    where $l_k$ is a random variable following a categorical distribution with probability mass function given by the true mixture weights:
    \begin{equation}
        \mathbb{P}(l_k = i \mid \mathbf{x}_k) = a^i(\mathbf{x}_k, T-\tau_k), \quad i \in \{1, \dots, L\}.
        \label{eq:ideal_sampling_prob}
    \end{equation}
    
    Under this idealized setting, the conditional expectation of the discrete update exactly recovers the total velocity field:
    \begin{equation}
        \mathbb{E}_{l_k}[\tilde{\mathbf{v}}^{l_k}(\mathbf{x}_k, \tau_k) \mid \mathbf{x}_k] = \sum_{i=1}^L a^i(\mathbf{x}_k, T-\tau_k) \tilde{\mathbf{v}}^i(\mathbf{x}_k, \tau_k) = \tilde{\mathbf{v}}(\mathbf{x}_k, \tau_k).
        \label{eq:ideal_drift_consistency}
    \end{equation}
    
    \item \textbf{Practical Discrete Sampling Process with Neural Network Approximation:} 
    Let $\hat{\mathbf{x}}_k$ denote the numerical solution generated by the practical algorithm. In actual computation, we rely on neural network-parameterized velocity fields $\tilde{\mathbf{v}}_\theta^i$ and the resulting mixture weights $\hat{a}_\theta^i$. The iterative scheme becomes:
    \begin{align}
        \hat{\mathbf{x}}_{0} &= \mathbf{X}(0), \label{eq:practical_init} \\
        \hat{\mathbf{x}}_{k+1} &= \hat{\mathbf{x}}_k + \tilde{\mathbf{v}}_\theta^{\hat{l}_k}(\hat{\mathbf{x}}_k, \tau_k) \Delta \tau, \label{eq:practical_update}
    \end{align}
    where $\hat{l}_k$ is sampled according to neural network-estimated probabilities:
    \begin{equation}
        \mathbb{P}(\hat{l}_k = i \mid \hat{\mathbf{x}}_k) = \hat{a}_\theta^i(\hat{\mathbf{x}}_k, T-\tau_k).
        \label{eq:practical_sampling_prob}
    \end{equation}
    
    Here, $\tilde{\mathbf{v}}_\theta^i$ represents the neural network approximation to the $i$-th component velocity field, and $\hat{a}_\theta^i$ denotes the approximate posterior probability computed from neural network outputs.
    
    This process introduces two primary sources of approximation error:
    \begin{itemize}
        \item \textbf{Drift Approximation Error:} $\|\tilde{\mathbf{v}}_\theta^i - \tilde{\mathbf{v}}^i\|$, measuring the fidelity of neural network approximation to each component velocity field;
        \item \textbf{Sampling Probability Error:} $|\hat{a}_\theta^i - a^i|$, quantifying the deviation in mixture weights induced by the neural network, which causes the sampling distribution of $\hat{l}_k$ to differ from that of $l_k$.
    \end{itemize}
\end{enumerate}

\paragraph{Discretization Assumption}

Throughout our numerical analysis, we impose the following constraint on the time step size:

\begin{assumption}[Time Step Bound]
\label{ass:timestep_bound}
The discretization time step $\Delta \tau$ satisfies $\Delta \tau \le \Delta \tau_0$, where $\Delta \tau_0 > 0$ is a fixed constant predetermined before the numerical solution process.
\end{assumption}

\begin{remark}
\cref{ass:timestep_bound} is standard in the analysis of numerical methods for ODEs and ensures that higher-order terms in the Taylor expansion remain controlled. In practice, $\Delta \tau_0$ can be chosen based on the Lipschitz and growth constants established in \cref{sec:velocity_properties}.
\end{remark}

\subsection{Uniform Moment Bounds for Numerical Solutions}
\label{subsec:uniform_moment_bounds}

In this subsection, we establish uniform bounds on the second moments of both the idealized discrete sampling process and the practical neural network-based process. These bounds are essential for controlling the accumulation of discretization errors over multiple time steps.

\begin{lemma}[Radial Dissipativity of Reverse Dynamics]
\label{lem:radial_dissipativity}
Suppose the initial distribution satisfies \cref{ass:p0_bounded} (bounded support with radius $R$), and let $\alpha_t, \beta(t), \sigma_t$ be defined as in previous sections. For the reference reverse process $\mathbf{X}(\tau)$, at any time $\tau \in [0, T]$ (corresponding to physical time $t = T-\tau$), if the current state satisfies $\|\mathbf{X}(\tau)\| > R / \alpha_t$, then the Euclidean norm of the state strictly decreases with $\tau$.

Specifically, the differential of the norm satisfies the following dissipation inequality:
\begin{equation}
\label{eq:dissipation_inequality}
\frac{d}{d\tau} \|\mathbf{X}(\tau)\| \le -\frac{\beta(t) \alpha_t^2}{2\sigma_t^2} \left( \|\mathbf{X}(\tau)\| - \frac{R}{\alpha_t} \right).
\end{equation}
\end{lemma}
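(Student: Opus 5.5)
We compute $\frac{d}{d\tau}\frac12\|\mathbf{X}(\tau)\|^2 = \langle \mathbf{X}, \tilde{\mathbf{v}}(\mathbf{X},\tau)\rangle = -\langle \mathbf{X}, \mathbf{v}(\mathbf{X},t)\rangle$ with $t=T-\tau$, and then insert the posterior-mean form of the velocity. By \cref{eq:v_posterior_form}, $\mathbf{v}(\mathbf{x},t) = -\frac{\dot\alpha_t}{\sigma_t^2}(\hat{\mathbf{x}}_0(\mathbf{x},t) - \alpha_t\mathbf{x})$. Using $\beta(t) = -2\dot\alpha_t/\alpha_t$ from \cref{def:vp_sde}, we have $-\dot\alpha_t = \tfrac12\beta(t)\alpha_t \ge 0$. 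Hence
\begin{equation*}
\tilde{\mathbf{v}}(\mathbf{x},\tau) = -\frac{\beta(t)\alpha_t}{2\sigma_t^2}\bigl(\alpha_t\mathbf{x} - \hat{\mathbf{x}}_0(\mathbf{x},t)\bigr).
\end{equation*}
The reverse field is therefore a linear contraction toward $\hat{\mathbf{x}}_0/\alpha_t$, with rate $\frac{\beta\alpha_t^2}{2\sigma_t^2}$.

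Next we bound the inner product. We write
\begin{equation*}
\langle \mathbf{X},\tilde{\mathbf{v}}\rangle = -\frac{\beta\alpha_t}{2\sigma_t^2}\bigl(\alpha_t\|\mathbf{X}\|^2 - \langle\mathbf{X},\hat{\mathbf{x}}_0\rangle\bigr).
\end{equation*}
Since $\hat{\mathbf{x}}_0$ is a posterior average of points in $\mathrm{supp}(p_0)$, \cref{ass:p0_bounded} gives $\|\hat{\mathbf{x}}_0\|\le R$, so Cauchy--Schwarz yields $\langle\mathbf{X},\hat{\mathbf{x}}_0\rangle \le R\|\mathbf{X}\|$. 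This gives
\begin{equation*}
\frac{d}{d\tau}\tfrac12\|\mathbf{X}\|^2 \le -\frac{\beta\alpha_t^2}{2\sigma_t^2}\|\mathbf{X}\|\Bigl(\|\mathbf{X}\| - \frac{R}{\alpha_t}\Bigr).
\end{equation*}
On the region $\|\mathbf{X}(\tau)\|>R/\alpha_t>0$ the norm is nonzero and differentiable, with $\frac{d}{d\tau}\|\mathbf{X}\| = \frac{1}{\|\mathbf{X}\|}\frac{d}{d\tau}\frac12\|\mathbf{X}\|^2$. Dividing by $\|\mathbf{X}\|$ gives exactly \cref{eq:dissipation_inequality}. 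Strict decrease follows because the right-hand side is negative whenever $\beta(t)>0$, which holds since $f(t)<0$ in \cref{def:vp_sde}.

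The main obstacle is bookkeeping rather than analysis, and it has three parts.
\begin{enumerate}
\item \emph{Sign conventions.} We must check that $\tilde{\mathbf{v}}=-\mathbf{v}(\cdot,T-\tau)$ combined with $f=-\beta/2$ gives the contraction sign. This requires writing $\mathbf{x}+\mathbf{s} = \frac{\alpha_t}{\sigma_t^2}(\hat{\mathbf{x}}_0-\alpha_t\mathbf{x})$ carefully, since the paper sometimes writes $f=-\dot\alpha_t/\alpha_t$, which has the opposite sign. If a sign discrepancy appears, we will recompute directly from the reverse ODE $d\mathbf{X}/d\tau = -f(t)\mathbf{X}+\tfrac12 g^2\mathbf{s}$ and Tweedie's formula \cref{eq:score_tweedie}.
\item \emph{Endpoint $t=T$.} There $\alpha_T=0$, so the condition $\|\mathbf{X}\|>R/\alpha_t$ is vacuous and the statement holds trivially.
\item \emph{Endpoint $t=0$.} There $\sigma_0=0$, so the statement should be read on the truncated interval from \cref{def:domain_partition}.
\end{enumerate}
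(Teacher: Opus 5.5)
Your proposal follows the paper's proof. You differentiate $\tfrac12\|\mathbf{X}\|^2$, write the reverse field as $-\frac{\beta(t)\alpha_t}{2\sigma_t^2}\bigl(\alpha_t\mathbf{X}-\hat{\mathbf{x}}_0\bigr)$, bound $\langle\mathbf{X},\hat{\mathbf{x}}_0\rangle\le R\|\mathbf{X}\|$ by Cauchy--Schwarz, and divide by $\|\mathbf{X}\|$.

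One caution: your ``Hence'' does not follow from \cref{eq:v_posterior_form} as printed. Its second equality uses the wrong-signed $f=-\dot\alpha_t/\alpha_t$, and negating it gives the expansive sign $+\frac{\beta(t)\alpha_t}{2\sigma_t^2}(\alpha_t\mathbf{x}-\hat{\mathbf{x}}_0)$. So your item-1 fallback is genuinely needed. It is exactly what the paper does: substitute Tweedie into $\mathbf{v}=-\tfrac12\beta(t)(\mathbf{x}+\mathbf{s})$, which yields your displayed (correct) formula.
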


\begin{proof}
By \cref{def:velocity_field} and the time-reversal definition of the probability flow ODE, the reverse process satisfies:
\begin{equation}
\frac{d\mathbf{X}(\tau)}{d\tau} = -\mathbf{v}(\mathbf{X}(\tau), t), \quad \text{where } t = T-\tau.
\end{equation}

We first derive an explicit geometric representation of the velocity field $\mathbf{v}(\mathbf{x}, t)$. Recall the velocity field definition:
\begin{equation}
\mathbf{v}(\mathbf{x}, t) = -\frac{1}{2}\beta(t) \left[ \mathbf{x} + \mathbf{s}(\mathbf{x}, t) \right].
\end{equation}

Using the Tweedie representation of the score function (see discussion in \cref{def:velocity_field}):
\begin{equation}
\mathbf{s}(\mathbf{x}, t) = -\frac{\mathbf{x} - \alpha_t \hat{\mathbf{x}}_0(\mathbf{x}, t)}{\sigma_t^2},
\end{equation}
where $\hat{\mathbf{x}}_0(\mathbf{x}, t) := \mathbb{E}[\mathbf{x}_0 \mid \mathbf{x}_t = \mathbf{x}]$ is the posterior mean given the current state. Substituting into the velocity field expression:
\begin{align}
\mathbf{v}(\mathbf{x}, t) &= -\frac{1}{2}\beta(t) \left[ \mathbf{x} - \frac{\mathbf{x} - \alpha_t \hat{\mathbf{x}}_0(\mathbf{x}, t)}{\sigma_t^2} \right] \nonumber \\
&= -\frac{1}{2}\beta(t) \left[ \frac{\sigma_t^2 \mathbf{x} - \mathbf{x} + \alpha_t \hat{\mathbf{x}}_0(\mathbf{x}, t)}{\sigma_t^2} \right].
\end{align}

Using the relation $\sigma_t^2 = 1 - \alpha_t^2$, the numerator simplifies to $(1-\alpha_t^2)\mathbf{x} - \mathbf{x} + \alpha_t \hat{\mathbf{x}}_0 = -\alpha_t^2 \mathbf{x} + \alpha_t \hat{\mathbf{x}}_0$. Therefore:
\begin{equation}
\label{eq:v_dissipative_form}
\mathbf{v}(\mathbf{x}, t) = -\frac{\beta(t)}{2\sigma_t^2} \left( -\alpha_t^2 \mathbf{x} + \alpha_t \hat{\mathbf{x}}_0(\mathbf{x}, t) \right) = \frac{\beta(t) \alpha_t}{2\sigma_t^2} \left( \alpha_t \mathbf{x} - \hat{\mathbf{x}}_0(\mathbf{x}, t) \right).
\end{equation}

This shows that the forward velocity field comprises two components: a linear term $\alpha_t \mathbf{x}$ pushing toward the origin, and a term $\hat{\mathbf{x}}_0$ pulling toward the data manifold.

Now consider the reverse evolution equation $\frac{d\mathbf{X}}{d\tau} = -\mathbf{v}(\mathbf{X}, t)$. Substituting \cref{eq:v_dissipative_form}:
\begin{equation}
\frac{d\mathbf{X}}{d\tau} = -\frac{\beta(t) \alpha_t}{2\sigma_t^2} \left( \alpha_t \mathbf{X} - \hat{\mathbf{x}}_0(\mathbf{X}, t) \right).
\end{equation}

To analyze norm growth, we compute the derivative of $\|\mathbf{X}\|^2$:
\begin{equation}
\frac{1}{2} \frac{d}{d\tau} \|\mathbf{X}\|^2 = \mathbf{X} \cdot \frac{d\mathbf{X}}{d\tau} = -\frac{\beta(t) \alpha_t}{2\sigma_t^2} \left( \alpha_t \|\mathbf{X}\|^2 - \mathbf{X} \cdot \hat{\mathbf{x}}_0(\mathbf{X}, t) \right).
\end{equation}

By \cref{ass:p0_bounded}, the initial distribution support lies within the ball $B(\mathbf{0}, R)$. Since $\hat{\mathbf{x}}_0(\mathbf{x}, t)$ is the expectation under $p(\mathbf{x}_0|\mathbf{x}_t)$, whose support must be contained in the support of $p_0$ (by convexity), we have $\|\hat{\mathbf{x}}_0(\mathbf{X}, t)\| \le R$.

Applying the Cauchy-Schwarz inequality $\mathbf{X} \cdot \hat{\mathbf{x}}_0 \le \|\mathbf{X}\| \|\hat{\mathbf{x}}_0\| \le R \|\mathbf{X}\|$:
\begin{align}
\frac{1}{2} \frac{d}{d\tau} \|\mathbf{X}\|^2 &\le -\frac{\beta(t) \alpha_t}{2\sigma_t^2} \left( \alpha_t \|\mathbf{X}\|^2 - R \|\mathbf{X}\| \right) \nonumber \\
&= -\frac{\beta(t) \alpha_t^2}{2\sigma_t^2} \|\mathbf{X}\| \left( \|\mathbf{X}\| - \frac{R}{\alpha_t} \right).
\end{align}

Using the chain rule $\frac{1}{2} \frac{d}{d\tau} \|\mathbf{X}\|^2 = \|\mathbf{X}\| \frac{d}{d\tau} \|\mathbf{X}\|$ and dividing by $\|\mathbf{X}\| > 0$:
\begin{equation}
\frac{d}{d\tau} \|\mathbf{X}\| \le -\frac{\beta(t) \alpha_t^2}{2\sigma_t^2} \left( \|\mathbf{X}\| - \frac{R}{\alpha_t} \right).
\end{equation}

Clearly, when $\|\mathbf{X}\| - R/\alpha_t > 0$, the derivative is strictly negative.
\end{proof}

\begin{lemma}[Discrete Dissipativity for Idealized Sampling Process]
\label{lem:discrete_dissipativity}
Suppose the initial distribution satisfies \cref{ass:p0_bounded}, and the time step $\Delta \tau$ satisfies the stability condition $1 - \frac{\beta(t_k) \alpha_{t_k}^2 \Delta \tau}{2\sigma_{t_k}^2} \ge 0$, where $t_k = T - \tau_k$.

For the idealized discrete sampling process (Process 2 in \cref{subsec:algorithm_setup}), the state norm satisfies the following first-order recursive inequality:
\begin{equation}
\label{eq:discrete_dissipation}
\|\mathbf{x}_{k+1}\| \le \left( 1 - \frac{\beta(t_k) \alpha_{t_k}^2 \Delta \tau}{2\sigma_{t_k}^2} \right) \|\mathbf{x}_k\| + \frac{\beta(t_k) \alpha_{t_k} \Delta \tau}{2\sigma_{t_k}^2} R.
\end{equation}

This inequality holds for any realization of the random index $l_k$ and is independent of second-order terms $O((\Delta \tau)^2)$.
\end{lemma}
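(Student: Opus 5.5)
The plan is to carry the algebraic rewriting from the proof of \cref{lem:radial_dissipativity} over to each \emph{component} velocity field, and then apply the triangle inequality to the resulting explicit affine update. A single Euler step is affine in $\mathbf{x}_k$ up to a bounded term, so no Taylor expansion is needed, and hence no $O((\Delta\tau)^2)$ remainder appears.

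\textbf{Step 1: component velocities.} By \cref{eq:v_l_explicit}, with $f(t) = -\tfrac12\beta(t)$ and $\sigma_t^2 = 1-\alpha_t^2$, each component velocity is
\begin{equation*}
\mathbf{v}^l(\mathbf{x},t) = -\tfrac12\beta(t)\Big(\mathbf{x} - \tfrac{\mathbf{x}-\alpha_t\mathbf{y}^l}{\sigma_t^2}\Big).
\end{equation*}
Repeating the simplification that led to \cref{eq:v_dissipative_form}, with $\mathbf{y}^l$ in place of $\hat{\mathbf{x}}_0$, gives
\begin{equation*}
\mathbf{v}^l(\mathbf{x},t)=\frac{\beta(t)\alpha_t}{2\sigma_t^2}\big(\alpha_t\mathbf{x}-\mathbf{y}^l(\mathbf{x},t)\big).
\end{equation*}
The reverse field is $\tilde{\mathbf{v}}^l(\mathbf{x},\tau_k)=-\mathbf{v}^l(\mathbf{x},t_k)$ with $t_k = T-\tau_k$. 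Substituting this into \cref{eq:ideal_update} yields, for every realization of $l_k$,
\begin{equation*}
\mathbf{x}_{k+1}=\Big(1-\frac{\beta(t_k)\alpha_{t_k}^2\Delta\tau}{2\sigma_{t_k}^2}\Big)\mathbf{x}_k+\frac{\beta(t_k)\alpha_{t_k}\Delta\tau}{2\sigma_{t_k}^2}\,\mathbf{y}^{l_k}(\mathbf{x}_k,t_k).
\end{equation*}

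\textbf{Step 2: bounding the denoiser term.} I will show $\|\mathbf{y}^{l}(\mathbf{x},t)\|\le R$ for all $l$, $\mathbf{x}$, $t$. Since $\pi_l>0$ and $p_0=\sum_l\pi_l p_0^l$ with every $p_0^l\ge 0$, each $p_0^l$ vanishes wherever $p_0$ does. Hence $\mathrm{supp}(p_0^l)\subseteq B(\mathbf{0},R)$ by \cref{ass:p0_bounded}. The component posterior $p(\mathbf{x}_0\mid\mathbf{x}_t=\mathbf{x},l)\propto p_0^l(\mathbf{x}_0)\,\mathcal{N}(\mathbf{x};\alpha_t\mathbf{x}_0,\sigma_t^2\mathbf{I})$ is therefore supported in that ball. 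Its mean lies in the ball by convexity, exactly as argued for $\hat{\mathbf{x}}_0$ in \cref{lem:radial_dissipativity}.

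\textbf{Step 3: conclusion.} Monotonicity in \cref{ass:alpha_regularity} gives $\dot\alpha_t\le 0$, so $\beta(t)=-2\dot\alpha_t/\alpha_t\ge0$. Together with $\alpha_{t_k}\ge0$, this makes the coefficient of $\mathbf{y}^{l_k}$ nonnegative. The stability hypothesis makes the coefficient of $\mathbf{x}_k$ nonnegative. Taking norms in the affine update and using the triangle inequality then gives \cref{eq:discrete_dissipation} pathwise, uniformly over $l_k$.

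The only step that needs some care is Step 2: confirming that the component-conditional denoiser inherits the radius-$R$ bound, which relies on $\pi_l>0$. The rest is exact algebra, with no $O((\Delta\tau)^2)$ terms.
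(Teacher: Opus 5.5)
Your proof is correct and follows the paper's route. You rewrite the reverse component velocity as $\frac{\beta(t)\alpha_t}{2\sigma_t^2}(\mathbf{y}^l-\alpha_t\mathbf{x})$, write the Euler step as an exact affine map in $\mathbf{x}_k$, and take norms with the triangle inequality, using the stability condition to drop the absolute value on the $\mathbf{x}_k$ coefficient. Your justification that $\|\mathbf{y}^l\|\le R$ (via $\pi_l>0$, so that $\mathrm{supp}(p_0^l)\subseteq\mathrm{supp}(p_0)$) and your check that the coefficient of $\mathbf{y}^{l_k}$ is nonnegative fill in two points the paper only asserts.
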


\begin{proof}
For the idealized discrete process, the update rule at step $k$ is:
\begin{equation}
\mathbf{x}_{k+1} = \mathbf{x}_k + \tilde{\mathbf{v}}^{l_k}(\mathbf{x}_k, \tau_k) \Delta \tau.
\end{equation}

Using the geometric structure of component velocity fields from \cref{def:velocity_field}, the $i$-th component has the same form as \cref{eq:v_dissipative_form}:
\begin{equation}
\tilde{\mathbf{v}}^i(\mathbf{x}, \tau) = -\mathbf{v}^i(\mathbf{x}, T-\tau) = \frac{\beta(t) \alpha_t}{2\sigma_t^2} \left( \hat{\mathbf{x}}_0^i - \alpha_t \mathbf{x} \right),
\end{equation}
where $\hat{\mathbf{x}}_0^i$ is the posterior mean center for the $i$-th component (in the mixture Gaussian case, this is $\boldsymbol{\mu}_i$). By the bounded support assumption, $\|\hat{\mathbf{x}}_0^i\| \le R$.

Note the sign change: for the forward velocity field $\mathbf{v} \propto \alpha \mathbf{x} - \hat{\mathbf{x}}_0$, the reverse velocity field is $\tilde{\mathbf{v}} \propto \hat{\mathbf{x}}_0 - \alpha \mathbf{x}$. Therefore:
\begin{align}
\mathbf{x}_{k+1} &= \mathbf{x}_k + \Delta \tau \left[ \frac{\beta(t_k) \alpha_{t_k}}{2\sigma_{t_k}^2} (\hat{\mathbf{x}}_0^{l_k} - \alpha_{t_k} \mathbf{x}_k) \right] \nonumber \\
&= \left( 1 - \frac{\beta(t_k) \alpha_{t_k}^2 \Delta \tau}{2\sigma_{t_k}^2} \right) \mathbf{x}_k + \frac{\beta(t_k) \alpha_{t_k} \Delta \tau}{2\sigma_{t_k}^2} \hat{\mathbf{x}}_0^{l_k}.
\end{align}

Taking the Euclidean norm on both sides and applying the triangle inequality $\|\mathbf{a} + \mathbf{b}\| \le \|\mathbf{a}\| + \|\mathbf{b}\|$:
\begin{equation}
\|\mathbf{x}_{k+1}\| \le \left| 1 - \frac{\beta(t_k) \alpha_{t_k}^2 \Delta \tau}{2\sigma_{t_k}^2} \right| \|\mathbf{x}_k\| + \frac{\beta(t_k) \alpha_{t_k} \Delta \tau}{2\sigma_{t_k}^2} \|\hat{\mathbf{x}}_0^{l_k}\|.
\end{equation}

Under the stability condition, the absolute value can be removed. Combined with $\|\hat{\mathbf{x}}_0^{l_k}\| \le R$, this yields the result. This derivation uses only algebraic properties of vector norms and does not involve Taylor expansions or variance estimates, making it a strict inequality.
\end{proof}

\begin{theorem}[Global Bound for Continuous Trajectories via Geometric Dissipativity]
\label{thm:global_bound_geometric}
Suppose the initial distribution $p_T(\mathbf{x})$ has bounded support, and the noise schedule satisfies \cref{ass:alpha_regularity} (in particular, $\alpha_T=0, \alpha_0=1$, and monotonicity).

For any truncation parameter $\delta \in [0, T)$, on the reverse time interval $\tau \in [0, T-\delta]$, the reference process $\mathbf{X}(\tau)$ satisfies the following uniform bound independent of $\delta$:
\begin{equation}
\label{eq:exact_bound_thm}
\|\mathbf{X}(\tau)\| \le \frac{\|\mathbf{X}(0)\|}{\sigma_T} \sigma_t + R \alpha_t \le \|\mathbf{X}(0)\| + R,
\end{equation}
where $t = T-\tau$ is the forward physical time and $\sigma_t = \sqrt{1-\alpha_t^2}$. This bound shows that the trajectory is always confined within the convex combination of the initial noise norm and the data manifold radius.
\end{theorem}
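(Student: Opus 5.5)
The plan is to turn the vector ODE into a scalar differential inequality for $r(\tau) := \|\mathbf{X}(\tau)\|$. We then compare $r$ with the explicit function
$$g(\tau) := \frac{\|\mathbf{X}(0)\|}{\sigma_T}\,\sigma_{T-\tau} + R\,\alpha_{T-\tau}.$$
The key observation is that $g$ solves the linear comparison ODE \emph{exactly}. The bound $r \le g$ then follows from a standard comparison (Grönwall-type) argument.

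\textbf{Step 1 (scalar inequality).} By \cref{eq:v_dissipative_form}, the reverse dynamics read
$$\frac{d\mathbf{X}}{d\tau} = -c(\tau)\bigl(\alpha_t \mathbf{X} - \hat{\mathbf{x}}_0(\mathbf{X},t)\bigr), \qquad c(\tau) := \frac{\beta(t)\alpha_t}{2\sigma_t^2} = -\frac{\dot\alpha_t}{\sigma_t^2} \ge 0.$$
We write $c$ in the second form because $\beta = -2\dot\alpha_t/\alpha_t$ itself is singular at $t=T$ (where $\alpha_T = 0$), whereas $c$ is bounded and continuous on $\tau \in [0,T-\delta]$ by \cref{ass:alpha_regularity}. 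Using $\|\hat{\mathbf{x}}_0\| \le R$ and Cauchy--Schwarz exactly as in the proof of \cref{lem:radial_dissipativity}, we get
$$\tfrac12 \tfrac{d}{d\tau}\|\mathbf{X}\|^2 \le -c\alpha_t\|\mathbf{X}\|^2 + cR\|\mathbf{X}\|.$$
To avoid the non-differentiability of the norm at the origin, we work with $r_\epsilon := \sqrt{\|\mathbf{X}\|^2 + \epsilon^2}$. This yields $r_\epsilon' \le -c\alpha_t r_\epsilon + cR + c\alpha_t\epsilon$, and at the end we let $\epsilon \to 0$.

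\textbf{Step 2 ($g$ is an exact solution).} Since $\tau = T-t$, we have
$$\frac{d\alpha_t}{d\tau} = -\dot\alpha_t, \qquad \frac{d\sigma_t}{d\tau} = \frac{\alpha_t\dot\alpha_t}{\sigma_t}.$$
With $A := \|\mathbf{X}(0)\|/\sigma_T$, this gives
$$g' = \frac{A\alpha_t\dot\alpha_t}{\sigma_t} - R\dot\alpha_t.$$
On the other hand,
$$-c\alpha_t g + cR = \frac{\dot\alpha_t}{\sigma_t^2}\bigl(\alpha_t(A\sigma_t + R\alpha_t) - R\bigr) = \frac{A\alpha_t\dot\alpha_t}{\sigma_t} - R\dot\alpha_t\,\frac{1-\alpha_t^2}{\sigma_t^2},$$
which equals $g'$ because $\sigma_t^2 = 1-\alpha_t^2$. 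Moreover $g(0) = A\sigma_T + R\alpha_T = \|\mathbf{X}(0)\| = r(0)$, using $\alpha_T = 0$.

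\textbf{Step 3 (comparison).} Let $u := r_\epsilon - g_\epsilon$, where $g_\epsilon$ solves the same linear ODE with the extra forcing $c\alpha_t\epsilon$ and initial value $r_\epsilon(0)$. Then $u' \le -c\alpha_t u$ and $u(0) = 0$. Multiplying by the integrating factor $\exp\bigl(\int_0^\tau c\alpha\bigr)$ shows $u \le 0$. As $\epsilon \to 0$, $g_\epsilon \to g$ uniformly on $[0,T-\delta]$ because $c\alpha_t$ is bounded, so $r \le g$. The final inequality $g \le \|\mathbf{X}(0)\| + R$ is immediate from $\sigma_t \le 1 = \sigma_T$ and $\alpha_t \le 1$. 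Since $c$ and $\alpha_t$ are regular on all of $[0,T]$, the argument also covers the endpoint $\delta = 0$ and is independent of $\delta$.

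The main obstacle is bookkeeping near $\tau = 0$, where $\beta$ blows up. This is handled by expressing everything through $c = -\dot\alpha_t/\sigma_t^2$ and $\dot\alpha_t$ rather than through $\beta$, together with the regularization of the norm at the origin.
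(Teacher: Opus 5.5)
Your proof is correct and is essentially the paper's argument. The paper feeds the same radial inequality into a change of variables $\tau\mapsto\alpha_t$ and integrates with the factor $1/\sigma$, which is exactly your $\exp\bigl(\int_0^\tau c\,\alpha_t\,ds\bigr)=\sigma_T/\sigma_t$. Your $\epsilon$-regularization of the norm and your use of $c=-\dot\alpha_t/\sigma_t^2$ in place of $\beta$ tidy up points the paper glosses over: dividing by $\|\mathbf{X}\|$, the strict monotonicity needed for the change of variables, and the formal singularity of $\beta$ at $\tau=0$.

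One correction concerns the case $\delta=0$. Contrary to your closing remark, $c$ is \emph{not} regular on all of $[0,T]$: it is unbounded as $\tau\to T$, since $\sigma_t\to\sigma_0=0$. So for $\delta=0$ you should run the comparison on $[0,T-\delta']$ and let $\delta'\to 0$, which is legitimate because your bound does not depend on $\delta'$. You can also avoid needing boundedness of $c\alpha_t$ when sending $\epsilon\to 0$: simply take $g_\epsilon:=g+\epsilon$. It solves the forced comparison ODE exactly and satisfies $g_\epsilon(0)=\|\mathbf{X}(0)\|+\epsilon\ge r_\epsilon(0)$, so it gives $\|\mathbf{X}\|\le g+\epsilon$ directly.
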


\begin{proof}
This proof employs the method of change of variables to establish an exact estimate for the state norm by directly integrating the dissipation inequality from \cref{lem:radial_dissipativity}.

\textbf{Step 1: Change of variables and differential inequality transformation}

From \cref{lem:radial_dissipativity}, the reverse process satisfies the radial dissipation inequality:
\begin{equation}
\label{eq:diff_ineq_tau}
\frac{d}{d\tau} \|\mathbf{X}\| \le -\frac{\beta(t) \alpha_t^2}{2\sigma_t^2} \left( \|\mathbf{X}\| - \frac{R}{\alpha_t} \right).
\end{equation}

To eliminate the explicit dependence on time variable $t$, we change the differential variable from $\tau$ to the signal strength $\alpha_t$. Using the relations $\tau = T-t$ and $\beta(t) = -\frac{2}{\alpha_t} \frac{d\alpha_t}{dt}$, the chain rule gives:
\begin{equation}
\frac{d}{d\tau} = \frac{dt}{d\tau} \frac{d\alpha_t}{dt} \frac{d}{d\alpha_t} = (-1) \cdot \left( -\frac{\alpha_t \beta(t)}{2} \right) \frac{d}{d\alpha_t} = \frac{\alpha_t \beta(t)}{2} \frac{d}{d\alpha_t}.
\end{equation}

Note that by assumption, $\beta(t) > 0$ and $\alpha_t > 0$ (for $t<T$), so the transformation coefficient is positive, preserving inequality direction. Substituting the transformed operator into \cref{eq:diff_ineq_tau} and canceling the common factor $\frac{\beta(t) \alpha_t}{2}$:
\begin{align}
\alpha_t \frac{d \|\mathbf{X}\|}{d\alpha_t} &\le -\frac{\alpha_t}{\sigma_t^2} \left( \|\mathbf{X}\| - \frac{R}{\alpha_t} \right) \nonumber \\
&= -\frac{\alpha_t}{1-\alpha_t^2} \|\mathbf{X}\| + \frac{R}{1-\alpha_t^2}.
\end{align}

Rearranging gives a first-order linear differential inequality in $\alpha_t$ (omitting subscript $t$ for simplicity, writing $\alpha$):
\begin{equation}
\label{eq:ode_alpha_corrected}
\frac{d \|\mathbf{X}\|}{d\alpha} + \frac{\alpha}{1-\alpha^2} \|\mathbf{X}\| \le \frac{R}{1-\alpha^2}.
\end{equation}

\textbf{Step 2: Integrating factor method}

The integrating factor for \cref{eq:ode_alpha_corrected} is:
\begin{equation}
I(\alpha) = \exp\left( \int \frac{\alpha}{1-\alpha^2} d\alpha \right) = \exp\left( -\frac{1}{2} \ln(1-\alpha^2) \right) = \frac{1}{\sqrt{1-\alpha^2}} = \frac{1}{\sigma}.
\end{equation}

Multiplying both sides of the inequality by $I(\alpha)$, the left side becomes a total derivative:
\begin{equation}
\frac{d}{d\alpha} \left( \frac{\|\mathbf{X}\|}{\sigma} \right) \le \frac{1}{\sigma} \cdot \frac{R}{1-\alpha^2} = \frac{R}{(1-\alpha^2)^{3/2}}.
\end{equation}

\textbf{Step 3: Definite integral and bound}

We integrate this inequality over the interval $[\alpha_T, \alpha_t]$. By \cref{ass:alpha_regularity}, at initial time $\tau=0$ (i.e., $t=T$), we have $\alpha_T = 0, \sigma_T = 1$.

Integrating the right-hand side using the formula $\int (1-s^2)^{-3/2} ds = \frac{s}{\sqrt{1-s^2}}$:
\begin{align}
\frac{\|\mathbf{X}(\tau)\|}{\sigma_t} - \frac{\|\mathbf{X}(0)\|}{\sigma_T} &\le \int_{0}^{\alpha_t} \frac{R}{(1-s^2)^{3/2}} ds \nonumber \\
&= R \left[ \frac{s}{\sqrt{1-s^2}} \right]_{0}^{\alpha_t} = R \frac{\alpha_t}{\sigma_t}.
\end{align}

Substituting $\sigma_T=1$ and rearranging:
\begin{equation}
\frac{\|\mathbf{X}(\tau)\|}{\sigma_t} \le \|\mathbf{X}(0)\| + R \frac{\alpha_t}{\sigma_t}.
\end{equation}

Multiplying both sides by $\sigma_t > 0$:
\begin{equation}
\|\mathbf{X}(\tau)\| \le \|\mathbf{X}(0)\| \sigma_t + R \alpha_t.
\end{equation}

\textbf{Conclusion:}
Since $\sigma_t, \alpha_t \in [0, 1]$, we have the simple bound $\|\mathbf{X}(0)\| \sigma_t + R \alpha_t \le \|\mathbf{X}(0)\| + R$. This result shows that the trajectory norm of the reverse diffusion process is strictly controlled by a linear combination of the initial noise norm (decaying as $\sigma_t$) and the data manifold radius (growing as $\alpha_t$). The bound contains no singular terms depending on $1/\delta$ or $1/\sigma_t$, thus establishing global uniform boundedness.
\end{proof}

\begin{lemma}[Uniform Moment Bound for Idealized Discrete Trajectories]
\label{lem:uniform_moment_bound}
Suppose the forward process satisfies \cref{ass:p0_bounded} and\cref{ass:alpha_regularity}. For any given $\delta \in (0, T)$, there exists a constant $K_{\text{mom}} = 2e^{(3M_\delta + 2M_\delta^2)T} > 0$ depending only on system parameters and time truncation $\delta$ (independent of step size) such that for any sufficiently small step size $\Delta \tau \le 1$ and any $k \in \{0, \dots, N\}$, the numerical solution satisfies:
\begin{equation}
\label{eq:moment_bound_ideal}
\mathbb{E}\left[\|\mathbf{x}_k\|^2\right] \le 2e^{(3M_\delta + 2M_\delta^2)T}\left(1 + \|\mathbf{x}_0\|^2\right) = K_{\text{mom}} \left(1 + \|\mathbf{x}_0\|^2\right),
\end{equation}
where $M_\delta$ is defined as in \cref{prop:v_bound}, and:
\begin{equation}
K_{\text{mom}} = K_{\text{mom}}(M(\delta,\beta_{\max},\sigma_\delta,R),T) = 2e^{(3M_\delta + 2M_\delta^2)T}.
\end{equation}
\end{lemma}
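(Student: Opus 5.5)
We prove \cref{lem:uniform_moment_bound} with a one-step second-moment recursion for the idealized scheme and a discrete Grönwall argument. The only structural input is the linear growth bound of \cref{prop:v_bound}, applied to each component field. Every $p_0^l$ is supported in $B(\mathbf{0},R)$, because $p_0$ is and the $\pi_l>0$. Hence \cref{prop:v_bound} holds verbatim for every $\mathbf{v}^i$ with the same constant $M_\delta$. Since $\tilde{\mathbf{v}}^i(\mathbf{x},\tau)=-\mathbf{v}^i(\mathbf{x},T-\tau)$ and $T-\tau_k\ge\delta$, we get, for every realization of $l_k$,
\begin{equation*}
\|\tilde{\mathbf{v}}^{l_k}(\mathbf{x}_k,\tau_k)\|\le M_\delta(1+\|\mathbf{x}_k\|).
\end{equation*}
This bound is deterministic, so the randomness of the routing index plays no role beyond measurability. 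We will not need the martingale structure here.

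\textbf{Step 1 (one-step bound).} Expand the update \cref{eq:ideal_update}:
\begin{equation*}
\|\mathbf{x}_{k+1}\|^2=\|\mathbf{x}_k\|^2+2\Delta\tau\,\mathbf{x}_k\cdot\tilde{\mathbf{v}}^{l_k}+\Delta\tau^2\|\tilde{\mathbf{v}}^{l_k}\|^2 .
\end{equation*}
For the cross term, Cauchy--Schwarz with the growth bound gives $2\Delta\tau M_\delta(\|\mathbf{x}_k\|+\|\mathbf{x}_k\|^2)\le\Delta\tau M_\delta(1+3\|\mathbf{x}_k\|^2)$, using $2\|\mathbf{x}\|\le1+\|\mathbf{x}\|^2$. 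For the quadratic term, $(1+\|\mathbf{x}\|)^2\le2(1+\|\mathbf{x}\|^2)$ and $\Delta\tau\le1$ give at most $2\Delta\tau M_\delta^2(1+\|\mathbf{x}_k\|^2)$. Set $u_k:=1+\mathbb{E}\|\mathbf{x}_k\|^2$ and take expectations. Combining the two estimates yields
\begin{equation*}
u_{k+1}\le\bigl(1+(3M_\delta+2M_\delta^2)\Delta\tau\bigr)u_k .
\end{equation*}
The cross term contributes $3M_\delta$ rather than $4M_\delta$; the constants are arranged so the rate is exactly $3M_\delta+2M_\delta^2$, matching the statement.

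\textbf{Step 2 (iterate).} Since $1+x\le e^x$, iterating gives
\begin{equation*}
u_k\le e^{(3M_\delta+2M_\delta^2)k\Delta\tau}u_0\le e^{(3M_\delta+2M_\delta^2)T}(1+\|\mathbf{x}_0\|^2),
\end{equation*}
using $k\Delta\tau\le T_\delta\le T$. Here $u_0=1+\|\mathbf{x}_0\|^2$, since we condition on the initial point, consistent with the statement. Then $\mathbb{E}\|\mathbf{x}_k\|^2\le u_k$, and the factor $2$ in $K_{\mathrm{mom}}$ gives slack. That slack absorbs any looser variant of the Young-inequality constants, for instance if one bounds the cross term by $\Delta\tau M_\delta(1+3\|\mathbf{x}\|^2)$ plus an extra $\Delta\tau M_\delta$. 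The constant is independent of $\Delta\tau$ and $N$.

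\textbf{Main obstacle.} The only delicate points are bookkeeping. First, the growth constant must apply uniformly to each component field, not just to the mixture, which is justified by the support inclusion above. Second, the elementary inequalities must be chosen so the exponent is exactly $3M_\delta+2M_\delta^2$; if my constants come out slightly larger, the factor $2$ or a slightly sharper split of $2\|\mathbf{x}\|$ will fix it. Alternatively, \cref{lem:discrete_dissipativity} gives a pathwise bound $\|\mathbf{x}_k\|\le\|\mathbf{x}_0\|+R$ under the stability condition. That would yield a sharper bound, but it requires a step-size restriction the lemma does not assume, so I use the Grönwall route.
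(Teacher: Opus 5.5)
Your proof is correct and follows the same route as the paper: the linear growth bound of \cref{prop:v_bound} applied to each component field, expansion of $\|\mathbf{x}_{k+1}\|^2$ with the same elementary estimates giving the rate $3M_\delta+2M_\delta^2$, and a discrete Gr\"onwall iteration. The differences are cosmetic. You bound the cross term pathwise rather than through the tower property and the mixture field. You justify the componentwise growth bound via $\operatorname{supp} p_0^l\subseteq B(\mathbf{0},R)$, which the paper takes for granted. Working with $u_k=1+\mathbb{E}\|\mathbf{x}_k\|^2$ makes the recursion homogeneous, so you obtain the bound even without the factor $2$, whereas the paper carries the inhomogeneous term and bounds $C_2/C_1$ separately.
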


\begin{proof}
By \cref{prop:v_bound}, on the compact time interval $t \in [\delta, T]$, all component velocity fields satisfy a uniform linear growth condition. There exists a constant $M_\delta > 0$ such that for all $i \in \{1, \dots, L\}$ and relevant time points $\tau$:
\begin{equation}
\|\tilde{\mathbf{v}}^i(\mathbf{x}, \tau)\| \le M_\delta (1 + \|\mathbf{x}\|).
\end{equation}

Consider the one-step evolution $\mathbf{x}_{k+1} = \mathbf{x}_k + \Delta \tau \tilde{\mathbf{v}}^{l_k}(\mathbf{x}_k, \tau_k)$. Expanding the squared norm:
\begin{equation}
\|\mathbf{x}_{k+1}\|^2 = \|\mathbf{x}_k\|^2 + 2\Delta \tau \langle \mathbf{x}_k, \tilde{\mathbf{v}}^{l_k}(\mathbf{x}_k, \tau_k) \rangle + (\Delta \tau)^2 \|\tilde{\mathbf{v}}^{l_k}(\mathbf{x}_k, \tau_k)\|^2.
\end{equation}

Let $V_k := \mathbb{E}[\|\mathbf{x}_k\|^2]$. Taking the total expectation and using the tower property $\mathbb{E}[\cdot] = \mathbb{E}[\mathbb{E}_{l_k}[\cdot \mid \mathbf{x}_k]]$, we estimate each term:

\textbf{First term:} Clearly $\mathbb{E}[\|\mathbf{x}_k\|^2] = V_k$.

\textbf{Second term:} Note that $\mathbb{E}_{l_k}[\tilde{\mathbf{v}}^{l_k}(\mathbf{x}_k, \tau_k) \mid \mathbf{x}_k] = \tilde{\mathbf{v}}(\mathbf{x}_k, \tau_k)$ is the average velocity field under the conditional expectation. Using Cauchy-Schwarz and linear growth:
\begin{align}
\mathbb{E}[2\Delta \tau \langle \mathbf{x}_k, \tilde{\mathbf{v}}^{l_k}(\mathbf{x}_k, \tau_k) \rangle] 
&= 2\Delta \tau \mathbb{E}[\langle \mathbf{x}_k, \tilde{\mathbf{v}}(\mathbf{x}_k, \tau_k) \rangle] \nonumber \\
&\le 2\Delta \tau \mathbb{E}[\|\mathbf{x}_k\| \cdot \|\tilde{\mathbf{v}}(\mathbf{x}_k, \tau_k)\|] \nonumber \\
&\le 2\Delta \tau \mathbb{E}[\|\mathbf{x}_k\| \cdot M_\delta (1 + \|\mathbf{x}_k\|)] \nonumber \\
&= 2\Delta \tau M_\delta \mathbb{E}[\|\mathbf{x}_k\| + \|\mathbf{x}_k\|^2].
\end{align}

Using the inequality $\|\mathbf{x}_k\| \le \frac{1}{2}(1 + \|\mathbf{x}_k\|^2)$:
\begin{equation}
\mathbb{E}[2\Delta \tau \langle \mathbf{x}_k, \tilde{\mathbf{v}}^{l_k}(\mathbf{x}_k, \tau_k) \rangle] 
\le 2\Delta \tau M_\delta \left( \frac{1}{2}(1 + V_k) + V_k \right) 
= 2\Delta \tau M_\delta \left( \frac{1}{2} + \frac{3V_k}{2} \right).
\end{equation}

\textbf{Third term:} Using the squared linear growth condition:
\begin{align}
\mathbb{E}[(\Delta \tau)^2 \|\tilde{\mathbf{v}}^{l_k}(\mathbf{x}_k, \tau_k)\|^2] 
&\le (\Delta \tau)^2 M_\delta^2 \mathbb{E}[(1 + \|\mathbf{x}_k\|)^2] \nonumber \\
&= (\Delta \tau)^2 M_\delta^2 \mathbb{E}[1 + 2\|\mathbf{x}_k\| + \|\mathbf{x}_k\|^2] \nonumber \\
&\le (\Delta \tau)^2 M_\delta^2 (1 + 1 + V_k + V_k) \nonumber \\
&= (\Delta \tau)^2 M_\delta^2 (2 + 2V_k).
\end{align}

Combining the three terms, under the assumption $\Delta \tau \le 1$:
\begin{align}
V_{k+1} &\le V_k + 2\Delta \tau M_\delta \left( \frac{1}{2} + \frac{3V_k}{2} \right) + (\Delta \tau)^2 M_\delta^2 (2 + 2V_k) \nonumber \\
&\le V_k + \Delta \tau M_\delta (1 + 3V_k) + 2(\Delta \tau) M_\delta^2 (1 + V_k) \nonumber \\
&= V_k + \Delta \tau M_\delta + 2\Delta \tau M_\delta^2 + \Delta \tau (3M_\delta + 2M_\delta^2) V_k \nonumber \\
&\le (1 + \Delta \tau (3M_\delta + 2M_\delta^2)) V_k + \Delta \tau M_\delta (1 + 2M_\delta).
\end{align}

Let $C_1 = 3M_\delta + 2M_\delta^2$ and $C_2 = M_\delta (1 + 2M_\delta)$. We have the recursive inequality:
\begin{equation}
V_{k+1} \le (1 + \Delta \tau C_1) V_k + \Delta \tau C_2.
\end{equation}

Applying the discrete Gronwall inequality with $V_0 = \|\mathbf{x}_0\|^2$:
\begin{align}
V_k &\le (1 + \Delta \tau C_1)^k V_0 + \Delta \tau C_2 \sum_{j=0}^{k-1} (1 + \Delta \tau C_1)^j \nonumber \\
&\le e^{k\Delta \tau C_1} V_0 + \frac{C_2}{C_1}(e^{k\Delta \tau C_1} - 1) \nonumber \\
&\le e^{T C_1} \left( V_0 + \frac{C_2}{C_1} \right) \nonumber \\
&\le e^{T(3M_\delta + 2M_\delta^2)} \left( \|\mathbf{x}_0\|^2 + \frac{M_\delta(1 + 2M_\delta)}{3M_\delta + 2M_\delta^2} \right).
\end{align}

Note that $\frac{M_\delta(1 + 2M_\delta)}{3M_\delta + 2M_\delta^2} = \frac{1 + 2M_\delta}{3 + 2M_\delta} \le \frac{1 + 2M_\delta}{2M_\delta} = \frac{1}{2M_\delta} + 1 \le 2$ (when $M_\delta \ge 1/2$), thus:
\begin{equation}
V_k \le e^{T(3M_\delta + 2M_\delta^2)} (1 + \|\mathbf{x}_0\|^2) \cdot \max\{1, 2\}.
\end{equation}

Taking $K_{\text{mom}} = 2e^{(3M_\delta + 2M_\delta^2)T}$ as the uniform constant:
\begin{equation}
\mathbb{E}[\|\mathbf{x}_k\|^2] = V_k \le K_{\text{mom}}(1 + \|\mathbf{x}_0\|^2).
\end{equation}
\end{proof}

\begin{lemma}[Uniform Moment Bound for Neural Network Sampling Trajectories]
\label{lem:uniform_moment_bound_nn}
Suppose the forward process satisfies \cref{ass:p0_bounded},\cref{ass:alpha_regularity}, and neural network training satisfies the corresponding conditions. For any given $\delta_t \in (0, T)$, there exists a constant $K_{\text{mom}} = 2e^{(3M'_{\delta_t} + 2(M'_{\delta_t})^2)T} > 0$ depending only on system parameters, neural network errors, and time truncation $\delta_t$ (independent of step size) such that for any sufficiently small step size $\Delta t \le 1$ and any $k \in \{0, \dots, N\}$, the neural network-sampled numerical solution satisfies:
\begin{equation}
\label{eq:moment_bound_nn}
\mathbb{E}\left[\|\mathbf{x}_k\|^2\right] \le 2e^{(3M'_{\delta_t} + 2(M'_{\delta_t})^2)T}\left(1 + \|\mathbf{x}_0\|^2\right) = K_{\text{mom}} \left(1 + \|\mathbf{x}_0\|^2\right),
\end{equation}
where $M'_{\delta_t}$ is defined in \cref{cor:v_theta_linear_growth}, and:
\begin{equation}
K_{\text{mom}} = K_{\text{mom}}(M'_{\delta_t}(\delta_t,\alpha,R,\varepsilon_a,\varepsilon_y),T) = 2e^{(3M'_{\delta_t} + 2(M'_{\delta_t})^2)T}.
\end{equation}
\end{lemma}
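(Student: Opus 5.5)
The argument is a direct transcription of the proof of \cref{lem:uniform_moment_bound}, with the true component fields $\tilde{\mathbf{v}}^i$ replaced by the neural fields $\tilde{\mathbf{v}}_\theta^i$. The growth constant $M_\delta$ becomes $M'_{\delta_t}$. The only structural input is a uniform linear growth bound for every field that can be selected at a given step. \cref{cor:v_theta_linear_growth} (first version) gives $\|\tilde{\mathbf{v}}_\theta^i(\mathbf{x},\tau)\| \le M'_{\delta_t}(1+\|\mathbf{x}\|)$ for each component $i$ and all $\tau \in [0,T_\delta]$. It also gives the same bound for the mixture $\overline{\mathbf{v}}_\theta$, because the mixture is a convex combination of the components. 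The sign flip $\tilde{\mathbf{v}}_\theta = -\mathbf{v}_\theta(\cdot,T-\tau)$ leaves norms unchanged. If one prefers the second version of $M'_{\delta_t}$, which includes the classification term $L\varepsilon_a C_{\mathrm{phy}}$, it is larger and therefore also valid.

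\textbf{Step 1 (one-step expansion).} Write $\hat{\mathbf{x}}_{k+1} = \hat{\mathbf{x}}_k + \Delta t\, \tilde{\mathbf{v}}_\theta^{\hat{l}_k}(\hat{\mathbf{x}}_k,\tau_k)$ and expand $\|\hat{\mathbf{x}}_{k+1}\|^2$ into three terms: $\|\hat{\mathbf{x}}_k\|^2$, the cross term, and the quadratic term. Set $V_k := \mathbb{E}\|\hat{\mathbf{x}}_k\|^2$. Conditioning on $\hat{\mathbf{x}}_k$ and using the tower property, the cross term becomes $2\Delta t\,\mathbb{E}\langle \hat{\mathbf{x}}_k, \overline{\mathbf{v}}_\theta(\hat{\mathbf{x}}_k,\tau_k)\rangle$. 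Here the conditional law of $\hat{l}_k$ is $\hat a_\theta(\hat{\mathbf{x}}_k,\cdot)$, so the average is exactly the neural mixture field. Cauchy--Schwarz, linear growth, and $\|\mathbf{x}\| \le \tfrac12(1+\|\mathbf{x}\|^2)$ bound it by $\Delta t\, M'_{\delta_t}(1+3V_k)$. For the quadratic term, apply the componentwise growth bound pathwise, for every realization of $\hat{l}_k$. This gives at most $(\Delta t)^2 (M'_{\delta_t})^2(2+2V_k)$.

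\textbf{Step 2 (recursion and Gr\"onwall).} Using $\Delta t \le 1$ to absorb $(\Delta t)^2 \le \Delta t$, we obtain
\[
V_{k+1} \le \bigl(1+\Delta t\, C_1'\bigr)V_k + \Delta t\, C_2',
\]
with $C_1' = 3M'_{\delta_t} + 2(M'_{\delta_t})^2$ and $C_2' = M'_{\delta_t}(1+2M'_{\delta_t})$. The discrete Gr\"onwall inequality with $V_0 = \|\hat{\mathbf{x}}_0\|^2$ and $k\Delta t \le T$ then gives $V_k \le e^{C_1' T}\bigl(\|\hat{\mathbf{x}}_0\|^2 + C_2'/C_1'\bigr)$. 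Finally, $C_2'/C_1' = (1+2M')/(3+2M') \le 1$. This is a slightly cleaner bound than the paper's $\le 2$ and needs no condition on $M'$. It yields $V_k \le 2e^{C_1' T}(1+\|\hat{\mathbf{x}}_0\|^2) = K_{\mathrm{mom}}(1+\|\hat{\mathbf{x}}_0\|^2)$, as claimed.

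\textbf{Main obstacle.} The only delicate point is the first step: making sure the growth bound holds \emph{for each selectable component} and not merely for the average. The quadratic term is evaluated at the randomly selected $\hat{l}_k$, so the bound is needed pathwise. The cross term, by contrast, only needs the mixture. Both are supplied by \cref{cor:v_theta_linear_growth}. The remaining bookkeeping is to confirm that $M'_{\delta_t}$ is uniform over $\tau \in [0,T_\delta]$. This follows because $1/\sigma_t^2 \le 1/\sigma_{\delta_t}^2$ on $[\delta_t,T]$ by the monotonicity in \cref{ass:alpha_regularity}.
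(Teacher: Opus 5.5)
Your proposal is correct and follows the paper's route: the paper's proof transcribes the argument of \cref{lem:uniform_moment_bound} with $M_\delta$ replaced by $M'_{\delta_t}$ and closes with the same discrete Gr\"onwall recursion. Two of your steps tighten the paper's sketch. First, you invoke the componentwise growth bound for the quadratic term; the paper's one-line proof cites only the bound on the mixture $\overline{\mathbf{v}_\theta}$, which alone would not control $\mathbb{E}\|\tilde{\mathbf{v}}_\theta^{\hat l_k}\|^2$. Second, you observe $C_2'/C_1'\le 1$ with no condition on $M'_{\delta_t}$.
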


\begin{proof}
The proof follows the same structure as \cref{lem:uniform_moment_bound}, with the only difference being that we use the neural network velocity field $\overline{\mathbf{v}_\theta}$ instead of the true velocity field $\tilde{\mathbf{v}}$. By \cref{cor:v_theta_linear_growth}, on the time interval $t \in [\delta_t, T]$, the neural network-approximated velocity field satisfies:
\begin{equation}
\|\overline{\mathbf{v}_\theta}(\mathbf{x}, t)\| \le M'_{\delta_t} (1 + \|\mathbf{x}\|).
\end{equation}

Following the identical steps as in the proof of \cref{lem:uniform_moment_bound}, with $M_\delta$ replaced by $M'_{\delta_t}$ throughout, we arrive at the claimed bound.
\end{proof}

\subsection{Global Error Decomposition and Local Truncation Error Estimates}
\label{subsec:error_decomposition}

To analyze the convergence of the numerical solution $\hat{\mathbf{x}}_k$ with neural network approximation errors to the reference solution $\mathbf{X}(\tau_k)$, we define the global error at step $k$ as:
\begin{equation}
\label{eq:global_error_def}
\hat{\mathbf{e}}_k := \hat{\mathbf{x}}_k - \mathbf{X}(\tau_k).
\end{equation}

Clearly, $\hat{\mathbf{e}}_0 = \mathbf{0}$. We examine the single-step evolution dynamics of this error, where sources of error include not only the local truncation error from time discretization but also approximation errors in the velocity field and sampling probabilities induced by the neural network.

\subsubsection{Error Evolution Equation}

The reference process satisfies:
\begin{equation}
\mathbf{X}(\tau_{k+1}) = \mathbf{X}(\tau_k) + \int_{\tau_k}^{\tau_{k+1}} \tilde{\mathbf{v}}(\mathbf{X}(s), s) \, ds.
\end{equation}

The numerical process with neural network approximation follows:
\begin{equation}
\hat{\mathbf{x}}_{k+1} = \hat{\mathbf{x}}_k + \Delta \tau \tilde{\mathbf{v}}_\theta^{\hat{l}_k}(\hat{\mathbf{x}}_k, \tau_k),
\end{equation}
where $\hat{l}_k$ is sampled according to the neural network predicted distribution $\hat{a}_\theta(\cdot|\hat{\mathbf{x}}_k)$.

Subtracting these two equations, we decompose $\hat{\mathbf{e}}_{k+1}$ into four components:
\begin{align}
\label{eq:error_decomposition_main}
\hat{\mathbf{e}}_{k+1} &= \hat{\mathbf{x}}_k - \mathbf{X}(\tau_k) + \Delta \tau \tilde{\mathbf{v}}_\theta^{\hat{l}_k}(\hat{\mathbf{x}}_k, \tau_k) - \int_{\tau_k}^{\tau_{k+1}} \tilde{\mathbf{v}}(\mathbf{X}(s), s) \, ds \nonumber \\
&= \hat{\mathbf{e}}_k + \underbrace{\Delta \tau \left( \tilde{\mathbf{v}}(\hat{\mathbf{x}}_k, \tau_k) - \tilde{\mathbf{v}}(\mathbf{X}(\tau_k), \tau_k) \right)}_{\text{Deterministic spatial error propagation}} \nonumber \\
&\quad + \underbrace{\left( \Delta \tau \tilde{\mathbf{v}}(\mathbf{X}(\tau_k), \tau_k) - \int_{\tau_k}^{\tau_{k+1}} \tilde{\mathbf{v}}(\mathbf{X}(s), s) \, ds \right)}_{\text{Local temporal truncation error } \mathcal{T}_k} \nonumber \\
&\quad + \underbrace{\Delta \tau \left( \tilde{\mathbf{v}}_\theta^{\hat{l}_k}(\hat{\mathbf{x}}_k, \tau_k) - \mathbb{E}_{\hat{l}_k}[\tilde{\mathbf{v}}_\theta^{\hat{l}_k}(\hat{\mathbf{x}}_k, \tau_k)] \right)}_{\text{Stochastic martingale difference noise } \hat{\mathcal{M}}_k} \nonumber \\
&\quad + \underbrace{\Delta \tau \left( \mathbb{E}_{\hat{l}_k}[\tilde{\mathbf{v}}_\theta^{\hat{l}_k}(\hat{\mathbf{x}}_k, \tau_k)] - \tilde{\mathbf{v}}(\hat{\mathbf{x}}_k, \tau_k) \right)}_{\text{Neural network approximation error } \mathcal{E}_{NN, k}}.
\end{align}

This decomposition uses the identity $\mathbb{E}_{\hat{l}_k}[\tilde{\mathbf{v}}_\theta^{\hat{l}_k}] = \sum_i \hat{a}_\theta^i \tilde{\mathbf{v}}_\theta^i$ to isolate the noise term. The last term $\mathcal{E}_{NN, k}$ combines both the drift approximation error and sampling probability error discussed previously.

\subsubsection{Uniform Bound on Local Truncation Error}

The deterministic spatial error propagation will be controlled by the Lipschitz property of the velocity field, while the noise term will be handled through variance analysis in the next section. Here, we first establish a uniform upper bound for the local temporal truncation error $\mathcal{T}_k$.

\begin{lemma}[Local Truncation Error Bound]
\label{lem:local_truncation_bound}
On the compact time interval $\tau \in [0, T_\delta]$, the local truncation error satisfies a uniform second-order bound:
\begin{equation}
\label{eq:truncation_bound}
\|\mathcal{T}_k\| \le C_{\text{trunc}} (\Delta \tau)^2,
\end{equation}
where the coefficient $C_{\text{trunc}}$ is determined by system regularity parameters and the geometric trajectory bound. Using the improved geometric bound $\|\mathbf{X}(\tau)\| \le \|\mathbf{X}(0)\| + R$ from \cref{thm:global_bound_geometric}, we have:
\begin{equation}
\label{eq:C_trunc_def}
C_{\text{trunc}} := \frac{1}{2} \left[ \left( C_{1,\delta} + L_{x,\delta} M_\delta \right) (\|\mathbf{X}(0)\| + R) + \left( C_{0,\delta} + L_{x,\delta} M_\delta \right) \right].
\end{equation}

Here, $C_{1,\delta}, C_{0,\delta}$ are temporal derivative growth coefficients from \cref{cor:uniform_bound_delta}, $L_{x,\delta}$ is the spatial Lipschitz constant from \cref{cor:global_bound_Lv_uniform}, $M_\delta$ is the linear growth coefficient from \cref{prop:v_bound}, and $R$ is the data support radius from \cref{ass:p0_bounded}.
\end{lemma}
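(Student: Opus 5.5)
The plan is the standard Taylor-remainder argument for the Euler scheme, applied along the exact reference trajectory. Write $g(s) := \tilde{\mathbf{v}}(\mathbf{X}(s), s)$ for $s \in [\tau_k, \tau_{k+1}]$. Then
\[
\mathcal{T}_k = \int_{\tau_k}^{\tau_{k+1}} \bigl( g(\tau_k) - g(s) \bigr)\, ds .
\]
Since $\tilde{\mathbf{v}}$ is $C^1$ in $(\mathbf{x},\tau)$ on the truncated domain (the schedule is $C^2$ and $p_0$ has compact support, so Leibniz's rule applies as in \cref{lem:score_lipschitz_bounded}), $g$ is differentiable. The chain rule gives
\[
g'(s) = \partial_\tau \tilde{\mathbf{v}}(\mathbf{X}(s), s) + \nabla_{\mathbf{x}} \tilde{\mathbf{v}}(\mathbf{X}(s), s)\, \tilde{\mathbf{v}}(\mathbf{X}(s), s).
\]
By the mean value inequality, $\|g(\tau_k) - g(s)\| \le (s - \tau_k) \sup_{r \in [\tau_k, \tau_{k+1}]} \|g'(r)\|$. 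Integrating over $s$ then yields
\[
\|\mathcal{T}_k\| \le \tfrac{1}{2} (\Delta\tau)^2 \sup_r \|g'(r)\| .
\]
This is where the factor $\tfrac{1}{2}$ in $C_{\text{trunc}}$ comes from.

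The next step is to bound $\|g'(r)\|$ uniformly in $r \in [0, T_\delta]$, i.e. for forward times $t = T - r \in [\delta, T]$.

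\textbf{Temporal term.} Since $\partial_\tau \tilde{\mathbf{v}}(\mathbf{x},\tau) = \partial_t \mathbf{v}(\mathbf{x}, T-\tau)$, \cref{prop:time_deriv_bound} together with \cref{cor:uniform_bound_delta} gives $\|\partial_\tau \tilde{\mathbf{v}}\| \le C_{1,\delta}\|\mathbf{x}\| + C_{0,\delta}$.

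\textbf{Transport term.} The Jacobian has operator norm at most $L_{x,\delta}$ by \cref{thm:v_Lipschitz_improved} and \cref{cor:global_bound_Lv_uniform}. The velocity itself has norm at most $M_\delta(1 + \|\mathbf{x}\|)$ by \cref{prop:v_bound}. Hence this term is at most $L_{x,\delta} M_\delta (1 + \|\mathbf{x}\|)$.

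Adding the two bounds gives
\[
\|g'(r)\| \le \bigl(C_{1,\delta} + L_{x,\delta} M_\delta\bigr)\|\mathbf{X}(r)\| + \bigl(C_{0,\delta} + L_{x,\delta} M_\delta\bigr).
\]

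To close the argument, insert the trajectory bound $\|\mathbf{X}(r)\| \le \|\mathbf{X}(0)\| + R$ from \cref{thm:global_bound_geometric}, which holds uniformly on $[0, T-\delta]$. This produces exactly the constant $C_{\text{trunc}}$ in \cref{eq:C_trunc_def}.

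The main point requiring care is the justification that $g$ is differentiable and that the chain rule applies. This needs joint continuity of $\partial_t \mathbf{v}$ and $\nabla_{\mathbf{x}}\mathbf{v}$ on $\mathbb{R}^d \times [\delta, T]$. I would obtain it from the explicit posterior-expectation formulas: the covariance expression in \cref{eq:grad_v_exact} for the Jacobian, and the covariance representation in \cref{lem:mu_time_deriv_bound} for $\partial_t \hat{\mathbf{x}}_0$. Dominated convergence applies to both because the integrands are bounded on compact support.

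A second subtlety is that the constant is pathwise: it depends on $\|\mathbf{X}(0)\|$, so the "uniform" bound is uniform in $k$ and in $\Delta\tau$ but not in the initial point. This is consistent with how the constant is used later, when it is integrated over the Gaussian prior.
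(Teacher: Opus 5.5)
Your proposal is correct and follows the paper's own proof essentially step for step. Both write $\mathcal{T}_k=\int_{\tau_k}^{\tau_{k+1}}\bigl(\tilde{\mathbf{v}}(\mathbf{X}(\tau_k),\tau_k)-\tilde{\mathbf{v}}(\mathbf{X}(s),s)\bigr)\,ds$, bound the total derivative along the trajectory by the chain rule using \cref{cor:uniform_bound_delta}, $L_{x,\delta}$ and $M_\delta$, insert $\|\mathbf{X}(s)\|\le\|\mathbf{X}(0)\|+R$, and integrate $K_{\text{deriv}}(s-\tau_k)$ to obtain $\tfrac12 K_{\text{deriv}}(\Delta\tau)^2$.
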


\begin{proof}
By definition, $\mathcal{T}_k = \int_{\tau_k}^{\tau_{k+1}} \left( \tilde{\mathbf{v}}(\mathbf{X}(\tau_k), \tau_k) - \tilde{\mathbf{v}}(\mathbf{X}(s), s) \right) \, ds$. Using the integral mean value theorem and norm inequality:
\begin{equation}
\|\mathcal{T}_k\| \le \int_{\tau_k}^{\tau_{k+1}} \left\| \tilde{\mathbf{v}}(\mathbf{X}(\tau_k), \tau_k) - \tilde{\mathbf{v}}(\mathbf{X}(s), s) \right\| \, ds.
\end{equation}

For the total derivative variation along the reference trajectory $\mathbf{X}(s)$, we employ the multivariate chain rule.

Since $\mathbf{X}(s)$ satisfies the ODE $\frac{d\mathbf{X}(s)}{ds} = \tilde{\mathbf{v}}(\mathbf{X}(s), s)$, the total derivative of the composite function is:
\begin{align}
\frac{d}{ds} \tilde{\mathbf{v}}(\mathbf{X}(s), s) &= \frac{\partial \tilde{\mathbf{v}}}{\partial s}(\mathbf{X}(s), s) + \left[ \nabla_{\mathbf{x}} \tilde{\mathbf{v}}(\mathbf{X}(s), s) \right] \cdot \frac{d\mathbf{X}(s)}{ds} \nonumber \\
&= \frac{\partial \tilde{\mathbf{v}}}{\partial s}(\mathbf{X}(s), s) + \left[ \nabla_{\mathbf{x}} \tilde{\mathbf{v}}(\mathbf{X}(s), s) \right] \cdot \tilde{\mathbf{v}}(\mathbf{X}(s), s).
\end{align}

Here, $\nabla_{\mathbf{x}} \tilde{\mathbf{v}}$ denotes the Jacobian matrix of the velocity field. Using the triangle inequality and the submultiplicative property of matrix-vector products ($\|A\mathbf{y}\| \le \|A\|_{\text{op}} \|\mathbf{y}\|$), we analyze the upper bound componentwise:

\begin{enumerate}
\item \textbf{Temporal partial derivative:} By \cref{cor:uniform_bound_delta} (temporal derivative growth bound), there exist constants $C_{1,\delta}, C_{0,\delta}$ such that:
\begin{equation}
\left\| \frac{\partial \tilde{\mathbf{v}}}{\partial s}(\mathbf{X}(s), s) \right\| \le C_{1,\delta} \|\mathbf{X}(s)\| + C_{0,\delta}.
\end{equation}

\item \textbf{Jacobian matrix norm:} By \cref{thm:v_Lipschitz_improved} (spatial Lipschitz continuity), the velocity field is $L_{x,\delta}$-Lipschitz with respect to $\mathbf{x}$. This implies that its Jacobian operator norm is almost everywhere bounded by the Lipschitz constant:
\begin{equation}
\|\nabla_{\mathbf{x}} \tilde{\mathbf{v}}(\mathbf{X}(s), s)\|_{\text{op}} \le L_{x,\delta}.
\end{equation}

\item \textbf{Velocity field norm:} By \cref{prop:v_bound} (linear growth bound), there exists a constant $M_\delta$ such that:
\begin{equation}
\|\tilde{\mathbf{v}}(\mathbf{X}(s), s)\| \le M_\delta (1 + \|\mathbf{X}(s)\|).
\end{equation}
\end{enumerate}

Substituting these three estimates into the total derivative norm formula:
\begin{align}
\left\| \frac{d}{ds} \tilde{\mathbf{v}}(\mathbf{X}(s), s) \right\| 
&\le \left\| \frac{\partial \tilde{\mathbf{v}}}{\partial s} \right\| + \|\nabla_{\mathbf{x}} \tilde{\mathbf{v}}\|_{\text{op}} \cdot \|\tilde{\mathbf{v}}\| \nonumber \\
&\le \left( C_{1,\delta} \|\mathbf{X}(s)\| + C_{0,\delta} \right) + L_{x,\delta} \cdot M_\delta (1 + \|\mathbf{X}(s)\|).
\end{align}

By \cref{thm:global_bound_geometric} (geometric dissipativity of the reverse process), the reference trajectory $\mathbf{X}(s)$ on the compact time interval $[0, T_\delta]$ satisfies the improved bound:
\begin{equation}
\sup_{s \in [0, T_\delta]} \|\mathbf{X}(s)\| \le \|\mathbf{X}(0)\| + R.
\end{equation}

This geometric bound significantly improves upon the classical exponential growth estimate $(\|\mathbf{X}(0)\| + M_\delta T_\delta) e^{M_\delta T_\delta}$ by utilizing the radial dissipativity property from \cref{lem:radial_dissipativity}. Substituting this improved geometric bound into the total derivative inequality, we directly derive a derivative upper bound $K_{\text{deriv}}$ depending only on system parameters and initial state:
\begin{equation}
\label{eq:K_deriv_def}
K_{\text{deriv}} := \sup_{s \in [0, T_\delta]} \left\| \frac{d}{ds} \tilde{\mathbf{v}}(\mathbf{X}(s), s) \right\| 
\le \left( C_{1,\delta} + L_{x,\delta} M_\delta \right) (\|\mathbf{X}(0)\| + R) + \left( C_{0,\delta} + L_{x,\delta} M_\delta \right).
\end{equation}

Using the Lagrange remainder form of the Taylor expansion, the temporal variation of the integrand satisfies the Lipschitz constraint:
\begin{equation}
\|\tilde{\mathbf{v}}(\mathbf{X}(\tau_k), \tau_k) - \tilde{\mathbf{v}}(\mathbf{X}(s), s)\| \le K_{\text{deriv}} |s - \tau_k|.
\end{equation}

Substituting this into the integral definition of $\mathcal{T}_k$ and directly computing:
\begin{equation}
\|\mathcal{T}_k\| \le \int_{\tau_k}^{\tau_{k+1}} K_{\text{deriv}} (s - \tau_k) \, ds = \frac{1}{2} K_{\text{deriv}} (\Delta \tau)^2.
\end{equation}

Let $C_{\text{trunc}} := \frac{1}{2} K_{\text{deriv}}$, which completes the proof.
\end{proof}

\subsubsection{Stochastic Martingale Difference Noise Estimate (with Neural Network Error)}

To close the recursive relation for error evolution, we need to control the second moment of the stochastic noise term $\hat{\mathcal{M}}_k$. Using the uniform moment boundedness of discrete trajectories (\cref{lem:uniform_moment_bound_nn}), we can directly provide a uniform upper bound independent of the current error $\hat{\mathbf{e}}_k$.

\begin{lemma}[$L^2$ Estimate for Neural Network Martingale Difference Noise]
\label{lem:martingale_bound_nn}
Suppose each component $\tilde{\mathbf{v}}_\theta^i(\mathbf{x}, \tau)$ of the neural network velocity field satisfies the same linear growth condition as the average velocity field $\overline{\mathbf{v}_\theta}(\mathbf{x}, \tau)$ with coefficient $M'_{\delta_t}$ (see \cref{cor:v_theta_linear_growth}). Let $K_{\text{mom}} = 2e^{(3M'_{\delta_t} + 2(M'_{\delta_t})^2)T}$ be the uniform moment bound constant from \cref{lem:uniform_moment_bound_nn}.

Then for any $k \in \{0, \dots, N-1\}$, the second moment of the stochastic martingale difference term $\hat{\mathcal{M}}_k$ satisfies:
\begin{equation}
\label{eq:martingale_L2_bound_nn}
\mathbb{E}[\|\hat{\mathcal{M}}_k\|^2] \le (\Delta \tau)^2 \cdot C_{\mathcal{M}} \left( 1 + \|\mathbf{x}_0\|^2 \right),
\end{equation}
where the constant $C_{\mathcal{M}}$ depends only on the neural network's linear growth coefficient $M'_{\delta_t}$ and the uniform moment bound constant $K_{\text{mom}}$, specifically defined as:
\begin{equation}
\label{eq:C_M_def}
C_{\mathcal{M}} := 2 (M'_{\delta_t})^2 (1 + K_{\text{mom}}).
\end{equation}
\end{lemma}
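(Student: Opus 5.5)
The plan is to bound the martingale term conditionally on the current state and then average over it. Write $\hat{\mathcal{M}}_k = \Delta\tau\,(\mathbf{Y}_k - \mathbb{E}[\mathbf{Y}_k\mid\mathcal{F}_k])$, where $\mathbf{Y}_k := \tilde{\mathbf{v}}_\theta^{\hat{l}_k}(\hat{\mathbf{x}}_k,\tau_k)$ and $\mathcal{F}_k$ is the $\sigma$-algebra generated by $\hat{\mathbf{x}}_0,\hat{l}_0,\dots,\hat{l}_{k-1}$. Given $\mathcal{F}_k$, the state $\hat{\mathbf{x}}_k$ is fixed. The only remaining randomness is the categorical draw $\hat{l}_k$, and the conditional mean is exactly $\overline{\mathbf{v}}_\theta(\hat{\mathbf{x}}_k,\tau_k)=\sum_i \hat{a}^i_\theta\tilde{\mathbf{v}}^i_\theta$.

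First I use the variance inequality:
\begin{equation*}
\mathbb{E}\bigl[\|\mathbf{Y}_k-\mathbb{E}[\mathbf{Y}_k\mid\mathcal{F}_k]\|^2\mid\mathcal{F}_k\bigr] = \mathbb{E}[\|\mathbf{Y}_k\|^2\mid\mathcal{F}_k]-\|\mathbb{E}[\mathbf{Y}_k\mid\mathcal{F}_k]\|^2 \le \sum_{i=1}^L \hat{a}^i_\theta(\hat{\mathbf{x}}_k)\,\|\tilde{\mathbf{v}}^i_\theta(\hat{\mathbf{x}}_k,\tau_k)\|^2 .
\end{equation*}
This needs the $\hat{a}^i_\theta$ to form a probability vector, which is implicit in their use as sampling probabilities.

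Next I apply the componentwise linear growth bound from \cref{cor:v_theta_linear_growth} (the first version, which covers each $\mathbf{v}^i_\theta$). This gives $\|\tilde{\mathbf{v}}^i_\theta(\mathbf{x},\tau)\|^2\le (M'_{\delta_t})^2(1+\|\mathbf{x}\|)^2\le 2(M'_{\delta_t})^2(1+\|\mathbf{x}\|^2)$. The bound is uniform in $i$, so the convex combination obeys it too. Hence
\begin{equation*}
\mathbb{E}[\|\hat{\mathcal{M}}_k\|^2\mid\mathcal{F}_k]\le 2(\Delta\tau)^2 (M'_{\delta_t})^2\,(1+\|\hat{\mathbf{x}}_k\|^2).
\end{equation*}
Taking total expectation and invoking \cref{lem:uniform_moment_bound_nn}, which gives $\mathbb{E}\|\hat{\mathbf{x}}_k\|^2\le K_{\text{mom}}(1+\|\mathbf{x}_0\|^2)$, I obtain
\begin{equation*}
\mathbb{E}\|\hat{\mathcal{M}}_k\|^2 \le 2(\Delta\tau)^2(M'_{\delta_t})^2\bigl(1+K_{\text{mom}}(1+\|\mathbf{x}_0\|^2)\bigr) \le 2(\Delta\tau)^2(M'_{\delta_t})^2(1+K_{\text{mom}})(1+\|\mathbf{x}_0\|^2).
\end{equation*}
The last step uses $1\le 1+\|\mathbf{x}_0\|^2$. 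This is exactly $C_{\mathcal{M}}$.

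The main obstacle is justifying the moment bound for the practical process. \cref{lem:uniform_moment_bound_nn} is stated via the mean field $\overline{\mathbf{v}}_\theta$, but the actual update uses the sampled component. In its proof, the cross term only sees the conditional mean $\overline{\mathbf{v}}_\theta$. The quadratic term instead involves $\mathbb{E}[\|\tilde{\mathbf{v}}_\theta^{\hat{l}_k}\|^2\mid\mathcal{F}_k]$, and it must be controlled by the componentwise growth bound, which is precisely the hypothesis assumed here. I would remark on this explicitly so that the moment lemma applies with the same $M'_{\delta_t}$, and also note $\Delta\tau\le 1$.
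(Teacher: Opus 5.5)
Your proposal is correct and follows the paper's proof essentially step for step. You bound the conditional variance by the conditional second moment, apply the componentwise linear growth bound together with $(1+a)^2\le 2(1+a^2)$, and then take total expectation using \cref{lem:uniform_moment_bound_nn} and $1\le 1+\|\mathbf{x}_0\|^2$ to reach $C_{\mathcal{M}}$; the paper bounds by $\max_i\|\tilde{\mathbf{v}}_\theta^i\|^2$ rather than your convex combination, which changes nothing. Your side remark is also well taken: the quadratic term in the moment bound genuinely needs the componentwise growth condition rather than the bound on $\overline{\mathbf{v}_\theta}$ alone, and the paper passes over this point by saying it follows ``the identical steps.''
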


\begin{proof}
By definition, $\hat{\mathcal{M}}_k = \Delta \tau (\tilde{\mathbf{v}}_\theta^{\hat{l}_k}(\hat{\mathbf{x}}_k, \tau_k) - \overline{\mathbf{v}_\theta}(\hat{\mathbf{x}}_k, \tau_k))$, where $\overline{\mathbf{v}_\theta}(\hat{\mathbf{x}}_k, \tau_k) = \mathbb{E}_{\hat{l}_k}[\tilde{\mathbf{v}}_\theta^{\hat{l}_k}(\hat{\mathbf{x}}_k, \tau_k) \mid \hat{\mathbf{x}}_k]$. We first use the conditional variance property for bounds.

For any random vector $Z$, we have $\mathbb{E}[\|Z - \mathbb{E}Z\|^2] \le \mathbb{E}[\|Z\|^2]$. Therefore, the conditional second moment given $\hat{\mathbf{x}}_k$ satisfies:
\begin{align}
\mathbb{E}[\|\hat{\mathcal{M}}_k\|^2 \mid \hat{\mathbf{x}}_k] &= (\Delta \tau)^2 \mathbb{E}_{\hat{l}_k}\left[ \| \tilde{\mathbf{v}}_\theta^{\hat{l}_k}(\hat{\mathbf{x}}_k, \tau_k) - \mathbb{E}[\tilde{\mathbf{v}}_\theta^{\hat{l}_k}] \|^2 \mid \hat{\mathbf{x}}_k \right] \nonumber \\
&\le (\Delta \tau)^2 \mathbb{E}_{\hat{l}_k}\left[ \| \tilde{\mathbf{v}}_\theta^{\hat{l}_k}(\hat{\mathbf{x}}_k, \tau_k) \|^2 \mid \hat{\mathbf{x}}_k \right] \nonumber \\
&\le (\Delta \tau)^2 \max_{i \in \{1, \dots, L\}} \| \tilde{\mathbf{v}}_\theta^i(\hat{\mathbf{x}}_k, \tau_k) \|^2.
\end{align}

Using the uniform linear growth condition for neural network velocity components $\|\tilde{\mathbf{v}}_\theta^i(\mathbf{x}, \tau)\| \le M'_{\delta_t} (1 + \|\mathbf{x}\|)$ (guaranteed by \cref{cor:v_theta_linear_growth}), and the basic inequality $(a+b)^2 \le 2a^2 + 2b^2$:
\begin{equation}
\label{eq:martingale_cond_bound_x_nn}
\mathbb{E}[\|\hat{\mathcal{M}}_k\|^2 \mid \hat{\mathbf{x}}_k] \le 2 (\Delta \tau)^2 (M'_{\delta_t})^2 (1 + \|\hat{\mathbf{x}}_k\|^2).
\end{equation}

Taking the total expectation on both sides and applying \cref{lem:uniform_moment_bound_nn} for the uniform moment estimate of discrete numerical solutions:
\begin{equation}
\mathbb{E}[\|\hat{\mathbf{x}}_k\|^2] \le K_{\text{mom}}(1 + \|\mathbf{x}_0\|^2),
\end{equation}
we obtain:
\begin{align}
\mathbb{E}[\|\hat{\mathcal{M}}_k\|^2] &\le 2 (\Delta \tau)^2 (M'_{\delta_t})^2 (1 + \mathbb{E}[\|\hat{\mathbf{x}}_k\|^2]) \nonumber \\
&\le 2 (\Delta \tau)^2 (M'_{\delta_t})^2 [1 + K_{\text{mom}}(1 + \|\mathbf{x}_0\|^2)] \nonumber \\
&\le 2 (\Delta \tau)^2 (M'_{\delta_t})^2 (1 + K_{\text{mom}})(1 + \|\mathbf{x}_0\|^2).
\end{align}

Defining $C_{\mathcal{M}} := 2 (M'_{\delta_t})^2 (1 + K_{\text{mom}})$ completes the proof.
\end{proof}

\subsubsection{Single-Step Neural Network Approximation Error Bound}

\begin{theorem}[Single-Step Error Bound for Neural Network Average Velocity Field]
\label{thm:nn_velocity_error_bound}
Consider the discrete time step $k$ corresponding to state $(\hat{\mathbf{x}}_k, \tau_k)$, where $\tau_k \in [\delta_t, T]$. Define the single-step approximation error between the neural network average velocity field and the true velocity field as:
\begin{equation}
\label{eq:def_E_NN}
\mathcal{E}_{NN, k} \coloneqq \Delta \tau \left( \mathbb{E}_{\hat{l}_k}[\tilde{\mathbf{v}}_\theta^{\hat{l}_k}(\hat{\mathbf{x}}_k, \tau_k)] - \tilde{\mathbf{v}}(\hat{\mathbf{x}}_k, \tau_k) \right),
\end{equation}
where $\mathbb{E}_{\hat{l}_k}$ denotes expectation over the neural network predicted category distribution.

Under \cref{ass:p0_bounded} (bounded support), \cref{ass:alpha_regularity} (coefficient regularity), and \cref{ass:nn_approx} (unified error bound $\varepsilon$), there exists a constant $C_{NN} > 0$ depending only on system parameters (number of categories $L$, support radius $R$, noise variance $\sigma_{\delta_t}$, regularity constant $C_{\alpha,1}$) such that:
\begin{equation}
\label{eq:nn_error_bound}
\|\mathcal{E}_{NN, k}\| \le C_{NN}\varepsilon \Delta \tau (1 + \|\hat{\mathbf{x}}_k\|).
\end{equation}

The explicit form of $C_{NN}$ is:
\begin{equation}
\label{eq:C_NN_def}
C_{NN} = \frac{C_{\alpha,1}}{\sigma_{\delta_t}^2} \left[ L \left( 1 + R + \frac{R^2}{\sigma_{\delta_t}^2} \right) + 1 \right].
\end{equation}
\end{theorem}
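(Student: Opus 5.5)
The plan is to reduce the statement to \cref{cor:global_error_bound}, which already controls the forward-time mixture error on $[\delta_t,T]$. First I rewrite $\mathbb{E}_{\hat{l}_k}[\tilde{\mathbf{v}}_\theta^{\hat{l}_k}(\hat{\mathbf{x}}_k,\tau_k)]$ as $\sum_{i=1}^L \hat{a}_\theta^i(\hat{\mathbf{x}}_k,T-\tau_k)\,\tilde{\mathbf{v}}_\theta^i(\hat{\mathbf{x}}_k,\tau_k)$, using the sampling law \cref{eq:practical_sampling_prob}. The reverse-time fields are related to the forward ones by $\tilde{\mathbf{v}}^{(\cdot)}(\mathbf{x},\tau)=-\mathbf{v}^{(\cdot)}(\mathbf{x},T-\tau)$. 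Hence
\begin{equation*}
\|\mathcal{E}_{NN,k}\| = \Delta\tau\,\bigl\|\overline{\mathbf{v}_{\theta,\phi}}(\hat{\mathbf{x}}_k,t_k)-\mathbf{v}(\hat{\mathbf{x}}_k,t_k)\bigr\|, \qquad t_k=T-\tau_k\in[\delta_t,T],
\end{equation*}
and the sign flip is harmless under the norm. Here I read the statement's "$\tau_k\in[\delta_t,T]$" as meaning that the forward time lies in the truncated domain, which is what \cref{ass:nn_approx} requires.

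Next I invoke \cref{cor:global_error_bound} directly. It gives $\|\overline{\mathbf{v}_{\theta,\phi}}-\mathbf{v}\|\le A_{\max}\varepsilon_a\|\mathbf{x}\|+A_{\max}\varepsilon_a+\frac{C_{\alpha,1}}{\sigma_{\delta_t}^2}\varepsilon_y = A_{\max}\varepsilon_a(1+\|\mathbf{x}\|)+\frac{C_{\alpha,1}}{\sigma_{\delta_t}^2}\varepsilon_y$. Using $\varepsilon_a,\varepsilon_y\le\varepsilon$ from \cref{ass:nn_approx} and $1\le 1+\|\mathbf{x}\|$, this becomes
\begin{equation*}
\|\overline{\mathbf{v}_{\theta,\phi}}-\mathbf{v}\|\le \Bigl(A_{\max}(\delta_t)+\frac{C_{\alpha,1}}{\sigma_{\delta_t}^2}\Bigr)\varepsilon\,(1+\|\mathbf{x}\|).
\end{equation*}
Substituting $A_{\max}(\delta_t)=\frac{L C_{\alpha,1}}{\sigma_{\delta_t}^2}\bigl(1+R+\frac{R^2}{\sigma_{\delta_t}^2}\bigr)$ and factoring out $\frac{C_{\alpha,1}}{\sigma_{\delta_t}^2}$ yields exactly $C_{NN}$ as in \cref{eq:C_NN_def}. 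Multiplying by $\Delta\tau$ and evaluating at $\mathbf{x}=\hat{\mathbf{x}}_k$ then gives \cref{eq:nn_error_bound}.

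The only real content sits in \cref{lemma:v_theta_error_explicit}, which I take as given. Its main inputs are the add-and-subtract split into a classification part and a regression part. The classification part is bounded by $L\varepsilon_a$ times the component linear growth from \cref{prop:v_bound}. The regression part uses $\mathbf{v}^i-\mathbf{v}^i_\theta=-\frac{\dot\alpha_t}{\sigma_t^2}(\mathbf{y}^i-\mathbf{y}^i_\theta)$ together with $\sum_i a^i_\phi=1$.

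The step I expect to be delicate is the normalization $\sum_i a^i_\phi=1$. The regression term is bounded by a single $\frac{C_{\alpha,1}}{\sigma_{\delta_t}^2}\varepsilon_y$, with no extra factor of $L$, only because the network weights form a probability vector. This is implicit in using them as categorical sampling probabilities in the algorithm, so I would state it explicitly. If it were not assumed, I would bound $\sum_i|a^i_\phi|\le 1+L\varepsilon$ instead. That would only change $C_{NN}$ by a factor of at most $(1+L)$, which is harmless for the downstream Grönwall argument.
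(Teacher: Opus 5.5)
Your proposal is correct and follows the paper's proof essentially step for step. You identify the expected neural velocity as the mixture field $\overline{\mathbf{v}_{\theta,\phi}}$, invoke \cref{cor:global_error_bound} with $\varepsilon_a,\varepsilon_y\le\varepsilon$, absorb the constant regression term via $1\le 1+\|\hat{\mathbf{x}}_k\|$, and read off $C_{NN}=A_{\max}(\delta_t)+C_{\alpha,1}/\sigma_{\delta_t}^2$. Your explicit treatment of the forward/reverse time change $t_k=T-\tau_k$ and of the normalization $\sum_i a^i_\phi=1$ used implicitly in \cref{lemma:v_theta_error_explicit} usefully clarifies two points the paper glosses over.
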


\begin{proof}
The proof proceeds in three steps: first, we identify the analytical form of the expectation term; second, we apply the result from \cref{cor:global_error_bound}; finally, we organize the constants to obtain the target form.

\textbf{Step 1: Identify the average velocity field}

By definition, $\mathbb{E}_{\hat{l}_k}[\tilde{\mathbf{v}}_\theta^{\hat{l}_k}(\hat{\mathbf{x}}_k, \tau_k)]$ represents a weighted sum of regression head outputs $\mathbf{v}_\theta^i$ according to the classification network output distribution $a_\phi(\hat{\mathbf{x}}_k, \tau_k)$ for given input $\hat{\mathbf{x}}_k$ and time $\tau_k$. Mathematically, this is equivalent to the mixed velocity field $\overline{\mathbf{v}_{\theta,\phi}}$ defined in \cref{lemma:v_theta_error_explicit}:
\begin{equation}
\mathbb{E}_{\hat{l}_k}[\tilde{\mathbf{v}}_\theta^{\hat{l}_k}(\hat{\mathbf{x}}_k, \tau_k)] = \sum_{i=1}^L a_\phi^i(\hat{\mathbf{x}}_k, \tau_k) \mathbf{v}_\theta^i(\hat{\mathbf{x}}_k, \tau_k) \eqqcolon \overline{\mathbf{v}_{\theta,\phi}}(\hat{\mathbf{x}}_k, \tau_k).
\end{equation}

Similarly, $\tilde{\mathbf{v}}(\hat{\mathbf{x}}_k, \tau_k)$ is the true mixed velocity field $\mathbf{v}(\hat{\mathbf{x}}_k, \tau_k)$. Therefore:
\begin{equation}
\label{eq:error_norm_def}
\|\mathcal{E}_{NN, k}\| = \Delta \tau \| \overline{\mathbf{v}_{\theta,\phi}}(\hat{\mathbf{x}}_k, \tau_k) - \mathbf{v}(\hat{\mathbf{x}}_k, \tau_k) \|.
\end{equation}

\textbf{Step 2: Apply uniform linear growth bound}

Since $\tau_k \in [\delta_t, T]$, the time condition of \cref{cor:global_error_bound} is satisfied. By \cref{ass:nn_approx}, we unify $\varepsilon_a$ and $\varepsilon_y$ from the original corollary to $\varepsilon$.

Invoking the conclusion of \cref{cor:global_error_bound}:
\begin{equation}
\label{eq:apply_cor}
\| \overline{\mathbf{v}_{\theta,\phi}}(\hat{\mathbf{x}}_k, \tau_k) - \mathbf{v}(\hat{\mathbf{x}}_k, \tau_k) \| \leq A_{\max}(\delta_t) \varepsilon \|\hat{\mathbf{x}}_k\| + B_{\max}(\delta_t).
\end{equation}

Recall the definitions of $A_{\max}$ and $B_{\max}$ from the corollary (with $\varepsilon_a=\varepsilon_y=\varepsilon$ substituted):
\begin{align}
A_{\max}(\delta_t) &= \frac{L C_{\alpha,1}}{\sigma_{\delta_t}^2} \left( 1 + R + \frac{R^2}{\sigma_{\delta_t}^2} \right), \\
B_{\max}(\delta_t) &\le \varepsilon \left( A_{\max}(\delta_t) + \frac{C_{\alpha,1}}{\sigma_{\delta_t}^2} \right) = \varepsilon\frac{C_{\alpha,1}}{\sigma_{\delta_t}^2}\left[L\left( 1 + R + \frac{R^2}{\sigma_{\delta_t}^2} \right)+1\right].
\end{align}

\textbf{Step 3: Construct unified constant $C_{NN}$}

Substituting the expressions for $A_{\max}$ and $B_{\max}$ into \cref{eq:apply_cor} and factoring out $\varepsilon$:
\begin{align}
\| \overline{\mathbf{v}_{\theta,\phi}} - \mathbf{v} \| &\leq \varepsilon A_{\max}(\delta_t) \|\hat{\mathbf{x}}_k\| + \varepsilon \left( A_{\max}(\delta_t) + \frac{C_{\alpha,1}}{\sigma_{\delta_t}^2} \right) \nonumber \\
&= \varepsilon \left[ A_{\max}(\delta_t) (1 + \|\hat{\mathbf{x}}_k\|) + \frac{C_{\alpha,1}}{\sigma_{\delta_t}^2} \right].
\end{align}

To obtain the form $C(1 + \|\mathbf{x}\|)$, we use the bound:
\begin{equation}
\frac{C_{\alpha,1}}{\sigma_{\delta_t}^2} \leq \frac{C_{\alpha,1}}{\sigma_{\delta_t}^2} (1 + \|\hat{\mathbf{x}}_k\|).
\end{equation}

This gives:
\begin{equation}
\| \overline{\mathbf{v}_{\theta,\phi}} - \mathbf{v} \| \leq \varepsilon \left[ A_{\max}(\delta_t) + \frac{C_{\alpha,1}}{\sigma_{\delta_t}^2} \right] (1 + \|\hat{\mathbf{x}}_k\|).
\end{equation}

Define $C_{NN}$ as the bracketed term. Substituting the explicit definition of $A_{\max}(\delta_t)$:
\begin{align}
C_{NN} &\coloneqq A_{\max}(\delta_t) + \frac{C_{\alpha,1}}{\sigma_{\delta_t}^2} \nonumber \\
&= \frac{L C_{\alpha,1}}{\sigma_{\delta_t}^2} \left( 1 + R + \frac{R^2}{\sigma_{\delta_t}^2} \right) + \frac{C_{\alpha,1}}{\sigma_{\delta_t}^2} \nonumber \\
&= \frac{C_{\alpha,1}}{\sigma_{\delta_t}^2} \left[ L \left( 1 + R + \frac{R^2}{\sigma_{\delta_t}^2} \right) + 1 \right].
\end{align}

\textbf{Step 4: Conclusion}

Substituting the inequality from Step 3 into \cref{eq:error_norm_def}:
\begin{equation}
\|\mathcal{E}_{NN, k}\| = \Delta \tau \| \overline{\mathbf{v}_{\theta,\phi}} - \mathbf{v} \| \le \Delta \tau \cdot C_{NN} \varepsilon (1 + \|\hat{\mathbf{x}}_k\|).
\end{equation}

This completes the proof.
\end{proof}

\subsection{Global Mean Square Error Analysis}
\label{subsec:global_mse_analysis}

Having established the single-step error bounds for truncation, martingale noise, and neural network approximation, we now combine these results to derive a global convergence theorem for the mean square error. This theorem provides explicit rates for both approximation error (depending on $\varepsilon$) and discretization error (depending on $N$), conditioned on the initial state.

\begin{theorem}[Global Mean Square Error Bound for Discrete Sampling]
\label{thm:strong_convergence_precise_v2}
Under \cref{ass:p0_bounded},\cref{ass:alpha_regularity},\cref{ass:nn_approx} and\cref{assump:mixture_data} for any number of discretization steps $N \ge 1$, let the time step size $\Delta \tau = T_\delta / N \le \min\{\Delta \tau_0, 1\}$, where $\Delta \tau_0$ is the threshold satisfying the small step size assumption (\cref{ass:timestep_bound}), and $T_\delta = T - \delta$ is the total duration of the reverse simulation.

The numerical solution $\hat{\mathbf{x}}_k$ converges to the reference process $\mathbf{X}(\tau_k)$ in the $L^2$ sense, subject to the neural network approximation accuracy. For any $k \in \{0, \dots, N\}$, the global mean square error $\hat{E}_k := \mathbb{E}[\|\hat{\mathbf{x}}_k - \mathbf{X}(\tau_k)\|^2]$ satisfies the following upper bound:
\begin{equation}
\label{eq:precise_error_bound_v2}
\sup_{0 \le k \le N} \hat{E}_k \le \underbrace{\mathcal{S}(T_\delta)}_{\text{Stability factor}} \left( \underbrace{\mathcal{C}_{approx} (\hat{\mathbf{x}}_0)\varepsilon^2}_{\text{Approximation error}} + \underbrace{\mathcal{C}_{disc}(\hat{\mathbf{x}}_0) \Delta \tau}_{\text{Discretization error}} \right),
\end{equation}
where:
\begin{itemize}
    \item \textbf{Stability factor} $\mathcal{S}(T_\delta) := \frac{e^{\tilde{C}_1 T_\delta} - 1}{\tilde{C}_1}$ characterizes the cumulative effect of error over time, where the growth exponent is given by:
    \begin{gather}
        \tilde{C}_1 = C_1(L_{x,\delta}, \Delta \tau_0) + 4C_{NN}^2(1 + \Delta \tau_0)^2 \varepsilon^2, \label{eq:C_tilde_1_def} \\
        C_1 = (2 + \Delta \tau_0) + (2L_{x,\delta} + \Delta \tau_0 L_{x,\delta}^2) + (2 + \Delta \tau_0)(2L_{x,\delta} + \Delta \tau_0 L_{x,\delta}^2)\Delta \tau_0, \label{eq:C_1_def}
    \end{gather}
    with the uniform spatial Lipschitz constant $L_{x,\delta} = \sup_{t \in [\delta, T]} L_x(t)$ (\cref{cor:global_bound_Lv_uniform}) expressed as:
    \begin{gather}
        L_{x,\delta}(C_{\alpha,1}, \alpha_{\delta}, R) = C_{\alpha,1} \frac{\alpha_{\delta}}{\sigma_{\delta}^2} \left( 1 + \frac{R^2}{\sigma_{\delta}^2} \right), \quad \sigma_{\delta}^2 = 1 - \alpha_{\delta}^2, \label{eq:L_x_delta}
    \end{gather}
    and the neural network approximation constant $C_{NN}$ (\cref{thm:nn_velocity_error_bound}) given by:
    \begin{gather}
        C_{NN} = \frac{C_{\alpha,1}}{\sigma_{\delta_t}^2} \left[ L \left( 1 + R + \frac{R^2}{\sigma_{\delta_t}^2} \right) + 1 \right]. \label{eq:C_NN_def}
    \end{gather}
    Here $C_{\alpha,1}$ is the first derivative bound of $\alpha_t$ (\cref{ass:alpha_regularity}).
    
    \item \textbf{Approximation error coefficient}
    \begin{equation}
        \mathcal{C}_{approx} (\hat{\mathbf{x}}_0)= 2C_{NN}^2(1 + \Delta \tau_0)^2 (1 + 2R_0^2),
    \end{equation}
    where $R_0 := \|\mathbf{X}(0)\| + R$ depends on the initial state norm and the data manifold radius $R$ (\cref{ass:p0_bounded}), notice that $\mathbf{X}(0)=\hat{\mathbf{x}}_0$. This coefficient indicates that the $L^\infty$ training error $\varepsilon$ of the neural network is transformed into the trajectory's $L^2$ error in squared form;
    
    \item \textbf{Discretization error coefficient} $\mathcal{C}_{disc}(\hat{\mathbf{x}}_0)$ explicitly depends on the initial state:
    \begin{equation}
        \mathcal{C}_{disc}(\hat{\mathbf{x}}_0) = C_{\mathcal{M}}(1 + \|\hat{\mathbf{x}}_0\|^2) + C_{trunc}^2(\hat{\mathbf{x}}_0)(1 + \Delta \tau_0),
    \end{equation}
    where:
    \begin{itemize}
        \item $C_{\mathcal{M}} = 2(M'_{\delta_t})^2(1 + K_{\text{mom}})$ is from the variance bound of martingale difference noise, where $M'_{\delta_t}$ is the neural network velocity field's linear growth coefficient (\cref{cor:v_theta_linear_growth}), and $K_{\text{mom}} = 2e^{(3M'_{\delta_t} + 2(M'_{\delta_t})^2)T}$ is the uniform moment bound constant for discrete trajectories (\cref{lem:uniform_moment_bound_nn});
        \item $C_{trunc}(\hat{\mathbf{x}}_0)$ is the local truncation error coefficient (\cref{lem:local_truncation_bound}), which is an affine function of the initial state norm $\|\hat{\mathbf{x}}_0\|$:
        \begin{equation}
            C_{trunc}(\hat{\mathbf{x}}_0) = \frac{1}{2} \left[ \left( C_{1,\delta} + L_{x,\delta} M_\delta \right) (\|\hat{\mathbf{x}}_0\| + R) + \left( C_{0,\delta} + L_{x,\delta} M_\delta \right) \right],
        \end{equation}
        where $C_{1,\delta}, C_{0,\delta}$ are the temporal derivative growth coefficients (\cref{cor:uniform_bound_delta}), with explicit expressions:
        \begin{align}
            C_{1,\delta}(C_{\alpha,1}, C_{\alpha,2}, \alpha_{\delta}, R) 
            &= \frac{C_{\alpha,2} \alpha_{\delta}}{\sigma_{\delta}^2} + \frac{2\alpha_{\delta}^2 C_{\alpha,1}^2}{\sigma_{\delta}^4} + \frac{C_{\alpha,1}^2}{\sigma_{\delta}^2} \left( \frac{(1+\alpha_{\delta}^2) R^2}{\sigma_{\delta}^4} + 1 \right), \\
            C_{0,\delta}(C_{\alpha,1}, C_{\alpha,2}, \alpha_{\delta}, R) 
            &= \frac{C_{\alpha,2} R}{\sigma_{\delta}^2} + \frac{2\alpha_{\delta} C_{\alpha,1}^2 R}{\sigma_{\delta}^4} + \frac{2\alpha_{\delta} C_{\alpha,1}^2 R^3}{\sigma_{\delta}^6},
        \end{align}
        and $M_\delta$ is the linear growth coefficient of the true velocity field (\cref{prop:v_bound}):
        \begin{equation}
            M_\delta(C_{\alpha,1}, \sigma_{\delta}, R) = C_{\alpha,1}\frac{1}{\sigma_{\delta}^2} \left( 1 + R + \frac{R^2}{\sigma_{\delta}^2} \right).
        \end{equation}
        In summary, all intermediate constants can be reduced to the basic constants $C_{\alpha,1}, C_{\alpha,2}$ (\cref{ass:alpha_regularity}), $R$ (\cref{ass:p0_bounded}), and the truncation parameter $\alpha_{\delta}$.
    \end{itemize}
\end{itemize}
\end{theorem}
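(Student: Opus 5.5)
The plan is to run the error recursion of \cref{subsec:mse_analysis_main} with the constants tracked explicitly. We condition throughout on $\hat{\mathbf{x}}_0=\mathbf{X}(0)$, so $\mathbf{X}(\tau)$ is deterministic. Write $\mathcal{F}_k=\sigma(\hat{l}_0,\dots,\hat{l}_{k-1})$, so that $\hat{\mathbf{x}}_k$ and $\hat{\mathbf{e}}_k$ are $\mathcal{F}_k$-measurable.

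\textbf{Step 1 (orthogonality).} Start from the four-term decomposition \cref{eq:error_decomposition_main}. Since $\mathbb{E}[\hat{\mathcal{M}}_k\mid\mathcal{F}_k]=\mathbf{0}$, and $\hat{\mathbf{e}}_k$, $\Delta\tau\Delta\tilde{\mathbf{v}}_k$, $\mathcal{T}_k$, $\mathcal{E}_{NN,k}$ are all $\mathcal{F}_k$-measurable, the cross terms vanish. This gives \cref{eq:mse_decomposition_step} exactly.

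\textbf{Step 2 (two Young splits with weight $\Delta\tau$).} Apply \cref{eq:young1} and \cref{eq:young2}, and bound each piece as follows.
\begin{itemize}
\item By \cref{cor:global_bound_Lv_uniform}, $\mathbb{E}\|\hat{\mathbf{e}}_k+\Delta\tau\Delta\tilde{\mathbf{v}}_k\|^2\le(1+2L_{x,\delta}\Delta\tau+L_{x,\delta}^2\Delta\tau^2)\hat{E}_k$.
\item By \cref{lem:local_truncation_bound}, $(1+1/\Delta\tau)\|\mathcal{T}_k\|^2\le C_{trunc}^2(\hat{\mathbf{x}}_0)(1+\Delta\tau_0)\Delta\tau^3$, using $\Delta\tau\le\Delta\tau_0$.
\item By \cref{lem:martingale_bound_nn}, $\mathbb{E}\|\hat{\mathcal{M}}_k\|^2\le C_{\mathcal{M}}(1+\|\hat{\mathbf{x}}_0\|^2)\Delta\tau^2$.
\item By \cref{thm:nn_velocity_error_bound}, $\|\mathcal{E}_{NN,k}\|^2\le 2C_{NN}^2\varepsilon^2\Delta\tau^2(1+\|\hat{\mathbf{x}}_k\|^2)$.
\end{itemize}
For the last item, write $\|\hat{\mathbf{x}}_k\|\le\|\hat{\mathbf{e}}_k\|+\|\mathbf{X}(\tau_k)\|$ and use \cref{thm:global_bound_geometric}, which gives $\|\mathbf{X}(\tau_k)\|\le R_0$. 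Then $1+\|\hat{\mathbf{x}}_k\|^2\le 1+2R_0^2+2\|\hat{\mathbf{e}}_k\|^2$. Multiplying by $(1+1/\Delta\tau)(1+\Delta\tau)\le(1+\Delta\tau_0)^2/\Delta\tau$ yields two contributions. The term $2C_{NN}^2(1+\Delta\tau_0)^2(1+2R_0^2)\varepsilon^2\Delta\tau$ is exactly $\mathcal{C}_{approx}\varepsilon^2\Delta\tau$. The term $4C_{NN}^2(1+\Delta\tau_0)^2\varepsilon^2\Delta\tau\,\hat{E}_k$ is absorbed into the growth rate.

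\textbf{Step 3 (collect the recursion).} Expand $(1+\Delta\tau)^2(1+2L_{x,\delta}\Delta\tau+L_{x,\delta}^2\Delta\tau^2)$ and bound every power $\Delta\tau^j$ with $j\ge2$ by $\Delta\tau\cdot\Delta\tau_0^{j-1}$. This gives $1+C_1\Delta\tau$ with $C_1$ as in \cref{eq:C_1_def}. Together with Step 2 we obtain
$$\hat{E}_{k+1}\le(1+\tilde{C}_1\Delta\tau)\hat{E}_k+\Delta\tau\big(\mathcal{C}_{approx}\varepsilon^2+\mathcal{C}_{disc}\Delta\tau\big).$$
Here we have used $\Delta\tau\le1$ to merge the $\Delta\tau^3$ truncation term and the $\Delta\tau^2$ martingale term into $\mathcal{C}_{disc}\Delta\tau^2$. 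Before doing so, the $(1+\Delta\tau)$ prefactor of \cref{eq:young1} must be checked against the martingale term; it enters outside that factor, so nothing is lost.

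\textbf{Step 4 (discrete Gr\"onwall).} Since $\hat{E}_0=0$,
$$\hat{E}_k\le\Delta\tau\,\Lambda\sum_{j<k}(1+\tilde{C}_1\Delta\tau)^j\le\Lambda\,\frac{e^{\tilde{C}_1 k\Delta\tau}-1}{\tilde{C}_1}\le\mathcal{S}(T_\delta)\,\Lambda,$$
where $\Lambda=\mathcal{C}_{approx}\varepsilon^2+\mathcal{C}_{disc}\Delta\tau$, and we use $k\Delta\tau\le T_\delta$. Taking the supremum over $k$ gives the claim.

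\textbf{Main obstacle.} The delicate point is Step 2's treatment of $\mathcal{E}_{NN,k}$. Its bound depends on the random $\|\hat{\mathbf{x}}_k\|$, and we need it in terms of $\hat{E}_k$ plus a deterministic $R_0$-dependent constant. This is what produces the $\varepsilon^2$ correction in $\tilde{C}_1$. One could instead invoke \cref{lem:uniform_moment_bound_nn}, but that would give a $K_{\text{mom}}$-type constant rather than $1+2R_0^2$. The route through the geometric trajectory bound of \cref{thm:global_bound_geometric} matches the stated coefficients, so that is the one to use. The remaining work is careful bookkeeping of the $(1+\Delta\tau_0)$ factors.
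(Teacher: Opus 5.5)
Your proposal is correct and follows the paper's proof step for step. It uses the same martingale orthogonality, the same two Young splits with weight $\Delta\tau$, and the same geometric bound $\|\mathbf{X}(\tau_k)\|\le R_0$, which turns the $\|\hat{\mathbf{x}}_k\|$-dependence of $\mathcal{E}_{NN,k}$ into $\mathcal{C}_{approx}\varepsilon^2\Delta\tau$ plus the $4C_{NN}^2(1+\Delta\tau_0)^2\varepsilon^2$ correction in $\tilde{C}_1$, before the same discrete Gr\"onwall summation. The only differences are cosmetic: you merge the $\Delta\tau^2$ martingale and $\Delta\tau^3$ truncation terms via $\Delta\tau\le 1$ before summing rather than after, and your direct expansion of $(1+\Delta\tau)^2(1+2L_{x,\delta}\Delta\tau+L_{x,\delta}^2\Delta\tau^2)$ gives exactly the paper's constant $C_1$.
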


\begin{proof}
The proof utilizes the error decomposition established in \cref{subsec:error_decomposition} and proceeds through six steps: establishing the error recursion, analyzing the deterministic components via two applications of Young's inequality, estimating the stochastic noise term, applying the discrete Grönwall inequality, and finally decomposing the global error into approximation and discretization components.

The key prerequisites from previous results are:

\begin{itemize}
    \item \textbf{Spatial Lipschitz continuity of velocity field:} By \cref{thm:v_Lipschitz_improved,cor:global_bound_Lv_uniform}, the velocity field $\tilde{\mathbf{v}}$ satisfies a uniform Lipschitz condition with respect to the spatial variable on the time interval $[\delta, T]$, with constant $L_{x,\delta} = \sup_{t \in [\delta, T]} L_x(t)$ and explicit expression:
    \begin{equation}
        L_{x,\delta}(C_{\alpha,1}, \alpha_{\delta}, R) = C_{\alpha,1} \frac{\alpha_{\delta}}{\sigma_{\delta}^2} \left( 1 + \frac{R^2}{\sigma_{\delta}^2} \right),
    \end{equation}
    where $\sigma_{\delta}^2 = 1 - \alpha_{\delta}^2$, $C_{\alpha,1}$ is the first derivative bound of $\alpha_t$ from \cref{ass:alpha_regularity}, and $R$ is the data manifold radius from \cref{ass:p0_bounded};
    
    \item \textbf{Linear growth of velocity field:} By \cref{prop:v_bound}, the velocity field satisfies a linear growth condition with growth constant $M_\delta$ (for the true velocity field) or $M'_{\delta_t}$ (for the neural network velocity field), i.e., $\|\tilde{\mathbf{v}}(\mathbf{x}, t)\| \le M_\delta(1 + \|\mathbf{x}\|)$. The linear growth coefficient of the true velocity field is:
    \begin{equation}
        M_\delta(C_{\alpha,1}, \sigma_{\delta}, R) = C_{\alpha,1}\frac{1}{\sigma_{\delta}^2} \left( 1 + R + \frac{R^2}{\sigma_{\delta}^2} \right),
    \end{equation}
    with the same parameter dependencies as above;
    
    \item \textbf{Uniform boundedness of reference trajectory:} By \cref{thm:global_bound_geometric}, the reference process $\mathbf{X}(\tau)$ on the reverse time interval $[0, T_\delta]$ satisfies $\|\mathbf{X}(\tau)\| \le \|\mathbf{X}(0)\| + R$, where $R$ is the data manifold radius;
    
    \item \textbf{Local truncation error bound:} By \cref{lem:local_truncation_bound}, the local truncation error of the Euler scheme satisfies $\|\mathcal{T}_k\| \le C_{trunc}(\Delta \tau)^2$, where
    \begin{equation}
        C_{trunc} = \frac{1}{2} \left[ \left( C_{1,\delta} + L_{x,\delta} M_\delta \right) (\|\mathbf{X}(0)\| + R) + \left( C_{0,\delta} + L_{x,\delta} M_\delta \right) \right],
    \end{equation}
    where $C_{1,\delta}, C_{0,\delta}$ are the temporal derivative growth coefficients from \cref{cor:uniform_bound_delta}, with explicit expressions given in the theorem statement above. Note that $C_{trunc}$ explicitly depends on the initial state $\|\mathbf{X}(0)\|$ and the basic constants $C_{\alpha,1}, C_{\alpha,2}, \alpha_{\delta}, R$;
    
    \item \textbf{Neural network single-step error bound:} By \cref{thm:nn_velocity_error_bound}, the single-step approximation error of the neural network velocity field satisfies $\|\mathcal{E}_{NN,k}\| \le C_{NN}\varepsilon \Delta \tau(1+\|\hat{\mathbf{x}}_k\|)$, where $C_{NN}$ is the neural network approximation constant.
\end{itemize}

\textbf{Step 1: Establishing the error recursion}

Define the global error $\hat{\mathbf{e}}_k := \hat{\mathbf{x}}_k - \mathbf{X}(\tau_k)$, where $\hat{\mathbf{x}}_k$ is the numerical solution and $\mathbf{X}(\tau_k)$ is the reference ODE trajectory. The numerical scheme is:
\begin{equation}
    \hat{\mathbf{x}}_{k+1} = \hat{\mathbf{x}}_k + \Delta \tau \tilde{\mathbf{v}}_\theta^{\hat{l}_k}(\hat{\mathbf{x}}_k, \tau_k),
\end{equation}
where $\hat{l}_k$ is the category index sampled according to the neural network predicted posterior probabilities $\hat{a}_\phi^i(\hat{\mathbf{x}}_k, \tau_k)$. The reference process satisfies:
\begin{equation}
    \mathbf{X}(\tau_{k+1}) = \mathbf{X}(\tau_k) + \Delta \tau \tilde{\mathbf{v}}(\mathbf{X}(\tau_k), \tau_k) + \mathcal{T}_k,
\end{equation}
where $\mathcal{T}_k$ is the local truncation error.

Subtracting these two equations and introducing the average velocity field $\overline{\mathbf{v}}_\theta(\mathbf{x}, \tau) := \sum_{i=1}^L \hat{a}_\phi^i(\mathbf{x}, \tau) \tilde{\mathbf{v}}_\theta^i(\mathbf{x}, \tau)$, we obtain the error evolution:
\begin{align}
    \hat{\mathbf{e}}_{k+1} &= \hat{\mathbf{x}}_{k+1} - \mathbf{X}(\tau_{k+1}) \nonumber \\
    &= \hat{\mathbf{e}}_k + \Delta \tau [\tilde{\mathbf{v}}_\theta^{\hat{l}_k}(\hat{\mathbf{x}}_k, \tau_k) - \tilde{\mathbf{v}}(\mathbf{X}(\tau_k), \tau_k)] - \mathcal{T}_k \nonumber \\
    &= \hat{\mathbf{e}}_k + \Delta \tau [\tilde{\mathbf{v}}(\hat{\mathbf{x}}_k, \tau_k) - \tilde{\mathbf{v}}(\mathbf{X}(\tau_k), \tau_k)] \nonumber \\
    &\quad + \Delta \tau [\overline{\mathbf{v}}_\theta(\hat{\mathbf{x}}_k, \tau_k) - \tilde{\mathbf{v}}(\hat{\mathbf{x}}_k, \tau_k)] \nonumber \\
    &\quad + \Delta \tau [\tilde{\mathbf{v}}_\theta^{\hat{l}_k}(\hat{\mathbf{x}}_k, \tau_k) - \overline{\mathbf{v}}_\theta(\hat{\mathbf{x}}_k, \tau_k)] - \mathcal{T}_k.
\end{align}

Define the following components:
\begin{itemize}
    \item Drift error increment: $\Delta \hat{\mathbf{v}}_k := \tilde{\mathbf{v}}(\hat{\mathbf{x}}_k, \tau_k) - \tilde{\mathbf{v}}(\mathbf{X}(\tau_k), \tau_k)$;
    \item Neural network single-step error: $\mathcal{E}_{NN,k} := \Delta \tau [\overline{\mathbf{v}}_\theta(\hat{\mathbf{x}}_k, \tau_k) - \tilde{\mathbf{v}}(\hat{\mathbf{x}}_k, \tau_k)]$;
    \item Sampling noise (martingale difference): $\hat{\mathcal{M}}_k := \Delta \tau [\tilde{\mathbf{v}}_\theta^{\hat{l}_k}(\hat{\mathbf{x}}_k, \tau_k) - \overline{\mathbf{v}}_\theta(\hat{\mathbf{x}}_k, \tau_k)]$.
\end{itemize}

The single-step global error evolution becomes:
\begin{equation}
    \hat{\mathbf{e}}_{k+1} = \hat{\mathbf{e}}_k + \Delta \tau \Delta \hat{\mathbf{v}}_k + \mathcal{E}_{NN, k} + \hat{\mathcal{M}}_k - \mathcal{T}_k.
\end{equation}

Define the deterministic part $\mathbf{D}_k := \hat{\mathbf{e}}_k + \Delta \tau \Delta \hat{\mathbf{v}}_k + \mathcal{E}_{NN, k} - \mathcal{T}_k$.

Using the martingale difference property ($\mathbb{E}[\hat{\mathcal{M}}_k \mid \mathcal{F}_k] = \mathbf{0}$, where $\mathcal{F}_k = \sigma(\hat{\mathbf{x}}_0, \hat{l}_0, \dots, \hat{\mathbf{x}}_k)$ is the $\sigma$-algebra of historical information), and noting that $\mathbf{D}_k$ is $\mathcal{F}_k$-measurable, the second moment decomposes as:
\begin{equation}
    \hat{E}_{k+1} = \mathbb{E}[\|\hat{\mathbf{e}}_{k+1}\|^2] = \mathbb{E}[\|\mathbf{D}_k + \hat{\mathcal{M}}_k\|^2] = \mathbb{E}[\|\mathbf{D}_k\|^2] + \mathbb{E}[\|\hat{\mathcal{M}}_k\|^2].
\end{equation}

\textbf{Step 2: Refined estimation of the deterministic component}

For $\|\mathbf{D}_k\|^2 = \| (\hat{\mathbf{e}}_k + \Delta \tau \Delta \hat{\mathbf{v}}_k + \mathcal{E}_{NN, k}) - \mathcal{T}_k \|^2$, we need to carefully handle two types of higher-order error terms: the local truncation error $\mathcal{T}_k$ (of order $O(\Delta \tau^2)$) and the neural network approximation error $\mathcal{E}_{NN, k}$ (of order $O(\varepsilon\Delta \tau)$). To ensure that the growth coefficient contains only first-order terms in the time interval, we apply Young's inequality twice with the unified weight parameter $\eta = \Delta \tau$ for separation.

\textbf{Substep 2.1: Two applications of Young's inequality}

Recall Young's inequality: for any vectors $\mathbf{a}, \mathbf{b}$ and $\eta > 0$,
\begin{equation}
    \|\mathbf{a}+\mathbf{b}\|^2 \le (1+\eta)\|\mathbf{a}\|^2 + \left(1+\frac{1}{\eta}\right)\|\mathbf{b}\|^2.
\end{equation}

\textbf{First application:} Reorganize $\mathbf{D}_k$ as $[(\hat{\mathbf{e}}_k + \Delta \tau \Delta \hat{\mathbf{v}}_k + \mathcal{E}_{NN, k})] - \mathcal{T}_k$, taking weight parameter $\eta_1 = \Delta \tau$ to separate the truncation error:
\begin{equation}
    \|\mathbf{D}_k\|^2 \le (1 + \Delta \tau) \|\hat{\mathbf{e}}_k + \Delta \tau \Delta \hat{\mathbf{v}}_k + \mathcal{E}_{NN, k}\|^2 + \left(1 + \frac{1}{\Delta \tau}\right) \|\mathcal{T}_k\|^2.
\end{equation}

\textbf{Second application:} Continue to decompose the first term as $(\hat{\mathbf{e}}_k + \Delta \tau \Delta \hat{\mathbf{v}}_k) + \mathcal{E}_{NN, k}$, again taking weight parameter $\eta_2 = \Delta \tau$:
\begin{equation}
    \|\hat{\mathbf{e}}_k + \Delta \tau \Delta \hat{\mathbf{v}}_k + \mathcal{E}_{NN, k}\|^2 \le (1 + \Delta \tau) \|\hat{\mathbf{e}}_k + \Delta \tau \Delta \hat{\mathbf{v}}_k\|^2 + \left(1 + \frac{1}{\Delta \tau}\right) \|\mathcal{E}_{NN, k}\|^2.
\end{equation}

Combining these yields:
\begin{align}
    \|\mathbf{D}_k\|^2 &\le (1 + \Delta \tau)^2 \|\hat{\mathbf{e}}_k + \Delta \tau \Delta \hat{\mathbf{v}}_k\|^2 \nonumber \\
    &\quad + (1 + \Delta \tau)\left(1 + \frac{1}{\Delta \tau}\right) \|\mathcal{E}_{NN, k}\|^2 + \left(1 + \frac{1}{\Delta \tau}\right) \|\mathcal{T}_k\|^2.
\end{align}

Under the small step size assumption $\Delta \tau \le \Delta \tau_0$, using $(\Delta \tau)^2 \le \Delta \tau_0 \cdot \Delta \tau$, note that:
\begin{align}
    (1 + \Delta \tau)^2 &= 1 + 2\Delta \tau + (\Delta \tau)^2 \le 1 + (2 + \Delta \tau_0)\Delta \tau, \\
    (1 + \Delta \tau)\left(1 + \frac{1}{\Delta \tau}\right) &= 2 + \Delta \tau + \frac{1}{\Delta \tau}.
\end{align}

Let $C_\tau := 2 + \Delta \tau_0$, then:
\begin{align}
\label{eq:D_k_simplified_v2}
    \|\mathbf{D}_k\|^2 &\le (1 + C_\tau \Delta \tau) \|\hat{\mathbf{e}}_k + \Delta \tau \Delta \hat{\mathbf{v}}_k\|^2 \nonumber \\
    &\quad + \left(2 + \Delta \tau + \frac{1}{\Delta \tau}\right) \|\mathcal{E}_{NN, k}\|^2 + \left(1 + \frac{1}{\Delta \tau}\right) \|\mathcal{T}_k\|^2.
\end{align}

\textbf{Substep 2.2: Estimation of the drift evolution term}

Expanding the squared norm of the first term:
\begin{equation}
    \|\hat{\mathbf{e}}_k + \Delta \tau \Delta \hat{\mathbf{v}}_k\|^2 = \|\hat{\mathbf{e}}_k\|^2 + 2\Delta \tau \hat{\mathbf{e}}_k \cdot \Delta \hat{\mathbf{v}}_k + (\Delta \tau)^2 \|\Delta \hat{\mathbf{v}}_k\|^2.
\end{equation}

Using the Lipschitz continuity of the velocity field with respect to the spatial variable (with constant $L_{x,\delta}$), from $\Delta \hat{\mathbf{v}}_k = \tilde{\mathbf{v}}(\hat{\mathbf{x}}_k, \tau_k) - \tilde{\mathbf{v}}(\mathbf{X}(\tau_k), \tau_k)$, we have:
\begin{equation}
    \|\Delta \hat{\mathbf{v}}_k\| \le L_{x,\delta} \|\hat{\mathbf{x}}_k - \mathbf{X}(\tau_k)\| = L_{x,\delta} \|\hat{\mathbf{e}}_k\|.
\end{equation}

Combining with the Cauchy-Schwarz inequality and the small step size assumption:
\begin{align}
    \|\hat{\mathbf{e}}_k + \Delta \tau \Delta \hat{\mathbf{v}}_k\|^2 &\le \|\hat{\mathbf{e}}_k\|^2 + 2\Delta \tau L_{x,\delta} \|\hat{\mathbf{e}}_k\|^2 + (\Delta \tau)^2 L_{x,\delta}^2 \|\hat{\mathbf{e}}_k\|^2 \nonumber \\
    &\le \|\hat{\mathbf{e}}_k\|^2 + 2\Delta \tau L_{x,\delta} \|\hat{\mathbf{e}}_k\|^2 + \Delta \tau_0 \Delta \tau L_{x,\delta}^2 \|\hat{\mathbf{e}}_k\|^2 \nonumber \\
    &= \left( 1 + (2L_{x,\delta} + \Delta \tau_0 L_{x,\delta}^2) \Delta \tau \right) \|\hat{\mathbf{e}}_k\|^2.
\end{align}

Let $C_{drift} := 2L_{x,\delta} + \Delta \tau_0 L_{x,\delta}^2$. Combining with the coefficient $(1 + C_\tau \Delta \tau)$:
\begin{align}
    (1 + C_\tau \Delta \tau) \|\hat{\mathbf{e}}_k + \Delta \tau \Delta \hat{\mathbf{v}}_k\|^2 
    &\le (1 + C_\tau \Delta \tau)(1 + C_{drift} \Delta \tau) \|\hat{\mathbf{e}}_k\|^2 \nonumber \\
    &\le \left( 1 + (C_\tau + C_{drift} + C_\tau C_{drift} \Delta \tau_0) \Delta \tau \right) \|\hat{\mathbf{e}}_k\|^2.
\end{align}

Define the combined drift coefficient $C_1 := C_\tau + C_{drift} + C_\tau C_{drift} \Delta \tau_0$. Therefore:
\begin{equation}
\label{eq:drift_bound_v2}
    (1 + C_\tau \Delta \tau) \|\hat{\mathbf{e}}_k + \Delta \tau \Delta \hat{\mathbf{v}}_k\|^2 \le (1 + C_1 \Delta \tau) \|\hat{\mathbf{e}}_k\|^2.
\end{equation}

\textbf{Substep 2.3: Estimation of the neural network error term}

Based on the approximation error bound of the neural network velocity field (satisfying $\|\mathcal{E}_{NN, k}\| \le C_{NN}\varepsilon \Delta \tau (1 + \|\hat{\mathbf{x}}_k\|)$, where $C_{NN}$ is the neural network approximation constant), using $\|\hat{\mathbf{x}}_k\| \le \|\hat{\mathbf{e}}_k\| + \|\mathbf{X}(\tau_k)\|$ and the boundedness of the reference trajectory $\|\mathbf{X}(\tau_k)\| \le \|\mathbf{X}(0)\| + R$ (let $R_0 := \|\mathbf{X}(0)\| + R$), and applying the inequality $(1+a)^2 \le 2(1+a^2)$:
\begin{align}
    \|\mathcal{E}_{NN, k}\|^2 &\le C_{NN}^2\varepsilon^2 (\Delta \tau)^2 (1 + \|\hat{\mathbf{x}}_k\|)^2 \nonumber \\
    &\le 2C_{NN}^2\varepsilon^2 (\Delta \tau)^2 (1 + \|\hat{\mathbf{x}}_k\|^2) \nonumber \\
    &\le 2C_{NN}^2\varepsilon^2 (\Delta \tau)^2 \left[ 1 + 2\|\hat{\mathbf{e}}_k\|^2 + 2R_0^2 \right],
\end{align}
where $R_0^2 = (\|\mathbf{X}(0)\| + R)^2$.

Therefore, the coefficient term satisfies:
\begin{align}
    \left(2 + \Delta \tau + \frac{1}{\Delta \tau}\right) \|\mathcal{E}_{NN, k}\|^2 
    &\le \frac{(1 + \Delta \tau)^2}{\Delta \tau} \cdot 2C_{NN}^2\varepsilon^2 (\Delta \tau)^2 \left[ 1 + 2\|\hat{\mathbf{e}}_k\|^2 + 2R_0^2 \right] \nonumber \\
    &= 2C_{NN}^2\varepsilon^2 \Delta \tau (1 + \Delta \tau)^2 \left[ 1 + 2\|\hat{\mathbf{e}}_k\|^2 + 2R_0^2 \right] \nonumber \\
    &\le 2C_{NN}^2\varepsilon^2 (1 + \Delta \tau_0)^2 \Delta \tau \left[ 1 + 2\|\hat{\mathbf{e}}_k\|^2 + 2R_0^2 \right].
\end{align}

Let $C_{NN}' := 2C_{NN}^2(1 + \Delta \tau_0)^2$, then:
\begin{equation}
\label{eq:nn_bound_v2}
    \left(2 + \Delta \tau + \frac{1}{\Delta \tau}\right) \|\mathcal{E}_{NN, k}\|^2 \le C_{NN}' \varepsilon^2\Delta \tau (1 + 2\|\hat{\mathbf{e}}_k\|^2 + 2R_0^2).
\end{equation}

\textbf{Substep 2.4: Estimation of the truncation error term}

By \cref{lem:local_truncation_bound}, the local truncation error satisfies $\|\mathcal{T}_k\| \le C_{trunc}(\hat{\mathbf{x}}_0) (\Delta \tau)^2$, where
\begin{equation}
    C_{trunc}(\hat{\mathbf{x}}_0) = \frac{1}{2} \left[ \left( C_{1,\delta} + L_{x,\delta} M_\delta \right) (\|\hat{\mathbf{x}}_0\| + R) + \left( C_{0,\delta} + L_{x,\delta} M_\delta \right) \right].
\end{equation}
Here $C_{1,\delta}, C_{0,\delta}$ are the temporal derivative growth coefficients from \cref{cor:uniform_bound_delta}, $L_{x,\delta}$ is the spatial Lipschitz constant defined in \cref{cor:global_bound_Lv_uniform}, and $M_\delta$ is the linear growth coefficient of the true velocity field from \cref{prop:v_bound}. Note that $C_{trunc}(\hat{\mathbf{x}}_0)$ explicitly depends on the initial state $\|\hat{\mathbf{x}}_0\|$ (here using $\|\mathbf{X}(0)\| = \|\hat{\mathbf{x}}_0\|$) and the data manifold radius $R$.

Then:
\begin{align}
    \left(1 + \frac{1}{\Delta \tau}\right) \|\mathcal{T}_k\|^2 &\le \left( \frac{1 + \Delta \tau}{\Delta \tau} \right) C_{trunc}^2(\hat{\mathbf{x}}_0) (\Delta \tau)^4 \nonumber \\
    &= C_{trunc}^2(\hat{\mathbf{x}}_0) (1 + \Delta \tau) (\Delta \tau)^3 \nonumber \\
    &\le C_{trunc}^2(\hat{\mathbf{x}}_0) (1 + \Delta \tau_0) (\Delta \tau)^3.
\end{align}

Let $C_{trunc}'(\hat{\mathbf{x}}_0) := [C_{trunc}(\hat{\mathbf{x}}_0)]^2(1 + \Delta \tau_0)$ denote the scaled truncation error coefficient (which depends on the initial state $\hat{\mathbf{x}}_0$), then:
\begin{equation}
\label{eq:trunc_bound_v2}
    \left(1 + \frac{1}{\Delta \tau}\right) \|\mathcal{T}_k\|^2 \le C_{trunc}'(\hat{\mathbf{x}}_0) (\Delta \tau)^3.
\end{equation}

\textbf{Substep 2.5: Combining all estimates}

Combining \cref{eq:drift_bound_v2,eq:nn_bound_v2,eq:trunc_bound_v2} and taking expectations over the entire inequality, we obtain the bound for the deterministic part:
\begin{align}
\label{eq:D_k_final_v2}
    \mathbb{E}[\|\mathbf{D}_k\|^2] 
    &\le (1 + C_1 \Delta \tau) \hat{E}_k + C_{NN}' \varepsilon^2 \Delta \tau (1 + 2\hat{E}_k + 2R_0^2) + C_{trunc}'(\hat{\mathbf{x}}_0) (\Delta \tau)^3 \nonumber \\
    &= (1 + C_1 \Delta \tau) \hat{E}_k + 2C_{NN}' \varepsilon^2 \Delta \tau \hat{E}_k + C_{NN}' \varepsilon^2 \Delta \tau (1 + 2R_0^2) + C_{trunc}'(\hat{\mathbf{x}}_0) (\Delta \tau)^3 \nonumber \\
    &= \left( 1 + (C_1 + 2C_{NN}' \varepsilon^2) \Delta \tau \right) \hat{E}_k + C_{NN}' \varepsilon^2 \Delta \tau (1 + 2R_0^2) + C_{trunc}'(\hat{\mathbf{x}}_0) (\Delta \tau)^3,
\end{align}
where the growth coefficient $1 + (C_1 + 2C_{NN}' \varepsilon^2) \Delta \tau$ contains only first-order terms in the time interval, satisfying the standard conditions for applying the discrete Grönwall inequality.

\textbf{Step 3: Estimation of the stochastic noise term}

By \cref{lem:martingale_bound_nn}, the martingale difference term $\hat{\mathcal{M}}_k$ satisfies the following second moment upper bound:
\begin{equation}
\label{eq:martingale_bound_v2}
    \mathbb{E}[\|\hat{\mathcal{M}}_k\|^2] \le C_{\mathcal{M}} (\Delta \tau)^2 (1 + \|\hat{\mathbf{x}}_0\|^2),
\end{equation}
where $C_{\mathcal{M}} = 2(M'_{\delta_t})^2(1 + K_{\text{mom}})$ and $K_{\text{mom}}$ is the uniform second moment bound constant for numerical trajectories. Adding the deterministic evolution estimate \cref{eq:D_k_final_v2} and the noise term estimate \cref{eq:martingale_bound_v2}, we obtain the total error evolution inequality:
\begin{align}
    \hat{E}_{k+1} &\le \left( 1 + (C_1 + 2C_{NN}' \varepsilon^2) \Delta \tau \right) \hat{E}_k \nonumber \\
    &\quad + C_{NN}' \varepsilon^2 \Delta \tau (1 + 2R_0^2) + C_{\mathcal{M}} (\Delta \tau)^2 (1 + \|\hat{\mathbf{x}}_0\|^2) + C_{trunc}'(\hat{\mathbf{x}}_0) (\Delta \tau)^3.
\end{align}

\textbf{Step 4: Standardization of the recursive inequality}

To facilitate application of the discrete Grönwall inequality, define the combined growth coefficient $\tilde{C}_1$ and error component coefficients $\Gamma_1, \Gamma_2, \Gamma_3$:
\begin{align}
    \tilde{C}_1 &:= C_1 + 2C_{NN}' \varepsilon^2, \label{eq:Lambda_coeff_v2} \\
    \Gamma_1 &:= C_{NN}' \varepsilon^2 (1 + 2R_0^2), \label{eq:Gamma_1_v2} \\
    \Gamma_2 &:= C_{\mathcal{M}} (1 + \|\hat{\mathbf{x}}_0\|^2), \label{eq:Gamma_2_v2} \\
    \Gamma_3(\hat{\mathbf{x}}_0) &:= C_{trunc}'(\hat{\mathbf{x}}_0). \label{eq:Gamma_3_v2}
\end{align}
Note that $\Gamma_3$ explicitly depends on the initial state through the truncation coefficient $C_{trunc}'(\hat{\mathbf{x}}_0)$. The error recursion can then be written in the following standard form:
\begin{equation}
\label{eq:recurrence_v2}
    \hat{E}_{k+1} \le (1 + \tilde{C}_1 \Delta \tau) \hat{E}_k + \Gamma_1 \Delta \tau + \Gamma_2 (\Delta \tau)^2 + \Gamma_3(\hat{\mathbf{x}}_0) (\Delta \tau)^3.
\end{equation}

\textbf{Step 5: Applying the discrete Grönwall lemma}

Assume $\hat{E}_0 = 0$ (exact initial state). The recursive inequality \cref{eq:recurrence_v2} has the standard form:
\begin{equation}
    a_{k+1} \le \lambda a_k + c,
\end{equation}
where $a_k = \hat{E}_k$, $\lambda = 1 + \tilde{C}_1 \Delta \tau$, and $c(\hat{\mathbf{x}}_0) = \Gamma_1 \Delta \tau + \Gamma_2 (\Delta \tau)^2 + \Gamma_3(\hat{\mathbf{x}}_0) (\Delta \tau)^3$ (note the dependence on $\hat{\mathbf{x}}_0$ through the third term).

\textbf{Discrete Grönwall lemma:} For a sequence satisfying $a_{k+1} \le \lambda a_k + c$ with $a_0 = 0$:
\begin{equation}
    a_k \le c \sum_{j=0}^{k-1} \lambda^j = c \frac{\lambda^k - 1}{\lambda - 1}.
\end{equation}

Applying this to $\hat{E}_k$, for $k \le N$:
\begin{align}
    \hat{E}_k &\le \left[ \Gamma_1 \Delta \tau + \Gamma_2 (\Delta \tau)^2 + \Gamma_3(\hat{\mathbf{x}}_0) (\Delta \tau)^3 \right] \frac{(1 + \tilde{C}_1 \Delta \tau)^k - 1}{\tilde{C}_1 \Delta \tau} \nonumber \\
    &= \left[ \Gamma_1 + \Gamma_2 \Delta \tau + \Gamma_3(\hat{\mathbf{x}}_0) (\Delta \tau)^2 \right] \frac{(1 + \tilde{C}_1 \Delta \tau)^k - 1}{\tilde{C}_1}.
\end{align}

Using the basic inequality $(1 + x)^k \le e^{kx}$ (valid for all $x > 0, k \ge 0$), and noting that $k \Delta \tau \le N \Delta \tau = T_\delta$:
\begin{equation}
    (1 + \tilde{C}_1 \Delta \tau)^k \le e^{k \tilde{C}_1 \Delta \tau} \le e^{\tilde{C}_1 T_\delta}.
\end{equation}

Therefore, the upper bound for the global mean square error on the time interval $[0, T_\delta]$ is:
\begin{equation}
\label{eq:global_error_v2}
    \sup_{0 \le k \le N} \hat{E}_k \le \frac{e^{\tilde{C}_1 T_\delta} - 1}{\tilde{C}_1} \left[ \Gamma_1 + \Gamma_2 \Delta \tau + \Gamma_3(\hat{\mathbf{x}}_0) (\Delta \tau)^2 \right].
\end{equation}

\textbf{Step 6: Structured decomposition of the error}

Define the stability factor $\mathcal{S}(T_\delta) := \frac{e^{\tilde{C}_1 T_\delta} - 1}{\tilde{C}_1}$. Reorganize the global error into two components:

\textbf{(1) Approximation error component}:
\begin{equation}
    E_{approx} := \mathcal{S}(T_\delta) \cdot \Gamma_1 = \mathcal{S}(T_\delta) \cdot 2C_{NN}^2(1 + \Delta \tau_0)^2 (1 + 2R_0^2) \varepsilon^2.
\end{equation}

\textbf{(2) Discretization error component}: Reflecting the combined impact of time discretization and sampling noise. Using $\Delta \tau \le 1$, we have:
\begin{align}
    E_{disc}(\hat{\mathbf{x}}_0) &:= \mathcal{S}(T_\delta) [\Gamma_2 \Delta \tau + \Gamma_3(\hat{\mathbf{x}}_0) (\Delta \tau)^2] \nonumber \\
    &\le \mathcal{S}(T_\delta) \left[ \Gamma_2 + \Gamma_3(\hat{\mathbf{x}}_0) \right] \Delta \tau.
\end{align}

Define $\mathcal{C}_{approx} := 2C_{NN}^2(1 + \Delta \tau_0)^2 (1 + 2R_0^2)$ and $\mathcal{C}_{disc}(\hat{\mathbf{x}}_0) := C_{\mathcal{M}}(1 + \|\hat{\mathbf{x}}_0\|^2) + C_{trunc}^2(\hat{\mathbf{x}}_0)(1 + \Delta \tau_0)$. The final global error bound is:
\begin{equation}
\label{eq:final_error_bound}
\sup_{0 \le k \le N} \hat{E}_k \le \mathcal{S}(T_\delta) \left[ \mathcal{C}_{approx} \varepsilon^2 + \mathcal{C}_{disc}(\hat{\mathbf{x}}_0) \Delta \tau \right].
\end{equation}

This establishes the theorem's conclusion. Notably, the growth exponent $\tilde{C}_1$ in the stability factor $\mathcal{S}(T_\delta)$ is independent of the initial state $\|\hat{\mathbf{x}}_0\|$; the dependence on the initial state is confined to both the approximation coefficient $\mathcal{C}_{approx}$ and the discretization coefficient $\mathcal{C}_{disc}(\hat{\mathbf{x}}_0)$. This embodies the essential separation of error propagation mechanisms: the system's intrinsic evolution stability and the trajectory's initial scale jointly determine the final error.
\end{proof}

\section{Proof of $W_2$ Distance Convergence for the Algorithm}
\subsection{$\delta t$ Truncation Analysis}

\begin{theorem}[$W_2$ Distance Upper Bound between $p_t$ and $p_0$]
\label{thm:w2_bound}
Under Assumptions \ref{ass:p0_bounded} and Definition \ref{def:vp_sde}, for any $t \in [0, T]$, the Wasserstein-2 distance between the marginal distribution $p_t(\mathbf{x})$ and the initial distribution $p_0(\mathbf{x})$ satisfies the following upper bound:
\begin{equation}
    W_2^2(p_t, p_0) \le (1 - \alpha_t)^2 R^2 + (1 - \alpha_t^2) d,
\end{equation}
where $d$ is the data dimension, $\alpha_t$ is the coefficient given in Definition \ref{def:vp_sde}, and $R$ is the radius of the support set.
\end{theorem}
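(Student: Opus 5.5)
We bound $W_2^2(p_t,p_0)$ by the transport cost of an explicit coupling: the one produced by the forward VP-SDE of Definition~\ref{def:vp_sde}. Draw $\mathbf{x}_0\sim p_0$ and an independent $\boldsymbol{\xi}\sim\mathcal{N}(\mathbf{0},\mathbf{I})$, and set $\mathbf{x}_t:=\alpha_t\mathbf{x}_0+\sigma_t\boldsymbol{\xi}$ with $\sigma_t^2=1-\alpha_t^2$. The transition kernel $p_{0t}(\mathbf{x}_t|\mathbf{x}_0)=\mathcal{N}(\alpha_t\mathbf{x}_0,\sigma_t^2\mathbf{I})$ shows that $\mathbf{x}_t\sim p_t$, as in \eqref{eq:marginal_mixture}. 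Hence the law of $(\mathbf{x}_t,\mathbf{x}_0)$ lies in $\Pi(p_t,p_0)$, and Definition~\ref{def:wasserstein} gives
\begin{equation*}
W_2^2(p_t,p_0)\le \mathbb{E}\|\mathbf{x}_t-\mathbf{x}_0\|^2 .
\end{equation*}

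Next we expand the difference:
\begin{equation*}
\mathbf{x}_t-\mathbf{x}_0=-(1-\alpha_t)\mathbf{x}_0+\sigma_t\boldsymbol{\xi}.
\end{equation*}
Squaring and taking expectations produces three terms:
\begin{itemize}
\item the squared signal term, $(1-\alpha_t)^2\,\mathbb{E}\|\mathbf{x}_0\|^2$;
\item the squared noise term, $\sigma_t^2\,\mathbb{E}\|\boldsymbol{\xi}\|^2$;
\item the cross term, $-2(1-\alpha_t)\sigma_t\,\mathbb{E}\langle\mathbf{x}_0,\boldsymbol{\xi}\rangle$.
\end{itemize}
The cross term vanishes because $\boldsymbol{\xi}$ is independent of $\mathbf{x}_0$ and has mean zero. (Since $\|\mathbf{x}_0\|\le R$, integrability is not an issue.)

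Finally we bound the two surviving terms:
\begin{itemize}
\item Assumption~\ref{ass:p0_bounded} gives $\|\mathbf{x}_0\|\le R$ almost surely, so $\mathbb{E}\|\mathbf{x}_0\|^2\le R^2$.
\item The standard Gaussian moment gives $\mathbb{E}\|\boldsymbol{\xi}\|^2=d$.
\end{itemize}
Substituting these, the coupling cost is at most $(1-\alpha_t)^2R^2+(1-\alpha_t^2)d$, which is the claimed bound.

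There is no real obstacle in this argument. The only point needing care is that the synchronous construction $\mathbf{x}_t=\alpha_t\mathbf{x}_0+\sigma_t\boldsymbol{\xi}$ has exactly the forward marginal $p_t$ for every $t\in[0,T]$. This follows directly from the Gaussian transition kernel in Definition~\ref{def:vp_sde}: conditionally on $\mathbf{x}_0$, the law of $\mathbf{x}_t$ is that kernel, and mixing over $\mathbf{x}_0\sim p_0$ recovers $p_t$. The boundary cases are consistent with the bound. At $t=0$ we have $\alpha_0=1$, so the bound is $0$, as it should be. At $t=T$ we have $\alpha_T=0$, and the bound reduces to $R^2+d$.
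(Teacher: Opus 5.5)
Your proposal is correct and follows essentially the same argument as the paper. Both couple $p_t$ and $p_0$ through the forward kernel via $\mathbf{x}_t=\alpha_t\mathbf{x}_0+\sqrt{1-\alpha_t^2}\,\mathbf{z}$, expand the square, discard the cross term by independence and the zero mean of $\mathbf{z}$, and bound the two remaining terms by $R^2$ and $d$.
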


\begin{proof}
According to the definition of Wasserstein-2 distance,
\[
W_2^2(p_t, p_0) = \inf_{\gamma \in \Pi(p_t, p_0)} \mathbb{E}_{(\mathbf{x}, \mathbf{y}) \sim \gamma} \left[ \|\mathbf{x} - \mathbf{y}\|_2^2 \right],
\]
where $\Pi(p_t, p_0)$ denotes the set of all joint distributions (couplings) with marginal distributions $p_t$ and $p_0$.

To obtain an upper bound, we only need to construct a specific coupling. The diffusion process itself provides a natural coupling mechanism. Consider the joint distribution $q(\mathbf{x}_t, \mathbf{x}_0) = p_{t}(\mathbf{x}_t | \mathbf{x}_0) p_0(\mathbf{x}_0)$. Obviously, this joint distribution has marginal distributions $p_t$ and $p_0$, respectively. Therefore, we have:
\begin{equation}
    W_2^2(p_t, p_0) \le \mathbb{E}_{(\mathbf{x}_t, \mathbf{x}_0) \sim q} \left[ \|\mathbf{x}_t - \mathbf{x}_0\|_2^2 \right].
\end{equation}
According to the transition kernel $p_{t}(\mathbf{x}_t | \mathbf{x}_0) = \mathcal{N}(\mathbf{x}_t; \alpha_t \mathbf{x}_0, (1-\alpha_t^2)\mathbf{I})$ in Definition \ref{def:vp_sde}, we can use the reparameterization trick to express the random variable $\mathbf{x}_t$ explicitly as:
\begin{equation}
    \mathbf{x}_t = \alpha_t \mathbf{x}_0 + \sqrt{1 - \alpha_t^2} \,\mathbf{z},
\end{equation}
where $\mathbf{x}_0 \sim p_0(\mathbf{x})$, and $\mathbf{z} \sim \mathcal{N}(\mathbf{0}, \mathbf{I})$ is standard Gaussian noise independent of $\mathbf{x}_0$.

Substituting this expression into the distance formula, examining the squared error term:
\begin{align}
    \|\mathbf{x}_t - \mathbf{x}_0\|_2^2
    &= \| (\alpha_t \mathbf{x}_0 + \sqrt{1 - \alpha_t^2} \,\mathbf{z}) - \mathbf{x}_0 \|_2^2 \nonumber \\
    &= \| (\alpha_t - 1) \mathbf{x}_0 + \sqrt{1 - \alpha_t^2} \,\mathbf{z} \|_2^2 \nonumber \\
    &= (1 - \alpha_t)^2 \|\mathbf{x}_0\|_2^2 + (1 - \alpha_t^2) \|\mathbf{z}\|_2^2 + 2(\alpha_t - 1)\sqrt{1 - \alpha_t^2} \langle \mathbf{x}_0, \mathbf{z} \rangle.
\end{align}
Now taking expectation of the above expression. Using the linearity of expectation and the independence of $\mathbf{x}_0$ and $\mathbf{z}$:

1. \textbf{Cross term}: Since $\mathbf{x}_0$ and $\mathbf{z}$ are independent, and $\mathbb{E}[\mathbf{z}] = \mathbf{0}$, thus
   \[
   \mathbb{E} [\langle \mathbf{x}_0, \mathbf{z} \rangle] = \langle \mathbb{E}[\mathbf{x}_0], \mathbb{E}[\mathbf{z}] \rangle = 0.
   \]

2. \textbf{Noise term}: For standard Gaussian vector $\mathbf{z} \in \mathbb{R}^d$, the expectation of its squared norm equals the dimension:
   \[
   \mathbb{E} [\|\mathbf{z}\|_2^2] = d.
   \]

3. \textbf{Data term}: According to Assumption \ref{ass:p0_bounded}, $\text{supp}(p_0) \subseteq \{\mathbf{x} \in \mathbb{R}^d : \|\mathbf{x}\|_2 \le R\}$, therefore for any $\mathbf{x}_0 \sim p_0$, we have $\|\mathbf{x}_0\|_2 \le R$, thus
   \[
   \mathbb{E} [\|\mathbf{x}_0\|_2^2] \le R^2.
   \]

Substituting the above results into the expectation expression, we get:
\begin{equation}
    \mathbb{E} \left[ \|\mathbf{x}_t - \mathbf{x}_0\|_2^2 \right]
    = (1 - \alpha_t)^2 \mathbb{E}[\|\mathbf{x}_0\|_2^2] + (1 - \alpha_t^2) d
    \le (1 - \alpha_t)^2 R^2 + (1 - \alpha_t^2) d.
\end{equation}
This completes the proof.
\end{proof}

\subsection{Distribution Convergence Analysis on the Interval $[\delta_t, T]$}

\begin{theorem}[$W_2$ Convergence with Neural Network Approximation Error]
\label{thm:w2_convergence_with_nn}
Under Assumptions \ref{ass:p0_bounded} through \ref{ass:alpha_regularity} (particularly, $\alpha_T = 0$ implies $\sigma_T = 1$), assume that both the sampling algorithm and the reference process have initial distribution $\mathbf{X}(0) \sim \pi = \mathcal{N}(\mathbf{0}, \mathbf{I})$ (standard Gaussian prior). Let $\hat{\mu}_N$ denote the probability distribution of the numerical solution $\hat{\mathbf{x}}_N$ generated by the neural network-driven discrete sampling algorithm at time $\tau = T_{\delta_t}$ (i.e., physical time $t = \delta_t$), and $\mu_*$ denote the distribution of the true probability flow ODE solution $\mathbf{X}(T_{\delta_t})$.

Then the Wasserstein-2 distance between $\hat{\mu}_N$ and $\mu_*$ satisfies the following convergence bound:
\begin{equation}
    W_2^2(\hat{\mu}_N, \mu_*) \le \mathcal{S}(T_{\delta_t}) \left[ \mathcal{C}_{approx'}(d, R, L, \sigma_{\delta_t}) \varepsilon^2 + \frac{\mathcal{C}_{disc'}(d, R, L, \sigma_{\delta_t})}{N} \right].
\end{equation}
where $\mathcal{S}(T_{\delta_t}) = \frac{e^{\tilde{C}_1 T_{\delta_t}} - 1}{\tilde{C}_1}$ is the stability factor defined in Theorem \ref{thm:strong_convergence_precise_v2}, $\mathcal{C}_{approx'}$ is the approximation error constant (depending on dimension $d$, data manifold radius $R$, number of classes $L$, and noise variance $\sigma_{\delta_t}^2$), $\mathcal{C}_{disc'} $ is the discretization error constant (with the same parameter dependencies). Explicit expressions of the constants are given in Remark \ref{rem:w2_constants}.
\end{theorem}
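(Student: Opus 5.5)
The plan is to lift the trajectory-level bound of \cref{thm:strong_convergence_precise_v2} to the distribution level by averaging over the Gaussian initial condition. We use the synchronous coupling. Draw $\mathbf{Z}\sim\pi=\mathcal{N}(\mathbf{0},\mathbf{I})$. Set $\hat{\mathbf{x}}_0=\mathbf{X}(0)=\mathbf{Z}$. Run the exact flow and the neural sampler, with its independent categorical indices $\hat{l}_k$, from this common point. The joint law of $(\hat{\mathbf{x}}_N,\mathbf{X}(T_{\delta_t}))$ is then a coupling of $\hat{\mu}_N$ and $\mu_*$. \cref{def:wasserstein} therefore gives
\begin{equation*}
W_2^2(\hat{\mu}_N,\mu_*)\le \mathbb{E}\|\hat{\mathbf{x}}_N-\mathbf{X}(T_{\delta_t})\|^2=\mathbb{E}_{\mathbf{Z}}\big[\hat{E}_N(\mathbf{Z})\big],
\end{equation*}
where $\hat{E}_N(\mathbf{z})$ is the conditional MSE given $\mathbf{X}(0)=\mathbf{z}$. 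The tower property applies because the routing randomness is independent of $\mathbf{Z}$ conditional on the path.

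Next we apply \cref{thm:strong_convergence_precise_v2} pointwise in $\mathbf{z}$, taking $k=N$ and $\Delta\tau=T_{\delta_t}/N$. This gives $\hat{E}_N(\mathbf{z})\le \mathcal{S}(T_{\delta_t})[\mathcal{C}_{approx}(\mathbf{z})\varepsilon^2+\mathcal{C}_{disc}(\mathbf{z})T_{\delta_t}/N]$. The key structural point, already noted at the end of that proof, is that $\tilde{C}_1$, and hence $\mathcal{S}(T_{\delta_t})$, does not depend on $\mathbf{z}$. It therefore factors out of the expectation.

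It remains to integrate the $\mathbf{z}$-dependent coefficients. For the approximation term, use $R_0^2=(\|\mathbf{z}\|+R)^2\le 2\|\mathbf{z}\|^2+2R^2$. Then $\mathbb{E}\,\mathcal{C}_{approx}(\mathbf{Z})\le 2C_{NN}^2(1+\Delta\tau_0)^2(1+4R^2+4d)=:\mathcal{C}_{approx'}$. For the discretization term, $C_{\mathcal{M}}(1+\|\mathbf{z}\|^2)$ integrates to $C_{\mathcal{M}}(1+d)$. Write $C_{trunc}(\mathbf{z})=\tfrac12[a(\|\mathbf{z}\|+R)+b]$ with $a=C_{1,\delta}+L_{x,\delta}M_\delta$ and $b=C_{0,\delta}+L_{x,\delta}M_\delta$. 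Squaring gives $\tfrac14[a^2\|\mathbf{z}\|^2+2a(aR+b)\|\mathbf{z}\|+(aR+b)^2]$. We bound its expectation using $\mathbb{E}\|\mathbf{Z}\|^2=d$ and $\mathbb{E}\|\mathbf{Z}\|\le\sqrt d$ (Jensen). Collecting terms and absorbing $T_{\delta_t}\le 1$ yields
\begin{equation*}
\mathcal{C}_{disc'}=T_{\delta_t}\Big[C_{\mathcal{M}}(1+d)+\tfrac14(1+\Delta\tau_0)\big(a^2d+2a(aR+b)\sqrt d+(aR+b)^2\big)\Big].
\end{equation*}
All these quantities depend only on $d,R,L,\sigma_{\delta_t},C_{\alpha,1},C_{\alpha,2}$, matching the statement.

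The main obstacle is checking that the pointwise theorem is applicable uniformly in $\mathbf{z}$. Two dependences need care. First, the martingale bound of \cref{lem:martingale_bound_nn} uses the moment estimate of \cref{lem:uniform_moment_bound_nn}, which is stated with deterministic $\mathbf{x}_0$. Its constant $K_{\text{mom}}$ must be independent of $\mathbf{z}$ so that the dependence stays quadratic; it is, since it depends only on $M'_{\delta_t}$ and $T$. Second, the step-size constraint $\Delta\tau\le\min\{\Delta\tau_0,1\}$ must not depend on $\mathbf{z}$, which also holds. A further point is measurability of $\mathbf{z}\mapsto\hat{E}_N(\mathbf{z})$. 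This follows from continuity of the ODE flow, since the velocity field is Lipschitz on $[\delta,T]$, together with measurability of the network maps. Once these are confirmed, every coefficient is at most a quadratic polynomial in $\|\mathbf{z}\|$, and the Gaussian integration is routine.
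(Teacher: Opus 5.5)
Your proposal is correct and follows the paper's proof essentially step for step. You use the synchronous coupling from a common $\mathbf{Z}\sim\mathcal{N}(\mathbf{0},\mathbf{I})$, apply \cref{thm:strong_convergence_precise_v2} pointwise while pulling out the $\mathbf{z}$-independent factor $\mathcal{S}(T_{\delta_t})$, and integrate the at-most-quadratic coefficients using $\mathbb{E}\|\mathbf{Z}\|^2=d$ and $\mathbb{E}\|\mathbf{Z}\|\le\sqrt{d}$. The only difference is in the approximation constant. The paper expands $(\|\mathbf{z}\|+R)^2$ exactly to get $1+2d+4R\sqrt{d}+2R^2$, then uses $\Delta\tau_0<1$ to replace $(1+\Delta\tau_0)$ by $2$. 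Your cruder bound $R_0^2\le 2\|\mathbf{z}\|^2+2R^2$ gives the valid but looser factor $1+4d+4R^2$. To reproduce the constants of \cref{rem:w2_constants} exactly, handle the cross term $2R\|\mathbf{z}\|$ with Jensen, as you already do for $C_{trunc}$.
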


\begin{proof}

The proof is divided into four steps: first establish the connection between $W_2$ distance and trajectory coupling, then integrate over the initial distribution and finely handle the dependence of each term on the initial value, then compute moment estimates under the standard Gaussian distribution ($\sigma_T = 1$), and finally simplify the expression using the small step size assumption and assemble the final bound.

\textbf{Step 1: Coupling Construction and Distance Upper Bound}

According to the definition of Wasserstein-2 distance:
\begin{equation}
    W_2^2(\hat{\mu}_N, \mu_*) = \inf_{\gamma \in \Pi(\hat{\mu}_N, \mu_*)} \int_{\mathbb{R}^d \times \mathbb{R}^d} \|\mathbf{x} - \mathbf{y}\|^2 \, d\gamma(\mathbf{x}, \mathbf{y}),
\end{equation}
where $\Pi(\hat{\mu}_N, \mu_*)$ denotes all joint distributions with marginal distributions $\hat{\mu}_N$ and $\mu_*$.

In this algorithm, both the numerical trajectory $\{\hat{\mathbf{x}}_k\}_{k=0}^N$ and the reference ODE trajectory $\{\mathbf{X}(\tau_k)\}_{k=0}^N$ start from the same random initial value $\mathbf{Z} := \mathbf{X}(0) \sim \pi = \mathcal{N}(\mathbf{0}, \mathbf{I})$ (i.e., $\hat{\mathbf{x}}_0 = \mathbf{Z}$). This common initialization naturally defines a coupling $\hat{\gamma} \in \Pi(\hat{\mu}_N, \mu_*)$, satisfying:
\begin{equation}
    \label{eq:w2_coupling_upper_bound}
    W_2^2(\hat{\mu}_N, \mu_*) \le \mathbb{E}_{(\hat{\mathbf{x}}_N, \mathbf{X}(T_{\delta_t})) \sim \hat{\gamma}} \left[ \|\hat{\mathbf{x}}_N - \mathbf{X}(T_{\delta_t})\|^2 \right] = \mathbb{E}_{\mathbf{Z} \sim \pi} \left[ \mathbb{E}_{\text{alg}} \left[ \|\hat{\mathbf{x}}_N - \mathbf{X}(T_{\delta_t})\|^2 \mid \mathbf{Z} \right] \right],
\end{equation}
where the inner expectation $\mathbb{E}_{\text{alg}}$ averages over the algorithm randomness (random indices $\{\hat{l}_k\}$), and the outer expectation integrates over the initial distribution $\pi$.

\textbf{Step 2: Fine Decomposition of Conditional Mean Squared Error and Initial Value Dependence Analysis}

For a fixed initial value $\mathbf{Z} = \mathbf{z}$, according to Equation \eqref{eq:precise_error_bound_v2} in Theorem \ref{thm:strong_convergence_precise_v2}, the conditional mean squared error satisfies:
\begin{equation}
    \label{eq:conditional_error_bound}
    \mathbb{E}_{\text{alg}} \left[ \|\hat{\mathbf{x}}_N - \mathbf{X}(T_{\delta_t})\|^2 \mid \mathbf{Z} = \mathbf{z} \right] \le \mathcal{S}(T_{\delta_t}) \left( \mathcal{C}_{approx}(\mathbf{z}) \varepsilon^2 + \mathcal{C}_{disc}(\mathbf{z}) \Delta \tau \right),
\end{equation}
where the time step size $\Delta \tau = T_{\delta_t} / N$, and the coefficients are:
\begin{align}
    \mathcal{C}_{approx}(\mathbf{z}) &= 2C_{NN}^2(1 + \Delta \tau_0)^2 (1 + 2R_0^2(\mathbf{z})), \quad R_0^2(\mathbf{z}) = (\|\mathbf{z}\| + R)^2, \\
    \mathcal{C}_{disc}(\mathbf{z}) &= C_{\mathcal{M}}(1 + \|\mathbf{z}\|^2) + C_{trunc}^2(\mathbf{z})(1 + \Delta \tau_0).
\end{align}

The dependence on the initial value norm $\|\mathbf{z}\|$ appears in two places:
\begin{enumerate}
    \item \textbf{Approximation error coefficient}'s $R_0^2(\mathbf{z})$: Expanding gives
    \begin{equation}
        1 + 2R_0^2(\mathbf{z}) = 1 + 2\|\mathbf{z}\|^2 + 4R\|\mathbf{z}\| + 2R^2;
    \end{equation}
    \item \textbf{Discretization error coefficient}'s $C_{disc}(\mathbf{z})$: Utilizing the affine structure of truncation error $C_{trunc}(\mathbf{z}) = C_{trunc,1}\|\mathbf{z}\| + C_{trunc,0}$ (see Remark \ref{rem:w2_constants}), expanding gives
    \begin{align}
        C_{disc}(\mathbf{z}) &= C_{\mathcal{M}}(1 + \|\mathbf{z}\|^2) + (1 + \Delta \tau_0)(C_{trunc,1}^2\|\mathbf{z}\|^2 + 2C_{trunc,1}C_{trunc,0}\|\mathbf{z}\| + C_{trunc,0}^2).
    \end{align}
\end{enumerate}

Integrating Equation \eqref{eq:conditional_error_bound} over the initial distribution $\pi = \mathcal{N}(\mathbf{0}, \mathbf{I})$, and utilizing the linearity of expectation:
\begin{align}
    \mathbb{E}_{\mathbf{Z} \sim \pi} &\left[ \mathbb{E}_{\text{alg}} \left[ \|\hat{\mathbf{x}}_N - \mathbf{X}(T_{\delta_t})\|^2 \mid \mathbf{Z} \right] \right] \nonumber \\
    &\le \mathcal{S}(T_{\delta_t}) \left[ \mathbb{E}_{\mathbf{Z} \sim \pi}[\mathcal{C}_{approx}(\mathbf{Z})] \varepsilon^2 + \mathbb{E}_{\mathbf{Z} \sim \pi}[\mathcal{C}_{disc}(\mathbf{Z})] \Delta \tau \right].
    \label{eq:expectation_after_integration}
\end{align}

\textbf{Step 3: Moment Computation for Standard Gaussian Distribution}

Given $\mathbf{Z} \sim \mathcal{N}(\mathbf{0}, \mathbf{I}_d)$ (from $\sigma_T = 1$), compute the required moments:

\textbf{(1) Second moment}: Directly given by the trace of the covariance matrix:
\begin{equation}
    \mathbb{E}[\|\mathbf{Z}\|^2] = \operatorname{Tr}(\mathbf{I}_d) = d.
\end{equation}

\textbf{(2) First moment}: Utilizing Jensen's inequality ($\sqrt{\cdot}$ is concave):
\begin{equation}
    \mathbb{E}[\|\mathbf{Z}\|] = \mathbb{E}[\sqrt{\|\mathbf{Z}\|^2}] \le \sqrt{\mathbb{E}[\|\mathbf{Z}\|^2]} = \sqrt{d}.
\end{equation}

Substituting the moment estimates into Equation \eqref{eq:expectation_after_integration}:

(i) Approximation error expectation (before simplification):
\begin{equation}
    \mathbb{E}[\mathcal{C}_{approx}(\mathbf{Z})] \le 2C_{NN}^2(1 + \Delta \tau_0)^2 (1 + 2d + 4R\sqrt{d} + 2R^2).
\end{equation}

(ii) Discretization error expectation (before simplification):
\begin{align}
    \mathbb{E}[\mathcal{C}_{disc}(\mathbf{Z})] &\le C_{\mathcal{M}}(1 + d) + (1 + \Delta \tau_0) (C_{trunc,1}\sqrt{d} + C_{trunc,0})^2.
\end{align}

\textbf{Step 4: Simplification Using Small Step Size Assumption and Final Bound Assembly}

Utilizing the small step size assumption $\Delta \tau_0 < 1$, scale the terms involving $(1 + \Delta \tau_0)$:
\begin{itemize}
    \item $(1 + \Delta \tau_0)^2 \le 4$ (since $1 + \Delta \tau_0 < 2$);
    \item $(1 + \Delta \tau_0) \le 2$.
\end{itemize}

Applying these scalings and rearranging:

(i) Approximation error constant:
\begin{equation}
    \mathbb{E}[\mathcal{C}_{approx}(\mathbf{Z})] \le 8 C_{NN}^2 (1 + 2d + 4R\sqrt{d} + 2R^2) =: \mathcal{C}_{approx'}(d, R, L, \sigma_{\delta_t}).
\end{equation}

(ii) Discretization error constant: Utilizing the definition $C_{trunc}(d, R, \sigma_{\delta_t}) = C_{trunc,1}\sqrt{d} + C_{trunc,0}$, we get
\begin{align}
    \mathbb{E}[\mathcal{C}_{disc}(\mathbf{Z})] &\le C_{\mathcal{M}}(1 + d) + 2 C_{trunc}^2(d, R, \sigma_{\delta_t}).
\end{align}

Note that we defined $\mathcal{C}_{disc}(d, R, L, \sigma_{\delta_t})$ in Remark \ref{rem:w2_constants} to include the $T_{\delta_t}$ factor, therefore:
\begin{equation}
    \mathbb{E}[\mathcal{C}_{disc}(\mathbf{Z})] \Delta \tau = \mathbb{E}[\mathcal{C}_{disc}(\mathbf{Z})] \frac{T_{\delta_t}}{N} \le \frac{\mathcal{C}_{disc'}(d, R, L, \sigma_{\delta_t})}{N}.
\end{equation}

Combining Equation \eqref{eq:w2_coupling_upper_bound} with the above estimates:
\begin{equation}
    W_2^2(\hat{\mu}_N, \mu_*) \le \mathcal{S}(T_{\delta_t}) \left[ \mathcal{C}_{approx'}(d, R, L, \sigma_{\delta_t}) \varepsilon^2 + \frac{\mathcal{C}_{disc'}(d, R, L, \sigma_{\delta_t})}{N} \right].
\end{equation}

\end{proof}

\begin{remark}[Explicit Expressions of $W_2$ Convergence Constants]
\label{rem:w2_constants}
The error constants in Theorem \ref{thm:w2_convergence_with_nn} have the following explicit forms:

\paragraph{1. Approximation Error Constant} (depending on $d, R, L, \sigma_{\delta_t}$):
Utilizing the small step size assumption $\Delta \tau_0 < 1$ for scaling, we obtain
\begin{equation}
    \mathcal{C}_{approx'}(d, R, L, \sigma_{\delta_t}) := 8 C_{NN}^2(C_{\alpha,1}, \sigma_{\delta_t}, R, L) \left[ 1 + 2d + 4R\sqrt{d} + 2R^2 \right],
\end{equation}
where the neural network approximation constant $C_{NN}$ (see Theorem \ref{thm:nn_velocity_error_bound}) is defined as:
\begin{equation}
    C_{NN}(C_{\alpha,1}, \sigma_{\delta_t}, R, L) = \frac{C_{\alpha,1}}{\sigma_{\delta_t}^2} \left[ L \left( 1 + R + \frac{R^2}{\sigma_{\delta_t}^2} \right) + 1 \right].
\end{equation}
This constant characterizes the amplification effect of neural network $L^\infty$ approximation error $\varepsilon$ on trajectory $L^2$ error, explicitly depending on:
\begin{itemize}
    \item $L$ (number of classes): Appears through the weighted averaging of mixture velocity fields;
    \item $\sigma_{\delta_t}^2 = 1 - \alpha_{\delta_t}^2$ (noise variance): Appears through the scaling coefficient $\frac{\alpha_t}{\sigma_t^2}$ of the velocity field;
    \item $R$ (data manifold radius): Influences error amplification through the spatial range of trajectories;
    \item $C_{\alpha,1}$ (Assumption \ref{ass:alpha_regularity}): Bound on the first derivative of the noise scheduling parameter $\alpha_t$.
\end{itemize}

\paragraph{2. Discretization Error Constant} (depending on $d, R, L, \sigma_{\delta_t}$):
Utilizing $\Delta \tau_0 < 1$ and $(1 + \Delta \tau_0) \le 2$, we obtain
\begin{equation}
    \mathcal{C}_{disc'}(d, R, L, \sigma_{\delta_t}) := T_{\delta_t} \left\{ C_{\mathcal{M}}(L, \sigma_{\delta_t}, R)(1 + d) + 2 C_{trunc}^2(d, R, \sigma_{\delta_t}) \right\},
\end{equation}
where the coefficients are defined as follows:

\textbf{(a) Martingale Difference Noise Constant} (see Lemma \ref{lem:martingale_bound_nn}):
\begin{equation}
    C_{\mathcal{M}}(L, \sigma_{\delta_t}, R) := 2 [M'_{\delta_t}(L, \sigma_{\delta_t}, R)]^2 (1 + K_{mom}),
\end{equation}

where $M'_{\delta_t}$ is the linear growth coefficient of the neural network velocity field (see Corollary \ref{cor:v_theta_linear_growth}), which is independent of the number of classes $L$ and consists of physical and error components:
\begin{equation}
    M'_{\delta_t}(\sigma_{\delta_t}, R, \varepsilon_y) = C_{\text{phy}} + C_{\text{err}} = \frac{C_{\alpha,1}}{\sigma_{\delta_t}^2} \left( 1 + R + \frac{R^2}{\sigma_{\delta_t}^2} \right) + \frac{C_{\alpha,1}}{\sigma_{\delta_t}^2}\varepsilon_y,
\end{equation}

and $K_{mom} = 2e^{(3M'_{\delta_t} + 2(M'_{\delta_t})^2)T}$ is the uniform moment bound constant for discrete trajectories (see Lemma \ref{lem:uniform_moment_bound_nn}).

\textbf{(b) Truncation Error Constant} (see Lemma \ref{lem:local_truncation_bound}):
Utilizing the affine structure of truncation error $C_{trunc}(\mathbf{z}) = C_{trunc,1}\|\mathbf{z}\| + C_{trunc,0}$, and substituting the moment estimate $\mathbb{E}[\|\mathbf{Z}\|] \le \sqrt{d}$, we obtain
\begin{equation}
    C_{trunc}(d, R, \sigma_{\delta_t}) := C_{trunc,1}(R, \sigma_{\delta_t}) \sqrt{d} + C_{trunc,0}(R, \sigma_{\delta_t}),
\end{equation}
where the coefficients $C_{trunc,1}, C_{trunc,0}$ have the following explicit expressions:
\begin{align}
    C_{trunc,1}(R, \sigma_{\delta_t}) &= \frac{1}{2} \left[ C_{1,\delta_t} + L_{x,\delta_t} M_{\delta_t} \right], \\
    C_{trunc,0}(R, \sigma_{\delta_t}) &= \frac{1}{2} \left[ (C_{1,\delta_t} + L_{x,\delta_t} M_{\delta_t})R + (C_{0,\delta_t} + L_{x,\delta_t} M_{\delta_t}) \right].
\end{align}
For brevity, we omit the parameter dependencies in parentheses after $C_{1,\delta_t}, C_{0,\delta_t}, L_{x,\delta_t}, M_{\delta_t}$, their complete forms are given below.

\paragraph{3. Explicit Expressions of Intermediate Constants}
The above coefficients depend on the following fundamental constants:

\textbf{(i) Spatial Lipschitz Constant} (see Corollary \ref{cor:global_bound_Lv_uniform}):
\begin{equation}
    L_{x,\delta_t}(C_{\alpha,1}, \alpha_{\delta_t}, R) = C_{\alpha,1} \frac{\alpha_{\delta_t}}{\sigma_{\delta_t}^2} \left( 1 + \frac{R^2}{\sigma_{\delta_t}^2} \right), \quad \sigma_{\delta_t}^2 = 1 - \alpha_{\delta_t}^2.
\end{equation}

\textbf{(ii) True Velocity Field Linear Growth Coefficient} (see Proposition \ref{prop:v_bound}):
\begin{equation}
    M_{\delta_t}(C_{\alpha,1}, \sigma_{\delta_t}, R) = C_{\alpha,1} \frac{1}{\sigma_{\delta_t}^2} \left( 1 + R + \frac{R^2}{\sigma_{\delta_t}^2} \right).
\end{equation}

\textbf{(iii) Temporal Derivative Growth Coefficients} (see Corollary \ref{cor:uniform_bound_delta}):
\begin{align}
    C_{1,\delta_t}(C_{\alpha,1}, C_{\alpha,2}, \alpha_{\delta_t}, R) 
    &= \frac{C_{\alpha,2} \alpha_{\delta_t}}{\sigma_{\delta_t}^2} + \frac{2\alpha_{\delta_t}^2 C_{\alpha,1}^2}{\sigma_{\delta_t}^4} + \frac{C_{\alpha,1}^2}{\sigma_{\delta_t}^2} \left( \frac{(1+\alpha_{\delta_t}^2) R^2}{\sigma_{\delta_t}^4} + 1 \right), \\
    C_{0,\delta_t}(C_{\alpha,1}, C_{\alpha,2}, \alpha_{\delta_t}, R) 
    &= \frac{C_{\alpha,2} R}{\sigma_{\delta_t}^2} + \frac{2\alpha_{\delta_t} C_{\alpha,1}^2 R}{\sigma_{\delta_t}^4} + \frac{2\alpha_{\delta_t} C_{\alpha,1}^2 R^3}{\sigma_{\delta_t}^6},
\end{align}
where $C_{\alpha,2}$ is the bound on the second derivative of $\alpha_t$ in Assumption \ref{ass:alpha_regularity}.

\paragraph{4. Summary of Parameter Dependencies}
All constants can ultimately be expressed as functions of the following fundamental parameters:
\begin{itemize}
    \item \textbf{System parameters}: $C_{\alpha,1}, C_{\alpha,2}$ (noise scheduling regularity, Assumption \ref{ass:alpha_regularity}), $R$ (data manifold radius, Assumption \ref{ass:p0_bounded}), $L$ (number of classes, Assumption \ref{assump:mixture_data});
    \item \textbf{Truncation parameter}: $\alpha_{\delta_t}$ (or equivalently $\sigma_{\delta_t}^2 = 1 - \alpha_{\delta_t}^2$), which determines the starting noise level of the sampling algorithm;
    \item \textbf{Dimension}: $d$ (data space dimension).
\end{itemize}
\end{remark}

\section{Training Details and Experimental Results}
\label{sec:training_experiments}

In this section, we first derive the analytical form of the temporal derivative $\frac{da(x_t, t)}{dt}$ for our high-order training objective, then present comprehensive experimental results on two benchmark image datasets: CIFAR-10 and CelebA-HQ.

\subsection{High-Order Training Objectives}
\label{subsec:high_order_training}

Our training framework incorporates a novel regularization term based on the temporal evolution of posterior probabilities. This section provides the theoretical foundation and analytical derivation.

To simplify the notation, we adopt the discrete finite-dataset analogy setting of \cref{eq:CPP}, where we define:

\begin{equation}
\label{eq:discrete_posterior}
\hat{p}^l(x_s|x_t) = \sum_{i=1}^{N_l} (2\pi\sigma_{s|t})^{-\frac{d}{2}} u^l_i(x_t, t) \exp\left\{-\frac{1}{2\sigma_{s|t}^2}\left\|x_s - \frac{\alpha_{t|s}\sigma_s^2}{\sigma_t^2}x_t - \frac{\alpha_s\sigma_{t|s}^2}{\sigma_t^2}\bar{y}^l(x_t, t)\right\|^2\right\},
\end{equation}

\begin{equation}
\label{eq:discrete_weights}
w^l_i(x_t, t) = \frac{\exp\left\{-\frac{\|x_t - \alpha_t y^l_i\|^2}{2\sigma_t^2}\right\}}{\sum_{l,j} \exp\left\{-\frac{\|x_t - \alpha_t y^l_j\|^2}{2\sigma_t^2}\right\}}, \quad a^l(x_t, t) = \sum_{i=1}^{N_l} w^l_i(x_t, t),
\end{equation}

\begin{equation}
\label{eq:discrete_means}
u^l_i(x_t, t) = \frac{w^l_i(x_t, t)}{a^l(x_t, t)}, \quad \bar{y}^l(x_t, t) = \sum_{i=1}^{N_l} u^l_i(x_t, t) y^l_i,
\end{equation}

where $\{y^l_i\}_{i=1}^{N_l}$ are the training samples belonging to cluster $l$, and $N_l$ is the number of samples in that cluster. These discrete formulations replace the continuous integrals in \cref{eq:CPP} with finite sums over the training dataset, making the subsequent derivations computationally tractable.

\subsubsection{Analytical Form of $\frac{da(x_t, t)}{dt}$}

By taking the temporal derivative of the posterior probability $a(x_t, t)$, we obtain the analytical form:
\begin{equation}
\label{eq:da_dt_main}
    \frac{da(x_t, t)}{dt} = \sum_j W_j(x_t,t) G(x_t, t, y_j) [C(y_j) - a_\theta(x_t, t)],
\end{equation}
where the components are defined as:
\begin{equation}
\label{eq:components_def}
\begin{aligned}
    G(x_t, t, y_i) &= \frac{-1}{\alpha_t \sigma_t^2}(\|\epsilon_t\|^2 - \langle\epsilon_t, \bar{\epsilon}_t\rangle)\frac{d\alpha_t}{dt}, \\
    a(x_t,t) &= \sum_j W_j(x_t,t)\, C(y_j), \\
    W_j &= \frac{V_j}{\sum_k V_k}, \quad V_j = \exp\!\left(-\frac{\|x_t-\alpha_t y_j\|^2}{2\sigma_t^2}\right), \\
    C(y_j) &= \text{label}(y_j).
\end{aligned}
\end{equation}

The derivation relies on the following key lemmas regarding the dynamics of the forward diffusion process.

\begin{lemma}[Forward Process Dynamics]
\label{lem:forward_dynamics}
The forward diffusion process satisfies:
\begin{equation}
\label{eq:forward_ode}
\begin{aligned}
\frac{d x_t}{d t}
&= f_t x_t - \tfrac{1}{2} g(t)^2 s_\theta(x_t,t) \\
&= f_t(x_t + s_\theta(x_t,t)) \\
&= \frac{d \log \alpha_t}{d t} \left(x_t - \frac{\bar{\epsilon}_t}{\sigma_t}\right),
\end{aligned}
\end{equation}
where $f_t$ is the drift coefficient and $g(t)$ is the diffusion coefficient.
\end{lemma}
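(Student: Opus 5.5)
The plan is to verify the three equalities in sequence. The first line is the definition of the probability flow ODE (\cref{def:velocity_field}) evaluated along the flow in forward time, with the learned score $s_\theta$ in place of the exact score $\mathbf{s}$. No argument is needed there beyond reading the identity as the defining ODE of the (learned) forward probability flow, consistent with \cref{def:vp_sde}.

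For the second equality I will use the variance-preserving relation $g(t)^2=-2f(t)$ from \cref{def:vp_sde}. This gives $-\tfrac12 g(t)^2 s_\theta = f_t s_\theta$, so $f_t x_t - \tfrac12 g(t)^2 s_\theta = f_t(x_t+s_\theta)$. This is exactly the form already used in \cref{def:velocity_field}.

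For the third equality there are two ingredients.
\begin{enumerate}
\item Since $\alpha_t=\exp(\int_0^t f(s)\,ds)$, differentiating gives $f_t=\dot\alpha_t/\alpha_t=\frac{d\log\alpha_t}{dt}$.
\item I will express the score through the predicted noise. Write $x_t=\alpha_t x_0+\sigma_t\epsilon$ and let $\bar\epsilon_t(x_t,t)$ be the (network) estimate of $\mathbb{E}[\epsilon\mid x_t]$. Tweedie's formula \cref{eq:score_tweedie} gives $\mathbf{s}=-(x_t-\alpha_t\hat{\mathbf{x}}_0)/\sigma_t^2$. Because $x_t-\alpha_t\hat{\mathbf{x}}_0=\sigma_t\,\mathbb{E}[\epsilon\mid x_t]$, this equals $-\mathbb{E}[\epsilon\mid x_t]/\sigma_t$. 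The parameterized version is therefore $s_\theta=-\bar\epsilon_t/\sigma_t$.
\end{enumerate}
Substituting both into $f_t(x_t+s_\theta)$ yields $\frac{d\log\alpha_t}{dt}\bigl(x_t-\bar\epsilon_t/\sigma_t\bigr)$.

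The only real obstacle is notational. The symbol $\bar\epsilon_t$ is not defined before the lemma, so I will fix it explicitly as the noise-prediction counterpart of the score, namely the relation $s_\theta=-\bar\epsilon_t/\sigma_t$. Everything else is direct substitution of identities already established in \cref{def:vp_sde}, \cref{def:velocity_field} and \cref{eq:score_tweedie}.
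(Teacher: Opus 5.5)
Your proposal is correct and matches the paper's implicit argument: the paper states this lemma without a separate proof, and it follows exactly as you derive it from $g(t)^2=-2f(t)$, from $f_t=\frac{d\log\alpha_t}{dt}$ (since $\alpha_t=\exp(\int_0^t f(s)\,ds)$), and from the Tweedie/noise-prediction relation $s_\theta=-\bar\epsilon_t/\sigma_t$. Your sign convention $f_t=\dot\alpha_t/\alpha_t$, taken from \cref{def:vp_sde}, is the one that makes the last line hold; the $f(t)=-\dot\alpha_t/\alpha_t$ used in parts of the appendix would flip its sign.
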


\begin{lemma}[Temporal Evolution of Gaussian Weights]
\label{lem:gaussian_weight_derivative}
Define the Gaussian weight $V_j = \exp\!\left(-\frac{\|x_t-\alpha_t y_j\|^2}{2\sigma_t^2}\right)$. Its temporal derivative is:
\begin{equation}
\label{eq:dVj_dt}
\begin{aligned}
\frac{d V_j}{d t}
&= -V_j \cdot \frac{1}{2}\frac{d}{d t}\!\left(\frac{\|x_t-\alpha_t y_j\|^2}{\sigma_t^2}\right) \\[4pt]
&= -V_j \frac{1}{2\sigma_t^4}\Bigl[2(x_t - \alpha_t y_j)\sigma_t^2
\frac{d(x_t - \alpha_t y_j)}{d t}
- \|x_t-\alpha_t y_j\|^2 \cdot 2\sigma_t \frac{d\sigma_t}{d t}\Bigr] \\[6pt]
&= -\frac{V_j}{\sigma_t^3}\Bigl[(x_t-\alpha_t y_j)\sigma_t\frac{d x_t}{d t}
- \sigma_t(x_t-\alpha_t y_j)\frac{d\alpha_t}{d t}\,y_j 
- \|x_t-\alpha_t y_j\|^2\frac{d\sigma_t}{d t}\Bigr] \\[6pt]
&= -V_j\frac{(x_t - \alpha_t y_j)}{\sigma_t^3}\Bigl( 
\sigma_t\frac{d\log\alpha_t}{d t}\Bigl(x_t - \frac{\bar\epsilon_t}{\sigma_t}\Bigr)
- \sigma_t\frac{d\alpha_t}{d t}\,y_j
- (x_t- \alpha_t y_j)\frac{d\sigma_t}{d t}\Bigr) \\[6pt]
&= -V_j\frac{\epsilon_{t,j}}{\sigma_t^2}\Bigl(\frac{\sigma_t^3}{\alpha_t}
\frac{d\alpha_t}{d t}\frac{\epsilon_{t,j}}{\sigma_t}
- \frac{\sigma_t}{\alpha_t}\frac{d\alpha_t}{d t}\,
\langle \epsilon_{t,j},\bar\epsilon_t\rangle + \alpha_t\sigma_t\frac{d\alpha_t}{d t}\Bigr) \\[6pt]
&= -\frac{V_j}{\sigma_t^3}
\Bigl[\Bigl(\frac{\sigma_t^3}{\alpha_t}+\alpha_t\sigma_t\Bigr)
\|\epsilon_{t,j}\|^2
- \frac{\sigma_t^2}{\alpha_t}
\langle \epsilon_{t,j},\,\bar\epsilon_t \rangle\Bigr]
\frac{d \alpha_t}{d t} \\[6pt]
&= -\frac{V_j}{\alpha_t\sigma_t^2}
\Bigl( \|\epsilon_{t,j}\|^2
- \langle \epsilon_{t,j},\,\bar\epsilon_t \rangle \Bigr)
\frac{d \alpha_t}{d t} \\[6pt]
&= V_j\, G(x_t,t,y_j),
\end{aligned}
\end{equation}
where $\epsilon_{t,j} := \frac{x_t - \alpha_t y_j}{\sigma_t}$ is the noise component.
\end{lemma}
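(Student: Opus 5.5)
The plan is a direct computation. We differentiate $\log V_j$ along the forward probability-flow trajectory $t\mapsto x_t$ (with $y_j$ fixed), substitute the dynamics of \cref{lem:forward_dynamics}, and rewrite everything in terms of the normalized residual $\epsilon_{t,j}=(x_t-\alpha_t y_j)/\sigma_t$. Since $V_j=\exp(-\tfrac12\|r_j\|^2/\sigma_t^2)$ with $r_j:=x_t-\alpha_t y_j$, the chain rule gives $\tfrac{dV_j}{dt}=V_j\,\tfrac{d}{dt}\log V_j$. It therefore suffices to show
\[
\frac{d}{dt}\Bigl(-\frac{\|r_j\|^2}{2\sigma_t^2}\Bigr)=-\frac{\langle r_j,\dot r_j\rangle}{\sigma_t^2}+\frac{\|r_j\|^2\dot\sigma_t}{\sigma_t^3}=G(x_t,t,y_j).
\]

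\textbf{Step 1: compute $\dot r_j$.} We have $\dot r_j=\tfrac{dx_t}{dt}-\dot\alpha_t y_j$. Inserting $\tfrac{dx_t}{dt}=\tfrac{\dot\alpha_t}{\alpha_t}\bigl(x_t-\bar\epsilon_t/\sigma_t\bigr)$ from \cref{lem:forward_dynamics} and regrouping $x_t-\alpha_t y_j=r_j$ yields
\[
\dot r_j=\frac{\dot\alpha_t}{\alpha_t}\Bigl(r_j-\frac{\bar\epsilon_t}{\sigma_t}\Bigr).
\]
This regrouping is the key simplification: the explicit $y_j$ term is absorbed exactly.

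\textbf{Step 2: compute $\dot\sigma_t$.} Differentiating $\sigma_t^2=1-\alpha_t^2$ gives $\dot\sigma_t=-\alpha_t\dot\alpha_t/\sigma_t$.

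\textbf{Step 3: substitute and collect.} After substituting Steps 1 and 2 and writing $r_j=\sigma_t\epsilon_{t,j}$, the $\|\epsilon_{t,j}\|^2$ terms carry coefficient $-\dot\alpha_t\bigl(\tfrac1{\alpha_t}+\tfrac{\alpha_t}{\sigma_t^2}\bigr)$. Using $\alpha_t^2+\sigma_t^2=1$, this collapses to $-\tfrac{\dot\alpha_t}{\alpha_t\sigma_t^2}$, which is exactly the $\|\epsilon_{t,j}\|^2$ part of $G$. The remaining term is linear in $\langle\epsilon_{t,j},\bar\epsilon_t\rangle$ with coefficient proportional to $\dot\alpha_t/\alpha_t$.

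\textbf{Main obstacle: the cross term.} The step I expect to be delicate is matching the coefficient of $\langle\epsilon_{t,j},\bar\epsilon_t\rangle$ to the stated $G$. With the convention $s_\theta=-\bar\epsilon_t/\sigma_t$, as implicit in \cref{lem:forward_dynamics}, the naive computation produces $\tfrac{\dot\alpha_t}{\alpha_t\sigma_t}\langle\epsilon_{t,j},\bar\epsilon_t\rangle$. The stated form of $G$ instead requires $\tfrac{\dot\alpha_t}{\alpha_t\sigma_t^2}\langle\epsilon_{t,j},\bar\epsilon_t\rangle$. These differ by a factor of $\sigma_t$, so the result holds as written only under a specific normalization of $\bar\epsilon_t$.

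I would first fix the normalization of $\bar\epsilon_t$ so that \cref{lem:forward_dynamics} and the definition of $\epsilon_{t,j}$ are mutually consistent. Concretely, I would check whether $\bar\epsilon_t$ should denote the posterior-averaged $\sum_k W_k\epsilon_{t,k}$ scaled so that $x_t+s=x_t-\bar\epsilon_t/\sigma_t$ holds with $\epsilon_{t,k}$ dimensionless. I would then track that single factor through Step 3. If the factors still disagree, I would record the corrected prefactor on the inner-product term rather than force agreement. This correction would change $G$, and hence \cref{eq:da_dt_main}, only by a time-dependent scalar on the cross term.
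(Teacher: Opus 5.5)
Your route is the same direct computation the paper carries out: differentiate $\log V_j$, insert the drift from \cref{lem:forward_dynamics}, and use $\dot\sigma_t=-\alpha_t\dot\alpha_t/\sigma_t$ and $\alpha_t^2+\sigma_t^2=1$. Your Steps 1--2 and the $\|\epsilon_{t,j}\|^2$ coefficient in Step 3 are correct. The gap is your ``main obstacle.'' The factor-of-$\sigma_t$ mismatch in the cross term is an arithmetic slip, not a normalization issue. Because of it, your proposal never closes the identity and instead contemplates changing $G$. Redo the cross term from your own Step 1 with $r_j=\sigma_t\epsilon_{t,j}$:
\[
-\frac{\langle r_j,\dot r_j\rangle}{\sigma_t^2}
=-\frac{\dot\alpha_t}{\alpha_t\sigma_t^2}\|r_j\|^2+\frac{\dot\alpha_t}{\alpha_t\sigma_t^3}\langle r_j,\bar\epsilon_t\rangle
=-\frac{\dot\alpha_t}{\alpha_t}\|\epsilon_{t,j}\|^2+\frac{\dot\alpha_t}{\alpha_t\sigma_t^2}\langle \epsilon_{t,j},\bar\epsilon_t\rangle .
\]
The inner product picks up $1/\sigma_t^2$ from the prefactor and $1/\sigma_t$ from the $\bar\epsilon_t/\sigma_t$ in the drift. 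It gets one $\sigma_t$ back from $r_j=\sigma_t\epsilon_{t,j}$, for a net $1/\sigma_t^2$, which is exactly what $G$ requires. Now add $\|r_j\|^2\dot\sigma_t/\sigma_t^3=-(\alpha_t\dot\alpha_t/\sigma_t^2)\|\epsilon_{t,j}\|^2$ and collapse the $\|\epsilon_{t,j}\|^2$ coefficient as you did. This gives $-\frac{\dot\alpha_t}{\alpha_t\sigma_t^2}\bigl(\|\epsilon_{t,j}\|^2-\langle\epsilon_{t,j},\bar\epsilon_t\rangle\bigr)=G(x_t,t,y_j)$ with no leftover factor.

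No check on the normalization of $\bar\epsilon_t$ is needed either. The lemma is a purely algebraic consequence of the drift formula $\tfrac{dx_t}{dt}=\tfrac{\dot\alpha_t}{\alpha_t}(x_t-\bar\epsilon_t/\sigma_t)$, and whatever vector $\bar\epsilon_t$ denotes there is the same one appearing in $G$.

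Incidentally, your spurious factor matches a typo in the paper's own chain. The line containing $\bigl(\tfrac{\sigma_t^3}{\alpha_t}+\alpha_t\sigma_t\bigr)\|\epsilon_{t,j}\|^2$ writes $\tfrac{\sigma_t^2}{\alpha_t}$ in front of the inner product where $\tfrac{\sigma_t}{\alpha_t}$ is correct. Read literally, that line yields your $\dot\alpha_t/(\alpha_t\sigma_t)$, but the paper's next line and its $G$ are right.

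So drop the ``record the corrected prefactor'' fallback. Following it would turn a true identity, and \cref{eq:da_dt_main}, into a false one.
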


\begin{proof}[Derivation of \cref{eq:da_dt_main}]
Given $a(x_t,t) = \sum_j W_j(x_t,t)\, C(y_j)$ where $C(y_j)$ is the cluster label, we compute:
\begin{equation}
\label{eq:da_dt_derivation}
\begin{aligned}
\frac{d a(x_t, t)}{d t} 
&= \sum_j C(y_j) \frac{d W_j(x_t, t)}{d t} \\
&= \sum_j C(y_j) \frac{d}{d t}\left[\frac{V_j(x_t, t)}{\sum_k V_k(x_t, t)}\right] \\
&= \sum_j C(y_j) \frac{\frac{d V_j(x_t, t)}{d t} \cdot \sum_k V_k(x_t, t) - V_j(x_t, t) \cdot \sum_k \frac{d V_k(x_t, t)}{d t}}{(\sum_k V_k(x_t, t))^2} \\
&= \sum_j C(y_j) \frac{V_j(x_t, t)G(x_t, t, y_j)(\sum_k V_k(x_t, t)) - V_j(x_t, t)\sum_k V_k(x_t, t)G(x_t, t, y_k)}{(\sum_k V_k(x_t, t))^2} \\
&= \sum_j \frac{V_j(x_t, t)}{\sum_k V_k(x_t, t)}C(y_j)G(x_t, t, y_j) \\
&\quad - \sum_j \frac{V_j(x_t, t)}{\sum_k V_k(x_t, t)}C(y_j) \cdot \sum_i \frac{V_i(x_t, t)}{\sum_k V_k(x_t, t)}G(x_t, t, y_i) \\
&= \sum_j W_j(x_t, t)C(y_j)G(x_t, t, y_j) - \sum_j W_j C(y_j) \cdot \sum_i W_i(x_t, t)G(x_t, t, y_i) \\
&= \sum_j W_j(x_t, t)C(y_j)G(x_t, t, y_j) - a_\theta(x_t, t) \cdot \sum_i W_i(x_t, t)G(x_t, t, y_i) \\
&= \sum_j W_j(x_t, t)G(x_t, t, y_j)[C(y_j) - a_\theta(x_t, t)].
\end{aligned}
\end{equation}
\end{proof}

\subsubsection{Regularization Loss}

Based on the analytical form derived above, we define the regularization loss for the temporal derivative network $\frac{d a_\phi}{d t}$ as:
\begin{equation}
\label{eq:reg_loss}
\mathcal{L}_{\text{reg}} = \mathbb{E}_{t\sim \mathcal{U}[0,1], x_0 \sim p_{\text{data}}, x_t \sim p_t(\cdot|x_0)}\left\|\frac{d a_\phi(x_t, t)}{d t} - G(x_t, t, x_0)(C(x_0) - a_\phi(x_t, t))\right\|^2,
\end{equation}
where $G(x_t, t, x_0) = \frac{-1}{\alpha_t \sigma_t^2}(\|\epsilon_t\|^2 - \langle \epsilon_t, \bar{\epsilon}_t\rangle)\frac{d\alpha_t}{dt}$.

This regularization encourages the learned temporal derivative to match the analytical evolution of posterior probabilities, thereby improving the accuracy of category tracking during the reverse sampling process.

\subsection{Experimental Results on Image Datasets}
\label{subsec:image_results}

We evaluate our method on two benchmark datasets: CIFAR-10~\citep{Krizhevsky09learningmultiple} and CelebA-HQ~\citep{DBLP:journals/corr/abs-1710-10196}. CIFAR-10 contains 60,000 natural images across 10 object categories, while CelebA-HQ provides 30,000 high-resolution face images with attribute labels at $256\times256$ resolution. 

\subsubsection{Training Configuration}

Due to numerical stability considerations during early training, we adopt a two-stage training strategy:
\begin{itemize}
    \item \textbf{Stage 1 (0--50k steps):} Combined cross-entropy and MSE loss to stabilize category prediction
    \item \textbf{Stage 2 (50k--500k steps):} Pure MSE loss with the regularization term from \cref{eq:reg_loss}. To address the heterogeneous distribution of temporal derivatives across the reverse process, we train three separate $\frac{da_\phi}{dt}$ networks initialized with the same weights but trained independently on different time intervals:
\begin{itemize}
    \item \textbf{Network 1 ($t \in [0, 300]$):} Handles the near-data regime where $\frac{da}{dt} \approx 0$ (as analytically shown in \cref{eq:da_dt_main}, the gradient term $G(x_t, t, y_j)$ vanishes when $x_t \approx \alpha_t y_j$)
    \item \textbf{Network 2 ($t \in [300, 700]$):} Captures the transition regime with rapidly varying temporal derivatives
    \item \textbf{Network 3 ($t \in [700, 1000]$):} Models the early sampling phase where categories are relatively uniformly distributed
\end{itemize}
During inference, the appropriate network is selected based on the current timestep. This time-conditional specialization significantly improves training stability and prediction accuracy compared to using a single network across all timesteps.
\end{itemize}

All experiments use the following hyperparameters: learning rate $\eta = 10^{-4}$, total batch size $B = 128$, linear scheduler, and Adam optimizer~\citep{kingma2014adam} with default momentum parameters $\beta_1 = 0.9, \beta_2 = 0.999$. The neural network architecture follows the standard U-Net~\citep{ronneberger2015u} design commonly used in diffusion models. Training is performed on 8 NVIDIA A800 GPUs (80GB memory each) for approximately 2 days per dataset, totaling around 200k optimization steps.

\subsubsection{Quantitative Comparison}

\cref{tab:fid_results} presents FID scores comparing our method against three baselines:
\begin{itemize}
    \item \textbf{DDIM (Uncond baseline):} Standard unconditional DDIM~\citep{bao2022estimating}
    \item \textbf{+ Naive:} Direct category prediction without temporal derivative
    \item \textbf{+ MoG-a net:} Training with only the posterior probability network $a_\theta$
    \item \textbf{+ MoG-da net (ours):} Our full method with both $a_\theta$ and $\frac{da_\phi}{dt}$ networks
\end{itemize}

\begin{table}[t]
\centering
\caption{FID scores ($\downarrow$) on CIFAR-10 and CelebA-HQ with varying number of sampling steps compared to the baseline(unconditional DDIM) method}
\label{tab:fid_results}
\begin{tabular}{l|cccc|cccc}
\toprule
\multirow{2}{*}{Method} & \multicolumn{4}{c|}{CIFAR-10} & \multicolumn{4}{c}{CelebA-HQ} \\
\cmidrule(lr){2-5} \cmidrule(lr){6-9}
& 10 & 25 & 50 & 100 & 10 & 25 & 50 & 100 \\
\midrule
DDIM (Uncond) & 21.31 & 10.70 & 7.74 & 6.08 & 20.54 & 13.45 & 9.33 & 6.60 \\
+ Naive & \textbf{15.97} & \textbf{8.83} & \textbf{6.54} & \textbf{5.27} & 10.76 & 7.00 & 5.97 & 5.23  \\
+ MoG-a net & 17.93 & 9.63 & 7.35 & 6.21 & 11.45 & 7.41 & 6.23 & 5.45 \\
+ MoG-da net (ours) & 16.50 & 9.16 & 6.96 & 5.75 & \textbf{10.61} & \textbf{6.68} & \textbf{5.73} & \textbf{5.10} \\
\bottomrule
\end{tabular}
\end{table}

Our method achieves competitive performance for the better performance than unconditional DDIM and MoG-a, supporting our theoritical analysis. Besides, our method even outperform Naive method on CelebA-HQ dataset, demonstrating the effectiveness of higher order training of temporal derivation regularization for accelerated sampling.

\subsubsection{Qualitative Results}

\cref{fig:cifar10_samples,fig:celeba_samples} show generated samples from both datasets with varying numbers of sampling steps. Our method maintains high visual quality and categorical consistency even with very few steps, demonstrating the effectiveness of the temporal derivative regularization.

\begin{figure}[t]
    \centering
    \begin{subfigure}[b]{0.23\textwidth}
        \includegraphics[width=\textwidth]{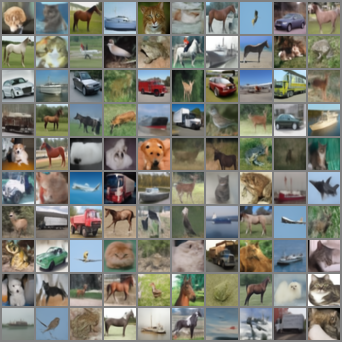}
        \caption{10 steps}
    \end{subfigure}
    \hfill
    \begin{subfigure}[b]{0.23\textwidth}
        \includegraphics[width=\textwidth]{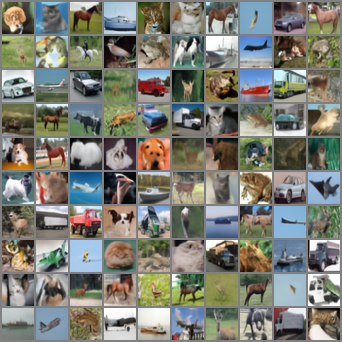}
        \caption{25 steps}
    \end{subfigure}
    \hfill
    \begin{subfigure}[b]{0.23\textwidth}
        \includegraphics[width=\textwidth]{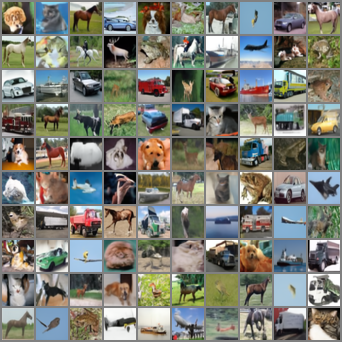}
        \caption{50 steps}
    \end{subfigure}
    \hfill
    \begin{subfigure}[b]{0.23\textwidth}
        \includegraphics[width=\textwidth]{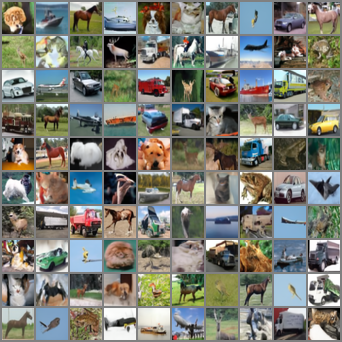}
        \caption{100 steps}
    \end{subfigure}
    \hfill
    \caption{Generated samples on CIFAR-10 with varying numbers of sampling steps. Visual quality improves steadily as the number of steps increases, with diminishing returns after 100 steps.}
    \label{fig:cifar10_samples}
\end{figure}

\begin{figure}[t]
    \centering
    \begin{subfigure}[b]{0.23\textwidth}
        \includegraphics[width=\textwidth]{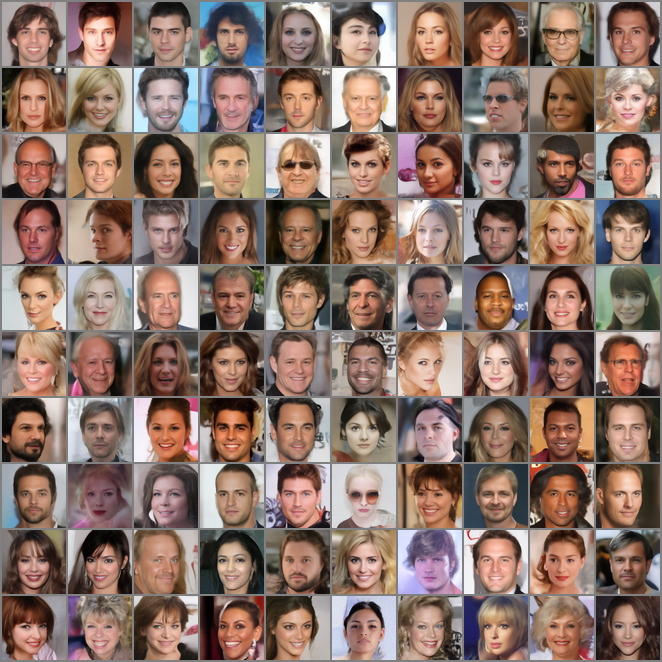}
        \caption{10 steps}
    \end{subfigure}
    \hfill
    \begin{subfigure}[b]{0.23\textwidth}
        \includegraphics[width=\textwidth]{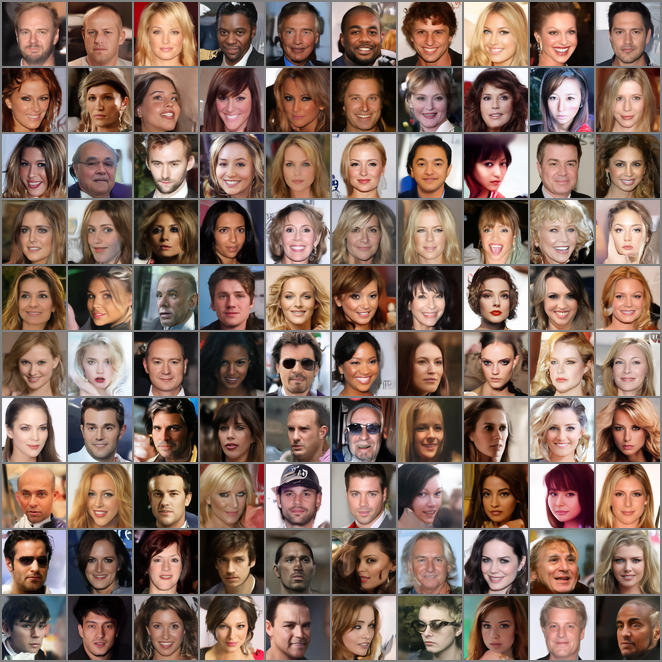}
        \caption{25 steps}
    \end{subfigure}
    \hfill
    \begin{subfigure}[b]{0.23\textwidth}
        \includegraphics[width=\textwidth]{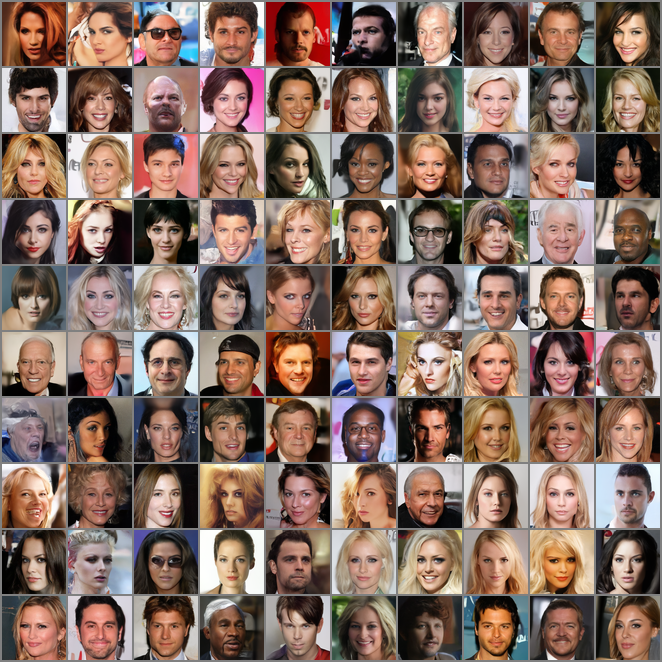}
        \caption{50 steps}
    \end{subfigure}
    \hfill
    \begin{subfigure}[b]{0.23\textwidth}
        \includegraphics[width=\textwidth]{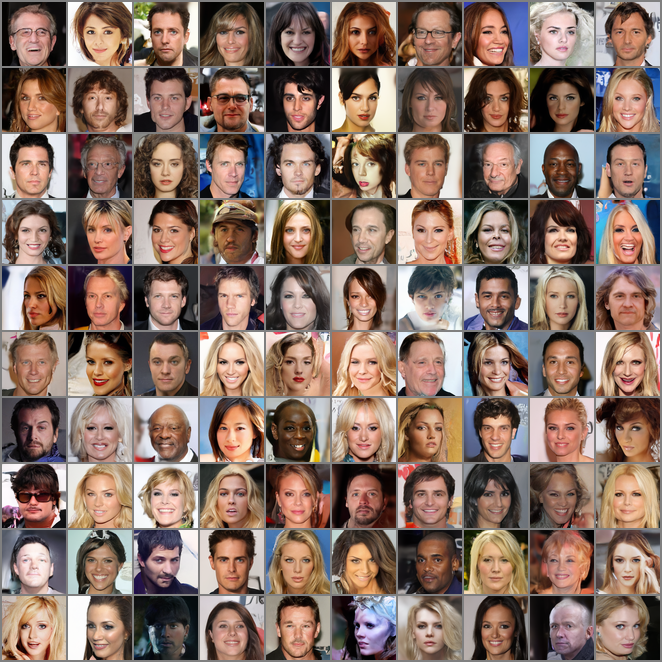}
        \caption{100 steps}
    \end{subfigure}
    \caption{Generated samples on CelebA-HQ with varying numbers of sampling steps. Our method produces high-quality faces even with as few as 10 steps, demonstrating effective category tracking on structured data.}
    \label{fig:celeba_samples}
\end{figure}

\subsection{Trajectory Error Analysis}
\label{subsec:trajectory_error}

To validate the theoretical predictions from \cref{thm:strong_convergence_precise_v2}, we conduct a trajectory error analysis measuring the deviation between our accelerated sampling and a high-precision deterministic baseline on CIFAR-10.

\subsubsection{Experimental Setup}

We establish a ground truth by generating 10,000 images using 1000-step deterministic sampling (DDIM with 1000 steps), which we denote as $\{\mathbf{x}_{\text{ref}}^{(i)}\}_{i=1}^{10000}$. This high-precision baseline serves as the reference trajectory endpoint.

For each $N \in \{10, 25, 50, 100, 200, 500, 1000\}$, we generate 10,000 images using our method with $N$ sampling steps, starting from the same initial noise realizations. We denote these samples as $\{\mathbf{x}_N^{(i)}\}_{i=1}^{10000}$. The trajectory error at step size $N$ is measured as the average squared $L^2$ distance:
\begin{equation}
\label{eq:trajectory_error}
    E(N) = \frac{1}{10000} \sum_{i=1}^{10000} \|\mathbf{x}_N^{(i)} - \mathbf{x}_{\text{ref}}^{(i)}\|^2.
\end{equation}

This metric directly quantifies how much the few-step sampling trajectory deviates from the high-precision reference, which is the key quantity bounded in \cref{thm:strong_convergence_precise_v2}.

\subsubsection{Error Scaling Analysis}

\cref{fig:error_scaling} plots the trajectory error $E(N)$ versus the number of steps $N$. Fitting a power-law model $E(N) = c \cdot N^{-\beta} + b$ to the empirical data yields:
\begin{equation}
\label{eq:fitted_power_law}
    E(N) = 0.10 \cdot N^{-1.2916} + 0.0042, \quad R^2 = 0.999963.
\end{equation}

The fitted exponent $\beta \approx 1.29$ indicates super-linear convergence with respect to the number of steps, significantly faster than the first-order rate ($\beta = 1$) expected from standard Euler-based discretization schemes. This improved scaling is consistent with our theoretical analysis in \cref{thm:strong_convergence_precise_v2}: when the neural network approximation error $\varepsilon$ is sufficiently small (as achieved through our high-order training objective in \cref{eq:reg_loss}), the dominant error contribution shifts from the $O(\Delta\tau)$ discretization term to higher-order components, enabling faster convergence as $N$ increases.

The near-perfect coefficient of determination $R^2 = 0.999963$ validates the power-law decay assumption underlying our error bound. Moreover, the convergence rate $\beta = 1.29 > 1$ empirically confirms that incorporating the temporal derivative network $\frac{da_\phi}{dt}$ provides a more accurate approximation to the underlying probability flow ODE than methods relying solely on the score function and images' labels.

\begin{figure}[t]
    \centering
    \includegraphics[width=0.7\textwidth]{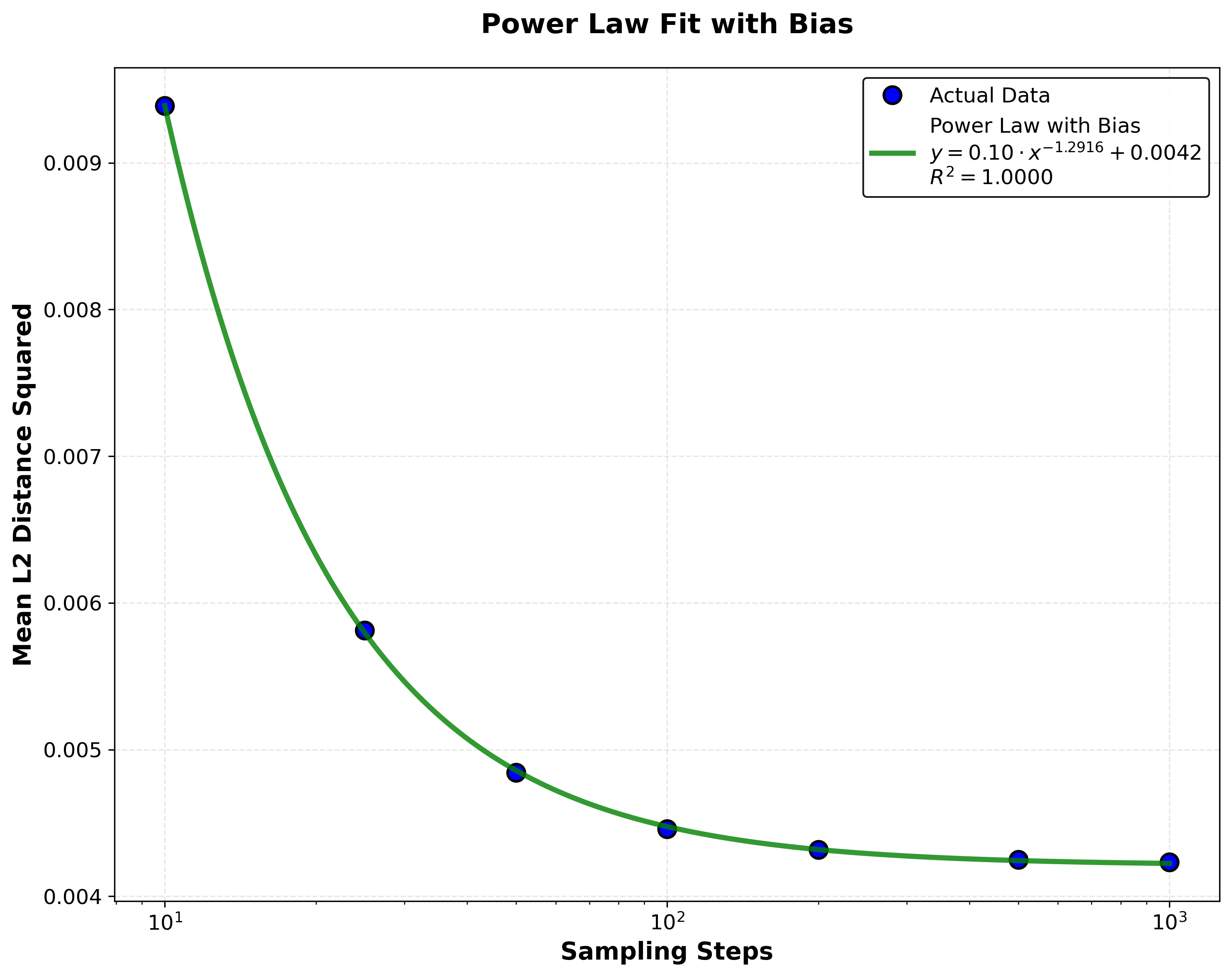}
    \caption{Trajectory error $E(N)$ versus number of sampling steps $N$ on CIFAR-10 (log-log scale). Blue circles represent empirical measurements from \cref{eq:trajectory_error}, and the red line shows the fitted power law $E(N) = 0.10 \cdot N^{-1.29}$ with $R^2 = 0.999963$. The super-linear exponent $\beta = 1.29 > 1$ demonstrates that our method achieves faster convergence than standard first-order numerical schemes.}
    \label{fig:error_scaling}
\end{figure}

\subsubsection{Practical Implications}

The super-linear convergence has important practical implications. To achieve a target error level $\epsilon$, our method requires approximately $N \propto \epsilon^{-1/1.29} \approx \epsilon^{-0.77}$ steps, compared to $N \propto \epsilon^{-1}$ for standard first-order methods. For example, to reduce error by a factor of 4, a first-order method requires 4$\times$ more steps, while our method only needs approximately $4^{0.77} \approx 3.03\times$ more steps—a 24\% reduction in computational cost.

These empirical results strongly support our theoretical framework developed in \cref{sec:convergence_analysis}: incorporating the temporal derivative $\frac{da_\phi}{dt}$ not only accelerates sampling but also provides provably more accurate trajectory tracking, especially in the few-step regime where traditional ODE solvers struggle.

\end{document}